\definecolor{Gray}{gray}{0.85}
\definecolor{yjc}{RGB}{225,0,100}
\definecolor{lxs}{RGB}{138,43,226}
\definecolor{jcg}{RGB}{100,160,0}
\definecolor{adam}{RGB}{0,0,200}
\definecolor{own_pink}{RGB}{217,25,169}
\definecolor{own_blue}{RGB}{0,100,223}
\definecolor{own_pink}{RGB}{217,25,169}
\definecolor{own_blue}{RGB}{0,100,223}
\newcommand{\defn}{\coloneqq}
\newcommand{\no}{0} % nominal transition kernel superscript
\newcommand{\ror}{\sigma} % uncertainty set radius
\newcommand{\unb}{\cU} % uncertainty set
\newcommand{\ac}{\mathcal{A}}
\newcommand{\rew}{r}
\newcommand{\rmg}{RMG\xspace}
\newcommand{\rmgs}{RMGs\xspace}
\newcommand{\sset}{$(s,a_i)$}
\newcommand{\RQFTRL}{{\sf Robust-Q-FTRL}\xspace}
\newcommand{\ormg}{fictitious RMGs\xspace}
\newcommand{\nset}{\mathcal{U}^{\ror_i}}
\newcommand{\rmgin}{$\mathcal{RMG}_{\mathsf{in}}$\xspace}
\newcommand{\ba}{\bm{a}}
\newcommand{\cA}{\mathcal{A}}
\newcommand{\cB}{\mathcal{B}}
\newcommand{\cM}{\mathcal{M}}
\newcommand{\cP}{\mathcal{P}}
\newcommand{\cS}{{\mathcal{S}}}
\newcommand{\cU}{\mathcal{U}}
\newcommand{\cX}{\mathcal{X}}
\newcommand{\cY}{\mathcal{Y}}
\newcommand{\mymid}{\,|\,} 
\newcommand\reallywidehat[1]{%
\savestack{\tmpbox}{\stretchto{%
  \scaleto{%
    \scalerel*[\widthof{\ensuremath{#1}}]{\kern-.6pt\bigwedge\kern-.6pt}%
    {\rule[-\textheight/2]{1ex}{\textheight}}%WIDTH-LIMITED BIG WEDGE
  }{\textheight}% 
}{0.5ex}}%
\stackon[1pt]{#1}{\tmpbox}%
}
\newcommand\reallywidecheck[1]{%
\savestack{\tmpbox}{\stretchto{%
  \scaleto{%zh
    \scalerel*[\widthof{\ensuremath{#1}}]{\kern-.6pt\bigwedge\kern-.6pt}%
    {\rule[-\textheight/2]{1ex}{\textheight}}%WIDTH-LIMITED BIG WEDGE
  }{\textheight}% 
}{0.5ex}}%
\stackon[1pt]{#1}{\scalebox{-1}{\tmpbox}}%
}
\title{Breaking the Curse of Multiagency in \\ Robust Multi-Agent Reinforcement Learning}
 \author{
	Laixi Shi\thanks{The first two authors contributed equally.} \thanks{Department of Computing Mathematical Sciences, California Institute of Technology, CA 91125, USA.}\\
    Caltech
    \and
    Jingchu Gai\footnotemark[1] \thanks{School of Mathematical Sciences, Peking University, Beijing, 100871, China.}\\
    PKU
 	 %\\
% 	\texttt{laixis@andrew.cmu.edu}
 	\and
 	Eric Mazumdar\footnotemark[2] \\ 	 
	Caltech
	\and
 	Yuejie Chi\thanks{Department of Electrical and Computer Engineering, Carnegie Mellon University, Pittsburgh, PA 15213, USA.} \\ 
	CMU 
	\and
	Adam Wierman\footnotemark[2] \\
	Caltech
 	} 
\date{September 2024; Revised January 2025}
\begin{document}
\sloppy

\theoremstyle{plain} \newtheorem{lemma}{\textbf{Lemma}}
\newtheorem{proposition}{\textbf{Proposition}}
\newtheorem{theorem}{\textbf{Theorem}}
\newtheorem{assumption}{Assumption}
\newtheorem{definition}{Definition}
\newtheorem{corollary}{Corollary}[theorem] 

\theoremstyle{remark}\newtheorem{remark}{\textbf{Remark}}

\maketitle

\begin{abstract}
Standard multi-agent reinforcement learning (MARL) algorithms are vulnerable to sim-to-real gaps. To address this, distributionally robust Markov games (RMGs) have been proposed to enhance robustness in MARL by optimizing the worst-case performance when game dynamics shift within a prescribed uncertainty set. RMGs remains under-explored, from reasonable problem formulation to the development of sample-efficient algorithms. Two notorious and open challenges are the formulation of the uncertainty set and whether the corresponding RMGs can overcome the curse of multiagency,  where the sample complexity scales exponentially with the number of agents. In this work, we propose a natural class of RMGs inspired by behavioral economics, where each agent's uncertainty set is shaped by both the environment and the integrated behavior of other agents. We first establish the well-posedness of this class of RMGs by proving the existence of game-theoretic solutions such as robust Nash equilibria and coarse correlated equilibria (CCE). Assuming access to a generative model, we then introduce a sample-efficient algorithm for learning the CCE whose sample complexity scales polynomially with all relevant parameters. To the best of our knowledge, this is the first algorithm to break the curse of multiagency for RMGs, regardless of the uncertainty set formulation.

\end{abstract}

\noindent \textbf{Keywords:} multi-agent reinforcement learning, robust Markov games, game theory, distribution shift

\allowdisplaybreaks

\setcounter{tocdepth}{2}
\tableofcontents

%!TEX root = ./../DRO-MARL-near-optimal.tex
\section{Introduction}

A flurry of problems naturally involve decision-making among multiple players, whether human, artificial intelligence, or both, with strategic objectives. Multi-agent reinforcement learning (MARL) serves as a powerful framework to address these challenges, demonstrating potential in various applications such as social dilemmas \citep{leibo2017multi,baker2020emergent,zhang2024equilibrium}, autonomous driving \citep{lillicrap2015continuous}, robotics \citep{kober2013reinforcement,rusu2017sim}, and games \citep{mnih2015human,vinyals2019grandmaster}. Despite the recent success of standard MARL, its transition from prototypes to reliable production is hindered by robustness concerns due to the complexity and variability of both the real-world environment and human behaviors. Specifically, environmental uncertainty can arise from sim-to-real gaps \citep{tobin2017domain}, unexpected disturbance \citep{pinto2017robust}, system noise, and adversarial attacks \citep{mahmood2018benchmarking}; agents' behaviors are subject to unknown bounded rationality and variability \citep{tversky1974judgment}. The solution learned at training can fail catastrophically when faced with a slightly shifted MARL problem during deployment, resulting in a significant drop in overall outcomes and each agent's individual payoff \citep{balaji2019deepracer,zhang2020robust,zeng2022resilience,yeh2021sustainbench,shi2024sample,slumbers2023game}. \looseness = -1

To address robustness challenges, a promising and flexible framework is  (distributionally) robust Markov games (RMGs) \citep{littman1994markov,shapley1953stochastic}. It is a robust counterpart to the common playground of standard MARL problems --- Markov games (MGs) \citep{zhang2020robustrl,kardecs2011discounted}. In standard MGs, agents consider (competitive) personal objectives and simultaneously interact with each other within a shared unknown environment. The goal is to learn some rationally optimal solution concepts called equilibria, which are joint strategies/policies of agents that all of them stick with rationally with other agents fixed; for instance, Nash equilibria (NE) \citep{nash1951non,shapley1953stochastic},  correlated equilibria (CE), and coarse correlated equilibria (CCE) \citep{aumann1987correlated,moulin1978strategically}. To promote robustness, RMGs differ from standard MGs by defining each agent's payoff (objective) as its worst-case performance when the dynamics of the game shift within a prescribed uncertainty set  centered around a nominal environment.

\subsection{Open questions of robust MARL}

\paragraph{Construction of realistic uncertainty sets.}

The family of RMGs is a rich class of problems because of the flexibility in constructing the uncertainty sets to capture different uncertainty considerations.  The uncertainty sets prevalent in current approaches are constructed under the {\em $(s,\ba)$-rectangularity condition}, yielding each agent's objective as the expectation over the independent risk-aware outcome on each joint action of other agents' strategies. While observations from behavioral economics \citep{friedman2022quantal,sandomirskiy2024narrow,goeree2005regular} reveal that, to handle other players' uncertainty, people often use a risk-aware metric outside of the expected outcome of other players' joint policy, rather than flipping the expectation and the risk metric as that of {\em $(s,\ba)$-rectangularity condition}. To account for realistic human decision-making, we are motivated to develop new classes of RMGs that foster robust solutions for practical MARL problems.
\paragraph{The curse of multiagency.}

Sample efficiency is a crucial challenge for solving MARL due to the limited availability of data relative to the high dimensionality of the problem. In MARL, agents strive to learn through interactions (data) with an unknown environment \citep{silver2016mastering,vinyals2019grandmaster,achiam2023gpt} that is often extremely large-scale, while data acquisition can be prohibitively limited by high costs and stakes. As such, a notable scalability challenge of sample efficiency is the {\em the curse of multiagency} --- the sample complexity requirement scales exponentially with the number of agents (induced by the exponentially growing size of the joint action space). This issue has been recognized and studied in extensive MARL problems \citep{song2021can,rubinstein2017settling}, but remains unresolved for robust MARL. We concentrate on finite-horizon multi-player general-sum Markov games, with a widely-used data collection mechanism --- generative model \citep{kearns1999finite}, where the number of agents is $n$, the episode length is $H$, the size of the state space is $S$, and the size of the $i$-th agent's action space is $A_i$, for $1\leq i\leq n$. 
\begin{itemize}
    \item {\em Breaking the curse of multiagency in standard MARL.} A line of pioneering work \citep{jin2021v,bai2020provable,song2021can,li2023minimax} has recently introduced a new suite of algorithms using adaptive sampling that provably break the curse of multiagency in standard MGs. In particular,
    to find an $\varepsilon$-approximate CCE, \citet{li2023minimax} requires a minimax-optimal sample complexity no more than
    \begin{align}
        \widetilde{O}\left( \frac{H^4S\sum_{i=1}^n A_i}{\varepsilon^2}\right)
    \end{align}
up to logarithmic factors, which depends only on the  sum of individual actions, rather than the number of joint actions. 
 
    \item {\em The persistent curse of multiagency in robust MARL.} The development of provable sample-efficient algorithms for RMGs is largely underexplored, with only a few recent studies \citep{zhang2020robustrl,kardecs2011discounted,ma2023decentralized,blanchet2023double,shi2024sample}. Focusing on a class of RMGs with uncertainty sets satisfying the {\em $(s,\ba)$-rectangularity condition}, existing works all suffer from the curse of multiagency, significantly limiting their scalability. For example, using the total variation (TV) distance as the divergence function, \citet{shi2024sample} relying on non-adaptive sampling, finds an $\varepsilon$-approximate robust CCE with a sample complexity no more than
    \begin{align} \label{eq:shi_prior}
        \widetilde{O}\left( \frac{H^3S \prod_{i=1}^n A_i  }{   \varepsilon^2} \min \Big\{H,  ~\frac{1}{\min_{1\leq i\leq n} \ror_i}\Big\} \right)
    \end{align} 
 up to logarithmic factors, where $\ror_i \in [0,1)$ is the uncertainty level for the $i$-th agent. As a result, the sample size requirement becomes prohibitive when the number of agents is large. Consequently, there is a significant desire to explore paths that could break through the curse of multiagency in RMGs, which is much more involved than its standard counterpart due to complicated non-linearity introduced by planning for worst-case performances. 
\end{itemize}

Given these two challenges of uncertainty set construction and the curse of multiagency, it raises an open question:
\begin{center}
% \vspace{-4mm}
{\em Can we design RMGs with realistic uncertainty sets that come with sample complexity guarantees breaking the curse of multiagency?}
\end{center}

\newcommand{\topsepremove}{\aboverulesep = 0mm \belowrulesep = 0mm} \topsepremove

\begin{table}[t]
	\begin{center}
\resizebox{\textwidth}{!}{
\begin{tabular}{c|c|c|c}
\toprule

	Algorithm & Uncertainty set &Equilibria & Sample  complexity   \tabularnewline
\toprule
\hline 
P$^2$MPO \vphantom{$\frac{1^{7}}{1^{7^{7}}}$} & \multirow{2}{*}{$(s,\ba)$-rectangularity}  & \multirow{2}{*}{robust NE}  & \multirow{2}{*}{$S^4 \left(\prod_{i=1}^n A_i\right)^3 H^4 /\varepsilon^2$}   \tabularnewline
\citep{blanchet2024double} & & &   \tabularnewline
\hline
DR-NVI \vphantom{$\frac{1^{7}}{1^{7^{7}}}$} & \multirow{2}{*}{$(s,\ba)$-rectangularity}  & \multirow{2}{*}{robust NE/CE/CCE} & \multirow{2}{*}{$ \frac{SH^3 \prod_{i=1}^n A_i  }{   \varepsilon^2} \min \Big\{H,  ~\frac{1}{\min_{1\leq i\leq n} \ror_i}\Big\} $
}  \tabularnewline
\citep{shi2024sample} & & &\tabularnewline \hline
\rowcolor{Gray} 
\RQFTRL \vphantom{$\frac{1^{7}}{1^{7^{7}}}$} &  fictitious   &  &  \tabularnewline
\rowcolor{Gray}
	{\bf (this work)} &  $(s,a_i)$-rectangularity &  \multirow{-2}{*}{\cellcolor{Gray} robust CCE } & \multirow{-2}{*}{$\frac{SH^6 \sum_{1\leq i\leq n} A_i  }{   \varepsilon^4} \min \Big\{H,  ~\frac{1}{\min_{1\leq i\leq n} \ror_i}\Big\}$} \tabularnewline
\hline
\toprule
\end{tabular}
}

	\end{center}

	\caption{We show all the existing sample complexity results within the general context of robust Markov games (RMGs) to put our work into perspective, on finding an $\varepsilon$-approximate equilibrium in finite-horizon multi-agent general-sum robust MG, omitting logarithmic factors.  Our result is the only algorithm that breaks the curse of multiagency regardless of the RMG formulations. 
	\label{tab:prior-work} }  
	
\end{table}

\subsection{Contributions}
Inspired by behavioral economics, we propose a new class of RMGs with a {\em fictitious} uncertainty set that explicitly models environmental uncertainties from the perspective of realistic human players, making it suitable for complex real-world scenarios. We begin by verifying the game-theoretic properties of the proposed class of RMGs to ensure the existence of robust variants of well-known standard equilibria notions, robust NE and robust CCE. Next, due to the general intractability of learning NE, we focus on designing algorithms that can provably overcome the curse of multiagency in learning an approximate robust CCE, referring to a joint policy where no agent can improve their benefit by more than  $\varepsilon$ through rational deviations.. Specifically, for sampling mechanisms to explore the unknown environment, we assume access to a generative model that can only draw samples from the nominal environment \citep{shi2024sample}.  The main contributions are summarized as follows. 
\begin{itemize}
    \item We introduce a new class of robust Markov games using fictitious uncertainty sets with {\em others-integrated \sset-rectangularity condition} (see Section~\ref{sec:robust-mgs} for details), which is not only realistic viewpoint observed from behavioral economics, but also a natural adaptation from robust single-agent RL to robust MARL. The uncertainty set for each agent $i$ can be decomposed into independent subsets over each state and its own action tuple $(s,a_i)$, where each subset is a ``ball'' around the expected nominal transition determined by other agents' policies and the nominal transition kernel, a distance function $\rho$, and the radius/uncertainty level $\sigma_i$. 
 We verify several essential facts of this class of RMGs: the existence of the desired equilibrium --- robust NE and robust CCE for this new class of RMGs using game-theoretical tools such as fixed-point theorem; the existence of best-response policies and robust Bellman equations.

    \item We consider the total variation (TV) distance as the distance metric $\rho$ for uncertainty sets due to its popularity in both theory \citep{panaganti2021sample,shi2023curious,blanchet2023double,shi2024sample} and practice \citep{pan2023adjustable,lee2021optidice,szita2003varepsilon}. Focusing on the proposed RMGs with fictitious uncertainty sets, we design \RQFTRL that can provably find $\varepsilon$-approximate robust CCE with high probability, as long as the sample size exceeds 
\begin{align}
       \widetilde{O} \left(\frac{SH^6 \sum_{i=1}^n A_i  }{   \varepsilon^4} \min \Big\{H,  ~\frac{1}{\min_{1\leq i\leq n} \ror_i}\Big\}\right)\label{eq:upper-intro}
   \end{align}
  up to logarithmic factors, where  $\ror_i \in (0,1]$ is the uncertainty level for the $i$-th agent.
To the best of our knowledge, this is the first algorithm to break the curse of multiagency in sample complexity of RMGs regardless of the uncertainty set definition, which can provably find an $\varepsilon$-approximate robust CCE using a sample size that is polynomial to all salient parameters. Table~\ref{tab:prior-work} provides a detailed summary of all existing sample complexity results in robust MARL\footnote{Note that, since we focus on a new class of RMGs, the sample complexity results in this work cannot be directly compared to those in prior studies. However, we provide a summary in Table~\ref{tab:prior-work} of existing sample complexity results for general RMGs, regardless of the uncertainty set formulation, for reference.}, where our results show significantly data efficiency with linear dependency on the size of each agent's action space, which is absent from prior works \citep{blanchet2024double,shi2024sample}. To achieve this, we utilize adaptive sampling and online adversarial learning tools, coupled by a tailored design and analysis for robust MARL due to the nonlinearity of the robust value function, which contrasts with the linear payoff functions in standard MARL with respect to the transition kernel.
\end{itemize}

\paragraph{Notation.} In this paper, we denote $[T] \defn \{1, 2, \dots, T\}$ for any positive integer $T > 0$. We define $\Delta(\cS)$ as the simplex over a set $\cS$. For any policy $\pi$ and function $Q(\cdot)$ defined over a domain $\cB$, the variance of $Q$ under $\pi$ is given by $\mathsf{Var}_{\pi}(Q) \defn \sum_{a \in \cB} \pi(a) [Q(a) - \mathbb{E}_{\pi}[Q]]^2$. We define $x = [x(s, \mathbf{a})]_{(s, \mathbf{a}) \in \mathcal{S} \times \mathcal{A}} \in \mathbb{R}^{SA}$ as any vector that represents values for each state-action pair, and $x = [x(s, a_i)]_{(s, a_i) \in \mathcal{S} \times \mathcal{A}_i} \in \mathbb{R}^{SA_i}$ as any vector representing agent-wise state-action values. Similarly, we denote $x = [x(s)]_{s \in \mathcal{S}}$ as any vector representing values for each state. For $\cX \defn (\cS, \{A_i\}_{i\in[n]}, H, \{\sigma_i\}_{i\in[n]}, \frac{1}{\varepsilon}, \frac{1}{\delta})$,  let $f(\cX) = O(g(\cX))$ denote that there exists a universal constant $C_1 > 0$ such that $f \leq C_1 g$. Furthermore, the notation $\widetilde{O}(\cdot)$ is defined similarly to ${O}(\cdot)$ but hides logarithmic factors.

\subsection{Related works}

\paragraph{Breaking curse of multiagency for standard Markov games.}
Breaking the curse of multiagency is a major and prevalent challenge in sequential games.  
In standard multi-agent general-sum MGs, it has been shown that learning a Nash equilibrium requires an exponential sample complexity \citep{song2021can,rubinstein2017settling,bai2020provable}. However, for other types of equilibria, such as CE and CCE, many works have successfully broken the curse of multiagency.  Specifically, for finite-horizon general-sum MGs in the tabular setting with finite state and action spaces, \citet{jin2021v} developed the V-learning algorithm for learning CE and CCE with the sample complexity of $\widetilde{O}(H^{6}S(\max_{i\in[n]}A_i)^2/\epsilon^2)$ and $\widetilde{O}(H^{6}S\max_{i\in[n]}A_i/\epsilon^2)$, respectively; \citet{daskalakis2023complexity} achieved a sample complexity of $ \widetilde{O}(H^{11}S^{3}\max_{i\in[n]}A_i/\epsilon^3)$ for learning a CCE. Beyond tabular settings, \citet{wang2023breaking} and \citet{cui2023breaking} extended these results to linear function approximation, achieving sample complexities of $\widetilde{O}(d^4H^6\left(\max_{i\in[n]}A_i^5\right)/\epsilon^2)$ and $\widetilde{O}(H^{10}d^4\log\left(\max_{i\in[n]}A_i\right)/\epsilon^4)$, respectively, where $d$ is the dimension of the linear features. For Markov potential games, a subclass of MGs, \citet{song2021can} provided a centralized algorithm that learns a NE with a sample complexity of $\widetilde{O}(H^4S^2\max_{i\in[n]}A_i/\epsilon^3)$. 

\paragraph{Finite-sample analysis for distributionally robust Markov games.} 
Robust Markov games under environmental uncertainty are largely underexplored, with only a few provable algorithms \citep{zhang2020robust,kardecs2011discounted,ma2023decentralized,blanchet2023double,shi2024sample}. 
Existing sample complexity analyses all suffer from the daunting curse of multiagency issues, or impose an extremely restricted uncertainty level that can fail to deliver the desired robustness \citep{ma2023decentralized,blanchet2024double,shi2024sample}. Specifically, they all consider a class of RMGs with the {\em $(s,\ba)$-rectangularity condition}, where the uncertainty sets for each agent can be decomposed into independent sets over each $(s,\ba)$ pair. \citet{shi2024sample} considered the generative model with an uncertainty set measured by the TV distance, \citet{blanchet2023double} treated a different sampling mechanism with offline data for both the TV distance and KL divergence. In addition, \citet{ma2023decentralized} required the uncertainty level be much smaller than the accuracy-level and an instance-dependent parameter (i.e., $\ror_i \leq \max\{\frac{\varepsilon}{SH^2}, \frac{p_\text{min}}{H}\}$ for all $i\in[n]$). This can thus fail to maintain the desired robustness, especially when the accuracy requirement is high (i.e., $\varepsilon \rightarrow 0$) or the RMG has small minimal positive transition probabilities (i.e., $p_\text{min} \rightarrow 0$).

\paragraph{Robust MARL.}
Standard MARL algorithms may overfit the training environment and could fail dramatically due to the perturbations and variability of both agents' behaviors and the shared environment, leading to performance drop and large deviation from the equilibrium. To address this, this work considers a robust variant of MARL  adopting the distributionally robust optimization (DRO) framework that has primarily been investigated in supervised learning \citep{rahimian2019distributionally,gao2020finite,bertsimas2018data,duchi2018learning,blanchet2019quantifying} and has attracted a lot of attention in promoting robustness in single-agent RL \citep{nilim2005robust,iyengar2005robust,badrinath2021robust,zhou2021finite,shi2022distributionally,wang2024sample,shi2023curious}. Beyond the RMG framework considered in this work, recent research has advanced the robustness of MARL algorithms from various perspectives, including resilience to uncertainties or attacks on states \citep{han2022solution,zhou2023robustness}, the type of agents \citep{zhang2021robust}, other agents' policies  \citep{li2019robust,kannan2023smart}, offline data poisoning \citep{wu2024data,mcmahan2024roping}, and nonstationary environment \citep{szita2003varepsilon}. A recent review can be found in \citet{vial2022robust}. 
 
%!TEX root = ./../DRO-MARL-near-optimal.tex

\section{Preliminaries}\label{sec:framework-robust-marl}

In this section, we begin with some background on multi-agent general-sum standard Markov games (MGs) in finite-horizon
settings, followed by a general framework of a robust variant of standard MGs ---- distributionally robust Markov games.

\subsection{Standard Markov games}\label{sec:background}
 
A finite-horizon {\em multi-agent general-sum Markov game} (MG) can be  characterized by the tuple 
$$\mathcal{MG}=\big\{ \cS,  \{\cA_i\}_{1 \le i \le n}, P,  \rew, H \big\}.$$ 
This setup features $n$ agents each striving to maximize their individual long-term cumulative rewards within a shared environment. 
At each time step, all agents observe the same state over the state space $\cS= \{1,\cdots,S\}$ within the shared environment. For each agent $i$ ($i\in[n]$), $\mathcal{A}_i=\{1,\cdots, A_i\}$ denotes its action space containing $A_i$ possible actions. The joint action space for all agents (resp.~the subset excluding the $i$-th agent) is defined as $\ac \coloneqq \mathcal{A}_1 \times \cdots \times \mathcal{A}_n$ (resp.~ $\mathcal{A}_{-i} \coloneqq \prod_{j \neq i}\mathcal{A}_j$ for any $i\in[n]$). We use the notation $\bm{a} \in \ac$ (resp. $\bm{a}_{-i} \in \cA_{-i}$) to denote a joint action profile involving all agents (resp. all except the $i$-th agent). In addition, the probability transition kernel $P = \{P_h\}_{1\leq h\leq H}$, with each $P_h: \cS\times \cA \mapsto \Delta(\cS)$, describes the dynamics of the game: $P_h(s'\mymid s,\bm{a})$ is the probability of transitioning from state $s \in \cS$ to state $s'\in \cS$ at time step $h$ when agents choose the joint action profile $\bm{a}\in \ac$.
The reward function of the game is $\rew= \{r_{i,h} \}_{1\leq i\leq n, 1\leq h\leq H}$, with each $r_{i,h}: \cS\times \ac \mapsto [0,1]$ normalized to the unit interval. For any $(i,h,s,\bm{a})\in [n] \times [H] \times \cS\times \cA$, $r_{i,h}(s,\bm{a})$ represents the immediate reward received by the $i$-th agent in state $s$ when the joint action profile $\ba$ is taken. Lastly, $H>0$ represents the horizon length.

\paragraph{Markov policies and value functions.} In this work, we concentrate on Markov policies that the action selection rule depends only on the current state $s$, independent from previous trajectory. Namely, the $i$-th ($i\in[n]$) agent chooses actions according to $ \pi_i = \{\pi_{i,h} :\cS   \mapsto \Delta(\ac_i)  \}_{1\leq h\leq H}$. Here, $\pi_{i,h}(a \mymid s)$ represents the probability of selecting action $a \in \mathcal{A}_i  $ in state $s $ at time step $h$. As such, the joint Markov policy of all agents can be denoted as $\pi= (\pi_1,\ldots, \pi_n): \cS \times [H] \mapsto \Delta(\ac)$, i.e., given any $s\in\cS$ and $h\in[H]$, the joint action profile $\bm{a}\in\mathcal{A}$ of all agents is chosen following the distribution $\pi_h(\cdot \mymid s) = (\pi_{1,h}, \pi_{2,h}\ldots, \pi_{n,h})(\cdot\mymid s) \in \Delta(\ac)$.

To continue, for any given joint policy $\pi$ and transition kernel $P$ of a $\mathcal{MG}$, the $i$-th agent's long-term cumulative reward can be characterized by the value function $V_{i,h}^{\pi, P}: \cS \mapsto \mathbb{R}$ (resp.~Q-function $Q_{i,h}^{\pi, P}: \cS \times \ac \mapsto \mathbb{R}$) as below: for all $(h,s,a)\in [H]\times \cS \times \cA$,
\begin{align}
	 V_{i,h}^{\pi, P}(s) &\coloneqq\mathbb{E}_{\pi,
P}\left[\sum_{t=h}^{H} r_{i,t}\big(s_{t}, \bm{a}_{t}\big)\mid s_{h}=s\right], \quad 
  Q_{i,h}^{\pi, P}(s, \ba)\coloneqq\mathbb{E}_{\pi,P}\left[\sum_{t=h}^{H}  r_{i,t}\big(s_{t}, \bm{a}_{t}\big)\mid s_{h}=s, \ba_h = \ba\right]. \label{eq:value-function-defn}
\end{align}
In this context, the expectation is calculated over the trajectory $\{(s_t,\bm{a}_t)\}_{h\leq t\leq H}$ produced by following the joint policy $\pi$ under the transition kernel $P$.

\subsection{Distributionally robust Markov games}\label{sec:robust-mgs}

A general distributionally robust Markov game (RMG) is represented by the tuple 
$$\mathcal{RMG}  = \big\{ \cS, \{\cA_i\}_{1 \le i \le n},\{\cU_{\rho}^{\ror_i}(P^\no, \cdot)\}_{1 \le i \le n}, \rew, H \big\}.$$ 
Here, $\cS, \{\cA_i\}_{1 \le i \le n}, r, H$ are defined in the same manner as those in standard MGs (see Section~\ref{sec:background}). RMGs differ from standard MGs: for each agent $i$ ($1\leq i\leq n$), the transition kernel is not fixed but can vary within its own prescribed uncertainty set $\cU_{\rho}^{\ror_i}(P^\no,\cdot)$ determined by (possibly the current policy and) a 
{\em nominal} kernel $P^\no: H \times \cS\times \ac \mapsto \Delta(\cS)$ that represents a reference (such as the training environment). 
The shape and the size of the uncertainty set $\big\{ \cU^{\ror_i}_\rho(P^\no,\cdot)\big\}_{i\in[n]}$ are further specified by a divergence function $\rho$ and the uncertainty levels $\{ \ror_i \}_{i\in[n]}$, serving as the ``distance'' metric and the radius respectively.

Various choices of the divergence function have been considered in the literature of robust RL, including but not limited to $f$-divergence (such as total variation, $\chi^2$ divergence, and Kullback-Leibler (KL) divergence) \citep{yang2021towards,zhou2021finite,shi2022distributionally,lu2024distributionally,wang2024sample} and Wasserstein distance \citep{xu2023improved}. Adopting uncertainty sets with different structures leads to distinct RMGs, as they address distinct types of uncertainty and game-theoretical solutions. This paper focuses on variability in environmental dynamics (transition kernels), though uncertainty in agents' reward functions could also be considered similarly but is omitted for brevity.

\paragraph{Robust value functions and best-response policies.}
For any \rmg, each agent seeks to maximize its worst-case performance in the presence of other agents' behaviors despite perturbations in the environment dynamics, as long as the kernel transitions remain within its prescribed uncertainty set. Mathematically, given any joint policy $\pi: \cS\times [H]\mapsto \Delta(\cA)$, the worst-case performance of any agent $i$ is characterized by the {\em robust value function} $V_{i,h}^{\pi,\ror_i}$ and the {\em robust Q-function} $Q_{i,h}^{\pi,\ror_i}$: for all $ (i,h,s,a_i)\in [n] \times [H] \times \cS \times \cA_i $,
\begin{align}
	 V_{i,h}^{\pi,\ror_i}(s)& \coloneqq \inf_{P\in \unb_{\rho}^{\ror_i}(P^{\no},\pi)} V_{i,h}^{\pi,P} (s)  \qquad\mbox{and}\qquad  Q_{i,h}^{\pi,\ror_i}(s, a_i) \coloneqq  \inf_{P\in \unb_{\rho}^{\ror_i}(P^{\no},\pi)} Q_{i,h}^{\pi,P} (s, a_i).
\label{eq:value-function-defn-robust}
\end{align}
Note that different from \eqref{eq:value-function-defn}, here the Q-function for any $i$-th agent is defined only over its own action $a_i \in \cA_i$ rather than the joint action $\ba\in\cA$.

To continue, we denote $\pi_{-i}$ as the policy for all agents except for the $i$-th agent. By optimizing the $i$-th agent's policy $\pi'_i: \cS \times [H] \rightarrow \Delta(\mathcal{A}_i)$ (independent from $\pi_{-i}$), we define the maximum of the robust value function as 
\begin{align}
	\label{eq:defn-optimal-V}
	   V_{i,h}^{\star,\pi_{-i}, \ror_i}(s) & \coloneqq \max_{\pi'_i: \cS \times [H] \mapsto \Delta(\mathcal{A}_i)} V_{i,h}^{\pi'_i \times \pi_{-i}, \ror_i }(s) 
	  = \max_{\pi'_i: \cS \times [H] \mapsto \Delta(\mathcal{A}_i)} \inf_{P\in \unb_{\rho}^{\ror_i}(P^{\no},\pi)} V_{i,h}^{\pi_i' \times \pi_{-i},P} (s)
\end{align}
for all $(i,h,s) \in [n] \times [H] \times  \cS$. 
The policy that  achieves the maximum of the robust value function for all $(i,h,s) \in [n] \times [H] \times  \cS$ is called a {\em robust best-response policy}.

\paragraph{Solution concepts for robust Markov games.}

In view of the conflicting objectives between agents, establishing equilibrium becomes the goal of solving \rmgs. As such, we introduce two kinds of solution concepts --- robust NE and robust CCE ---  robust variants of standard NE and CCE (usually considered in standard MGs) specified to the form of \rmgs.

\begin{itemize}
	\item {\em Robust NE.} A product policy $\pi = \pi_1 \times \pi_2 \times \cdots \times \pi_{n}: \cS\times [H] \mapsto \prod_{i=1}^n \Delta(\cA_i) $ is 
said to be a {\em robust NE}  if 
\begin{equation}
	 V_{i,1}^{\pi, \ror_i}(s)=V_{i,1}^{\star,\pi_{-i}, \ror_i}(s), \quad \forall (s,i)\in\cS\times [n].
	\label{eq:defn-robust-Nash-E}
\end{equation}
Given the strategies of the other agents $\pi_{-i}$, when each agent wants to optimize its worst-case performance when the environment and other agents' policy stay within its own uncertainty set $\unb_{\rho}^{\ror_i}(P^{\no},\pi)$, robust NE means that no player can benefit by unilaterally diverging from its present strategy.

\item {\em Robust CCE.}
A distribution over the joint product policy $\xi \defn \{\xi_h\}_{h\in[H]}: S\times [H] \mapsto \Delta(\prod_{i\in[n]} \Delta(\cA_i))$ is said to be a {\em robust CCE} if it holds that 
\begin{equation}
	\mathbb{E}_{\pi\sim \xi}\left[V_{i,1}^{\pi,\ror_i}(s)\right] \geq \mathbb{E}_{\pi\sim \xi} \left[V_{i,1}^{\star,\pi_{-i}, \ror_i }(s)\right], \quad \forall (i,s)\in[n] \times \cS.
	\label{eq:defn-robust-CCE}
\end{equation}
Considering all agents follow the policy drawn from the distribution $\xi$, i.e., $\pi_h(s) \sim \xi_h(s)$ for all $(s,h) \in \cS\times [H]$, when the distribution of all agents but the $i$-th agent's policy is fixed as the marginal distribution of $\xi$, robust CCE indicates that no agent can benefit from deviating from its current policy.

\end{itemize}

Note that, for standard MGs, CCE is defined as a possibly correlated joint policy $\pi^{\mathsf{CCE}}: \cS\times [H] \mapsto \Delta(\ac)$ \citep{moulin1978strategically,aumann1987correlated} if it holds that
\begin{equation}
    V_{i,1}^{\pi^{\mathsf{CCE}}, P}(s) \geq \max_{\pi'_i: \cS \times [H]\rightarrow \Delta(\mathcal{A}_i)} V_{i,1}^{\pi'_i \times \pi^{\mathsf{CCE}}_{-i}, P}(s), \qquad \forall (s,i)\in\cS\times [n].
	\label{eq:defn-CCE}
\end{equation}
This correlated policy $\pi^{\mathsf{CCE}}$ can also be viewed as a distribution $\xi$ over the product policy space since each joint action $\ba$ can be seen as a deterministic product policy. 
Careful readers may note that the definition \eqref{eq:defn-CCE} of CCE in standard MGs  is in a different form from the one \eqref{eq:defn-robust-CCE} in \rmgs, as the latter does not include the expectation operator $\mathbb{E}_{\pi\sim \xi}[\cdot]$ with respect to the policy distribution ($\xi$) over the value function. We emphasize that the definition with the expectation operator outside of the value (or cost) function with respect to a distribution of product pure strategies in \eqref{eq:defn-robust-CCE} is a natural formulation originating from game theory \citep{moulin2014coarse,moulin1978strategically}. In standard MARL and previous robust MARL studies, the definition in \eqref{eq:defn-CCE} is typically used because \eqref{eq:defn-CCE} and \eqref{eq:defn-robust-CCE} are identical in those situations, as the expectation operator and the corresponding value functions are linear with respect to the joint policy, allowing them to be interchanged \citep{li2023minimax,shi2024sample}.

%!TEX root = ./../DRO-MARL-near-optimal.tex

\section{Robust Markov Games with Fictitious Uncertainty Sets}
\label{sec:rmg_definition}

Given the definition of general RMGs, a natural question arises: what kinds of uncertainty sets should we consider to achieve the desired robustness in our solutions? To address this, we focus on a class of RMGs characterized by a type of natural yet realistic uncertainty sets inspired from behavioral economics. More discussions of this class of games are provided momentarily.

\subsection{A novel uncertainty set definition in RMGs}
We propose a new class of uncertainty sets, named {\em fictitious} uncertainty sets, which count in the uncertainty induced by both the environment and other agents' behaviors in an integrated manner. Before introducing the uncertainty sets, we provide some auxiliary notations as below. We denote a vector of any transition kernel $P: \cS\times \cA \mapsto \Delta(\cS)$ or $P^{\no}:\cS\times \cA \mapsto \Delta(\cS)$ respectively as
\begin{align}\label{eq:defn-P-sa}
	\forall (s,\ba) \in \cS \times \cA: \quad  &P_{h,s,\ba} \defn P_h(\cdot \mymid s, \ba) \in \mathbb{R}^{1\times S}, \qquad P_{h,s, \ba }^\no \defn P^\no_h(\cdot \mymid s,\ba) \in \mathbb{R}^{1\times S}.
\end{align}
For any (possibly correlated) joint Markov policy (defined in \cref{sec:background})  $\pi: \cS \times [H] \mapsto \Delta(\cA)$,  we define the expected nominal transition kernel conditioned on the situation that the $i$-th agent chooses some action $a_i\in\cA_i$ and other agents play according to the conditional policy (i.e., $\ba_{-i} \sim \pi_h(\cdot \mymid s, a_i)$) given $s\in\cS$ and $a_i$ as below: 
\begin{align}
	\forall (h,s,a_i) \in [H] \times \cS\times \cA_i: \quad P^{\pi_{-i}}_{h,s,a_i} = \mathbb{E}_{\ba\sim \pi_{h}(\cdot \mymid s, a_i)} \left[ P^{0}_{h, s, \ba } \right] = \sum_{\ba_{-i} \in \cA_{-i}} \frac{\pi_h(a_i, \ba_{-i} \mymid s)}{\pi_{i,h}(a_i\mymid s)}  \left[ P^{0}_{h, s, \ba } \right]. \label{eq:uncertainty-correlated-policy-s-ai}
\end{align}

Armed with the above definitions, now we are in a position to define the {\em fictitious} uncertainty sets, which satisfy a {\em others-integrated \sset-rectangularity condition}.

\begin{definition}\label{def:uncertainty-set}
For any joint policy $\pi: \cS\times [H] \mapsto \Delta(\cA)$, divergence function $\rho: \Delta(\cS) \times \Delta(\cS) \mapsto \mathbb{R}^+$ and accessible uncertainty levels $\ror_i \geq 0$ for all $i\in[n]$, the  fictitious uncertainty sets $\big\{\nset_\rho(P^\no,\pi)  \big\}_{i\in[n]}$ satisfy the {\em others-integrated \sset-rectangularity} condition:
\begin{align}
	 \forall i\in[n]:~\cU^{\ror_i}_\rho(P^\no, \pi )  & \defn \otimes \; \cU^{\ror_i}_\rho \left(P^{\pi_{-i}}_{h,s,a_i} \right), \notag \\
  \text{s.t. }  \forall (h,s,a_i) \in [H] \times \cS \times \cA_i:  \quad 
	\cU^{\ror_i}_\rho \left(P^{\pi_{-i}}_{h,s,a_i} \right) &\defn \left\{ P\in \Delta (\cS): \rho \left(P,P^{\pi_{-i}}_{h,s,a_i} \right) \leq \ror_i \right\}, \label{eq:def-of-s-ai-set}
\end{align}
where $\otimes$ represents the Cartesian product.
\end{definition}
In words, conditioned on a fixed joint policy $\pi$, the uncertainty set $\cU^{\ror_i}_\rho(P^\no,\pi)$ for each $i$-th agent can be decomposed into a Cartesian product of subsets over each state and agent-action pair $(s,a_i)$. Each uncertainty subset $\cU^{\ror_i}_\rho(P^{\pi_{-i}}_{h,s,a_i})$ over $(s,a_i)$ is defined as a ``ball'' around a reference --- the expected nominal transition kernel $P^{\pi_{-i}}_{h,s,a_i}$ conditioned on both transition kernel and agents' joint policy $\pi$. 

% --- allowing each agent to account for a broader kind of uncertainty/disturbance
\paragraph{Further discussions of fictitious uncertainty set.} 
Here, we discuss the proposed fictitious uncertainty sets in detail, focusing on their practical implications, properties, and relation to prior work and the broader robust RL literature. Before proceeding, we have to mention that prior works on RMGs focused on a type of uncertainty sets with {\em $(s,\ba)$-rectangularity condition } \citep{ma2023decentralized,blanchet2023double,shi2024sample}. This class of uncertainty sets decouples the uncertainty into independent subsets for each state-joint action pair $(s, \ba)$, accounting for the uncertainty induced by other agents separately and independently, mathematically defined as
\begin{align*}
    \mathcal{U}_\rho^{\sigma_i}(P^0) := \otimes \mathcal{U}^{\sigma_i}(P_{h,s,\mathbf{a}}^0), \quad \text{where} \quad
    \mathcal{U}^{\sigma_i}_{\rho}(P_{h,s,\mathbf{a}}^0) = \left\{ P_{h,s,\mathbf{a}} \in \Delta(\mathcal{S}) : \rho(P_{h,s,\mathbf{a}}, P_{h,s,\mathbf{a}}^0) \leq \sigma_i \right\}.
\end{align*}
\begin{itemize}

\item 
{\em Realistic and predictive of human decisions in comparisons to prior works.} Observed from experimental data of behavioral economics, in many games considering agents' randomness \citep{friedman2022quantal,goeree2005regular,sandomirskiy2024narrow}, people address other players' uncertainty in an integrated manner as a risk metric outside of their expected outcomes (e.g., $\mathbf{Risk}(\mathbb{E}_{a_{-i}\in \pi_{-i}}[V_{i,h}^{\pi,P}(a_i, \ba_{-i})])$), but not in a separate manner as a expectation of the risk metric over outcome of each joint action (namely, $\mathbb{E}_{a_{-i}\in \pi_{-i}}[\mathbf{Risk}(V_{i,h}^{\pi.P}(a_i, \ba_{-i})]$ ). Here, the former one that is realistic corresponds to our fictitious uncertainty set, while the latter one corresponds to the uncertainty sets with {\em $(s,\ba)$-rectangularity condition } \citep{ma2023decentralized,blanchet2023double,shi2024sample} studied in prior works. 
The proposed uncertainty set models are realistic and predictive of human decision-making behaviors from behavioral economics.

\item {\em A natural adaptation from single-agent robust RL.} When agents follow some joint policy $\pi: \cS\times [H] \mapsto \Delta(\cA)$, fixing other agents' policy $\pi_{-i}$, from the perspective of each individual agent $i$,  \rmgs with our others-integrated \sset-rectangularity condition will degrade to a single-agent robust RL problem with the widely used $(s,a_i)$-rectangularity condition in the single-agent literature \citep{iyengar2005robust,zhou2021finite}. Namely, from any agent $i$'s viewpoint, in a \rmg, it deals with an "overall environment" player that can not only manipulate the environmental dynamics but also other players' policy $\pi_{-i}$.

\end{itemize}

\subsection{Properties of RMGs with fictitious uncertainty set}

Throughout the paper, we focus on the class of \rmgs with the above proposed fictitious uncertainty sets, denoted  as 
\rmgin
and abbreviated as \ormg in the remaining of the paper. In this section, we present key facts about \ormg related to best-response policies, equilibria, and the corresponding one-step lookahead robust Bellman equations. The proofs are postponed to  Appendix~\ref{proof:thm:existence-of-ne}.

First, we introduce the following lemma, which verifies the existence of a robust best-response policy that achieves the maximum robust value function (cf.~\eqref{eq:defn-optimal-V}) in any \rmgin. 

\begin{lemma}\label{lemma:existence-best-reponse}
For any $i\in[n]$, given $\pi_{-i}: \cS \times [H] \mapsto \Delta(\cA_i)$, there exists at least one policy $\widetilde{\pi}_i: \cS \times [H] \rightarrow \Delta(\mathcal{A}_i)$ for the $i$-th agent that can simultaneously attain $V_{i,h}^{\widetilde{\pi}_i \times \pi_{-i}, \ror_i}(s)  = V_{i,h}^{\star,\pi_{-i}, \ror_i}(s)$ for all $s\in \cS$ and $h\in[H]$. We refer this policy as the {\em robust best-response policy}.

\end{lemma}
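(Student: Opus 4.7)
The plan is to reduce the multi-agent robust optimization from agent $i$'s perspective to a standard single-agent robust MDP with $(s,a_i)$-rectangular uncertainty, for which the existence of a deterministic best-response policy is classical (cf.~\citet{iyengar2005robust,nilim2005robust}). The first step is to observe that, once $\pi_{-i}$ is fixed and we restrict attention to product policies $\pi = \pi'_i \times \pi_{-i}$, the reference kernel appearing in Definition~\ref{def:uncertainty-set} simplifies to
\begin{align*}
P^{\pi_{-i}}_{h,s,a_i} \;=\; \sum_{\ba_{-i}\in\cA_{-i}} \pi_{-i,h}(\ba_{-i}\mymid s)\, P^{0}_{h}\bigl(\cdot\mymid s,(a_i,\ba_{-i})\bigr),
\end{align*}
which is independent of the candidate $\pi'_i$. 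Consequently the fictitious uncertainty set $\nset_\rho(P^{0},\pi'_i\times\pi_{-i}) = \otimes_{h,s,a_i}\, \nset_\rho(P^{\pi_{-i}}_{h,s,a_i})$ is a fixed product of balls indexed by $(h,s,a_i)$ that depends only on $\pi_{-i}$.

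Next, I would associate to $\pi_{-i}$ an auxiliary single-agent robust MDP $\widetilde{\cM}$ with state space $\cS$, action space $\cA_i$, horizon $H$, nominal transition $P^{\pi_{-i}}_{h,s,a_i}$, effective reward $r^{\pi_{-i}}_{i,h}(s,a_i) := \sum_{\ba_{-i}} \pi_{-i,h}(\ba_{-i}\mymid s)\, r_{i,h}(s,(a_i,\ba_{-i}))$, and $(s,a_i)$-rectangular uncertainty set $\{\nset_\rho(P^{\pi_{-i}}_{h,s,a_i})\}$. Writing $\widetilde{V}_{i,h}^{\pi'_i,\ror_i}$ for the robust value function of a policy $\pi'_i$ in $\widetilde{\cM}$, a short calculation using the tower property together with the product factorization $\pi_h(\ba\mymid s) = \pi'_{i,h}(a_i\mymid s)\,\pi_{-i,h}(\ba_{-i}\mymid s)$ yields the reduction identity
\begin{align*}
V_{i,h}^{\pi'_i\times\pi_{-i},\,\ror_i}(s) \;=\; \widetilde{V}_{i,h}^{\pi'_i,\,\ror_i}(s), \qquad \forall (h,s)\in[H]\times\cS,\ \forall \pi'_i.
\end{align*}
The others-integrated rectangularity is essential here: it permits exchanging the infimum over the Cartesian-product uncertainty set with the layer-wise expectations, reducing the joint worst case to a per-$(h,s,a_i)$ one.

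Granted this identity, the lemma reduces to the existence of a deterministic optimal policy for $\widetilde{\cM}$, which is established by classical robust dynamic programming. Initializing $\widetilde{V}_{i,H+1}^{\star,\ror_i}(s)\equiv 0$ and iterating backwards over $h = H, H-1, \ldots, 1$,
\begin{align*}
\widetilde{Q}_{i,h}^{\star,\ror_i}(s,a_i) &= r^{\pi_{-i}}_{i,h}(s,a_i) + \inf_{P\in\nset_\rho(P^{\pi_{-i}}_{h,s,a_i})} P\, \widetilde{V}_{i,h+1}^{\star,\ror_i}, \\
\widetilde{\pi}_{i,h}(s) &\in \arg\max_{a_i\in\cA_i}\, \widetilde{Q}_{i,h}^{\star,\ror_i}(s,a_i), \\
\widetilde{V}_{i,h}^{\star,\ror_i}(s) &= \max_{a_i\in\cA_i}\, \widetilde{Q}_{i,h}^{\star,\ror_i}(s,a_i),
\end{align*}
produces a deterministic Markov policy $\widetilde{\pi}_i$ satisfying $\widetilde{V}_{i,h}^{\widetilde{\pi}_i,\ror_i}(s) = \max_{\pi'_i} \widetilde{V}_{i,h}^{\pi'_i,\ror_i}(s)$ for every $(h,s)$. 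Combining this with the reduction identity delivers $V_{i,h}^{\widetilde{\pi}_i\times\pi_{-i},\,\ror_i}(s) = V_{i,h}^{\star,\pi_{-i},\ror_i}(s)$ simultaneously at every $(h,s)$, which is exactly the claim of the lemma.

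The main obstacle is establishing the reduction identity cleanly. The subtle point is that the robust value function in the original RMG is defined as an infimum over an uncertainty set whose elements are indexed by $(h,s,a_i)$ rather than by joint-action $(h,s,\ba)$ tuples; I would need to make precise how each such element induces a law on trajectories under the full joint policy $\pi'_i\times\pi_{-i}$, and then invoke the product structure together with $(s,a_i)$-rectangularity to commute the outer infimum with the backward recursion. Once that commutation is justified, everything reduces to the standard robust Bellman equations and the proof is essentially mechanical.
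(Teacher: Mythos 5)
Your proposal is correct and follows essentially the same route as the paper: the paper's proof is a backward induction on $h$ that, at each step, uses the robust Bellman equation (with the fixed reference kernel $P^{\pi_{-i}}_{h,s,a_i}$, which for a product policy does not depend on $\pi'_i$) to reduce the choice at step $h$ to an argmax of a continuous function over $\Delta(\cA_i)$ — i.e., exactly the single-agent robust dynamic program you describe, merely without the explicit "auxiliary robust MDP $\widetilde{\cM}$" packaging. The only cosmetic differences are that the paper takes the argmax over the simplex (allowing a randomized best response) where you take a deterministic one, and the commutation of the outer infimum with the recursion that you flag as the "main obstacle" is precisely the robust Bellman equation \eqref{eq:robust-bellman-equation}, which the paper likewise asserts by rectangularity rather than deriving in detail.
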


\paragraph{Existence of robust NE and robust CCE.}
Fictitious RMGs can be viewed as hierarchical games with $n+nS\sum_{i=1}^n A_i$ agents. This includes the original $n$ agents and $n$ additional sets of $S\sum_{i=1}^n A_i$ independent adversaries, each determining the worst-case transitions for one agent over a state plus agent-wise-action pair. 
Considering the solution concepts --- robust NE and robust CCE --- introduced in Section~\ref{sec:robust-mgs}, the following theorem  verifies the existence of them for any \ormg using Kakutani's fixed-point theorem \citep{kakutani1941generalization}, focusing on robust NE firstly.
\begin{theorem}[Existence of robust NE]\label{thm:existence-of-ne}
 For any \rmgin $= \big\{ \cS, \{\cA_i\}_{1 \le i \le n}, \{\cU_{\rho}^{\ror_i}(P^\no,  \cdot)\}_{1 \le i \le n}, $ $\rew, H \big\}$ with an uncertainty set defined in Definition~\ref{def:uncertainty-set}, there exists at least one robust NE.
\end{theorem}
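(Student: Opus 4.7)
The plan is to apply Kakutani's fixed-point theorem to the best-response correspondence on the space of joint product Markov policies $\Pi \defn \prod_{i=1}^n \prod_{h\in[H], s\in\cS} \Delta(\cA_i)$, which is a non-empty, convex, compact subset of a finite-dimensional Euclidean space. For each $\pi = (\pi_i, \pi_{-i}) \in \Pi$, I would define
$\Phi(\pi) \defn \prod_{i=1}^n \Phi_i(\pi_{-i})$, with
$\Phi_i(\pi_{-i}) \defn \{\widetilde\pi_i : V^{\widetilde\pi_i \times \pi_{-i},\sigma_i}_{i,1}(s) = V^{\star,\pi_{-i},\sigma_i}_{i,1}(s)\text{ for all } s \in \cS\}$.
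Non-emptiness of $\Phi_i(\pi_{-i})$ is exactly Lemma~\ref{lemma:existence-best-reponse}, and a fixed point of $\Phi$ is by \eqref{eq:defn-robust-Nash-E} a robust NE, so it suffices to verify that $\Phi$ has convex values and a closed graph.

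Convexity hinges on reducing each agent's problem to a single-agent robust MDP. Because of the others-integrated $(s,a_i)$-rectangularity in Definition~\ref{def:uncertainty-set}, when $\pi_{-i}$ is fixed and $\pi$ is a product policy, the uncertainty set $\cU^{\sigma_i}_\rho(P^0,\pi)$ factorizes over $(h,s,a_i)$ around the kernel $P^{\pi_{-i}}_{h,s,a_i}$, which depends only on $\pi_{-i}$ (cf.~\eqref{eq:uncertainty-correlated-policy-s-ai}). Standard robust dynamic programming then yields the robust Bellman optimality equations
$Q^{\star,\pi_{-i},\sigma_i}_{i,h}(s,a_i) = \mathbb{E}_{\ba_{-i}\sim\pi_{-i,h}(\cdot|s)}[r_{i,h}(s,(a_i,\ba_{-i}))] + \inf_{P\in\cU^{\sigma_i}_\rho(P^{\pi_{-i}}_{h,s,a_i})} P \cdot V^{\star,\pi_{-i},\sigma_i}_{i,h+1}$,
together with $V^{\star,\pi_{-i},\sigma_i}_{i,h}(s) = \max_{a_i} Q^{\star,\pi_{-i},\sigma_i}_{i,h}(s,a_i)$. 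A policy $\widetilde\pi_i$ therefore lies in $\Phi_i(\pi_{-i})$ iff $\widetilde\pi_{i,h}(\cdot|s)$ is supported on $\arg\max_{a_i} Q^{\star,\pi_{-i},\sigma_i}_{i,h}(s,a_i)$ for every $(h,s)$, so $\Phi_i(\pi_{-i})$ is a product over $(h,s)$ of faces of the simplex $\Delta(\cA_i)$ and is thus convex and closed.

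The main obstacle is the closed-graph condition, which reduces to showing that the map $\pi_{-i}\mapsto Q^{\star,\pi_{-i},\sigma_i}_{i,h}(s,a_i)$ is continuous, since both the inner objective and the ambient uncertainty ball vary with $\pi_{-i}$. I would proceed by backward induction on $h$: the reward term is linear in $\pi_{-i,h}$, and for the infimum term the center $P^{\pi_{-i}}_{h,s,a_i}$ is an affine function of $\pi_{-i,h}$ by \eqref{eq:uncertainty-correlated-policy-s-ai}, so the $\rho$-ball around it varies continuously in Hausdorff distance. Combined with the inductive continuity of $V^{\star,\pi_{-i},\sigma_i}_{i,h+1}$, Berge's maximum theorem then delivers continuity of the infimum, and hence of $Q^{\star,\pi_{-i},\sigma_i}_{i,h}$. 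Upper hemicontinuity of $\arg\max$ in turn implies that if $\widetilde\pi^{(k)}\in\Phi(\pi^{(k)})$ and $(\pi^{(k)},\widetilde\pi^{(k)})\to(\pi^\star,\widetilde\pi^\star)$, then $\widetilde\pi^\star\in\Phi(\pi^\star)$, closing the graph.

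Putting the pieces together, Kakutani's fixed-point theorem yields $\pi^\star\in\Phi(\pi^\star)$, which is by construction a robust NE. The nontrivial ingredient relative to standard MGs is that the payoff $V^{\pi'_i\times\pi_{-i},\sigma_i}_{i,1}$ is no longer multilinear in the opponents' policy due to the inner infimum; I expect the continuity step for the parametrically shifting uncertainty ball to require the most care, while convexity of the best-response set is essentially automatic once the reduction to a single-agent $(s,a_i)$-rectangular robust MDP is made.
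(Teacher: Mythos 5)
Your route is sound but genuinely different from the paper's. The paper does \emph{not} apply Kakutani once on the full policy space; instead it constructs, for each time step $h$, an auxiliary one-shot game with payoffs $f_{i,s}(\pi_i(s),\pi_{-i}(s);V_{i,h+1})$ built from a \emph{fixed} continuation vector $V_{i,h+1}$, proves existence of a NE of that stage game via Kakutani (the work there is equicontinuity of $f_{i,s}$ and continuity of the max-value map $g_{i,s}$, Lemmas~\ref{lemma:continuous-f}--\ref{lemma:continuous-g}), and then stitches the stage equilibria together by backward induction with $V_{i,h+1}=V^{\star,\pi_{-i},\ror_i}_{i,h+1}$. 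Convexity of the stage best-response set is immediate there because $f_{i,s}$ is linear in $\pi_i(s)$. Your approach instead applies Kakutani globally to the best-response correspondence on $\prod_i\prod_{h,s}\Delta(\cA_i)$, which shifts the burden onto (a) convexity of the set of best-response \emph{Markov policies} and (b) continuity of $\pi_{-i}\mapsto Q^{\star,\pi_{-i},\ror_i}_{i,h}$ propagated through the backward recursion via Berge's theorem. Both routes work; the paper's avoids having to control how $V^{\star,\pi_{-i},\ror_i}_{i,h+1}$ varies with $\pi_{-i}$ across time steps, while yours avoids the induction-over-stage-games bookkeeping. (Both proofs, yours and the paper's, really only verify the continuity of the inner infimum for the TV ball via its dual form, even though the theorem is stated for general $\rho$.)

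One genuine wrinkle you must fix: as written, $\Phi_i(\pi_{-i})$ imposes optimality only at $h=1$, yet your convexity and closed-graph arguments use the characterization ``best response iff greedy w.r.t.\ $Q^{\star,\pi_{-i},\ror_i}_{i,h}$ at \emph{every} $(h,s)$.'' Only the ``if'' direction holds for the $h=1$ definition: a policy can be optimal from every state at step $1$ while acting non-greedily at future states it never reaches, and mixing two such policies can create positive probability of reaching a state where one of them is non-greedy, so the $h=1$-only best-response set is in general \emph{not} convex. The fix is to define $\Phi_i(\pi_{-i})$ as the set of policies attaining $V^{\star,\pi_{-i},\ror_i}_{i,h}(s)$ for all $(h,s)$ (exactly the set Lemma~\ref{lemma:existence-best-reponse} shows is non-empty); for that set the greedy characterization is an equivalence, the set is a product of faces of simplices (convex, closed), and any fixed point is still a robust NE since the NE condition only involves $h=1$. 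With that correction your argument goes through.
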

Analogous to standard Markov games, since $\{\text{robust NE}\} \subseteq \{\text{robust CCE}\}$, Theorem~\ref{thm:existence-of-ne} indicates the existence of robust CCEs directly. The class of \ormg feature a robust counterpart of the Bellman equation ---  {\em robust Bellman equation}, which is detailed in Appendix~\ref{sec:robust-bellman-equation}.

%!TEX root = ./../DRO-MARL-near-optimal.tex

\section{Sample-Efficient Learning: Algorithm and Theory}

In this section, we focus on designing sample-efficient algorithms for solving \ormg when agents need to collect data by interacting with the unknown shared environment in order to learn the equilibria. To proceed, we shall first specify   the data collection mechanism and the divergence function for the uncertainty set. Then we propose a sample-efficient algorithm \RQFTRL that leverages tailored adaptive sampling strategy to break the curse of multiagency for solving \ormg.

\subsection{Problem setting and goal}
Recall that the uncertainty sets are constructed by specifying a divergence function $\rho$ and the uncertainty level to control its shape and size.
In this work, we focus on using the TV distance as the divergence function $\rho$ for the uncertainty set, following \citet{szita2003varepsilon,lee2021optidice,pan2023adjustable,shi2023curious,shi2024sample}, defined by 
\begin{align}\label{eq:general-infinite-P}
	\forall P, P'\in \Delta(\cS): \quad \rho_{\mathsf{TV}}\left(P, P'\right) \defn  \frac{1}{2}\left\|P-P'\right\|_1.
\end{align}
For convenience, throughout the paper, we abbreviate $\cU^{\ror_i}(\cdot) \defn \cU^{\ror_i}_{\rho_{\mathsf{TV}}}(\cdot)$ when there is no ambiguity.

\paragraph{Data collection mechanism: a generative model.}
We assume the agents interact with the environment through a generative model (simulator) \citep{kearns1999finite}, which is a widely used sampling mechanism in both single-agent RL and MARL \citep{zhang2020model-based,li2022minimax}. Specifically, at any time step $h$, we can collect an arbitrary number of independent samples  from any state and joint action tuple $(s,\ba)\in \cS\times \cA$, generated based on the true {\em nominal} transition kernel $P^{\no}$:
\begin{align}
	 s_{h,s, \ba}^i \overset{i.i.d}{\sim} P^\no_h(\cdot \mymid s, \ba), \qquad i = 1, 2,\ldots
\end{align}

\paragraph{Goal.}
Consider any \ormg \rmgin $ = \big\{ \cS, \{\cA_i\}_{1 \le i \le n},\{\cU^{\ror_i}(P^\no,\cdot)\}_{1 \le i \le n}, \rew, H \big\}$. In practice, learning exact robust equilibria is computationally challenging and may not be necessary, instead in this work, we focus on finding an approximate robust CCE (defined in \eqref{eq:defn-robust-CCE}). Namely, a distribution $\xi \defn \{\xi_h\}_{h\in[H]}: [H] \times \cS\mapsto \Delta(\prod_{i\in[n]} \Delta(\cA_i))$  is said to be an {\em $\varepsilon$-robust CCE}  if 
\begin{equation}
	\mathsf{gap}_{\mathsf{CCE}}(\xi) \defn \max_{s\in \cS, 1 \leq i \leq n} \left\{ \mathbb{E}_{\pi\sim \xi} \left[V_{i,1}^{\star,\pi_{-i}, \ror_i }(s)\right] - \mathbb{E}_{\pi\sim \xi}\left[V_{i,1}^{\pi,\ror_i}(s)\right] \right\} \leq \varepsilon.
	\label{eq:defn-CCE-epsilon}
\end{equation}
Armed with a generative model of the nominal environment, the goal becomes learning a robust CCE using as few samples from the simulator as possible.

\subsection{Algorithm design}

With the sampling mechanism over a generative model in hand, we propose an algorithm called \RQFTRL to learn an {\em $\varepsilon$-robust CCE} in a sample-efficient manner. \RQFTRL draws inspiration from Q-FTRL developed in the standard MG literature \citep{li2022minimax}, but empowers tailored designs for learning in \ormg to achieve a robust equilibrium and to tackle statistical challenges arising from agents' nonlinear worst-case objectives.
Overall, \RQFTRL takes a single pass to learn recursively from the final time step $h=H$ to $h=1$. At each time step $h\in[H]$, an online learning process with  $K$ iterations will be executed. Before introducing the algorithm, we first concentrate on  two essential steps customized for learning in \ormg.

\begin{algorithm}[t]

	\textbf{Initialization:} the reward $\widehat{r}=0 \in \mathbb{R}^{SA_i}$ and the transition model $\widehat{P}=0 \in \mathbb{R}^{SA_i\times S}$. 
	
	\For{$(s,a_i)\in \cS\times \cA_i$}{

		\For{$ t = 1$ \KwTo $N$}{
           
		Sample $\bm{a}^t(s,a_i)= [a_{j}(s,a_i) ]_{1\leq j\leq n}$ constructed by independent actions drawn from policy:%
		\begin{equation}
			a_{j}(s,a_i) \overset{\text{ind.}}{\sim} \pi_{j, h}(\cdot \mymid s) ~~~~ (j\neq i)
			\qquad \text{and} \qquad
			a_{i}(s,a_i) = a_i.
		\end{equation}
		\\
		
		Sample from the generative model:
		\begin{equation}
			r_{i,h}^t(s,a_i) = r_{i,h}(s, \bm{a}^t(s,a_i)), \qquad s^t_{s,a_i} \sim P_h\big(\cdot \mymid s, \bm{a}^t(s,a_i)\big).
		\end{equation}
}
		Set $\widehat{r}(s,a_i)=\frac{1}{N}\sum_{t\in[N]} r_{i,h}^t(s,a_i)$ and $\widehat{P}\big(s'\mymid s,a_i \big)=\frac{1}{N}\sum_{t\in[N]} \mathds{1} \big\{ s^t_{s,a_i} = s' \big\}$. 
	
	}

	\textbf{Return:} empirical model $\big( \widehat{r}, \widehat{P} \big)$.

	\caption{\tt{$N$-sample estimation}$\big(\pi_h = \{\pi_{j, h}\}_{j\in[n]}, i,h\big)$.
    \label{alg:sampling-function}}
\end{algorithm}

\paragraph{Constructing the empirical model via $N$-sample estimation.}
For each time step $h$, we denote $\pi_{i,h}^k$ as the current learning policy of the $i$-th agent before the beginning of the $k$-th iteration for any $k\in[K]$. And we denote the joint product policy as $\pi_h^k = (\pi_{1,h}^k ,\cdots, \pi_{n,h}^k)$. During each iteration $k$, for each agent $i\in[n]$, we require to generate $N$ independent samples from the generative model over each $(s,a_i) \in \cS \times \cA_i$ to obtain an empirical model, detailed in Algorithm~\ref{alg:sampling-function}. It includes an empirical reward function represented by $r_{i,h}^k \in \mathbb{R}^{SA_i}$ and transition kernels denoted by $P_{i,h}^k  \in \mathbb{R}^{SA_i \times S}$. Note that different from standard MGs, we need to generate $N$ samples instead of $1$ sample per iteration to handle the additional statistical challenges induced by the non-linear objective of agents ($N$ will be specified momentarily).

\paragraph{Estimating robust Q-function of the current policy $\pi_h^k$.}  
We denote $\widehat{V}_{i,h} \in \mathbb{R}^{S}$ as the estimation of the $i$-th agent's robust value function at time step $h$.
For any agent $i$, with the empirical reward function $r_{i,h}^k$, empirical kernel $P_{i,h}^k$, and the estimated robust value function $\widehat{V}_{i,h+1}$ at the next step in hand, the robust Q-function $\{q^k_{i,h}\}$ of current policy $\pi_h^k$ can be estimated as:
\begin{align}
  \forall (i,h,s,a_i)\in  [n] \times [H] \times \cS\times A_i: \quad q^k_{i,h}(s, a_i) = r_{i,h}^k(s, a_i) + \inf_{ \cP \in \unb^{\ror_i}(P_{i,h,s,a_i}^k)} \cP \widehat{V}_{i,h+1}. \label{eq:nvi-iteration}
\end{align}
Unlike the linear function w.r.t. $P_{i,h}^k$ in standard MGs, \eqref{eq:nvi-iteration} lacks a closed form and introduces an additional inner optimization problem. Solving \eqref{eq:nvi-iteration} directly is computationally challenging due to the need to optimize over an $S$-dimensional probability simplex, with complexity growing exponentially with the state space size $S$. Fortunately, by applying strong duality, we can solve \eqref{eq:nvi-iteration} equivalently via its dual problem with tractable computation \citep{iyengar2005robust}:
\begin{align}
&q^k_{i,h}(s, a_i) = r_{i,h}^k(s, a_i)  +   \max_{\alpha\in [\min_s \widehat{V}_{i,h+1}(s), \max_s \widehat{V}_{i,h+1}(s)]}  \Big\{ P_{i,h}^k \left[\widehat{V}_{i,h+1}\right]_{\alpha} - \ror_i \left(\alpha - \min_{s'}\left[\widehat{V}_{i,h+1}\right]_{\alpha}(s') \right) \Big\}  ,  \label{eq:nvi-iteration-dual-alg} 
\end{align} 
where $[V]_{\alpha}$ denotes the clipped version of any  vector $V\in\mathbb{R}^S$ determined by some level $\alpha\geq 0$, namely,
\begin{align}
	[V]_{\alpha}(s) \defn \begin{cases} \alpha, & \text{if } V(s) > \alpha, \\
V(s), & \text{otherwise.}
\end{cases} \label{eq:V-alpha-defn}
\end{align}

The above two modules are key components of \RQFTRL, serving for constructing nonlinear robust objectives in the online learning process and ensuring the desired statistical accuracy.

\paragraph{Overall pipeline of \RQFTRL.}  
With these modules in place, we introduce \RQFTRL, which follows a similar online learning procedure as Q-FTRL for standard MGs \citep{li2022minimax}. The complete procedure is summarized in Algorithm~\ref{alg:summary}. We denote $Q_{i,h}^k \in \mathbb{R}^{SA_i}$ as the estimated robust Q-function of the equilibrium for the $i$-th agent at the $k$-th iteration of time step $h$. To begin with, \RQFTRL initialize the robust value function, robust Q-function $\widehat{V}_{i,H+1}(s)=Q_{i, h}^0(s,a_i)=0$, and the policy $\pi_{i,h}^1(a_i\mymid s)=1/A_i$ for all $(i,s) \in [n] \times \cS$.
Then subsequently from the final time step $h=H$ to $h=1$, for each step $h$, a $K$ iterations online learning process will be executed. At each $k$-th iteration, given current policy $\pi_h^k$, as described above, an empirical model ($\{r_{i,h}^k\}_{i\in [n]}$ and $\{ P_{i,h}^k\}_{i \in [n]}$) is constructed by $N$-sample estimation (cf.~\cref{alg:sampling-function}). Then the robust Q-function $\{q_{i,h}^k\}_{i\in[n]}$ of the current policy $\pi_h^k$ is estimated by \eqref{eq:nvi-iteration-dual-alg}. 

Now we are ready to specify the loss objective and proceed the online learning procedure. 
With the current one-step update $ \{q_{i,h}^k \}$, we update the Q-estimate as $Q_{i, h}^k = (1-\alpha_k)Q_{i, h}^{k-1} + \alpha_k q_{i, h}^k$. Here, $\{\alpha_k\}_{k\in[K]}$ is a series of rescaled linear learning rates with some $c_\alpha\geq 24$, 
\begin{align}
   \forall  k\in[K]: \quad \alpha_k=\frac{c_\alpha\log K}{k-1+c_\alpha\log K} \quad  \text{ and } \quad   \alpha_k^n = \begin{cases} \alpha_k \prod_{i=k+1}^n (1-\alpha_i), & \text{if } 0 <k <n \leq K\\
   \alpha_n & \text{if } k = n \end{cases}. \label{eq:online-learning-rate}
\end{align}
Let the Q-estimate be the online learning loss objective at this moment, we apply the Follow-the-Regularized-Leader strategy \citep{shalev2012online,li2022minimax} to update the corresponding policy as below:
\begin{align}
			\pi_{i, h}^{k+1}(a_i\mymid s) = \frac{\exp\big(\eta_{k+1} Q_{i, h}^k(s,a_i)\big)}{\sum_{a'} \exp\big(\eta_{k+1} Q_{i, h}^k(s,a')\big)} \quad \text{ with } \quad \eta_{k+1}=\sqrt{\frac{\log K}{\alpha_k H}},\qquad k=1,2,\ldots
			\label{eq:policy-update-exponential-alg}
   \end{align}
This is a widely used adaptive sampling and learning procedure for MARL problems.

After completing $K$ iterations for time step $h$, we finalize the robust value function estimation by setting it to its confidence upper bound, incorporating carefully designed optimistic bonus terms $\{\beta_{i,h}\}$ as: for all $(i,h,s)\in [n] \times [H] \times \cS$,
\begin{align}
     \beta_{i,h}(s) = c_{\mathsf{b}}\sqrt{\frac{\log^3(\frac{KS\sum_{i=1}^nA_i}{\delta})}{KH}}\sum_{k=1}^K\alpha_k^K\left\{\mathsf{Var}_{\pi_{i,h}^k(\cdot\mid s)}\left(q_{i,h}^k(s,\cdot)\right)+H\right\}, \label{eq:bonus-beta}
\end{align}
where $c_{\mathsf{b}}$ denotes some absolute constant, and $\delta \in(0,1)$ is the high probability threshold.
Finally, after the recursive learning process ends for all time steps $h= H, H-1,\cdots, 1$, we output a distribution of product policy $\widehat{\xi} = \{\widehat{\xi}_h\}_{h\in[H]}$ over all the policies $\{ \pi_h^k = (\pi_{1,h}^k \times \cdots \times \pi_{n,h}^k)\}_{h\in[H],k\in[K]}$ occurs during the process that defined as
\begin{align}
\forall (h,k)\in[H] \times [K]: \quad \xi_h(\pi_h^k) \defn \alpha_k.
\end{align}

\begin{algorithm}[t]
	{\em Input:} learning rates $\{\alpha_k\}$  and $\{\eta_{k+1}\}$, number of iterations $K$ per time step, and number of samples $N$ per iteration.\\
	{\em Initialization:} $\widehat{V}_{i,H+1}(s)=Q_{i, h}^0(s,a_i)=0$ and $\pi_{i,h}^1(a_i\mymid s)=1/A_i$ for all $i\in [n]$ and then all $(h,s,a_i)\in [H] \times \cS\times \cA_i$.\\
    \tcp{start recursive learning process.}
	\For{$ h = H, H-1,\cdots, 1$}{
	\For{$ k = 1,2,\cdots, K$}{
	\For{$ i = 1,2,\cdots, n$}{
		\tcp{construct empirical models and estimate current robust Q-function}
		$\big(r_{i,h}^k,P_{i,h}^k\big)$ $\leftarrow$ 
		$N$-sample estimation $\big(\pi_h^k = \{\pi_{j, h}^k\}_{j\in[n]}, i, h \big)$. (Algorithm~\ref{alg:sampling-function})

		Estimate the robust Q-function $q_{i, h}^k$  of current $\pi_h^k$ according to \eqref{eq:nvi-iteration-dual-alg}.\\
	
		\tcp{Online learning procedure}
	Update the Q-estimate $Q_{i, h}^k = (1-\alpha_k)Q_{i, h}^{k-1} + \alpha_k q_{i, h}^k$ and apply FTRL:
	$\forall (s,a_i)\in \cS\times \cA_i: \quad \pi_{i, h}^{k+1}(a_i\mymid s) = \frac{\exp\big(\eta_{k+1} Q_{i, h}^k(s,a_i)\big)}{\sum_{a'} \exp\big(\eta_{k+1} Q_{i, h}^k(s,a')\big)}.$
			
	}
	}
	\tcp{set the final robust value estimate at time step $h$.}

	\For{$ i = 1,2,\cdots, n $}{
	\label{line:policy-V-output} 

	For all $s\in\cS$: set $\beta_{i,h}(s)$ to be the optimistic bonus term in \eqref{eq:bonus-beta} and 
	\begin{align}
		\widehat{V}_{i,h}(s)&=\min\Big\{ \sum_{k=1}^{K}\alpha_{k}^{K}\big\langle\pi_{i,h}^{k}(\cdot\mymid s),\,q_{i,h}^{k}(s,\cdot)\big\rangle+\beta_{i,h}(s),~H-h+1 \Big\}. \label{eq:line-number-policy-update}
	\end{align}

	}
	}
	{\em Output:} a set of policies $\{ \pi_h^k = (\pi_{1,h}^k \times \cdots \times \pi_{n,h}^k) \}_{k\in[K], h\in[H]}$ and a distribution  $\widehat{\xi} = \{\widehat{\xi}_h\}_{h\in[H]}$ over them. For any time step $h$, $\widehat{\xi}_h$ is the distribution over $\{\pi_h^k\}_{k\in[K]}$ so that $\widehat{\xi}_h(\pi_h^k) = \alpha_k^K$.
	
	\caption{ \RQFTRL }
\label{alg:summary}
\end{algorithm}

%!TEX root = ./../DRO-MARL-near-optimal.tex

\subsection{Theoretical guarantees}

In this section, we provide the theoretical guarantees for the sample complexity of our proposed algorithm \RQFTRL, shown as below:
\begin{theorem}[Upper bound]\label{thm:robust-mg-upper-bound} 
	Using the TV uncertainty set defined in \eqref{eq:general-infinite-P}. Consider any $\delta \in (0,1)$ and any \ormg  \rmgin $= \big\{ \cS, \{\cA_i\}_{1 \le i \le n},\{\cU^{\ror_i}(P^\no,\cdot)\}_{1 \le i \le n}, \rew,  H \big\}$ with $\ror_i \in (0,1]$ for all $i\in[n]$. For any $\varepsilon \leq \sqrt{\min \big\{H,  \frac{1}{\min_{1\leq i\leq n} \ror_i} \big\}}$, Algorithm~\ref{alg:summary} can output an $\varepsilon$-robust CCE $\widehat{\xi}$, i.e.,
     \begin{align*}
\mathsf{gap}_{\mathsf{CCE}}(\widehat{\xi}) \defn \max_{s\in \cS, 1 \leq i \leq n} \left\{ \mathbb{E}_{\pi\sim \widehat{\xi}} \left[V_{i,1}^{\star,\pi_{-i}, \ror_i }(s)\right] - \mathbb{E}_{\pi\sim \widehat{\xi}}\left[V_{i,1}^{\pi,\ror_i}(s)\right] \right\} \leq \varepsilon
	\end{align*}
with probability at least $1-\delta$, as long as 
\begin{align}
N \geq \frac{C_1 H^2}{\epsilon^2} \min\left\{ \frac{1}{\min_{1\leq i\leq n}\sigma_i}, H \right\} , \quad K \geq \frac{C_1 H^3}{\epsilon^2}.
\end{align}
Here $C_1$ is some universal large enough constant.
Namely, it is sufficient if the total number of samples acquired in the learning process obeys 
\begin{align*}
	N_{\mathsf{all}} \defn HKNS\sum_{1\leq i\leq n} A_i  \geq  \frac{ (C_1)^2 H^6  S   \sum_{1\leq i\leq n} A_i   }{  \varepsilon^4}\min \Big\{H,  \frac{1}{\min_{1\leq i\leq n} \ror_i} \Big\}.
\end{align*}
\end{theorem}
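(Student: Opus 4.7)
The plan is to bound the CCE gap $\mathsf{gap}_{\mathsf{CCE}}(\widehat{\xi})$ by separately controlling (i) how close the algorithm's estimate $\widehat{V}_{i,h}(s)$ is to the best-response robust value $\mathbb{E}_{\pi\sim\widehat{\xi}}[V_{i,1}^{\star,\pi_{-i},\sigma_i}(s)]$ (an \emph{optimism} bound from above), and (ii) how close it is to the equilibrium robust value $\mathbb{E}_{\pi\sim\widehat{\xi}}[V_{i,1}^{\pi,\sigma_i}(s)]$ (an \emph{accuracy} bound from below). Writing the gap as a telescoping sum indexed by $h$ and $k$, both one-sided bounds reduce, via a backward induction on $h$, to controlling three quantities per iteration: the statistical error of the empirical one-step robust Bellman update $q_{i,h}^k$ against its population counterpart, the FTRL regret of the sequence $\{\pi_{i,h}^k\}_k$ against any fixed comparator, and the cumulative effect of the optimistic bonus $\beta_{i,h}$.

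\textbf{Concentration of the empirical robust Bellman update.} The first step is to show that for every $(i,h,s,a_i)$ and any deterministic $\widehat V_{i,h+1}$ with $\|\widehat V_{i,h+1}\|_\infty\le H$, the empirical update in \eqref{eq:nvi-iteration-dual-alg} satisfies a Bernstein-style deviation bound against its population version, in which the expected nominal kernel $P_{h,s,a_i}^{\pi_{-i}^k}$ of Definition~\ref{def:uncertainty-set} replaces $P_{i,h}^k$. Using strong duality and the fact that the dual variable $\alpha$ lies in a bounded interval, a covering argument over $\alpha$ together with Bernstein's inequality yields an error bound of order $\sqrt{\mathsf{Var}_{P_{h,s,a_i}^{\pi_{-i}^k}}([\widehat V_{i,h+1}]_\alpha)/N}+\tfrac{H}{N}$, and then the TV-robust variance lemma (in the style of the robust-RL literature for TV uncertainty sets) upgrades this to a bound of order $\sqrt{\min\{H,1/\sigma_i\}\cdot H/N}$ on the one-step error, which dictates the choice $N\asymp H^2\min\{H,1/\sigma_i\}/\varepsilon^2$. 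Because Algorithm~\ref{alg:sampling-function} samples $\bm{a}_{-i}$ from $\pi_{-i,h}^k$ and then draws $s'$ from $P_h^0(\cdot\mid s,\bm{a})$, the empirical kernel $P_{i,h}^k(\cdot\mid s,a_i)$ is an unbiased estimator of $P_{h,s,a_i}^{\pi_{-i}^k}$, which is exactly what the fictitious rectangularity in \eqref{eq:def-of-s-ai-set} requires.

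\textbf{Optimism, accuracy, and the FTRL decomposition.} Using the concentration bound, the bonus $\beta_{i,h}$ is calibrated so that, inductively in $h$, the Q-estimate $Q_{i,h}^k$ upper-bounds the true best-response robust Q-function of agent $i$ against $\pi_{-i,h}^k$ on the inductive event. Combining this optimism with the FTRL guarantee $\max_{\pi_i'}\sum_{k=1}^K \alpha_k^K\langle \pi_i'-\pi_{i,h}^k, Q_{i,h}^k(s,\cdot)\rangle \le \widetilde O(\sqrt{H\log A_i/K})$ for the rescaled weights $\{\alpha_k^K\}$ (following the Q-FTRL analysis of \citealt{li2022minimax}), the clipped output \eqref{eq:line-number-policy-update} upper bounds $\mathbb{E}_{\pi\sim\widehat\xi}[V_{i,h}^{\star,\pi_{-i},\sigma_i}(s)]$. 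For the matching lower bound, a mirror induction shows that $\widehat V_{i,h}(s)$ does not exceed $\mathbb{E}_{\pi\sim\widehat\xi}[V_{i,h}^{\pi,\sigma_i}(s)]$ by more than the cumulative contribution of the bonuses plus statistical error; since $\widehat\xi$ mixes the $\pi_h^k$'s with weights $\alpha_k^K$, the robust value propagates through the expected-nominal kernel, which is again what the fictitious uncertainty set prescribes. Summing the per-step slacks over $h\in[H]$ by the standard rescaled-learning-rate telescoping identity (using \eqref{eq:online-learning-rate}) shows that choosing $K\asymp H^3/\varepsilon^2$ suffices.

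\textbf{Main obstacle and wrap-up.} The hardest step will be closing the induction for the accuracy side in the presence of the nonlinear worst-case operator, because the usual linearity argument — that the value of a mixture of policies equals the mixture of values — \emph{fails} here: the robust value functional $\pi\mapsto V_{i,1}^{\pi,\sigma_i}$ is not affine, which is exactly why \eqref{eq:defn-robust-CCE} puts the expectation outside $V$ in the first place. The fix is to expand $\mathbb{E}_{\pi\sim\widehat\xi}[V_{i,1}^{\pi,\sigma_i}(s)]$ by iteratively applying the robust Bellman equation (Appendix~\ref{sec:robust-bellman-equation}) \emph{before} taking the expectation over $\widehat\xi$, so that at each layer the dual form \eqref{eq:nvi-iteration-dual-alg} and the concentration bound from the first step apply with the correct conditioning, and the $\alpha_k^K$-weighted regret terms can be absorbed into the bonus. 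Finally, assembling the optimism and accuracy bounds yields $\mathsf{gap}_{\mathsf{CCE}}(\widehat\xi)\lesssim H\sqrt{H^2\min\{H,1/\sigma_i\}/N}+H\sqrt{H^3/K}$, which gives the stated choices of $N$ and $K$ and hence the total sample size $HKNS\sum_i A_i$ claimed in Theorem~\ref{thm:robust-mg-upper-bound}; a union bound over the polynomially many concentration events controls the failure probability at $\delta$.
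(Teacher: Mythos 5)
Your plan follows essentially the same architecture as the paper's proof: a decomposition into an optimism side and an accuracy side, auxiliary ``mixture'' value functions obtained by pushing the expectation over $\widehat{\xi}$ inside the robust Bellman recursion layer by layer (which is exactly how the paper resolves the nonlinearity you correctly flag as the main obstacle), Bernstein concentration with a covering argument over the dual variable $\alpha$, a span/variance lemma for the TV uncertainty set, the variance-dependent FTRL regret bound of \citet{li2022minimax}, and a backward induction establishing that $\widehat{V}_{i,h}$ is an optimistic estimate. Your observation that $P_{i,h}^k(\cdot\mid s,a_i)$ is an unbiased estimate of $P^{\pi^k_{-i}}_{h,s,a_i}$, so that the fictitious rectangularity is what makes the per-$(s,a_i)$ sampling work, is also the right structural point.

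There is, however, a concrete quantitative gap in your final assembly. Your concluding bound $\mathsf{gap}_{\mathsf{CCE}}(\widehat{\xi}) \lesssim H\sqrt{H^2\min\{H,1/\sigma_i\}/N} + H\sqrt{H^3/K}$ is inconsistent with the stated choices: plugging in $N \asymp H^2\min\{H,1/\sigma\}/\varepsilon^2$ and $K\asymp H^3/\varepsilon^2$ gives $\mathsf{gap}\lesssim H\varepsilon$, not $\varepsilon$, so as written you would need $N\gtrsim H^4\min\{H,1/\sigma\}/\varepsilon^2$ and $K\gtrsim H^5/\varepsilon^2$, i.e.\ an $H^2$-worse total sample complexity. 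The missing ingredient is the law-of-total-variance telescoping: rather than summing worst-case per-step errors over the horizon, the paper bounds $\sum_{j=h}^H \langle d_h^j, \mathsf{Var}_{(\cdot)}(\overline{V}_{j+1})\rangle \le 3H\cdot\mathrm{span}(\overline{V}) \le 3H\min\{1/\sigma_i,H\}$ by telescoping $\overline{V}_j'\circ\overline{V}_j'$ through the Bellman recursion (Lemmas~\ref{lem:key-lemma-reduce-H}, \ref{lem:key-lemma-reduce-H-2}, \ref{lem:key-lemma-reduce-H-3}), and then applies Cauchy--Schwarz to $\sum_j\sqrt{\mathsf{Var}_j/N}$; the same mechanism, combined with the variance-shaped bonus \eqref{eq:bonus-beta} and the variance-dependent FTRL regret, is what saves the extra horizon factors on the $K$ side. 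A second, smaller omission: your recursion for the accuracy side implicitly assumes the worst-case kernel attaining the infimum is the same for $\widehat{V}_{i,h+1}$ and for the auxiliary value, whereas the paper must explicitly control the resulting mismatch in the variance terms (the $\mathcal{D}_2$ and $\mathcal{B}_2$/$\mathcal{B}_4$ terms, handled via Lemma~\ref{eq:extra-lemma1}); this costs another $\min\{1/\sigma_i,H\}$ but is benign once accounted for. Your outline is the right proof, but without the total-variance telescoping it establishes only a weaker $H$-dependence than the theorem claims.
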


Before we jump into more discussions of the above theorem, in addition, we introduce the information-theoretic minimax lower bound for this problem as well.
\paragraph{Lower bound for learning in \ormg.}
Considering the instances of \ormg that the action space for all the agents except the $i$-th agent contains only a single action, i.e., $A_j=1$ for all $j \neq i$. As such, all the agents $j \neq i$ will take a fixed action and the game reduces to a single-agent robust MDP with {\em $(s,a)$-rectangularity condition} \citep{zhou2021finite}. So the goal of finding the robust equilibrium --- robust NE/CCE also degrades to finding the optimal policy of the $i$-th agent. Invoking the results from \citet[Theorem~2]{shi2024sample}, the lower bound for the class of \ormg is achieved directly:
consider any tuple $\big\{ S, \{A_i\}_{1 \le i \le n},\{\sigma_i\}_{1 \le i \le n}, H \big\}$ obeying $\ror_i \in (0, 1 - c_1]$ with $0 <c_1 \leq \frac{1}{4}$ being any small enough positive constant, and $H > 16 \log2$. Let
\begin{align}
 \varepsilon \leq \begin{cases} \frac{c_1}{H}, &\text{if } \sigma_i\leq \frac{c_1}{2H}, \\
    1 & \text{otherwise}
    \end{cases}
\end{align}
 We can construct a set of \ormg  $\mathcal{M} = \{\mathcal{RMG}_{\mathsf{in}}^i\}_{i\in [I]}$, such that for any dataset generated from the nominal environment with in total $N_{\mathsf{all}}$ independent samples over all state-action pairs, we have
	\begin{align}
	\inf_{\widehat{\xi} \in [H] \times \cS \mapsto \Delta( \prod_{i=1}^n \cA_i)} \;\max_{\mathcal{RMG}_{\mathsf{in}}^i\in \cM} \left\{ \mathbb{P}_{\mathcal{RMG}_{\mathsf{in}}^i}\big(  \mathsf{gap}_{\mathsf{CCE}}(\widehat{\xi}) >\varepsilon\big) \right\} &\geq\frac{1}{8},
\end{align}
provided that
\begin{align}\label{eq:lower-bound}
N_{\mathsf{all}} \leq \frac{C_2  S H^3  \max_{1\leq i\leq n} A_i    }{ \varepsilon^2} \min \Big\{H,  \frac{1}{\min_{1 \leq i \leq n} \sigma_i  } \Big\}.
\end{align}
Here, the infimum is taken over all estimators $\widehat{\xi}$, $\mathbb{P}_{\mathcal{RMG}_{\mathsf{in}}^i}$ denotes the probability when the game is $\mathcal{RMG}_{\mathsf{in}}^i$ for all $\mathcal{RMG}_{\mathsf{in}}^i \in \mathcal{M}$, and $C_2$ is some small enough constant.

Armed with both the upper bound (Theorem~\ref{thm:robust-mg-upper-bound}) and lower bound in \eqref{eq:lower-bound}, we are now ready to discuss the implications of our sample complexity results.

\paragraph{Breaking the curse of multiagency in the sample complexity for RMGs.} Theorem \ref{thm:robust-mg-upper-bound} demonstrates that for any \ormg, \RQFTRL  algorithm finds an $\epsilon$-robust CCE when the total number of samples exceeds
\begin{align}
    \widetilde{O}\left(\frac{S H^6 \sum_{1\leq i\leq n}A_i }{\epsilon^4} \min\left\{ H, \frac{1}{\min_{1\leq i\leq n}\sigma_i} \right\}\right). \label{eq:sample_complexity}
\end{align}
To the best of our knowledge, \RQFTRL with the above sample complexity in \eqref{eq:sample_complexity} is the first algorithm for \rmgs breaking the curse of multiagency, regardless of the types of uncertainty sets. Our sample complexity depends linearly on the sum of each agent's actions $\sum_{i=1}^n A_i$ rather than their product $\prod_{i=1}^n A_i$---making the algorithm highly scalable as the number of agents increases. Nonetheless, there still exist gaps between our upper bound and the lower bound---especially in terms o the dependency on  the horizon length $H$ and the accuracy level $\varepsilon$---an interesting direction to investigate in the future.

\paragraph{Comparisons with prior works.}

All prior works focus on learning equilibria for a different kind of robust MGs with  $(s,\ba)$-rectangular uncertainty sets \citep{ma2023decentralized,blanchet2023double,shi2024sample}. However, the state-of-the-art sample complexity $ \widetilde{O} \left(\frac{SH^3 \prod_{i=1}^n A_i  }{   \varepsilon^2} \min \Big\{H,  ~\frac{1}{\min_{1\leq i\leq n} \ror_i}\Big\}\right)$ \citep{shi2024sample} still suffers from the curse of multiagency with an exponential dependency on the number of agents when all agents have equal action spaces, which uses nonadaptive sampling. Our work circumvents the curse of multiagency by the introduction of a new class of \ormg inspired from behavioral economics, together with resorting to a tailored adaptive sampling and online learning procedure, providing a fresh perspective to learning practical-meaningful RMGs.

\paragraph{Technical insights.}

For sample complexity analysis, while previous works have addressed the curse of multiagency in sequential games like standard Markov games (MGs) and Markov potential games, these methods are not directly applicable to \rmgs. Prior approaches assume a linear relationship between the value function and the transition kernel, allowing statistical errors across $K$ iterations to cancel out. However, in \rmgs, the robust value function, due to its distributionally robust requirement, is highly nonlinear and often lacks a closed form, making it impossible to linearly aggregate statistical errors. To tackle the nonlinear challenges in RMGs, we design a variance-style bonus term through non-trivial decomposition and control of auxiliary statistical errors caused by nonlinearity, resulting in a tight upper bound on regret during the online learning process.

%!TEX root = ./../DRO-MARL-near-optimal.tex
\section{Conclusion}

Robustness in MARL presents greater challenges than in single-agent RL due to the strategic interactions between agents in a game-theoretic setting.  This work proposes a new class of RMGs with fictitious uncertainty sets that naturally extends from robust single-agent RL and  addresses more realistic problems considering human features where each agent considers the uncertainty of others in an integrated manner. We then propose \RQFTRL, the first algorithm to break the curse of multiagency in robust Markov games regardless of the uncertainty set definitions, with sample complexity scaling polynomially with all key parameters. This opens up new research directions in MARL, such as uncertainty set selection and construction, equilibrium refinement, and sample-efficient algorithm design. \looseness = -1

\section*{Acknowledgements}

The work of Y. Chi is supported in part by the grants NSF CCF-2106778 and CNS-2148212, and by funds from federal agency and industry partners as specified in the Resilient \& Intelligent NextG Systems (RINGS) program. The work of L. Shi is supported in part by the Resnick Institute and Computing, Data, and Society Postdoctoral Fellowship at California Institute of Technology. The work of E. Mazumdar is supported in part from NSF-2240110. The work of A. Wierman is supported in part from
the NSF through CNS-2146814, CPS-2136197, CNS-2106403, NGSDI-2105648.

%%%%%%%%%%%%%%%%%%%%%%%%%%%%%%%%%%%%%%%%%%%%%%%%%%%%%%%%%%%%

\bibliography{bibfileRL,bibfileDRO,bibfileGame}
\bibliographystyle{apalike}

%\newpage
\appendix

%!TEX root = ./../DRO-MARL-near-optimal.tex
\section{Preliminaries}

Denoting the vectors $ x = [x_i]_{1\leq i\leq n} $ and $ y = [y_i]_{1\leq i\leq n} $, we use the notation $ {x}\leq {y} $ (or $ {x}\geq {y} $) to represent that $ x_i \leq y_i $ (or $ x_i \geq y_i $) for every $ 1 \leq i \leq n $.
The Hadamard product of two vectors $ x $ and $ y $ in $ \mathbb{R}^S $ is denoted as $ x \circ y = \big[x(s) \cdot y(s)\big]_{s \in \mathcal{S}} $.
In addition, for any series of vectors $\{x_i\}_{i\in[S}$, $\mathrm{diag}(x_1, x_2,\cdots, x_{S})$ denote a block diagonal matrix by placing each given vector $x_i$  along the diagonal, with zeros filling the off-diagonal blocks. ${0} $ (or $ {1} $) represents the all-zero (or all-one) vector, while $ e_i \in \mathbb{R}^S $ denotes a basis vector of dimension $ S $ with 1 in the $ i $-th position and 0 elsewhere.

\subsection{Additional notations and facts}
\subsubsection{Matrix and vector notations}
Before continuing, we introduce or recall some matrix and vector notations that will be used throughout the paper. In particular, for any joint policy $\pi: \cS\times [H] \mapsto \Delta(\cA)$ and any $(i,h)\in [n] \times [H]$:

\paragraph{Matrices about policies.}
\label{sec:policy_matrix}
We introduce three matrices associated with any joint policy $\pi$, which are defined as block diagonal matrices that adhere to the following properties:
\begin{itemize}[topsep=0pt,leftmargin=17pt]
    \setlength{\itemsep}{0pt}
    \item The matrix \( \Pi_h^\pi \in \mathbb{R}^{S \times S \prod_{i=1}^n A_i} \) is given by \( \mathrm{diag}\left(\pi_h(1)^\top, \pi_h(2)^\top, \ldots, \pi_h^\top(S)\right) \), where \( \pi_h(s) = \left[\pi_h(\ba \mymid s)\right]_{\ba \in \cA} \in \Delta(\cA) \) for each \( s \in \mathcal{S} \) represents the joint policy vectors across all agents.
    
    \item The matrix \( \Pi_h^{\pi_{-i}} \in \mathbb{R}^{S \times S \prod_{j \neq i} A_j} \) can be expressed as \( \mathrm{diag}\left(\pi_{-i,h}(1)^\top, \pi_{-i,h}(2)^\top, \ldots, \pi_{-i,h}^\top(S)\right) \), where \( \pi_{-i,h}(s) = \left[\pi_h(\ba_{-i} \mymid s)\right]_{\ba_{-i} \in \cA_{-i}} \in \Delta(\cA_{-i}) \) for all \( s \in \mathcal{S} \) denotes the joint policy vectors from all agents except agent \( i \).
    
    \item The matrix \( \Pi_h^{\pi_i} \in \mathbb{R}^{S \times S A_i} \) is defined as \( \mathrm{diag}\left(\pi_{i,h}(1)^\top, \pi_{i,h}(2)^\top, \ldots, \pi_{i,h}^\top(S)\right) \), where \( \pi_{i,h}(s) = \left[\pi_{i,h}(a_i \mymid s)\right]_{a_i \in \cA_i} \in \Delta(\cA_i) \) for each \( s \in \mathcal{S} \) represents the policy of the $i$-th agent.
\end{itemize}

\paragraph{Vectors about reward.}  
We recall the definition of \( r_{i,h} \) and introduce the reward vectors \( r_{i,h}^{\pi} \) and \( r_{i,h}^{\pi_{-i}} \) as follows:
\begin{itemize}[topsep=0pt,leftmargin=17pt]
    \setlength{\itemsep}{0pt}
    \item Let \( r_{i,h} = [r_{i,h}(s, \ba)]_{(s, \ba) \in \mathcal{S} \times \cA} \in \mathbb{R}^{S \prod_{i=1}^n A_i} \) represent the reward function for the \( i \)-th player at time step \( h \), where \( \mathcal{S} \) is the state space and \( \cA \) is the action space.
    
    \item \( r_{i,h}^{\pi} \in \mathbb{R}^{S} \): the expected reward of the $i$-th player, associated with any policy $\pi$, where \( r_{i,h}^{\pi}(s) = \mathbb{E}_{\ba \sim \pi_h(s)} [r_{i,h}(s, \ba)] \) for any \( s \in \mathcal{S} \).
    
    \item \( r_{i,h}^{\pi_{-i}} \in \mathbb{R}^{S A_i} \): the reward of the $i$-th player associated with \( \pi_{-i} = \{\pi_{-i,h}\}_{h \in [H]} \). Specifically, \( r_{i,h}^{\pi_{-i}}(s, a_i) = \mathbb{E}_{\ba_{-i} \sim \pi_{-i,h}(s)} [r_{i,h}(s, \ba)] \) for any \( s \in \mathcal{S} \) and \( a_i \in A_i \).
\end{itemize}

\paragraph{Matrices of transition kernel variants.}
\label{sec:matrix_notation}
The following notations are associated with some transition kernel and possibly some joint policy \( \pi \):
\begin{itemize}[topsep=0pt,leftmargin=17pt]
    \setlength{\itemsep}{0pt}
    \item \( P_h^0 \in \mathbb{R}^{S \prod_{i=1}^n A_i \times S} \): the matrix representing the nominal transition kernel at time step \( h \). Specifically, \( P_{h,s, \ba}^0 \in \mathbb{R}^{1 \times S} \) represents the row corresponding to any state-action pair \( (s, \ba)  \in \mathcal{S} \times \cA \).
    \item \( P_h^{\pi_{-i}} \in \mathbb{R}^{S A_i \times S} \): the matrix representing the nominal transition kernel at time step \( h \), associated with the joint policy \( \pi_{-i} \). Each row of \( P_h^{\pi_{-i}}\) corresponds to one state-action pair \( (s, a_i) \in \cS\times \cA_i \), denoted as \( P_{h,s,a_i}^{\pi_{-i}} \in \mathbb{R}^{1 \times S} \). Specifically, 
    $P_{h,s,a_i}^{\pi_{-i}}(s^\prime) = \mathbb{E}_{\ba_{-i} \sim \pi_{-i,h}(s)} [ P_{h,s,\ba}^0(s^\prime) ]$ for all \( s, s^\prime \in \mathcal{S} \) and \( a_i \in \cA_i \).
    
    \item \( P_{i,h}^{k} \in \mathbb{R}^{S A_i \times S} \): the empirical transition kernel matrix output at the $k$-th iteration of time step \( h \) associated with  agent \( i \) (cf.~ line 6 of Algorithm~\ref{alg:summary}). \( P_{i,h,s,a_i}^{k} \in \mathbb{R}^{1 \times S} \) represents the row corresponding to any state-action pair \( (s, a_i)  \in \mathcal{S} \times \cA_i \).
    \item \( \underline{P}_{h}^{\pi} \in \mathbb{R}^{S \times S} \) and \( \underline{P}_{i,h}^{k} \in \mathbb{R}^{S \times S} \): defined as \( \underline{P}_h^{\pi} := \Pi_h^{\pi} P_h^0 \) and \( \underline{P}_{i,h}^{k}  := \Pi_h^{\pi_i} P_{i,h}^{k} \).
\end{itemize}

Next, we introduce some matrix notations for transitions that are associated with not only the nominal transition and policy \( \pi \), but also with some value functions:

\begin{itemize}[topsep=0pt,leftmargin=17pt]
    \setlength{\itemsep}{0pt}
    \item For any joint policy \( \pi \) and any value vector \( V \in \mathbb{R}^S \), we define \( P_{i,h}^{\pi_{-i},V} \in \mathbb{R}^{S A_i \times S} \) as the matrix representing the worst-case transition probability kernel centered around the nominal kernel $P^{\pi_{-i}}_{i,h}$ associated with $V$, for any \( (i,h) \in [n] \times [H] \). The row corresponding to the state-action pair \( (s, a_i) \) in \( P_{i,h}^{\pi_{-i},V} \), denoted as \( P_{i,h,s,a_i}^{\pi_{-i},V} \in \mathbb{R}^S \), is given by:
    \begin{subequations}\label{eq:inf-p-special-marl}
        \begin{align}
            P_{i,h,s,a_{i}}^{\pi_{-i},V} &= \mathrm{argmin}_{\mathcal{P} \in \mathcal{U}^{\sigma_i}(P_{h,s,a_i}^{\pi_{-i}})} \mathcal{P} V.
        \end{align}
        Similarly, we also define the transition matrices $P_{i,h}^{\pi, V} \in \mathbb{R}^{S A_i \times S} $ for specific value vectors as:
        \begin{align}
P_{i,h}^{\pi, V} := P_{i,h}^{\pi_{-i}, V_{i,h+1}^{\pi, \sigma_i}}, \quad \text{where} \quad P_{i,h,s,a_i}^{\pi, V} := P_{i,h,s,a_i}^{\pi_{-i}, V_{i,h+1}^{\pi, \sigma_i}} = \mathrm{argmin}_{\mathcal{P} \in \mathcal{U}^{\sigma_i}(P_{h,s,a_i}^{\pi_{-i}})} \mathcal{P} V_{i,h+1}^{\pi, \sigma_i}. \label{eq:preliminary-V-defn}
\end{align}
        Finally, we define square matrices \( \underline{P}_{i,h}^{\pi,V} \in \mathbb{R}^{S \times S} \) as:
        $
        \underline{P}_{i,h}^{\pi,V}:= \Pi_h^{\pi_i} P_{i,h}^{\pi_{-i}, V}.
        $

    \item By replacing the nominal transition kernel with the empirical transition kernel at some $k$-th iteration, we similarly define \( P_{i,h}^{k,V} \) as the worst-case probability transition kernel within the uncertainty set for agent \( i \), centered around the empirical kernel \( P_{i,h}^{k,V} \). The row corresponding to the state-action pair \( (s, a_i) \) in \( P_{i,h}^{k,V} \) is denoted as \( P_{i,h,s,a_i}^{k,V} \in \mathbb{R}^S \) and is defined as:
    \begin{align}
        P_{i,h,s,a_{i}}^{k,V} = \mathrm{argmin}_{\mathcal{P} \in \mathcal{U}^{\sigma_i}(P_{i,h,s,a_i}^{k})}\mathcal{P} V. \label{eq:preliminary-P-hat-k}
    \end{align}

    \end{subequations}
\end{itemize}

\paragraph{Variance.} 
We now introduce some notations for the variance associated with some specific probability distributions. For any probability vector \( P \in \mathbb{R}^{1 \times S} \) and a vector \( V \in \mathbb{R}^S \), we denote the variance of \( V \) with respect to \( P \) as \( \mathrm{Var}_{P}(V) \), defined as:
\begin{align}\label{eq:defn-variance}
  \mathrm{Var}_{P}(V) := P (V \circ V) - (P V) \circ (P V).
\end{align}
Additionally, for any transition kernel matrix \( P \in \mathbb{R}^{S A_i \times S} \) with each row as a probability distribution vector, and a vector \( V \in \mathbb{R}^S \), we define \( \mathsf{Var}_{P}(V) \in \mathbb{R}^{S A_i} \) as a vector of variances. The \( (s, a_i) \)-th entry of \( \mathsf{Var}_{P}(V) \) is given by:
\begin{align}\label{eq:defn-variance-vector-marl}
	\forall (s,a_i)\in\cS \times \cA_i: \quad \mathsf{Var}_{P}(s,a_i) := \mathsf{Var}_{P_{s,a_i}}(V),
\end{align}
where \( P_{s,a_i} \) denotes the \( (s, a_i) \)-th row of the transition matrix corresponding to state action pair \( (s,a_i) \).
\subsubsection{Additional facts}
\begin{lemma}
\label{lm:weight_variance}
For any transition kernels $P_1,\ldots,P_m\in\mathbb{R}^S$, and any weight $a_1,\ldots,a_m\in[0,1]$ satisfying $\sum_{i=1}^m a_i=1$, one has
\begin{align*}
    \sum_{i=1}^ma_i\sqrt{\mathsf{Var}_{P_i}(V)}\leq \sqrt{\sum_{i=1}^ma_i\mathsf{Var}_{P_i}(V)} \quad \text{and} \quad \sum_{i=1}^m a_i\mathsf{Var}_{P_i}(V)\leq \mathsf{Var}_{\sum_{i=1}^m a_i P_i}(V),
\end{align*}
where $V$ denote any fixed value vector $V\in\mathbb{R}^S$ obeying $0\leq V\leq H$ for all $s\in\mathcal{S}$.
\end{lemma}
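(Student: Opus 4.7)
}

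Both inequalities are standard convexity facts applied to variance. The plan is to verify them directly from the definition $\mathsf{Var}_{P}(V) = P(V\circ V) - (PV)^2$ given in \eqref{eq:defn-variance}, using Jensen's inequality (equivalently Cauchy--Schwarz) for the first part, and an explicit algebraic expansion (essentially a law-of-total-variance identity) for the second part.

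\textbf{First inequality.} I would invoke concavity of the square-root function. Setting $x_i := \mathsf{Var}_{P_i}(V) \geq 0$, Jensen's inequality applied to the concave map $t \mapsto \sqrt{t}$ with convex weights $\{a_i\}$ yields
\begin{align*}
\sum_{i=1}^m a_i \sqrt{x_i} \;\leq\; \sqrt{\sum_{i=1}^m a_i x_i}.
\end{align*}
Equivalently, by Cauchy--Schwarz, $\sum_i a_i\sqrt{x_i} = \sum_i \sqrt{a_i}\cdot\sqrt{a_i x_i} \leq \sqrt{\sum_i a_i}\,\sqrt{\sum_i a_i x_i} = \sqrt{\sum_i a_i x_i}$, which is exactly the claim. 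This step is entirely routine.

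\textbf{Second inequality.} Let $\overline{P} := \sum_{i=1}^m a_i P_i$, which is again a probability vector since $\sum_i a_i = 1$. Using the definition in \eqref{eq:defn-variance} and linearity in $P$ of the term $P(V\circ V)$, I would expand
\begin{align*}
\mathsf{Var}_{\overline{P}}(V)
&= \overline{P}(V\circ V) - (\overline{P}V)^2 \\
&= \sum_{i=1}^m a_i\, P_i(V\circ V) - \Bigl(\sum_{i=1}^m a_i P_i V\Bigr)^{\!2} \\
&= \sum_{i=1}^m a_i\bigl[P_i(V\circ V) - (P_iV)^2\bigr] + \Bigl[\sum_{i=1}^m a_i (P_iV)^2 - \Bigl(\sum_{i=1}^m a_i P_i V\Bigr)^{\!2}\Bigr] \\
&= \sum_{i=1}^m a_i \mathsf{Var}_{P_i}(V) + \underbrace{\Bigl[\sum_{i=1}^m a_i (P_iV)^2 - \Bigl(\sum_{i=1}^m a_i P_i V\Bigr)^{\!2}\Bigr]}_{\geq\, 0}.
\end{align*}
The bracketed remainder is the variance of the scalars $\{P_iV\}_{i=1}^m$ under the probability weights $\{a_i\}$, hence nonnegative by Jensen's inequality applied to $t\mapsto t^2$. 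Dropping it yields the claim.

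\textbf{Expected obstacle.} There is essentially no obstacle: both inequalities reduce to one-line applications of Jensen's inequality, and the hypothesis $0 \leq V \leq H$ plays no role in the argument itself (it only guarantees the variances are finite and well-defined). The only mild care needed is to handle $\mathsf{Var}_{P_i}(V)$ componentwise when $V$ is treated as a vector and $P_i$ as a row vector, but this reduces to the scalar case verbatim since $\mathsf{Var}_{P}(V)$ is a scalar for each fixed row $P$.
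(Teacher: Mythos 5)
Your proof is correct and follows essentially the same route as the paper: Jensen's inequality for the concave square root in the first part, and convexity of $t\mapsto t^2$ (which you phrase as an explicit law-of-total-variance decomposition with a nonnegative remainder, while the paper applies Jensen directly) for the second. Your remark that the bound $0\leq V\leq H$ is not actually needed is also accurate.
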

\begin{proof}
    
Initially, since $f(x)=\sqrt{x}$ is a concave function, we have
    \begin{align*}
        \sum_{i=1}^ma_i\sqrt{\mathsf{Var}_{P_i}(V)}\leq \sqrt{\sum_{i=1}^ma_i\mathsf{Var}_{P_i}(V)}
    \end{align*}
    by Jensen's inequality. 
Recalling the definition in \eqref{eq:defn-variance} yields that
    \begin{align*}
        \sum_{i=1}^m a_i\mathsf{Var}_{P_i}(V)&=\sum_{i=1}^ma_i\left(\mathbb{E}_{P_i}\left(V\circ V\right)-\left(\mathbb{E}_{P_i}[V]\right)^2 \right)
        \leq\sum_{i=1}^ma_i\mathbb{E}_{P_i}\left(V\circ V\right)-\left(\sum_{i=1}^ma_i\mathbb{E}_{P_i}[V]\right)^2 \notag \\
        & = \mathsf{Var}_{\sum_{i=1}^m a_i P_i}(V).
    \end{align*}
    which holds by the elementary fact that $f(x)=x^2$ is a convex function.

\end{proof}
\subsection{Robust Bellman equations of \ormg}\label{sec:robust-bellman-equation}
Fortunately, the class of \ormg feature a robust counterpart of the Bellman equation ---  {\em robust Bellman equation}.
Specifically, for any joint policy $\pi: \cS\times [H] \mapsto \Delta(\cA)$, the robust value function can be expressed as
\begin{align}
V^{\pi,\ror_i}_{i,h}(s) &= \inf_{ \cU_{\rho}^{\sigma_i}\left(P^\no,\pi\right)} \mathbb{E}\left[\sum_{t=h}^H r_i(s_t,a_t) \mymid s_h =s \right] = \mathbb{E}_{\ba \sim \pi_h(s)}[r_{i,h}(s,\ba)] + \mathbb{E}_{a_i \sim \pi_{i,h}(s)} \bigg[ \inf_{\cU_{\rho}^{\sigma_i}\left(P^{\pi_{-i}}_{h,s,a_i} \right)} P V^{\pi,\ror_i}_{i,h+1} \bigg]. \label{eq:robust-bellman-equation}
\end{align}
It can be verified directly by  definition. 
The robust Bellman equation described above is intrinsically linked to the {\em others-integrated $(s,a_i)$-rectangularity} condition (cf.~\eqref{eq:def-of-s-ai-set}) of  the uncertainty set. This condition leads to a well-posed and computationally-tractable class of \rmgs by allowing the decomposition from an overall uncertainty set to independent subsets across different agents, time steps, and each state and agent-wise action pair $(s,a_i)$.

Note that the specified robust Bellman equation is different for a joint correlated policy and a joint product policy, induced by different expected nominal transition kernels.
In particular, for any joint product policy $\pi: \cS\times [H] \mapsto \prod_{i\in[n]} \Delta(\cA_i)$, the expected nominal transition kernel conditioned on the $i$-th agent's action $a_i\in\cA_i$, current state $s\in\cS$, and the policy $\pi$ can be expressed by 
\begin{align}
     P^{\pi_{-i}}_{h,s,a_i} &= \mathbb{E}_{\ba\sim \pi_{h}(\cdot \mymid s, a_i)} \left[ P^{0}_{h, s, \ba } \right] = \mathbb{E}_{\ba_{-i} \sim \pi_{-i,h}( \cdot \mymid s)} \left[ P^{0}_{h,s, (a_i,\ba_{-i})}\right]  \label{eq:uncertainty-product-policy-s-ai}
\end{align}
for any $(i,h,s,a_i) \in [n] \times [H] \times \cS \times \cA_i$, 
where the last equality holds since the policy $\pi$ is a product policy, and the distribution of $\ba_{-i}$ is independent of $a_i$. It is observed that the expected nominal transition kernel $P^{\pi_{-i}}_{h,s,a_i}$ for a product policy $\pi$ is independent of the $i$-th agent's policy given $(s, a_i)$. This differs from \eqref{eq:uncertainty-correlated-policy-s-ai} for a possibly correlated policy, where \eqref{eq:uncertainty-correlated-policy-s-ai} can generally depend on the $i$-th agent's policy.

\subsection{Preliminary facts of online adversarial learning}
\label{sec:FTRL}

Our proposed Algorithm~\ref{alg:summary} is inspired by online adversarial learning, which plays a crucial role in addressing the challenge of multiagency in multi-agent sequential games \citep{jin2021v,bai2020provable,song2021can,li2023minimax}. In this section, we introduce the fundamentals of online adversarial learning and review key aspects of a widely-used algorithm, the Follow-the-Regularized-Leader (FTRL).

\paragraph{Online learning for weighted average loss.}
\label{sec:online-learning-weighted}
We consider an online learning problem over $K$ steps for some positive integer $K$, commonly studied in adversarial learning \citep{lattimore2020bandit}. The learner is presented with an action set $\cA$ and (possibly different) loss functions \( f_1, \ldots, f_K : \cA \to \mathbb{R}_{\geq 0} \) for each step $k\in[K]$. At each time step \( k \), the learner selects a distribution over the action set, \( \pi_k \in \Delta(\cA) \), and observes the loss function \( f_k(\pi_k) \). The goal of the learner is to minimize the weighted average loss over the $K$ steps, which is defined as
\begin{align}
L_K = \sum_{k=1}^K \alpha_k^K f_k(\pi_k). \label{eq:FTRL-sum-loss}
\end{align}
To evaluate the learner's performance, the regret for the online learning process is defined as:
\begin{align}
R_K = \sum_{k=1}^K \alpha_k^K f_k(\pi_k) - \min_{\pi \in \Delta(\cA)} \sum_{k=1}^K \alpha_k^K f_k(\pi) .
\label{eq:online-learning-goal}
\end{align}

\paragraph{FTRL and its regret bound.}
To solve adversarial learning, a widely-used method for solving the online learning problem described above is the Follow-the-Regularized-Leader (FTRL) algorithm, introduced by \citet{shalev2007primal,shalev2007online}. At time step \( k+1 \), the learner selects a regularized greedy policy by solving:
\begin{align}
    \pi_{k+1} = \mathrm{argmin}_{\pi \in \Delta(\cA)} \left[ \sum_{i=1}^k\alpha_i^k f_i(\pi) + F_k(\pi) \right], \quad k = 1, 2, \ldots,
	\label{eq:FTRL-update-general}
\end{align}
where \( F_k(\pi) \) represents a convex regularization function. The following theorem provides a refined regret bound for the FTRL algorithm when the loss functions $\{f_k\}_{k\in[K]}$ are linear with respect to the policy, and the 

\begin{theorem}[Theorem 3, \citet{li2022minimax}]
\label{thm:FTRL-refined}
For any time step \( k \in [K] \) and the current policy \( \pi_k \), the loss function is defined as \( f_k(\pi)=\left<\pi_k,l_k\right> \), where \( l_k \in \mathbb{R}^{|\cA|} \) represents some loss vector at time step $k$. 

Let the regularization function to be \( F_k(\pi) \defn \sum_{a \in \cA}\frac{1}{\eta_{k+1}} \pi(a) \log(\pi(a)) \), where $\eta_{k+1} > 0$ denotes the learning rate. And the policy \( \pi_{k+1} \) in episode \( k+1 \) updated according to the following FTRL algorithm:
\begin{align}
	\pi_{k+1}= \mathrm{argmin}_{\pi \in \Delta(\cA)} \left\{ \left< \pi, L_k \right> +  F_k(\pi) \right\},
	\label{eq:FTRL-update-entropy}
\end{align}
which indicates
\begin{align}
\forall a\in\cA: \quad \pi_{k+1}= \frac{\exp \big( -\eta_{k+1} L_k(a) \big)}{\sum_{a' \in \cA} \exp \big( -\eta_{k+1} L_k(a') \big)}.
\end{align}
 Suppose \( 0 < \alpha_1 \leq 1 \) and \( \eta_1 = \eta_2(1 - \alpha_1) \), and assume \( 0 < \alpha_k < 1 \) and \( 0 < \eta_{k+1}(1 - \alpha_k) \leq \eta_k \) for all \( k \geq 2 \).
Then, the regret of the FTRL algorithm is bounded by:
\begin{align}
R_K & = \sum_{k=1}^K \alpha_k^K \left< \pi_k, l_k \right> - \min_{a \in \cA} \left[ \sum_{k=1}^K \alpha_k^K l_k(a) \right] \notag \\
&\leq \frac{5}{3} \sum_{k=1}^{K} \alpha_k^K \widehat{\eta}_k \alpha_k \mathsf{Var}_{\pi_k}(l_k)
+ \frac{\log A}{\eta_{K+1}} + 3 \sum_{k=1}^{K} \alpha_k^K \widehat{\eta}_k^2 \alpha_k^2 \| l_k \|_{\infty}^3 \mathbb{I} \left( \widehat{\eta}_k \alpha_k \| l_k \|_{\infty} > \frac{1}{3} \right).
	\label{eq:FTRL-refined}
\end{align}
where
\begin{align}
\widehat{\eta}_k :=
\begin{cases}
\eta_2, & \text{if } k = 1, \\
\frac{\eta_k}{1 - \alpha_k}, & \text{if } k > 1.
\end{cases}
\end{align}

\end{theorem}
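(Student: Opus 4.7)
The plan is to prove the weighted-regret FTRL bound via the standard ``stability + penalty'' decomposition adapted to time-varying learning rates $\eta_k$ and time-varying weights $\alpha_k^K$, then sharpen the stability term from $\|l_k\|_\infty^2$ to $\mathsf{Var}_{\pi_k}(l_k)$ using the structure of the entropy regularizer. First I would introduce the auxiliary ``leader'' policy $\pi_{k+1}^{\mathsf{L}}$ defined as the minimizer of $\sum_{i=1}^{k} \alpha_i^K \langle \pi, l_i\rangle + F_k(\pi)$ over the simplex (note: with the regret-weights $\alpha_i^K$, not the algorithm's running weights $\alpha_i^k$). Using the weight identity $\alpha_k^K = \alpha_k \prod_{i=k+1}^{K}(1-\alpha_i)$ and the assumption $\eta_{k+1}(1-\alpha_k) \le \eta_k$, I would show that the leader iterate coincides (up to an appropriate reparametrization $\widehat{\eta}_k$) with the algorithm's iterate $\pi_k$ --- this is precisely why the effective learning rate in the bound is $\widehat{\eta}_k$ rather than $\eta_k$.

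Next, following the classical be-the-leader / follow-the-leader telescoping (a Abernethy--Hazan--Rakhlin style argument), I would write
\begin{align*}
R_K = \sum_{k=1}^K \alpha_k^K \langle \pi_k - \pi^\star, l_k\rangle \;\le\; \underbrace{\sum_{k=1}^K \alpha_k^K \langle \pi_k - \pi_{k+1}^{\mathsf{L}}, l_k\rangle}_{\text{stability}} \;+\; \underbrace{\tfrac{F_K(\pi^\star) - F_1(\pi_1)}{\cdot}}_{\text{penalty}}
\end{align*}
where the penalty telescopes across the (time-varying) regularizers and, because the entropy is nonnegative and bounded by $\log A$, contributes the $\log A/\eta_{K+1}$ term. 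The monotonicity condition $\eta_{k+1}(1-\alpha_k)\le \eta_k$ is exactly what is needed to ensure the telescoping penalty has a favorable sign, so this is a routine but careful calculation.

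The heart of the argument is the stability term. Because the update is multiplicative/exponential weights, for any fixed $k$, $\pi_{k+1}^{\mathsf{L}}(a) \propto \pi_k(a)\exp(-\widehat{\eta}_k \alpha_k\, l_k(a))$. Using the elementary bound $e^{-x}\le 1 - x + x^2$ valid for $x\ge -1$, applied to $x = \widehat{\eta}_k\alpha_k(l_k(a) - \langle \pi_k,l_k\rangle)$, I would derive
\begin{align*}
\langle \pi_k - \pi_{k+1}^{\mathsf{L}}, l_k\rangle \;\le\; \tfrac{5}{3}\,\widehat{\eta}_k \alpha_k\, \mathsf{Var}_{\pi_k}(l_k),
\end{align*}
provided the ``small-step'' condition $\widehat{\eta}_k \alpha_k \|l_k\|_\infty \le 1/3$ holds (this is what forces the factor $5/3$; the looser the threshold, the larger the constant). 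When the small-step condition fails, the linearization breaks down and I would instead use the trivial bound $\langle \pi_k - \pi_{k+1}^{\mathsf{L}}, l_k\rangle \le 2\|l_k\|_\infty \le 3\widehat{\eta}_k \alpha_k\|l_k\|_\infty^3$ (using the indicator-triggered threshold to absorb the loss into the $\|l_k\|_\infty^3$ term). Summing stability + penalty over $k$ yields the three advertised terms.

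The main obstacle I expect is the matching between the algorithm's running FTRL iterate (which effectively uses $\alpha_j^k$ weights on past losses) and the ``leader'' iterate required by the $\alpha_j^K$-weighted regret definition. The conditions $\eta_{k+1}(1-\alpha_k)\le \eta_k$ and $\eta_1 = \eta_2(1-\alpha_1)$ are tailored precisely so that the ratios $\alpha_i^k/\alpha_i^K$ can be absorbed into the rescaled learning rate $\widehat{\eta}_k = \eta_k/(1-\alpha_k)$; verifying this reparametrization rigorously across all time steps, and checking that the penalty telescoping still has the right sign under this rescaling, is the delicate bookkeeping piece. The sharpening to $\mathsf{Var}_{\pi_k}(l_k)$ (rather than $\|l_k\|_\infty^2$) is by now standard for entropic FTRL, but the refined constant $5/3$ requires keeping careful track of second-order remainders in the $e^{-x}\le 1-x+x^2$ expansion together with a centering step that subtracts $\langle \pi_k, l_k\rangle$ before applying the inequality --- this centering is what turns a raw second moment into the variance.
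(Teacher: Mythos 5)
This statement is not proved in the paper at all: it is imported verbatim as Theorem~3 of \citet{li2022minimax}, so there is no in-paper proof to compare your attempt against. Judged on its own merits, your architecture is the standard and essentially correct one for this family of bounds: the stability-plus-penalty decomposition, the observation that $\sum_{i\le k}\alpha_i^K l_i = \bigl(\prod_{j=k+1}^{K}(1-\alpha_j)\bigr)\sum_{i\le k}\alpha_i^k l_i$ so that the algorithm's iterate is the entropic leader for the $\alpha^K$-weighted losses at a rescaled temperature, the $\log A/\eta_{K+1}$ penalty from the entropy range, and the centering step $x=\widehat{\eta}_k\alpha_k\bigl(l_k(a)-\langle\pi_k,l_k\rangle\bigr)$ that converts the second moment into $\mathsf{Var}_{\pi_k}(l_k)$. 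These are the right ingredients and they correctly explain where each of the three terms and the reparametrized rate $\widehat{\eta}_k=\eta_k/(1-\alpha_k)$ come from.

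Two concrete steps in your sketch would not close as written. First, the large-step case: when $\widehat{\eta}_k\alpha_k\|l_k\|_\infty>\tfrac13$ you fall back to the H\"older bound $\langle\pi_k-\pi_{k+1}^{\mathsf{L}},l_k\rangle\le 2\|l_k\|_\infty$ and claim it is absorbed by the cubic term; but the theorem's cubic term is $3\widehat{\eta}_k^2\alpha_k^2\|l_k\|_\infty^3$ (note the squared rate, whereas you wrote a first power), and $2\|l_k\|_\infty\le 3\widehat{\eta}_k^2\alpha_k^2\|l_k\|_\infty^3$ requires $\widehat{\eta}_k\alpha_k\|l_k\|_\infty\ge\sqrt{2/3}\approx 0.82$, which the indicator threshold $1/3$ does not guarantee. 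Since $\mathsf{Var}_{\pi_k}(l_k)$ can vanish, the variance term cannot rescue you in the window $(1/3,\sqrt{2/3})$; the correct argument keeps the exact Taylor remainder of $e^{-x}$ in all regimes and shows that everything beyond the $\tfrac{5}{3}x^2$-type control is bounded by the cubic term, rather than discarding the expansion in the bad case. Second, the per-step stability bound is derived as if $\pi_{k+1}^{\mathsf{L}}(a)\propto\pi_k(a)\exp(-\widehat{\eta}_k\alpha_k l_k(a))$, i.e.\ a pure multiplicative update; in fact $L_k=(1-\alpha_k)L_{k-1}+\alpha_k l_k$ and the temperature changes from $\eta_k$ to $\eta_{k+1}$, so $\pi_{k+1}(a)\propto\pi_k(a)^{\eta_{k+1}(1-\alpha_k)/\eta_k}\exp(-\eta_{k+1}\alpha_k l_k(a))$, a \emph{tempered} update. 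The hypothesis $\eta_{k+1}(1-\alpha_k)\le\eta_k$ is used precisely to control the extra re-tempering factor (this is where $\widehat{\eta}_k$ genuinely enters), and you acknowledge this as ``delicate bookkeeping'' without supplying the argument. Both points are repairable, but as written the proof has gaps exactly at the two places where the stated constants ($5/3$, $3$, the threshold $1/3$) are determined.
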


\paragraph{Properties of the learning rates.}
In addition, we introduce the following lemma regarding the properties of the learning rate $\{\alpha_k\}_{k\in[K]}$ and $\{\eta_k\}_{k\in[K]}$, defined in \eqref{eq:online-learning-rate} and \eqref{eq:policy-update-exponential-alg}.

\begin{lemma}[{\citet[Lemma 1]{li2023minimax}}]
    \label{lem:weight}
For any $k\geq 1$, the learning rate $\{\alpha_k\}_{k\in[K]}$ satisfy
\begin{subequations}
\begin{align}
\alpha_{1}&=1,\qquad\sum_{i=1}^{k}\alpha_{i}^{k}=1,\qquad\max_{1\leq i\leq k}\alpha_{i}^{k}\leq\frac{2c_\alpha\log K}{k},
    \label{eq:alpha-properties} \\
    1-\alpha_{k} & =1-\frac{c_{\alpha}\log K}{k-1+c_{\alpha}\log K}\geq\begin{cases}
    1-\frac{c_{\alpha}\log K}{1+c_{\alpha}\log K}=\frac{1}{1+c_{\alpha}\log K}\geq\frac{1}{2c_{\alpha}\log K}, & \text{if }k\geq2,\\
    1-\frac{c_{\alpha}\log K}{K/2+c_{\alpha}\log K}=\frac{K}{K+2c_{\alpha}\log K}\geq\frac{1}{2}, & \text{if }k \geq K/2+1.
    \end{cases}  \label{eq:alpha-properties-2}
\end{align}
In addition, if $k \geq c_{\alpha}\log K + 1$ and $c_{\alpha} \geq 24$, then one has
\begin{equation}
    \max_{1\leq i\leq k/2}\alpha_{i}^{k}\leq\frac{1}{K^6}. 
    \label{eq:alpha-properties-2}
\end{equation}
\end{subequations}
\end{lemma}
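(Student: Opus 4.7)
The plan is to follow the approach underlying \citet[Lemma 1]{li2023minimax}, from which this lemma is adapted. The statement bundles five facts about the rescaled linear weights $\alpha_k = c_\alpha\log K/(k-1+c_\alpha\log K)$ and the aggregated weights $\alpha_i^k$, and I would dispatch them in increasing order of difficulty: the first four are routine substitutions/inductions, while only the $K^{-6}$ bound requires genuine work.

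First, $\alpha_1=1$ is immediate from the definition at $k=1$. The partition identity $\sum_{i=1}^k \alpha_i^k=1$ falls out by induction on $k$: the defining recursion $\alpha_i^k=(1-\alpha_k)\alpha_i^{k-1}$ for $i<k$ together with $\alpha_k^k=\alpha_k$ gives $\sum_{i=1}^k \alpha_i^k=\alpha_k+(1-\alpha_k)\sum_{i=1}^{k-1}\alpha_i^{k-1}=\alpha_k+(1-\alpha_k)=1$. For the uniform bound $\max_i \alpha_i^k \leq 2c_\alpha\log K/k$, I would compute the ratio
\begin{equation*}
    \frac{\alpha_i^k}{\alpha_{i+1}^k}=\frac{\alpha_i(1-\alpha_{i+1})}{\alpha_{i+1}}=\frac{i}{i-1+c_\alpha\log K},
\end{equation*}
which is $\leq 1$ once $c_\alpha\log K\geq 1$, so the maximum is attained at $i=k$ and equals $\alpha_k \leq c_\alpha\log K/(k-1)\leq 2c_\alpha\log K/k$ for $k\geq 2$ (the case $k=1$ is absorbed into the constant). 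The two bounds on $1-\alpha_k$ come out of direct substitution into $1-\alpha_k=(k-1)/(k-1+c_\alpha\log K)$ together with monotonicity in $k$.

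The main obstacle is the last bound, $\max_{1\leq i\leq k/2}\alpha_i^k\leq K^{-6}$ for $k\geq c_\alpha\log K+1$ and $c_\alpha\geq 24$. Writing $\alpha_i^k=\alpha_i\prod_{j=i+1}^k(1-\alpha_j)$ and applying $\log(1-x)\leq -x$ together with the integral comparison $\sum_{j=i+1}^k 1/(j-1+c_\alpha\log K)\geq \int_i^k dx/(x+c_\alpha\log K)$ yields
\begin{equation*}
    \prod_{j=i+1}^k(1-\alpha_j)\leq \left(\frac{i+c_\alpha\log K}{k+c_\alpha\log K}\right)^{c_\alpha\log K}.
\end{equation*}
For $i\leq k/2$ and $k\geq c_\alpha\log K+1$, the ratio on the right is at most $3/4$ (this is the worst case, attained near $k\approx c_\alpha\log K$; as $k$ grows the ratio decreases toward $1/2$). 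Therefore the product is bounded by $(3/4)^{c_\alpha\log K}=K^{-c_\alpha\ln(4/3)}$, and choosing $c_\alpha\geq 24$ makes the exponent exceed $6$ since $24\ln(4/3)>6.9$. The only delicate point is the numerical calibration between $c_\alpha$ and the target exponent $6$; everything else is mechanical.
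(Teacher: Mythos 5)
Your proof is correct. The paper itself gives no argument for this lemma --- it simply imports it as Lemma~1 of the cited reference --- so your reconstruction is the relevant check, and it holds up: the ratio computation $\alpha_i^k/\alpha_{i+1}^k = i/(i-1+c_\alpha\log K)\le 1$ correctly identifies the maximizer as $i=k$ for the third claim, and for the last claim the chain $\prod_{j=i+1}^k(1-\alpha_j)\le\exp\big(-\sum_{j=i+1}^k\alpha_j\big)\le\big(\tfrac{i+c_\alpha\log K}{k+c_\alpha\log K}\big)^{c_\alpha\log K}$ is valid, with the worst-case ratio $\tfrac{3c+1}{4c+2}<\tfrac34$ at $i=k/2$, $k=c_\alpha\log K+1$, and the calibration $24\ln(4/3)\approx 6.9>6$ closing the bound. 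The only caveats are cosmetic: the arguments implicitly require $c_\alpha\log K\ge 1$ (guaranteed for $K\ge 2$ under $c_\alpha\ge 24$), and the final step assumes $\log$ denotes the natural logarithm (the bound survives, with room to spare, for base~$2$ as well).
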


\begin{lemma}\label{lem:weight-2}
The learning rates of the online learning $\{\eta_k\}_{k\in[K]}$ obey
\begin{align}
\bigg(\frac{\eta_{k}}{\eta_{k+1}}\bigg)^{2} & >(1-\alpha_{k})^{2} \quad \text{and} \quad \eta_{k}\alpha_{k} = \sqrt{\frac{2c_{\alpha}\log^{2}K}{kH}}. \label{eq:eta-k-condition-check}
\end{align}
\end{lemma}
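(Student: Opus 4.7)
The plan is to prove both parts of \cref{lem:weight-2} by direct substitution of the defining formulas
\begin{equation*}
\alpha_k = \frac{c_\alpha \log K}{k-1+c_\alpha \log K}, \qquad \eta_{k+1} = \sqrt{\frac{\log K}{\alpha_k H}}
\end{equation*}
into the two claimed relations, followed by elementary algebraic simplification. To keep the bookkeeping light I will write $m \defn c_\alpha \log K$ throughout, and use that $c_\alpha \geq 24$ forces $m \geq 24$.

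For the first strict inequality, the definition of $\eta_{j+1}$ gives the clean telescoping identity
\begin{equation*}
\bigg(\frac{\eta_k}{\eta_{k+1}}\bigg)^2 = \frac{\alpha_k}{\alpha_{k-1}} = \frac{k-2+m}{k-1+m}
\end{equation*}
for $k \geq 2$, while the explicit form of $\alpha_k$ yields $1-\alpha_k = (k-1)/(k-1+m)$, hence $(1-\alpha_k)^2 = (k-1)^2/(k-1+m)^2$. Multiplying both sides by the common positive denominator $(k-1+m)^2$, the inequality reduces to $(k-2+m)(k-1+m) > (k-1)^2$, i.e.\ to the quadratic-in-$m$ statement $m^2 + (2k-3)m - (k-1) > 0$, which is trivially true whenever $m \geq 24$. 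The boundary case $k=1$ is handled by the convention $\eta_1 \defn \eta_2(1-\alpha_1) = 0$ inherited from \cref{thm:FTRL-refined} (recalling $\alpha_1 = 1$), under which the first inequality should be read as " $\geq$ " and both sides equal zero.

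For the second identity, I substitute $\eta_k^2 = \log K/(\alpha_{k-1} H)$ and the explicit form of $\alpha_k$ to obtain
\begin{equation*}
\eta_k^2 \alpha_k^2 = \frac{\log K}{H}\cdot \frac{\alpha_k^2}{\alpha_{k-1}} = \frac{\log K}{H}\cdot \alpha_k \cdot \frac{k-2+m}{k-1+m} = \frac{c_\alpha \log^2 K\,(k-2+m)}{H(k-1+m)^2}.
\end{equation*}
Matching against the claimed $2c_\alpha \log^2 K/(kH)$ amounts, after canceling the common factor $c_\alpha \log^2 K/H$, to a scalar comparison between $k(k-2+m)$ and $2(k-1+m)^2$. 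The verification is a routine polynomial manipulation reducing to $0 \leq k^2 - 2k + 2 + 3mk - 4m + 2m^2$, which is immediate once $m \geq 24$ since the $2m^2$ term dominates every negative contribution uniformly in $k \geq 1$.

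Because both parts reduce to scalar manipulations of $k$ and $m$, there is no real technical obstacle. The only subtleties are (i) correctly tracking the index shifts between $\alpha_k$ and $\alpha_{k-1}$ when unfolding the definition of $\eta_k$, and (ii) handling the $k=1$ edge case under the $\eta_1 = 0$ convention of \cref{thm:FTRL-refined}. Both relations are exactly of the form needed to invoke the regret guarantee \eqref{eq:FTRL-refined} later in the analysis, which is the sole purpose of the lemma.
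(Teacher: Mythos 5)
Your proof is correct and follows essentially the same direct-substitution route as the paper's, which likewise computes $\big(\eta_k/\eta_{k+1}\big)^2 = \alpha_k/\alpha_{k-1} = \frac{k-2+c_\alpha\log K}{k-1+c_\alpha\log K} \geq 1-\alpha_k > (1-\alpha_k)^2$ for $k\geq 2$ and then bounds $\eta_k\alpha_k$; your only departures are cosmetic (clearing denominators and expanding polynomials where the paper uses the monotonicity $\alpha_{k-1}\geq\alpha_k$ together with $\alpha_k \leq 2c_\alpha\log K/k$, plus your more careful treatment of the $k=1$ edge case). Note that, as your second computation implicitly reveals, the ``$=$'' in the displayed claim for $\eta_k\alpha_k$ is a typo for ``$\leq$'': both your expansion and the paper's chain $\eta_k\alpha_k=\sqrt{\log K/(\alpha_{k-1}H)}\cdot\alpha_k\leq\sqrt{\alpha_k\log K/H}\leq\sqrt{2c_\alpha\log^2K/(kH)}$ establish only the upper bound, which is also all that is invoked downstream (e.g., in the proof of Lemma~\ref{lem:UCB}).
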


\begin{proof}

By basic calculus, we have
\begin{align}
    \bigg(\frac{\eta_{k}}{\eta_{k+1}}\bigg)^{2} & =\frac{\alpha_{k}}{\alpha_{k-1}} 
         =\frac{k-2+c_{\alpha}\log K}{k-1+c_{\alpha}\log K}
         \geq \frac{k-1}{k-1+c_{\alpha}\log K}
         = 1 - \alpha_k
         >(1-\alpha_{k})^{2}, \notag \\
         \eta_{k}\alpha_{k} &= \sqrt{\frac{\log K}{\alpha_{k-1}H}}\cdot\alpha_{k} \leq\sqrt{\frac{\log K}{\alpha_{k}H}}\cdot\alpha_{k}=\sqrt{\frac{\alpha_{k}\log K}{H}}\leq\sqrt{\frac{2c_{\alpha}\log^{2}K}{kH}}.
        \label{eq:alphak-properties-UB-LB-135}
    \end{align}
    
\end{proof}

%!TEX root = ./../DRO-MARL-near-optimal.tex

\section{Proof for Section~\ref{sec:rmg_definition}}\label{proof:thm:existence-of-ne}

Before proceeding, we introduce some useful definition and existing facts that are standard in real analysis and game theory literature.

\begin{definition}[Upper semi-continuous]\label{def:semi-continous}
A point-to-set mapping $x \in \cX \mapsto \phi(x) \in \cY$ is upper semi-continuous if $\lim_{n\rightarrow\infty}x^n =x_0, y^n\in \phi(x^n), 
\lim_{n\rightarrow \infty} y^n = y_0$ imply that $y^0\in \phi(x_0)$.
\end{definition}

\begin{theorem}[Kakutani's fixed point Theorem \citep{kakutani1941generalization}]\label{thm:kakutani}
If $X$ is a closed, bounded, and convex set in a Euclidean space, and $\phi$ is a upper semi-continuous correspondence mapping $X$ into the family of all closed convex subsets of $X$, then there exists $x\in X$ so that $x\in\phi(x)$.
\end{theorem}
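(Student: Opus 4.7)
The plan is to reduce Kakutani's theorem to Brouwer's fixed point theorem by approximating the set-valued map $\phi$ with a sequence of continuous single-valued maps. Since $X$ is closed, bounded, and convex in Euclidean space $\mathbb{R}^d$, it is compact, a fact I will use repeatedly to extract convergent subsequences. Brouwer's theorem is assumed as a black box (as it is a strict prerequisite of Kakutani).

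For the construction and limit steps, I would proceed as follows. For each positive integer $n$, fix a simplicial triangulation $T_n$ of $X$ with mesh tending to zero. At each vertex $v$ of $T_n$, select any $f_n(v)\in \phi(v)$, which is possible because $\phi(v)$ is non-empty by hypothesis, and then extend $f_n$ affinely over each simplex to obtain a continuous map $f_n: X\to X$ whose image lies in $X$ by convexity of $X$. Brouwer's theorem applied to $f_n$ yields a fixed point $x_n\in X$ with $f_n(x_n)=x_n$. Expressing $x_n$ in barycentric coordinates over its containing simplex, with vertices $v_{n,0},\ldots,v_{n,d}$ and weights $\lambda_{n,0},\ldots,\lambda_{n,d}\ge 0$ summing to one, the piecewise-affine construction gives $x_n=\sum_{i=0}^{d}\lambda_{n,i}\,y_{n,i}$, where $y_{n,i}:=f_n(v_{n,i})\in \phi(v_{n,i})$. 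By compactness of $X$ and of $[0,1]$, I would extract a single subsequence along which $x_n\to x^\star\in X$, $\lambda_{n,i}\to\lambda_i^\star$, and $y_{n,i}\to y_i^\star$; the vanishing mesh forces $v_{n,i}\to x^\star$ for every $i$.

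The hard part will be combining upper semi-continuity with convexity of the values to close the argument. Applying Definition~\ref{def:semi-continous} to the sequences $v_{n,i}\to x^\star$ and $y_{n,i}\in \phi(v_{n,i})$ with $y_{n,i}\to y_i^\star$ yields $y_i^\star\in \phi(x^\star)$ for each $i$. Passing to the limit in $x_n=\sum_{i}\lambda_{n,i}y_{n,i}$ produces $x^\star=\sum_{i}\lambda_i^\star y_i^\star$, which is a convex combination of at most $d+1$ elements of the convex set $\phi(x^\star)$; convexity then gives $x^\star\in \phi(x^\star)$, as required. The main technical subtlety is that the number of simplex vertices is bounded by $d+1$ uniformly in $n$, which is what ensures that the limit remains a finite convex combination and thus lies in $\phi(x^\star)$; without this uniformity, convexity of $\phi(x^\star)$ alone would not suffice to pass through the limit.
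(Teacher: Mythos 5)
The paper does not prove Theorem~\ref{thm:kakutani} at all: it is imported as a classical result with a citation to Kakutani's 1941 paper and used as a black box in the proof of Theorem~\ref{thm:existence-of-ne}, so there is no in-paper argument to compare against and your proposal must be judged on its own. What you sketch is the standard reduction to Brouwer via piecewise-affine selections on triangulations with vanishing mesh, and the core of the argument --- writing the Brouwer fixed point as $x_n=\sum_i\lambda_{n,i}y_{n,i}$ with $y_{n,i}\in\phi(v_{n,i})$, passing to a single convergent subsequence, invoking Definition~\ref{def:semi-continous} to place each limit $y_i^\star$ in $\phi(x^\star)$, and closing with convexity of $\phi(x^\star)$ applied to at most $d+1$ points --- is correct and is exactly how the classical proof goes. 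Two caveats. First, you need each $\phi(v)$ to be non-empty in order to make the selection $f_n(v)\in\phi(v)$; the statement as quoted only says the values are closed and convex subsets of $X$, so non-emptiness must be added as a hypothesis (it is implicit in Kakutani's formulation, and the paper separately verifies $\phi(\pi)\neq\emptyset$ before applying the theorem). Second, a general compact convex $X\subseteq\mathbb{R}^d$ (e.g., a disk) does not literally admit a simplicial triangulation, so ``fix a triangulation $T_n$ of $X$'' needs the standard repair: either triangulate a large simplex $S\supseteq X$ and run the argument on the composition of $\phi$ with the nearest-point retraction $r:S\to X$ (any fixed point $x\in\phi(r(x))\subseteq X$ forces $r(x)=x$ and hence $x\in\phi(x)$), or prove the theorem on a simplex first and transfer it. Neither caveat affects the substance of your argument.
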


\subsection{Proof of Theorem~\ref{thm:existence-of-ne}}

\paragraph{Step 1: constructing an auxiliary single-step game.}

Focusing on finite-horizon \ormg \rmgin	$ = \big\{ \cS, \{\cA_i\}_{1 \le i \le n},\{\cU_{\rho}^{\ror_i}(P^\no,\cdot)\}_{1 \le i \le n}, \rew, H \big\}$, we shall verify the theorem by firstly consider a one-step game and then apply the results recursively to the sequential Markov games.

Without loss of generality, we focus on any of the steps $h\in[H]$ and construct an auxiliary one-step game. Towards this, we first introduce a fixed value function $V_{i,h+1} \in\mathbb{R}^S$ with $0 \leq V_{i,h+1} \leq H$ for the $i$-th agent, representing the possible value function obtained at the next time  step $h+1$. Focusing on time step $h$, and any joint product policy $\pi: \cS \mapsto \prod_{i\in[n]} \Delta(\cA_i)$ at this time step, we abuse the notation defined in  \eqref{eq:uncertainty-product-policy-s-ai} to denote the expected nominal transition kernel over each $(s,a_i)$ as:
\begin{align}
	P^{\pi_{-i}}_{h,s,a_i} &= \mathbb{E}_{\pi( \ba_{-i} \mymid s,a_i)} \left[ P^{0}_{h,s, (a_i,\ba_{-i})}\right] = \mathbb{E}_{\pi_{-i}( \ba_{-i} \mymid s)} \left[ P^{0}_{h,s, (a_i,\ba_{-i})}\right].
\end{align}

Armed with this, for any joint product policy $\pi: \cS \mapsto \prod_{i\in[n]} \Delta(\cA_i)$, we can define the payoffs to maximize for the players as below:
\begin{align}
\forall s\in\cS: \quad f_{i,s}(\pi_i(s), \pi_{-i}(s); V_{i,h+1} ) = \mathbb{E}_{\ba \sim \pi(s) }[r_{i,h}(s,\ba)] + \mathbb{E}_{a_i \sim \pi_{i}(s)} \left[ \inf_{\cU^{\sigma_i}\left(P^{\pi_{-i}}_{h,s,a_i} \right)} P V_{i,h+1}\right], \label{eq:lemma-continuous-2-payoff}
\end{align}
which is defined analogous to the robust Bellman equation (cf.~\eqref{eq:robust-bellman-equation}) by replacing a real robust value function vector (associated with some policy) to some fixed vector $V_{i,h+1}$.

Now we are ready to introduce the following useful mapping: for any $\pi:\cS \mapsto \prod_{i\in[n]} \Delta(\cA_i)$,
\begin{align}
\phi(\pi) \defn \left\{ u :\cS \mapsto \prod_{i\in[n]} \Delta(\cA_i) \mymid u_i(s) \in \mathrm{argmax}_{\pi_i'(s)\in\Delta(\cA_i)} \;f_{i,s}(\pi_i'(s), \pi_{-i}(s); V_{i,h+1}), \forall (i,s)\in [n] \times \cS  \right\}. \label{eq:definiton-nash-phi}
\end{align}

\paragraph{Step 2: the existence of NE in the auxiliary game.}
To apply Theorem~\ref{thm:kakutani}, there are three required conditions. First, let $X = \{ \pi: \cS \mapsto \prod_{i\in[n]} \Delta(A_i)\}$, the space of product policies, which is known as a closed, bounded and convex set in Euclidean space.
\begin{itemize}
\item {\bf Verifying that $\phi(\pi)$ is an upper semi-continuous correspondence.} 
Before starting, we introduce the following two useful lemmas with the proof postponed to Appendix~\ref{proof:lemma:continuous-f} and \ref{proof:lemma:continuous-g}.
\begin{lemma}\label{lemma:continuous-f}
The set of function $\left\{ f_{i,s}(\pi_i'(s), \pi_{-i}(s); V_{i,h+1}), 0 \leq V_{i,h+1} \leq H\right\}$ is equicontinuous with respect to $\pi_i'(s), \pi_{-i}(s)$ for all $(i,s)\in [n]\times \cS$. 
\end{lemma}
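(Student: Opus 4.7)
The plan is to show that each $f_{i,s}(\cdot,\cdot;V_{i,h+1})$ is Lipschitz continuous in $(\pi_i'(s),\pi_{-i}(s))$ with a Lipschitz constant depending only on $H$ (and not on the specific $V_{i,h+1}$ with $0\le V_{i,h+1}\le H$), which immediately yields equicontinuity of the family by taking $\delta=\varepsilon/\Theta(H)$ uniformly. I would decompose $f_{i,s}(\pi_i,\pi_{-i};V_{i,h+1}) = R(\pi_i,\pi_{-i}) + G(\pi_i,\pi_{-i};V_{i,h+1})$ into the expected-reward term $R \defn \mathbb{E}_{\ba\sim\pi(s)}[r_{i,h}(s,\ba)]$ and the robust-continuation term $G$, and bound each piece separately.

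The reward term is routine: the factorization $\pi=\pi_i\otimes\pi_{-i}$ together with the telescoping identity $\pi-\pi' = \pi_i\otimes(\pi_{-i}-\pi'_{-i}) + (\pi_i-\pi'_i)\otimes\pi'_{-i}$ and $\|r_{i,h}\|_\infty\le 1$ gives $|R-R'|\le\|\pi_i-\pi'_i\|_1+\|\pi_{-i}-\pi'_{-i}\|_1$, which is independent of $V_{i,h+1}$. The continuation term $G = \sum_{a_i}\pi_i(a_i\mymid s)\,g(a_i;\pi_{-i},V_{i,h+1})$ with $g(a_i;\pi_{-i},V)\defn\inf_{P\in\cU^{\sigma_i}(P^{\pi_{-i}}_{h,s,a_i})}PV$ is Lipschitz in $\pi_i$ with constant $H$ since $|g|\le H$; thus the real obstacle is the dependence on $\pi_{-i}$, which I would control in two substeps: (i) the map $\pi_{-i}\mapsto P^{\pi_{-i}}_{h,s,a_i}=\sum_{\ba_{-i}}\pi_{-i}(\ba_{-i}\mymid s)\,P^0_{h,s,(a_i,\ba_{-i})}$ is linear and, because each $P^0_{h,s,(a_i,\ba_{-i})}$ is a probability vector, satisfies $\|P^{\pi_{-i}}_{h,s,a_i}-P^{\pi'_{-i}}_{h,s,a_i}\|_1\le\|\pi_{-i}-\pi'_{-i}\|_1$; (ii) the center-to-robust-value map $P_0\mapsto\inf_{P\in\cU^{\sigma_i}(P_0)}PV$ is $H$-Lipschitz in the $\ell_1$ norm, uniformly over all $V$ with $0\le V\le H$.

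Step (ii) is where the uniformity in $V_{i,h+1}$ has to be earned, and it is the main technical point of the proof. For the TV distance it follows cleanly from the Legendre-type dual reformulation already invoked in the paper (see \eqref{eq:nvi-iteration-dual-alg}),
$\inf_{P\in\cU^{\sigma_i}(P_0)}PV = \max_{\alpha\in[\min_s V(s),\,\max_s V(s)]}\bigl\{P_0[V]_\alpha - \sigma_i\bigl(\alpha-\min_{s'}[V]_\alpha(s')\bigr)\bigr\}$,
which expresses the robust value as an upper envelope of affine functionals of $P_0$ whose slopes $[V]_\alpha$ obey $\|[V]_\alpha\|_\infty\le\|V\|_\infty\le H$; taking the supremum preserves the $H$-Lipschitz constant, so $|g(a_i;\pi_{-i},V)-g(a_i;\pi'_{-i},V)|\le H\|P^{\pi_{-i}}_{h,s,a_i}-P^{\pi'_{-i}}_{h,s,a_i}\|_1$. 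Chaining (i) and (ii) and averaging over $a_i$ against $\pi_i$ yields $|G(\pi_i,\pi_{-i};V)-G(\pi_i,\pi'_{-i};V)|\le H\|\pi_{-i}-\pi'_{-i}\|_1$, uniformly in $V$. Summing all contributions produces a joint Lipschitz constant of $O(H)$ on $f_{i,s}(\cdot,\cdot;V_{i,h+1})$ in the $\ell_1$ metric on $(\pi_i'(s),\pi_{-i}(s))$ that is independent of $V_{i,h+1}\in[0,H]^S$, which is exactly the equicontinuity claim.
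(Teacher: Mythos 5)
Your proposal is correct and follows essentially the same route as the paper's proof: decompose $f_{i,s}$ into the expected-reward and robust-continuation terms, use the TV dual reformulation together with the $1$-Lipschitzness of the max operator to show the robust value is Lipschitz in the center $P^{\pi_{-i}}_{h,s,a_i}$ uniformly over $0\le V_{i,h+1}\le H$, and chain this with the linearity of $\pi_{-i}\mapsto P^{\pi_{-i}}_{h,s,a_i}$. The only difference is bookkeeping: you work in the $\ell_1$ metric and use the telescoping identity for product measures to get an $O(H)$ Lipschitz constant, whereas the paper uses a coordinate-wise max metric and an explicit expansion of the product-policy difference, arriving at the cruder constant $3H\prod_{i\in[n]}A_i(2^n-1)$ --- either is independent of $V_{i,h+1}$ and hence suffices for equicontinuity.
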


\begin{lemma}\label{lemma:continuous-g}
For any $i\in[n]$ and then $x_{-i}: \cS \mapsto \prod_{j\neq i, j\in[n]} \Delta(\cA_j)$, the functions
\begin{align}
\forall s\in\cS: \quad g_{i,s}(x_{-i}(s),V_{i,h+1}) \defn \mathrm{max}_{\pi_i'(s)\in\Delta(\cA_i)} \; f_{i,s}(\pi_i'(s), x_{-i}(s); V_{i,h+1}) \label{defn-g-auxiliary}
\end{align}
are continuous with respect to $x_{-i}(s)$ and the set $\{g_{i,s}(\cdot , V) | V\in\mathbb{R}^S, 0\leq V \leq H\}$ is equicontinuous.
\end{lemma}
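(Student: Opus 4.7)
The plan is to deduce Lemma~\ref{lemma:continuous-g} as a direct consequence of Lemma~\ref{lemma:continuous-f} via the standard fact that taking a max over a compact set preserves equicontinuity with respect to the remaining parameters. Since the maximization in \eqref{defn-g-auxiliary} is over the compact simplex $\Delta(\cA_i)$ and the integrand $f_{i,s}(\cdot, x_{-i}(s); V_{i,h+1})$ is continuous in its first argument (a consequence of Lemma~\ref{lemma:continuous-f}), the maximum is attained; denote any maximizer by $\pi_i^{\star}\big(x_{-i}(s), V_{i,h+1}\big) \in \Delta(\cA_i)$.

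The key estimate is the familiar ``max minus max'' inequality: for any two points $x_{-i}(s), y_{-i}(s) \in \prod_{j \neq i}\Delta(\cA_j)$ and any $V_{i,h+1}$ with $0 \leq V_{i,h+1} \leq H$, I would write
\begin{align*}
g_{i,s}\big(x_{-i}(s), V_{i,h+1}\big) - g_{i,s}\big(y_{-i}(s), V_{i,h+1}\big)
&= f_{i,s}\big(\pi_i^{\star}(x_{-i}(s), V_{i,h+1}), x_{-i}(s); V_{i,h+1}\big) \\
&\quad - \max_{\pi_i'(s) \in \Delta(\cA_i)} f_{i,s}\big(\pi_i'(s), y_{-i}(s); V_{i,h+1}\big) \\
&\leq f_{i,s}\big(\pi_i^{\star}(x_{-i}(s), V_{i,h+1}), x_{-i}(s); V_{i,h+1}\big) \\
&\quad - f_{i,s}\big(\pi_i^{\star}(x_{-i}(s), V_{i,h+1}), y_{-i}(s); V_{i,h+1}\big),
\end{align*}
and symmetrically for the reverse difference. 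Thus $\big|g_{i,s}(x_{-i}(s), V_{i,h+1}) - g_{i,s}(y_{-i}(s), V_{i,h+1})\big|$ is controlled by $\sup_{\pi_i'(s) \in \Delta(\cA_i)} \big|f_{i,s}(\pi_i'(s), x_{-i}(s); V_{i,h+1}) - f_{i,s}(\pi_i'(s), y_{-i}(s); V_{i,h+1})\big|$, which by Lemma~\ref{lemma:continuous-f} can be made arbitrarily small once $\|x_{-i}(s) - y_{-i}(s)\|$ is small, with a modulus of continuity that depends neither on the choice of maximizer $\pi_i^{\star}$ nor on $V_{i,h+1}$.

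The only subtle point---which is simultaneously where the strength of Lemma~\ref{lemma:continuous-f} gets used---is that the bound above must hold uniformly over $V_{i,h+1} \in [0,H]^S$ in order to upgrade pointwise continuity of each $g_{i,s}(\cdot, V_{i,h+1})$ to equicontinuity of the whole family $\{g_{i,s}(\cdot, V) : 0 \leq V \leq H\}$. Fortunately, Lemma~\ref{lemma:continuous-f} is stated precisely as an equicontinuity result, so the modulus in the bound inherits the required uniformity in $V$ for free. I do not expect any additional obstacle: no further properties of the uncertainty set $\cU^{\sigma_i}(\cdot)$, the reward, or the minimization inside $f_{i,s}$ are needed here, as all the analytic heavy lifting has already been absorbed into the proof of Lemma~\ref{lemma:continuous-f}.
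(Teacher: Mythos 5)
Your proposal is correct and follows essentially the same route as the paper: the paper likewise bounds $g_{i,s}(x_{-i}(s),V_{i,h+1}) - g_{i,s}(y_{-i}(s),V_{i,h+1})$ from above by $f_{i,s}(u^{\star}_{i,s}, x_{-i}(s);V_{i,h+1}) - f_{i,s}(u^{\star}_{i,s}, y_{-i}(s);V_{i,h+1})$ and from below by the analogous difference at $v^{\star}_{i,s}$, then invokes Lemma~\ref{lemma:continuous-f}. Your explicit remark that the modulus from Lemma~\ref{lemma:continuous-f} is uniform in $V_{i,h+1}$ (and in the maximizer) is exactly what justifies the equicontinuity claim, and is if anything stated more carefully than in the paper's own write-up.
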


Armed with above lemmas, we are in the position to prove this condition.
We suppose there are two sequence $\lim_{n\rightarrow\infty}x^n =x^0, y^n\in \phi(x^n), 
\lim_{n\rightarrow \infty} y^n = y^0$.
Recall the definition of a upper semi-continuous correspondence (cf.~Definition~\ref{def:semi-continous}), we are supposed to show that $y^0 \in \phi(x^0)$, i.e.,
\begin{align}
\forall (i,s)\in [n]\times \cS: \quad f_{i,s}(y^0_i(s), x^0_{-i}(s); V_{i,h+1}) = \mathrm{max}_{\pi_i'(s)\in \Delta(\cA_i)} \;f_{i,s}(\pi_i'(s), x^0_{-i}(s); V_{i,h+1}).
\end{align}

Towards this, we have
\begin{align}
& | f_{i,s}(y^0_i(s), x^0_{-i}(s); V_{i,h+1}) - g_{i,s}(x^0_{-i}(s), V_{i,h+1}) | \notag \\
& \leq | f_{i,s}(y^0_i(s), x^0_{-i}(s); V_{i,h+1}) - f_{i,s}(y^n_i(s), x^n_{-i}(s); V_{i,h+1}) | \notag \\
&\quad + |f_{i,s}(y^n_i(s), x^n_{-i}(s); V_{i,h+1}) - g_{i,s}(x^0_{-i}(s), V_{i,h+1}) | \notag \\
& \overset{\mathrm{(i)}}{=} | f_{i,s}(y^0_i(s), x^0_{-i}(s); V_{i,h+1}) - f_{i,s}(y^n_i(s), x^n_{-i}(s); V_{i,h+1}) | + |g_{i,s}(x^n_{-i}(s), V_{i,h+1}) - g_{i,s}(x^0_{-i}(s), V_{i,h+1}) | \notag \\
& \qquad \rightarrow 0 \qquad \text{as } \quad n\rightarrow \infty,
\end{align}
where the first inequality follows from the triangle inequality, (i) holds by the assumption $y^n \in \phi(x^n)$ so that $f_{i,s}(y^n_i(s), x^n_{-i}(s); V_{i,h+1}) = \mathrm{max}_{\pi_i'\in\Delta(\cS)} \; f_{i,s}(\pi_i'(s), x_{-i}^n(s); V_{i,h+1})$, and the last line can be verified by the continuity verified by Lemma~\ref{lemma:continuous-f} and Lemma~\ref{lemma:continuous-g}.

\item {\bf Verifying $\phi(\pi)$ is convex for any $\pi\in X$.}
Finally, we gonna work on the convexity of $\phi(\pi)$ for any $\pi\in X$. To begin with, by the definition of $\phi(\pi)$ in \eqref{eq:definiton-nash-phi}, we know that $\phi(\pi) \subseteq X$ and the maximum of the continuous function $f_{i,s}(\pi_i(s), \pi_{-i}(s); V_{i,h+1})$ (cf.~Lemma~\ref{lemma:continuous-f}) on a compact set $\Delta(\cA_i)$ exists, i.e., $\phi(x) \neq \emptyset$.

Suppose there exists two Nash equilibrium $z:\cS \mapsto \prod_{i\in[n]} \Delta(\cA_i), v: \cS \mapsto \prod_{i\in[n]} \Delta(\cA_i)$ and  $z,v \in \phi(\pi)$.  Then we have that for  any $(i,s)\in [n] \times \cS$,
\begin{align}
 f_{i,s}(z_i(s), \pi_{-i}(s); V_{i,h+1} ) = f_{i,s}(v_i(s), \pi_{-i}(s); V_{i,h+1} ) = \max_{u_i(s)\in\Delta(\cA_i) }f_{i,s}(u_i(s), \pi_{-i}(s); V_{i,h+1} ).
\end{align}
To continue, for any $0\leq \lambda \leq 1$, one has
\begin{align}
&\max_{u_i(s)\in\Delta(\cA_i) }f_{i,s}(u_i(s), \pi_{-i}(s); V_{i,h+1} ) = \lambda  f_{i,s}(z_i(s), \pi_{-i}(s); V_{i,h+1} ) + (1-\lambda) f_{i,s}(v_i(s), \pi_{-i}(s); V_{i,h+1} ) \nonumber \\
& = \lambda \bigg(\mathbb{E}_{a_i\sim z_{i}(s)} \left[r^{\pi_{-i}}_{i,h}(s, a_i)\right] +  \mathbb{E}_{a_i \sim z_{i}(s)} \bigg[ \inf_{\cU^{\sigma_i}\left(P^{\pi_{-i}}_{h,s,a_i} \right)} P V_{i,h+1}\bigg]\bigg)  \nonumber \\
&\quad + (1-\lambda) \bigg(\mathbb{E}_{a_i\sim v_{i}(s)} \left[r^{\pi_{-i}}_{i,h}(s, a_i)\right] + \mathbb{E}_{a_i \sim v_{i}(s)} \bigg[ \inf_{\cU^{\sigma_i}\left(P^{\pi_{-i}}_{h,s,a_i} \right)} P V_{i,h+1}\bigg] \bigg)  \\
&= \mathbb{E}_{a_i\sim [\lambda z_{i}(s) + (1-\lambda) v_{i}(s)] } \left[r^{\pi_{-i}}_{i,h}(s, a_i)\right] + \mathbb{E}_{a_i\sim [\lambda z_{i}(s) + (1-\lambda) v_{i}(s)] }  \bigg[ \inf_{\cU^{\sigma_i}\left(P^{\pi_{-i}}_{h,s,a_i} \right)} P V_{i,h+1}\bigg] \notag \\
& = f_{i,s}(\lambda z_{i}(s) + (1-\lambda) v_{i}(s), \pi_{-i}(s); V_{i,h+1} ). 
\end{align}
where we denote $r_{i,h}^{\pi_{-i}}(s,a_i) \defn \mathbb{E}_{\ba_{-i}\sim \pi_{-i}(s)}\left[r_{i,h}(s,(a_i,\ba_{-i}))\right]$.
Hence, we show that $\lambda z_{i}(s) + (1-\lambda) v_{i}(s) \in \phi(\pi)$ for all $(i,s)\in [n]\times \cS$ and $0\leq \lambda \leq 1$, thus verify that $\phi(\pi)$ is convex for any $\pi\in X$.

\end{itemize}
\paragraph{Step 3: the existence of robust NE in \ormg.}
Armed with above results, now we consider a general form to show that there exists a policy $\pi:  \cS \times [H] \mapsto \prod_{i\in[n]} \Delta(\cA_i)$ that satisfies
\begin{align}
\forall (i,h,s)\in[n] \times [H] \times \cS:\quad V_{i,h}^{\pi, \ror_i}(s)=V_{i,h}^{\star,\pi_{-i}, \ror_i}(s).
\end{align}

We shall prove this by induction.

\begin{itemize}
	\item {\bf The base case.} Starting with the final step $h=H$, we recall that by definition, for any joint policy $\pi: \cS \times [H]  \mapsto \prod_{i\in[n]} \Delta(\cA_i)$:
	\begin{align}
	\forall (i,s)\in[n]  \times \cS: \quad V^{\pi,\ror_i}_{i,H+1}(s) = 0.
	\end{align}

	To apply the results in the one-step game constructed in Step 2, 
	we consider the one-step game at $h=H$ and using the payoff function (cf.~\eqref{eq:lemma-continuous-2-payoff})
 \begin{align}
     \forall s\in\cS: \quad f_{i,s}(\pi_i(s), \pi_{-i}(s); V^{\pi,\ror_i}_{i,H+1} ) = \mathbb{E}_{\ba \sim \pi(s) }[r_{i,h}(s,\ba)].
 \end{align}
We know that there exists a policy $\pi$ so that 
	\begin{align}
	\forall (i,s)\in[n]  \times \cS: \quad V_{i,H}^{\pi, \ror_i}(s)=V_{i,H}^{\star,\pi_{-i}, \ror_i}(s)
	\end{align}
 by setting $\pi_H$ as the NE of the one-step auxiliary game.
 
	\item {\bf Induction.}
	Assuming that there exists a policy $\pi$ so that for subsequent steps $h+1,\cdots, H$,
	\begin{align}
	\forall (i,h,s)\in[n] \times \{h+1,\cdots, H\} \times \cS:\quad V_{i,h}^{\pi, \ror_i}(s)=V_{i,h}^{\star,\pi_{-i}, \ror_i}(s),
	\end{align}
 which are achieved by determining certain policies for $\{ \pi_{h+1}, \pi_{h+2}, \cdots, \pi_{H}\}$.
We are supposed to prove that at time step $h$, we can ensure our policy $\pi$ satisfying
	\begin{align}
	\forall (i,s)\in[n]  \times \cS: \quad V_{i,h}^{\pi, \ror_i}(s)=V_{i,h}^{\star,\pi_{-i}, \ror_i}(s)
	\end{align}
by choosing a proper policy $\pi_h$ at the time step $h$.

Towards this, it is observed that
\begin{align}
V_{i,h}^{\star,\pi_{-i}, \ror_i}(s) & = \max_{\pi'_i: \cS \times [H] \mapsto \Delta(\mathcal{A}_i)} V_{i,h}^{\pi'_i \times \pi_{-i}, \ror_i }(s) \notag  \\
& = \max_{\pi'_i: \cS \times [H] \mapsto \Delta(\mathcal{A}_i)}   \mathbb{E}_{\ba \sim \pi'_{i,h}(s) \times \pi_{-i,h}(s)}[r_{i,h}(s,\ba)] + \mathbb{E}_{a_i \sim \pi'_{i,h}(s)} \bigg[ \inf_{P \in \cU^{\sigma_i}\left(P^{\pi_{-i}}_{h,s,a_i} \right)} P V^{\pi_i'\times \pi_{-i},\ror_i}_{i,h+1} \bigg] \notag \\
& =  \max_{\pi'_{i,h}(s) \in \Delta(\mathcal{A}_i)}   \mathbb{E}_{\ba \sim \pi'_{i,h}(s) \times \pi_{-i,h}(s)}[r_{i,h}(s,\ba)] \notag \\
&\quad +  \max_{\pi'_{i,h}(s) \in \Delta(\mathcal{A}_i)} \mathbb{E}_{a_i \sim \pi'_{i,h}(s)} \max_{\pi'_{i,h^+}: \cS \times h^+ \mapsto \Delta(\mathcal{A}_i)} \bigg[ \inf_{P \in \cU^{\sigma_i}\left(P^{\pi_{-i}}_{h,s,a_i} \right)} P V^{\pi_i'\times \pi_{-i},\ror_i}_{i,h+1}\bigg] \notag \\
& = \max_{\pi'_{i,h}(s) \in \Delta(\mathcal{A}_i)}  \left[ \mathbb{E}_{\ba \sim \pi'_{i,h}(s) \times \pi_{-i,h}(s)}[r_{i,h}(s,\ba)] + \mathbb{E}_{a_i \sim \pi'_{i,h}(s)} \left[ \inf_{ P \in \cU^{\sigma_i}\left(P^{\pi_{-i}}_{h,s,a_i} \right)} P V^{\star, \pi_{-i},\ror_i}_{i,h+1}\right] \right]. \label{eq:ne-sequential-result1}
\end{align}
where we denote $h^+ = \{ h+1,h+2,\cdots, H\}$ as the set that includes all the time steps after $h$ until the end of the episode, and the last equality follows from the fact
\begin{align}
    \max_{\pi'_{i,h^+}: \cS \times h^+ \mapsto \Delta(\mathcal{A}_i)} \bigg[ \inf_{\cU^{\sigma_i}\left(P^{\pi_{-i}}_{h,s,a_i} \right)} P V^{\pi_i'\times \pi_{-i},\ror_i}_{i,h+1}\bigg]  & = \inf_{\cU^{\sigma_i}\left(P^{\pi_{-i}}_{h,s,a_i} \right)} P \max_{\pi'_{i,h^+}: \cS \times h^+ \mapsto \Delta(\mathcal{A}_i)} V^{\pi_i'\times \pi_{-i},\ror_i}_{i,h+1}\bigg] \notag \\
    & = \inf_{\cU^{\sigma_i}\left(P^{\pi_{-i}}_{h,s,a_i} \right)} P 
 V^{\star, \pi_{-i},\ror_i}_{i,h+1},
\end{align}
which holds by the definition of $V^{\star, \pi_{-i},\ror_i}_{i,h+1}$.
Combining \eqref{eq:ne-sequential-result1} and the results in the auxiliary one-step game with $V_{i,h+1} = V^{\star, \pi_{-i},\ror_i}_{i,h+1}$, one has that there exists a policy with $\pi_{h}$ that satisfies
 \begin{align}
 \forall (i,s)\in[n]  \times \cS: \quad V_{i,h}^{\pi, \ror_i}(s)=V_{i,h}^{\star,\pi_{-i}, \ror_i}(s).
 \end{align}

	\end{itemize}

Combining the results in the base case and induction, we complete the proof by recursively choosing $\pi_h: \cS \mapsto \prod_{i\in[n]} \Delta(\cA_i)$ for $h=H,H-1,\cdots, 1$ as the NE of the corresponding one-step auxiliary game at time step $h$ and arrive at
\begin{align}
\forall (i,s)\in[n]  \times \cS: \quad V_{i,1}^{\pi, \ror_i}(s)=V_{i,1}^{\star,\pi_{-i}, \ror_i}(s).
\end{align}

\subsection{Proof of auxiliary facts}

\subsubsection{Proof of Lemma~\ref{lemma:existence-best-reponse}} \label{proof:lemma:existence-best-reponse}

Without loss of generality, we consider any $i\in[n]$ with other agents' policy as $\pi_{-i}:  \cS \times [H]  \mapsto \Delta(\cA_{-i})$ fixed.
We shall prove this lemma by induction, by recursively showing that for each $(h,s)$, there exist a policy $\widetilde{\pi}_{i,h}(s)$ that satisfies $V_{i,h}^{\widetilde{\pi}_i \times \pi_{-i}, \ror_i}(s)  = V_{i,h}^{\star,\pi_{-i}, \ror_i}(s)$.
\begin{itemize}
	\item {\bf The base case.} Consider the base case $h=H$. Conditioned on other agents' policy $\pi_{-i}: \cS \times [H]  \mapsto \Delta(\cA_{-i})$, the maximum of the robust value function of the $i$-th agent can be expressed by
	\begin{align}
		\forall s\in\cS: \quad V_{i,H}^{\star,\pi_{-i}, \ror_i}(s) & = \max_{\pi'_i: \cS \times [H] \mapsto \Delta(\mathcal{A}_i)} V_{i,H}^{\pi'_i \times \pi_{-i}, \ror_i }(s) \notag \\
		&  = \max_{\pi'_i: \cS \times [H] \mapsto \Delta(\mathcal{A}_i)}\mathbb{E}_{a_i\sim \pi_{i,H}'(s)}\left[\mathbb{E}_{\ba_{-i}\sim \pi_{-i,H}(s)} [r_{i,H}(s,\ba)] \right] \notag \\
		& = \max_{\pi'_{i,H}(s)\sim \Delta(\mathcal{A}_i)}\mathbb{E}_{a_i\sim \pi_{i,H}'(s)}\left[\mathbb{E}_{\ba_{-i}\sim \pi_{-i,H}(s)} [r_{i,H}(s,\ba)] \right].
	\end{align}
	Since the maximum of the continuous function $\mathbb{E}_{a_i\sim \pi_{i,H}'(s)}\left[\mathbb{E}_{\ba_{-i}\sim \pi_{-i,H}(s)} [r_{i,H}(s,\ba)] \right]$ on a compact set $\Delta(\cA_i)$ exists, by setting
	\begin{align}
	\forall s\in\cS: \quad \widetilde{\pi}_{i,H}(s) = \mathrm{argmax}_{\pi'_{i,H}(s)\sim \Delta(\mathcal{A}_i)}\mathbb{E}_{a_i\sim \pi_{i,H}'(s)}\left[\mathbb{E}_{\ba_{-i}\sim \pi_{-i,H}(s)} [r_{i,H}(s,\ba)] \right],
	\end{align}
	we arrive at
	\begin{align}
	\forall s\in\cS: \quad  V_{i,H}^{\widetilde{\pi}_i \times \pi_{-i}, \ror_i }(s) = V_{i,H}^{\star,\pi_{-i}, \ror_i}(s).
	\end{align}
This complete the proof for the base case.

\item {\bf Induction.}
Assuming that for $t=h+1,h+2,\cdots, H$, we have
\begin{align}
\forall s\in\cS: \quad  V_{i,t}^{\widetilde{\pi}_i \times \pi_{-i}, \ror_i }(s) = V_{i,t}^{\star,\pi_{-i}, \ror_i}(s). \label{eq:induction-lemma-1}
\end{align}
Then, we want to prove for the step $h$, where the  maximum of the robust value function of the $i$-th agent can be expressed as: for all $s\in\cS$,
\begin{align}
		 &V_{i,h}^{\star,\pi_{-i}, \ror_i}(s) \notag \\
		& = \max_{\pi'_i: \cS \times [H] \mapsto \Delta(\mathcal{A}_i)} V_{i,h}^{\pi'_i \times \pi_{-i}, \ror_i }(s) \notag \\
		&  = \max_{\pi'_i: \cS \times [H] \mapsto \Delta(\mathcal{A}_i)} \mathbb{E}_{a_i\sim \pi_{i,h}'(s)}\left[\mathbb{E}_{\ba_{-i}\sim \pi_{-i,h}(s)} [r_{i,h}(s,\ba)] \right]  + \mathbb{E}_{a_i \sim \pi_{i,h}'(s)} \left[ \inf_{\cU^{\sigma_i}_{\rho}\left(P^{\pi_{-i}}_{h,s,a_i} \right)} P V^{\star, \pi_{-i},\ror_i}_{i,h+1}\right]\notag \\
		&  \overset{\mathrm{(i)}}{=} \max_{\pi'_i: \cS \times [H] \mapsto \Delta(\mathcal{A}_i)} \mathbb{E}_{a_i\sim \pi_{i,h}'(s)}\left[\mathbb{E}_{\ba_{-i}\sim \pi_{-i,h}(s)} [r_{i,h}(s,\ba)] \right]  + \mathbb{E}_{a_i \sim \pi_{i,h}'(s)} \left[ \inf_{\cU^{\sigma_i}_{\rho}\left(P^{\pi_{-i}}_{h,s,a_i} \right)} P V^{\widetilde{\pi}_i\times \pi_{-i},\ror_i}_{i,h+1}\right]\notag \\
		& = \max_{\pi'_{i,h}(s)\sim \Delta(\mathcal{A}_i)}\mathbb{E}_{a_i\sim \pi_{i,h}'(s)}\left[\mathbb{E}_{\ba_{-i}\sim \pi_{-i,h}(s)} [r_{i,h}(s,\ba)] \right]  + \mathbb{E}_{a_i \sim \pi_{i,h}'(s)} \left[ \inf_{\cU^{\sigma_i}_{\rho}\left(P^{\pi_{-i}}_{h,s,a_i} \right)} P V^{\widetilde{\pi}_i\times \pi_{-i},\ror_i}_{i,h+1}\right].
	\end{align}
where (i) holds by the induction assumption in \eqref{eq:induction-lemma-1}. Similarly to the base case, the maximum of the continuous function $\mathbb{E}_{a_i\sim \pi_{i,h}'(s)}\left[\mathbb{E}_{\ba_{-i}\sim \pi_{-i,h}(s)} [r_{i,h}(s,\ba)] \right]  + \mathbb{E}_{a_i \sim \pi_{i,h}'(s)} \left[ \inf_{\cU^{\sigma_i}_{\rho}\left(P^{\pi_{-i}}_{h,s,a_i} \right)} P V^{\widetilde{\pi}_i\times \pi_{-i},\ror_i}_{i,h+1}\right]$ on a compact set $\Delta(\cA_i)$ exists. So without conflict, for all $s\in\cS$, we can set
\begin{align}
	 &\widetilde{\pi}_{i,h}(s) \notag \\
	 &= \mathrm{argmax}_{\pi'_{i,h}(s)\sim \Delta(\mathcal{A}_i)}\mathbb{E}_{a_i\sim \pi_{i,h}'(s)}\left[\mathbb{E}_{\ba_{-i}\sim \pi_{-i,h}(s)} [r_{i,h}(s,\ba)] \right]  + \mathbb{E}_{a_i \sim \pi_{i,h}'(s)} \left[ \inf_{\cU^{\sigma_i}_{\rho}\left(P^{\pi_{-i}}_{h,s,a_i} \right)} P V^{\widetilde{\pi}_i\times \pi_{-i},\ror_i}_{i,h+1}\right], \label{eq:lemma-1-set-pi-h}
\end{align}
since the function $\inf_{\cU^{\sigma_i}_{\rho}\left(P^{\pi_{-i}}_{h,s,a_i} \right)} P V^{\widetilde{\pi}_i\times \pi_{-i},\ror_i}_{i,h+1}$ and especially $V^{\widetilde{\pi}_i\times \pi_{-i},\ror_i}_{i,h+1}$ are independent from the policy in the first $h$ steps ($\{\widetilde{\pi}_{i,t}(s)\}_{s\in\cS, t\in[h]}$).

Consequently, \eqref{eq:lemma-1-set-pi-h} directly implies that
\begin{align}
\forall s\in\cS: \quad  V_{i,h}^{\widetilde{\pi}_i \times \pi_{-i}, \ror_i }(s) = V_{i,h}^{\star,\pi_{-i}, \ror_i}(s). \label{eq:induction-lemma-1-result}
\end{align}

	\end{itemize}

Combining the results in the base case and the induction, we complete the proof by concluding that there exists a policy $\widetilde{\pi}$ satisfying
\begin{align}
\forall (h,s)\in [H] \times \cS: \quad  V_{i,h}^{\widetilde{\pi}_i \times \pi_{-i}, \ror_i }(s) = V_{i,h}^{\star,\pi_{-i}, \ror_i}(s). \label{eq:induction-lemma-1-result-final}
\end{align}

\subsubsection{Proof of Lemma~\ref{lemma:continuous-f}}\label{proof:lemma:continuous-f}
First, we define the distance metric between any two policy $\pi,\pi' \in X =\{ \pi: \cS \mapsto \prod_{i\in[n]} \Delta(A_i)\}$ as below:
\begin{align}
d(\pi, \pi') \defn \max_{i\in[n]} \max_{(s,a_i)\in \cS\times \cA_i} |\pi_i(a_i \mymid s) - \pi'_i(a_i \mymid s) |.
\end{align} 
To prove the continuity, given any $\epsilon>0$, we want to show that there exists $\delta(\epsilon)>0$ such that if 
\begin{align}
d(\pi, \pi') < \delta(\epsilon), \label{eq:assumption-of-the-d}
\end{align} 
then
\begin{align}
\left| f_{i,s}(\pi_i(s), \pi_{-i}(s); V_{i,h+1}) - f_{i,s}(\pi'_i(s), \pi'_{-i}(s); V_{i,h+1}) \right| < \epsilon
\end{align}
for any fixed $\{V_{i,h+1}\}_{i\in[n]}$ with $0\leq V_{i,h+1} \leq H $ for all $i\in [n]$.
Towards this, we observe that
\begin{align}
&\left| f_{i,s}(\pi_i(s), \pi_{-i}(s); V_{i,h+1}) - f_{i,s}(\pi'_i(s), \pi'_{-i}(s); V_{i,h+1}) \right| \notag \\
& = \bigg| \mathbb{E}_{\ba \sim \pi(s) }[r_{i,h}(s,\ba)] + \mathbb{E}_{a_i \sim \pi_{i}(s)} \bigg[ \inf_{\cU^{\sigma_i}\left(P^{\pi_{-i}}_{h,s,a_i} \right)} P V_{i,h+1}\bigg] \notag \\
& \quad - \mathbb{E}_{\ba \sim \pi'(s) }[r_{i,h}(s,\ba)] + \mathbb{E}_{a_i \sim \pi'_{i}(s)} \bigg[ \inf_{\cU^{\sigma_i}\big(P^{\pi'_{-i}}_{h,s,a_i} \big)} P V_{i,h+1}\bigg] \bigg| \notag \\
& \leq \left| \mathbb{E}_{\ba \sim \pi(s) }[r_{i,h}(s,\ba)] -  \mathbb{E}_{\ba \sim \pi'(s) }[r_{i,h}(s,\ba)] \right| \notag \\
& \quad + \bigg|  \mathbb{E}_{a_i \sim \pi_{i}(s)} \bigg[ \inf_{\cU^{\sigma_i}\left(P^{\pi_{-i}}_{h,s,a_i} \right)} P V_{i,h+1}\bigg]  - \mathbb{E}_{a_i \sim \pi'_{i}(s)} \bigg[ \inf_{\cU^{\sigma_i}\big(P^{\pi'_{-i}}_{h,s,a_i} \big)} P V_{i,h+1}\bigg] \bigg|. \label{eq:continous-decompose}
\end{align}

The first term can be bounded by
\begin{align}
&\left| \mathbb{E}_{\ba \sim \pi(s) }[r_{i,h}(s,\ba)] -  \mathbb{E}_{\ba \sim \pi'(s) }[r_{i,h}(s,\ba)] \right| \notag \\
& \leq \sum_{\ba \in \cA } \bigg|\prod_{i\in[n]}\pi_i(a_i\mymid s)- \prod_{i\in[n]}\pi'_i(a_i\mymid s) \bigg| \max_{(s,\ba)\in\cS\times \cA} r_{i,h}(s,\ba)\notag \\
& \leq \sum_{\ba \in \cA } \bigg|\prod_{i\in[n]}\pi_i(a_i\mymid s)- \prod_{i\in[n]}\pi'_i(a_i\mymid s) \bigg|, \label{eq:continuous-1-middle}
\end{align}
where the last inequality holds by the definition of reward function $\max_{(s,\ba)\in\cS\times \cA} r_{i,h}(s,\ba) \leq 1$ for all $(i,h)\in [n]\times [H]$. To continue, we denote the difference $\delta_i(s,a_i) \defn \pi'_i(a_i \mymid s) - \pi_i(a_i \mymid s)$. Therefore, we have
\begin{align}
    \bigg|\prod_{i\in[n]}\pi_i(a_i\mymid s)- \prod_{i\in[n]}\pi'_i(a_i\mymid s) \bigg| &= \bigg|\prod_{i\in[n]}\pi_i(a_i\mymid s)- \prod_{i\in[n]}(\pi_i(a_i\mymid s) + \delta_i(s,a_i)) \bigg| \notag \\
    & = \bigg| \sum_{|\cY| \geq 1, \cY \subseteq [n]} 
    \left( \prod_{i\in \cY} \delta_i(s,a_i)\right)\cdot \left( \prod_{i\in \cY^c} \pi_i(a_i\mymid s)\right)\bigg| \notag \\
    & \leq \sum_{|\cY| \geq 1, \cY \subseteq [n]} \bigg| \left( \prod_{i\in \cY} \delta_i(s,a_i)\right)\cdot \left( \prod_{i\in \cY^c} \pi_i(a_i\mymid s)\right) \bigg| \leq (2^{n}-1) \delta(\epsilon), \label{eq:product-pi-diff}
\end{align}
where the last inequality holds by \eqref{eq:assumption-of-the-d}. Plugging \eqref{eq:product-pi-diff} back to \eqref{eq:continuous-1-middle} indicates that
\begin{align}
   &\left| \mathbb{E}_{\ba \sim \pi(s) }[r_{i,h}(s,\ba)] -  \mathbb{E}_{\ba \sim \pi'(s) }[r_{i,h}(s,\ba)] \right|  \leq \prod_{i\in[n]} A_i (2^{n}-1) \delta(\epsilon). \label{eq:continous-1}
\end{align}

For the second term in \eqref{eq:continous-decompose}, we observe that
\begin{align}
& \bigg|  \mathbb{E}_{a_i \sim \pi_{i}(s)} \bigg[ \inf_{P \in \cU^{\sigma_i}\left(P^{\pi_{-i}}_{h,s,a_i} \right)} P V_{i,h+1}\bigg]  - \mathbb{E}_{a_i \sim \pi'_{i}(s)} \bigg[ \inf_{P \in\cU^{\sigma_i}\big(P^{\pi'_{-i}}_{h,s,a_i} \big)} P V_{i,h+1}\bigg] \bigg| \notag \\
& \leq \bigg|  \mathbb{E}_{a_i \sim \pi_{i}(s)} \bigg[ \inf_{P \in \cU^{\sigma_i}\left(P^{\pi_{-i}}_{h,s,a_i} \right)} P V_{i,h+1}\bigg]  - \mathbb{E}_{a_i \sim \pi_{i}(s)} \bigg[ \inf_{P \in\cU^{\sigma_i}\left(P^{\pi'_{-i}}_{h,s,a_i} \right)} P V_{i,h+1}\bigg] \bigg| \notag \\
& \quad + \bigg| \mathbb{E}_{a_i \sim \pi_{i}(s)} \bigg[ \inf_{P \in\cU^{\sigma_i}\left(P^{\pi'_{-i}}_{h,s,a_i} \right)} P V_{i,h+1}\bigg]  -\mathbb{E}_{a_i \sim \pi'_{i}(s)} \bigg[ \inf_{P \in\cU^{\sigma_i}\big(P^{\pi'_{-i}}_{h,s,a_i} \big)} P V_{i,h+1}\bigg] \bigg| \notag \\
% &=\Big|  \mathbb{E}_{a_i \sim \pi_{i}(s)} \left[  \max_{\alpha\in [\min_s V_{i,h+1}(s), \max_s  V_{i,h+1}(s)]}  \Big\{ \mathbb{E}_{\pi_{-i}( \ba_{-i} \mymid s)} \left[ P^{0}_{h,s, (a_i,\ba_{-i})}\right]\left[ V_{i,h+1}\right]_{\alpha} - \ror_i \left(\alpha - \min_{s'}\left[ V_{i,h+1}\right]_{\alpha}(s') \right)  \right] \notag \\
% & \quad - \mathbb{E}_{a_i \sim \pi'_{i}(s)} \left[  \max_{\alpha\in [\min_s V_{i,h+1}(s), \max_s  V_{i,h+1}(s)]}  \Big\{ \mathbb{E}_{\pi'_{-i}( \ba_{-i} \mymid s)} \left[ P^{0}_{h,s, (a_i,\ba_{-i})}\right]\left[ V_{i,h+1}\right]_{\alpha} - \ror_i \left(\alpha - \min_{s'}\left[ V_{i,h+1}\right]_{\alpha}(s') \right)  \right] \Big| \notag\\
& \overset{\mathrm{(i)}}{\leq} \mathbb{E}_{a_i \sim \pi_{i}(s)} \bigg[\max_{\alpha\in [\min_s V_{i,h+1}(s), \max_s  V_{i,h+1}(s)]}   \bigg| \mathbb{E}_{\pi_{-i}( \ba_{-i} \mymid s)} \left[ P^{0}_{h,s, (a_i,\ba_{-i})}\right]\left[ V_{i,h+1}\right]_{\alpha} \notag \\
& \quad - \mathbb{E}_{\pi'_{-i}( \ba_{-i} \mymid s)} \left[ P^{0}_{h,s, (a_i,\ba_{-i})}\right]\left[ V_{i,h+1}\right]_{\alpha}\bigg| \bigg] + \sum_{a_i\in\cA_i} \big|\pi_i'(a_i \mymid s) - \pi_i(a_i \mymid s) \big| \inf_{P \in\cU^{\sigma_i}\big(P^{\pi'_{-i}}_{h,s,a_i} \big)} P V_{i,h+1} \notag\\
&  \overset{\mathrm{(ii)}}{\leq} \sum_{\ba_{-i}\in \cA_i} \bigg|\prod_{j \neq i}\pi_j(a_j\mymid s)- \prod_{j \neq i}\pi'_j(a_j\mymid s) \bigg| H + H A_i \delta(\epsilon) \notag \\
&  \overset{\mathrm{(iii)}}{\leq} H\prod_{j\neq i, j\in[n]} A_j (2^{n-1}-1)\delta(\epsilon) + H A_i \delta(\epsilon) \leq 2H\prod_{i\in[n]} A_i (2^n-1) \cdot \delta(\epsilon),\label{eq:continous-2} \end{align}
where the first inequality holds by the triangle inequality, and (i) follows from applying the dual form of the optimization problem associated with TV distance \citep[Lemma 4]{shi2023curious}
% \begin{align}\label{eq:nvi-iteration-dual} 
% \inf_{\cP \in U^{\ror_i}(P)} \cP V=   \max_{\alpha\in [\min_s V(s), \max_s V(s)]}  \Big\{ P\left[V\right]_{\alpha} - \ror_i \left(\alpha - \min_{s'}\left[V\right]_{\alpha}(s') \right) \Big\} , 
% \end{align} 
and the maximum operator is $1$-Lipschitz, (ii) arises from the fact that $\|V_{i,h+1} \|_\infty \leq H$, and (iii) can be verified by following the same pipeline of \eqref{eq:product-pi-diff}.
Combining \eqref{eq:continous-1} and \eqref{eq:continous-2}, one has
\begin{align}
\left| f_{i,s}(\pi_i(s), \pi_{-i}(s); V_{i,h+1}) - f_{i,s}(\pi'_i(s), \pi'_{-i}(s); V_{i,h+1}) \right| \leq 3H\prod_{i\in[n]} A_i  (2^n-1)\cdot \delta(\varepsilon).
\end{align}
Consequently, letting $\delta_1(\epsilon) = \frac{\min\{ \epsilon,1 \}}{3H \prod_{i\in[n]} A_i  (2^n-1)}$, we complete the proof by showing that when $d(\pi,\pi') < \delta_1(\epsilon)$, $$\left| f_{i,s}(\pi_i(s), \pi_{-i}(s); V_{i,h+1}) - f_{i,s}(\pi'_i(s), \pi'_{-i}(s); V_{i,h+1}) \right| < \epsilon.$$

\subsubsection{Proof of Lemma~\ref{lemma:continuous-g}}\label{proof:lemma:continuous-g}

Without loss of generality, we consider any $i\in[n]$, and two policy $x_{-i}: \cS \mapsto \prod_{j\neq i, j\in[n]} \Delta(\cA_j)$ and $y_{-i}: \cS \mapsto \prod_{j\neq i, j\in[n]} \Delta(\cA_j)$. Before continuing, for all $s\in\cS$, we denote
\begin{align}
   u_{i,s}^\star &\defn \mathrm{argmax}_{\pi_i(s)'\in\Delta(\cA_i)} \; f_{i,s}(\pi_i'(s), x_{-i}(s); V_{i,h+1}) , \notag \\
   v_{i,s}^\star &\defn \mathrm{argmax}_{\pi_i(s)'\in\Delta(\cA_i)} \; f_{i,s}(\pi_i'(s), y_{-i}(s); V_{i,h+1}).
\end{align}
Then we have for any $s\in\cS$, recalling the introduced function $g(\cdot)$ in \eqref{defn-g-auxiliary},
\begin{align}
& g_{i,s}(x_{-i}(s),V_{i,h+1}) - g_{i,s}(y_{-i}(s),V_{i,h+1}) \notag \\
&= \mathrm{max}_{\pi_i'(s)\in\Delta(\cA_i)} \; f_{i,s}(\pi_i'(s), x_{-i}(s); V_{i,h+1}) - \mathrm{max}_{\pi_i'(s)\in\Delta(\cA_i)} \; f_{i,s}(\pi_i'(s), y_{-i}(s); V_{i,h+1}) \notag \\
& = f_{i,s}(u_{i,s}^\star, x_{-i}(s); V_{i,h+1}) - f_{i,s}(v_{i,s}^\star, y_{-i}(s); V_{i,h+1}) \notag \\
& \leq f_{i,s}(u_{i,s}^\star, x_{-i}(s); V_{i,h+1}) - f_{i,s}(u_{i,s}^\star, y_{-i}(s); V_{i,h+1}) \quad \rightarrow 0 \quad \text{as} \quad y_{-i}(s) \rightarrow x_{-i}(s),
% & \overset{\mathrm{(i)}}{=} \mathrm{max}_{\pi_i'\in\Delta(\cS)} \mathbb{E}_{a_i\sim \pi_i'(s)} \bigg[ \mathbb{E}_{\ba_i\sim x_{-i}(s)}[r_{i,h}(s,\ba)] + \inf_{\cU^{\sigma_i}\left(P^{x_{-i}}_{h,s,a_i} \right)} P V_{i,h+1} \bigg] \notag \\
% & \quad - \mathrm{max}_{\pi_i'\in\Delta(\cS)} \mathbb{E}_{a_i\sim \pi_i'(s)} \bigg[ \mathbb{E}_{\ba_i\sim y_{-i}(s)}[r_{i,h}(s,\ba)] + \inf_{\cU^{\sigma_i}\left(P^{y_{-i}}_{h,s,a_i} \right)} P V_{i,h+1} \bigg] 
\end{align}
where the last line holds by Lemma~\ref{lemma:continuous-f} which shows that the function $f_{i,s}$ is continuous.
% where (i) follows from the definition of $f_{i,s}$ in  \eqref{eq:lemma-continuous-2-payoff}
Similarly, one has
\begin{align}
& g_{i,s}(x_{-i}(s),V_{i,h+1}) - g_{i,s}(y_{-i}(s),V_{i,h+1}) \notag \\
& \geq f_{i,s}(v_{i,s}^\star, x_{-i}(s); V_{i,h+1}) - f_{i,s}(v_{i,s}^\star, y_{-i}(s); V_{i,h+1}) \quad \rightarrow 0 \quad \text{as} \quad y_{-i}(s) \rightarrow x_{-i}(s).
\end{align}
We complete the proof by showing that 
\begin{align}
    |g_{i,s}(x_{-i}(s),V_{i,h+1}) - g_{i,s}(y_{-i}(s),V_{i,h+1})| \rightarrow 0 \quad \text{as} \quad y_{-i}(s) \rightarrow x_{-i}(s),
\end{align}
which indicates that $g_{i,s}(x_{-i}(s),V_{i,h+1})$ is continuous w.r.t. $x_{-i}(s)$ for all $0 \leq V_{i,h+1} \leq H$.

% \subsubsection{Property of the CCE definition of standard MARL}\label{proof:equivalence-cce-MARL}

% \yc{??}
%!TEX root = ./../DRO-MARL-near-optimal.tex

\section{Proof of Theorem~\ref{thm:robust-mg-upper-bound}}

We will present the proof of Theorem~\ref{thm:robust-mg-upper-bound} by first outlining the key proof structure in a step-by-step manner. More detailed proof are provided at the end of this section.

\subsection{Proof pipeline}

To proof Theorem~\ref{thm:robust-mg-upper-bound}, recall the goal is to show that
\begin{align}
\forall (i,s) \in[n] \times \cS: \quad   \mathbb{E}_{\pi\sim \widehat{\xi}} \left[V_{i,1}^{\star,\pi_{-i}, \ror_i }(s)\right] - \mathbb{E}_{\pi\sim \widehat{\xi}}\left[V_{i,1}^{\pi,\ror_i}(s)\right]  \leq \varepsilon,
\end{align}
where $\widehat{\xi} = \{\widehat{\xi}_h\}_{h\in[H]}$ is the output distribution over the set of policies $\{ \pi_h^k = (\pi_{1,h}^k \times \cdots \times \pi_{n,h}^k) \}_{k\in[K], h\in[H]}$ from Algorithm~\ref{alg:summary}. Namely, $\pi\sim \widehat{\xi}$ means
\begin{align}
    \forall h\in[H]: \quad \pi_h \sim \widehat{\xi}_h, \quad \text{ where } \quad  \widehat{\xi}_h( \pi_h^k) = \alpha_k^K.
\end{align}

Before proceeding, we first introduce the notation of the best-response policy for player \( i \):
\begin{align*}
    \tilde{\pi}_i^\star = [\tilde{\pi}_{i,h}^\star]_{h \in [H]} := \mathrm{argmax}_{\pi_i^\prime:\mathcal{S} \times [H] \to \Delta(\mathcal{A}_i)} \mathbb{E}_{\pi\sim \widehat{\xi}}\left[V_{i,1}^{\pi_i^\prime,\pi_{-i},\ror_i}\right].
\end{align*}
According to the definition and robust Bellman equation in \eqref{eq:robust-bellman-equation}, it is easily verified that $\mathbb{E}_{\pi\sim \widehat{\xi}}\left[V_{i,h}^{\pi,\ror_i}\right]$ satisfies: for all \( (i, s, h) \in [n] \times \mathcal{S} \times [H] \),
\begin{align*}
    \mathbb{E}_{\pi\sim \widehat{\xi}}\left[V_{i,H+1}^{\pi,\ror_i}(s)\right] &= 0
    \end{align*}
    and 
\begin{align*}
    \mathbb{E}_{\pi\sim \widehat{\xi}}\left[V_{i,h}^{\pi,\ror_i}(s)\right]& =\mathbb{E}_{\pi\sim \widehat{\xi}}\Bigg\{\sum_{\ba \in \mathcal{A}}  \pi_h(\ba \mid s) r_{i,h}(s, \ba)  +  \mathbb{E}_{a_i \sim \pi_{i,h}(s)} \Bigg[ \inf_{\mathcal{P} \in \mathcal{U}_i^{\sigma_i}\left(P_{h,s,a_i}^{\pi_{-i}}\right)} \mathcal{P} \mathbb{E}_{\pi\sim \widehat{\xi}}\left[V_{i,h+1}^{\pi,\ror_i}\right] \Bigg]\Bigg\} \notag \\
    &=\sum_{k=1}^K \sum_{\ba \in \mathcal{A}} \alpha_k^K \pi_h^k(\ba \mid s) r_{i,h}(s, \ba)  + \sum_{k=1}^K \alpha_k^K \mathbb{E}_{a_i \sim \pi_{i,h}^k(s)} \Bigg[ \inf_{\mathcal{P} \in \mathcal{U}_i^{\sigma_i}\left(P_{h,s,a_i}^{\pi_{-i}^k}\right)} \mathcal{P} \mathbb{E}_{\pi\sim \widehat{\xi}}\left[V_{i,h+1}^{\pi,\ror_i}\right] \Bigg],
\end{align*}
where \( P_{h,s,a_i}^{\pi_{-i}^k} \) is defined as:
\begin{align*}
    P_{h,s,a_i}^{\pi_{-i}^k} = \mathbb{E}_{\ba_{-i}\sim\pi_{-i,h}^k(s)} \left[ P_{h,s,(a_i,\ba_{-i})}^0 \right] = \sum_{\ba_{-i} \in \mathcal{A}_{-i}} \pi_{-i,h}^k(\ba_{-i} \mid s) \left[ P_{h,s,(a_i,\ba_{-i})}^0 \right].
\end{align*}

Before continuing, we introduce some auxiliary notations for value functions that will be useful throughout the proof. For any $(h,s)\in [H] \times \mathcal{S}$:
\begin{subequations}
\label{eq:auxiliary_value_function_3}
\begin{align}
    &\mathbb{E}_{\pi\sim \widehat{\xi}}\left[\overline{V}^{\pi,\ror_i}_{i,h}(s)\right] = \sum_{k=1}^K \alpha_k^K \mathbb{E}_{a_i \sim \pi_{i,h}^k(s)}\left[r_{i,h}^k(s,a_i)\right]  + \sum_{k=1}^K \alpha_k^K \mathbb{E}_{a_i \sim \pi_{i,h}^k(s)} \left[ \inf_{\mathcal{P} \in \mathcal{U}^{\sigma_i}\left(P_{i,h,s,a_i}^k\right)} \mathcal{P} \mathbb{E}_{\pi\sim \widehat{\xi}}\left[\overline{V}^{\pi,\ror_i}_{i,h+1}\right] \right],\\
    &\mathbb{E}_{\pi\sim \widehat{\xi}}\left[\overline{V}^{\tilde{\pi}_i^\star,\pi_{-i},\ror_i}_{i,h}(s) \right]= \sum_{k=1}^K \alpha_k^K \mathbb{E}_{a_i \sim \tilde{\pi}_{i,h}^\star(s)}\left[r_{i,h}^k(s,a_i)\right] + \sum_{k=1}^K \alpha_k^K \mathbb{E}_{a_i \sim \tilde{\pi}_{i,h}^\star(s)} \left[ \inf_{\mathcal{P} \in \mathcal{U}^{\sigma_i}\left(P_{i,h,s,a_i}^k\right)} \mathcal{P} \mathbb{E}_{\pi\sim \widehat{\xi}}\left[\overline{V}^{\tilde{\pi}_i^\star,\pi_{-i}, \ror_i}_{i,h+1} \right] \right],\\
    &\mathbb{E}_{\pi\sim \widehat{\xi}}\left[\overline{V}^{\star,\pi_{-i}, \ror_i}_{i,h}(s)\right] = \max_{a_i \in \mathcal{A}_i} \sum_{k=1}^K \alpha_k^K \left[ r_{i,h}^k(s,a_i) + \left( \inf_{\mathcal{P} \in \mathcal{U}^{\sigma_i}\left(P_{i,h,s,a_i}^k\right)} \mathcal{P} \mathbb{E}_{\pi\sim \widehat{\xi}}\left[\overline{V}^{\star,\pi_{-i}, \ror_i}_{i,h+1}\right] \right) \right],
\end{align}
where recalling $P_{i,h,s,a_i}^k$ is the $(s,a_i)$-th row of the estimated transition kernel $P_{i,h}^k$ at the k-th iteration of Algorithm~\ref{alg:summary} at time step $h$.
\end{subequations}
In addition, for any policy $\pi: \cS\times [H] \mapsto \Delta(\cA)$, we define
\begin{align}
    \forall s\in\cS: \quad \mathbb{E}_{\pi\sim \widehat{\xi}}\left[ \overline{V}^{\pi,\ror_i}_{i,H+1}(s) \right]= 0. \label{eq:auxiliary_value_function_4}
\end{align}

Armed with above definitions, we now decompose the error to be controlled as follows:
\begin{equation}
\label{eq:error_decomposition}
\begin{aligned}
    &\mathbb{E}_{\pi\sim \widehat{\xi}}\left[V_{i,h}^{\star,\pi_{-i},\ror_i}\right] -\mathbb{E}_{\pi\sim \widehat{\xi}} \left[V_{i,h}^{\pi, \ror_i}\right]\\
     &\leq \underbrace{\mathbb{E}_{\pi\sim \widehat{\xi}}\left[V_{i,h}^{\star,\pi_{-i},\ror_i}\right] - \mathbb{E}_{\pi\sim \widehat{\xi}}\left[\overline{V}_{i,h}^{\tilde{\pi}_i^\star,\pi_{-i},\ror_i}\right]}_{A} + \underbrace{\mathbb{E}_{\pi\sim \widehat{\xi}}\left[ \overline{V}_{i,h}^{\star,\pi_{-i},\ror_i}\right] -\mathbb{E}_{\pi\sim \widehat{\xi}}\left[ \overline{V}_{i,h}^{\pi,\ror_i} \right]}_{B} + \underbrace{\mathbb{E}_{\pi\sim \widehat{\xi}}\left[\overline{V}_{i,h}^{\pi,\ror_i}\right] - \mathbb{E}_{\pi\sim \widehat{\xi}}\left[V_{i,h}^{\pi,\ror_i}\right]}_{C}.
\end{aligned}
\end{equation}
where we use the fact that \( \mathbb{E}_{\pi\sim \widehat{\xi}}\left[\overline{V}^{\star,\pi_{-i},\ror_i}_{i,h}\right] \geq \mathbb{E}_{\pi\sim \widehat{\xi}}\left[\overline{V}^{\tilde{\pi}_i^\star,\pi_{-i},\ror_i}_{i,h}(s)\right] \), which can be directly verified by recursively applying the definitions in \eqref{eq:auxiliary_value_function_3}.
In the next step, we shall control the above three terms, $A$, $B$, and $C$ separately in the following sections.

\subsection{Controlling B: using adversarial online learning}
\label{sec:controlling_B}

\paragraph{Step 1: $\widehat{V}_{i,h}$ serves as an optimistic estimate.} 
Armed with the output from Algorithm~\ref{alg:summary}, we first introduce the following lemma demonstrating that the output $\widehat{V}_{i,h}$  serves as an optimistic estimate of the auxiliary value $\mathbb{E}_{\pi\sim \widehat{\xi}}\left[\overline{V}^{\star,\pi_{-i},\ror_i}_{i,h}\right]$ (see \eqref{eq:auxiliary_value_function_3}).
\begin{lemma}
    \label{lem:UCB}
    It holds that when $K \geq c_9 H^2 \log^4 K$, one has 
    \begin{align*}
   \forall (i,h)\in[n]\times[H]: \quad  \widehat{V}_{i,h}\geq \mathbb{E}_{\pi\sim \widehat{\xi}}\left[\overline{V}^{\star,\pi_{-i},\ror_i}_{i,h}\right].
    \end{align*}
\end{lemma}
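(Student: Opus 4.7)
\textbf{Proof sketch of Lemma~\ref{lem:UCB}.} I will prove the claim by backward induction on $h$ from $H+1$ down to $1$. The base case $h=H+1$ is immediate since both sides vanish by construction (cf.~\eqref{eq:auxiliary_value_function_4} and the initialization in Algorithm~\ref{alg:summary}). For the inductive step, fix $(i,h,s)$ and assume $\widehat{V}_{i,h+1}\geq \mathbb{E}_{\pi\sim\widehat{\xi}}[\overline{V}^{\star,\pi_{-i},\ror_i}_{i,h+1}]$ pointwise. Because the inner minimization $\inf_{\cP\in\cU^{\sigma_i}(\cdot)} \cP V$ is monotone in $V$, the induction hypothesis and the definition of $q_{i,h}^k$ in \eqref{eq:nvi-iteration} give, for every $a_i$,
\begin{align*}
q_{i,h}^k(s,a_i) \;\geq\; r_{i,h}^k(s,a_i) + \inf_{\cP\in\cU^{\sigma_i}(P_{i,h,s,a_i}^k)} \cP\, \mathbb{E}_{\pi\sim\widehat{\xi}}\!\left[\overline{V}^{\star,\pi_{-i},\ror_i}_{i,h+1}\right].
\end{align*}
Taking the weighted combination $\sum_{k=1}^K \alpha_k^K(\cdot)$ and then the maximum over $a_i\in\cA_i$, the right-hand side becomes exactly $\mathbb{E}_{\pi\sim\widehat{\xi}}[\overline{V}^{\star,\pi_{-i},\ror_i}_{i,h}(s)]$ in view of \eqref{eq:auxiliary_value_function_3}, yielding
\begin{align*}
\max_{a_i\in\cA_i}\sum_{k=1}^K \alpha_k^K q_{i,h}^k(s,a_i) \;\geq\; \mathbb{E}_{\pi\sim\widehat{\xi}}\!\left[\overline{V}^{\star,\pi_{-i},\ror_i}_{i,h}(s)\right].
\end{align*}

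The remaining task is to compare $\sum_{k=1}^K\alpha_k^K\langle \pi_{i,h}^k(\cdot\mymid s), q_{i,h}^k(s,\cdot)\rangle + \beta_{i,h}(s)$ with $\max_{a_i}\sum_k \alpha_k^K q_{i,h}^k(s,a_i)$, which is precisely the regret of the FTRL-based policy update \eqref{eq:policy-update-exponential-alg} against the loss vectors $-q_{i,h}^k(s,\cdot)$ under the weights $\{\alpha_k^K\}$. Invoking Theorem~\ref{thm:FTRL-refined} together with the learning-rate properties in Lemmas~\ref{lem:weight} and \ref{lem:weight-2}, the regret admits an upper bound of the schematic form
\begin{align*}
C\sum_{k=1}^K \alpha_k^K \widehat{\eta}_k \alpha_k\, \mathsf{Var}_{\pi_{i,h}^k(\cdot\mymid s)}\!\left(q_{i,h}^k(s,\cdot)\right) + \frac{\log A_i}{\eta_{K+1}} + \text{(higher-order tail term)}.
\end{align*}
Substituting the concrete rates $\eta_k\alpha_k = \sqrt{2c_\alpha\log^2 K/(kH)}$ and bounding $\widehat{\eta}_k\alpha_k$ analogously, each factor $\widehat{\eta}_k\alpha_k$ introduces a $\sqrt{\log^2 K/(kH)}$ weight; combined with $\sum_k \alpha_k^K \leq 1$ and matching this against the bonus
\begin{align*}
\beta_{i,h}(s) \;=\; c_{\mathsf{b}}\sqrt{\tfrac{\log^3(KS\sum_i A_i/\delta)}{KH}}\sum_{k=1}^K\alpha_k^K\!\left\{\mathsf{Var}_{\pi_{i,h}^k(\cdot\mymid s)}\!\left(q_{i,h}^k(s,\cdot)\right)+H\right\},
\end{align*}
the variance-style leading term in $\beta_{i,h}(s)$ dominates the corresponding variance-based regret term for $c_{\mathsf{b}}$ large enough, while the additive $H$ term absorbs $\log A_i/\eta_{K+1}\lesssim \sqrt{H\log A_i\log K/K}$. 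The indicator-type tail term in \eqref{eq:FTRL-refined} is handled by observing $\|q_{i,h}^k(s,\cdot)\|_\infty\leq H-h+1$, so $\widehat{\eta}_k\alpha_k \|q_{i,h}^k\|_\infty \leq \sqrt{H\log^2 K/k}\cdot H \leq 1/3$ whenever $K\gtrsim H^2\log^4 K$, making the indicator vanish for the relevant range of $k$ (and, for small $k$, $\alpha_k^K$ is negligibly small by Lemma~\ref{lem:weight}). Thus the total regret is dominated by $\beta_{i,h}(s)$.

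Combining the two inequalities yields $\sum_k \alpha_k^K\langle\pi_{i,h}^k, q_{i,h}^k\rangle + \beta_{i,h}(s) \geq \mathbb{E}_{\pi\sim\widehat{\xi}}[\overline{V}^{\star,\pi_{-i},\ror_i}_{i,h}(s)]$. Finally, since rewards lie in $[0,1]$, the auxiliary value obeys $\mathbb{E}_{\pi\sim\widehat{\xi}}[\overline{V}^{\star,\pi_{-i},\ror_i}_{i,h}(s)] \leq H-h+1$, so the $\min$ with $H-h+1$ in \eqref{eq:line-number-policy-update} does not destroy the inequality and we obtain $\widehat{V}_{i,h}(s) \geq \mathbb{E}_{\pi\sim\widehat{\xi}}[\overline{V}^{\star,\pi_{-i},\ror_i}_{i,h}(s)]$, completing the induction. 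The main technical obstacle is the last step: verifying that the variance-style bonus is simultaneously large enough to absorb the FTRL regret at the scale prescribed by Theorem~\ref{thm:FTRL-refined} and small enough that it does not inflate the final value estimate beyond the $H-h+1$ ceiling; this is precisely what forces the condition $K\gtrsim H^2\log^4 K$.
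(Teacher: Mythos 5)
Your proposal follows essentially the same route as the paper's proof: backward induction on $h$, monotonicity of the inner robust optimization to reduce the claim to bounding the weighted FTRL regret $\max_{a_i}\sum_k\alpha_k^K q_{i,h}^k(s,a_i)-\sum_k\alpha_k^K\langle\pi_{i,h}^k,q_{i,h}^k\rangle$ by $\beta_{i,h}(s)$ via Theorem~\ref{thm:FTRL-refined}, with the indicator tail term vanishing for large $k$ and the small-$k$ contributions absorbed using $\alpha_k^K\le K^{-6}$. The only quibble is your intermediate bound $\widehat{\eta}_k\alpha_k\|q_{i,h}^k(s,\cdot)\|_\infty\lesssim\sqrt{H\log^2 K/k}\cdot H$, which carries a spurious extra factor of $H$ (the correct bound is of order $\sqrt{H\log^4 K/k}$ since $\eta_k\alpha_k\asymp\sqrt{\log^2 K/(kH)}$ and $\|q_{i,h}^k\|_\infty\le H$ partially cancel), but this does not affect the structure or validity of the argument under the stated condition on $K$.
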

\begin{proof}
    See Appendix \ref{sec:proof-lemma:UCB}
\end{proof}
In addition, the following lemma demonstrates that the estimate $\widehat{V}_{i,h}$ is also an optimistic estimate on $\mathbb{E}_{\pi\sim \widehat{\xi}}\left[\overline{V}^{\pi,\ror_i}_{i,h}\right]$ (cf.~\eqref{eq:auxiliary_value_function_3}). 
\begin{lemma}
    \label{lm:term_B_value_vector_comparison}
    It holds that 
    \begin{align*}
       \forall (i,h)\in[n]\times[H]:\quad  \widehat{V}_{i,h}\geq \mathbb{E}_{\pi\sim \widehat{\xi}}\left[\overline{V}^{\pi,\ror_i}_{i,h}\right].
    \end{align*}
\end{lemma}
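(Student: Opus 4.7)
The plan is to prove the inequality by backward induction on $h$, from $h = H+1$ down to $h = 1$. The base case is trivial: by initialization in Algorithm~\ref{alg:summary}, $\widehat{V}_{i,H+1}(s) = 0$ for all $s \in \cS$, which exactly matches the definition $\mathbb{E}_{\pi\sim\widehat{\xi}}[\overline{V}^{\pi,\sigma_i}_{i,H+1}(s)] = 0$ from \eqref{eq:auxiliary_value_function_4}.

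For the inductive step at time step $h$, I would assume $\widehat{V}_{i,h+1} \geq \mathbb{E}_{\pi\sim\widehat{\xi}}[\overline{V}^{\pi,\sigma_i}_{i,h+1}]$ componentwise. The critical observation is that the map $V \mapsto \inf_{\cP \in \cU^{\sigma_i}(P_{i,h,s,a_i}^k)} \cP V$ is monotone nondecreasing in $V$ (as it is an infimum of linear nondecreasing functionals). Applied to the induction hypothesis at each $(s,a_i)$ and each $k$, this yields
\begin{align*}
q_{i,h}^k(s,a_i) = r_{i,h}^k(s,a_i) + \inf_{\cP \in \cU^{\sigma_i}(P_{i,h,s,a_i}^k)} \cP \widehat{V}_{i,h+1}
\geq r_{i,h}^k(s,a_i) + \inf_{\cP \in \cU^{\sigma_i}(P_{i,h,s,a_i}^k)} \cP \mathbb{E}_{\pi\sim\widehat{\xi}}\bigl[\overline{V}^{\pi,\sigma_i}_{i,h+1}\bigr].
\end{align*}
Taking expectation under $\pi_{i,h}^k(\cdot\mymid s)$, averaging over $k$ with weights $\alpha_k^K$, and using the definition of $\mathbb{E}_{\pi\sim\widehat{\xi}}[\overline{V}^{\pi,\sigma_i}_{i,h}]$ in \eqref{eq:auxiliary_value_function_3}, I obtain
\begin{align*}
\sum_{k=1}^K \alpha_k^K \bigl\langle \pi_{i,h}^k(\cdot\mymid s),\, q_{i,h}^k(s,\cdot)\bigr\rangle \geq \mathbb{E}_{\pi\sim\widehat{\xi}}\bigl[\overline{V}^{\pi,\sigma_i}_{i,h}(s)\bigr].
\end{align*}

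To finish, I would add the non-negative bonus $\beta_{i,h}(s) \geq 0$ (clear from \eqref{eq:bonus-beta} since it is a sum of a variance term and $H$) on the left, and then take the minimum with $H-h+1$ per \eqref{eq:line-number-policy-update}. The last step requires a side observation, which follows by a quick secondary induction: since $r_{i,h}^k(s,a_i) \in [0,1]$ by construction in Algorithm~\ref{alg:sampling-function} and $\inf_{\cP} \cP V \leq \max_s V(s)$, one has $\mathbb{E}_{\pi\sim\widehat{\xi}}[\overline{V}^{\pi,\sigma_i}_{i,h}(s)] \leq H - h + 1$, so the clipping by $H-h+1$ in \eqref{eq:line-number-policy-update} does not violate the inequality. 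Combining these yields $\widehat{V}_{i,h}(s) \geq \mathbb{E}_{\pi\sim\widehat{\xi}}[\overline{V}^{\pi,\sigma_i}_{i,h}(s)]$, closing the induction.

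I expect no serious obstacle in this argument — unlike Lemma~\ref{lem:UCB}, which needs the bonus $\beta_{i,h}$ to dominate the FTRL regret against the \emph{best-response} action sequence, here the comparator is the agent's own randomized policy $\pi_{i,h}^k$, so the simple pointwise monotonicity of the robust Bellman operator suffices and no regret bound is invoked. The only subtlety is keeping track of the clipping step, which is handled by the elementary upper bound on $\overline{V}^{\pi,\sigma_i}_{i,h}$ sketched above.
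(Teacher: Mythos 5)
Your proposal is correct and follows essentially the same route as the paper's own proof: backward induction on $h$, monotonicity of $V \mapsto \inf_{\cP \in \cU^{\sigma_i}(P_{i,h,s,a_i}^k)} \cP V$ applied to the induction hypothesis, non-negativity of $\beta_{i,h}$, and the bound $\mathbb{E}_{\pi\sim\widehat{\xi}}[\overline{V}^{\pi,\ror_i}_{i,h}(s)] \leq H-h+1$ to handle the clipping in \eqref{eq:line-number-policy-update}. Your closing remark correctly identifies why no FTRL regret bound is needed here, in contrast to Lemma~\ref{lem:UCB}.
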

\begin{proof}
    See Appendix~\ref{proof:lm:term_B_value_vector_comparison}
\end{proof}

Armed with Lemma~\ref{lem:UCB}, we notice that
\begin{align}
B &= \mathbb{E}_{\pi\sim \widehat{\xi}}\left[ \overline{V}_{i,h}^{\star,\pi_{-i},\ror_i}\right] -\mathbb{E}_{\pi\sim \widehat{\xi}}\left[ \overline{V}_{i,h}^{\pi,\ror_i} \right] \widehat{V}_{i,h}(s)-\mathbb{E}_{\pi\sim \widehat{\xi}}\left[\overline{V}_{i,h}^{\pi,\ror_i}(s)\right]. \label{eq:control-B-change-goal}
\end{align}

\paragraph{Step 2: constructing the recursion.}
To further bound \eqref{eq:control-B-change-goal}, according to the definition of $\widehat{V}_{i,h}(s)$ in Algorithm~\ref{alg:summary} and $\mathbb{E}_{\pi\sim \widehat{\xi}}\left[\overline{V}_{i,h}^{\pi,\ror_i}(s)\right]$, we have
\begin{align}
    &\widehat{V}_{i,h}(s)-\mathbb{E}_{\pi\sim \widehat{\xi}}\left[\overline{V}_{i,h}^{\pi,\ror_i}(s)\right]\notag\\
    &=\min\left\{\sum_{k=1}^K\alpha_k^K\mathbb{E}_{a_i\sim \pi_{i,h}^k(s)}\left[r_{i,h}^k(s,a_i)+\inf_{\mathcal{P}\in\mathcal{U}^{\sigma_i}\left(P_{i,h,s,a_i}^k\right)}\mathcal{P}\widehat{V}_{i,h+1} \right]+\beta_{i,h}(s),H-h+1\right\}\notag\\
    &\quad -\sum_{k=1}^K\alpha_k^K\mathbb{E}_{a_i\sim \pi_{i,h}^k(s)}\left[r_{i,h}^k(s,a_i)+\inf_{\mathcal{P}\in\mathcal{U}^{\sigma_i}\left(P_{i,h,s,a_i}^k\right)}\mathcal{P}\mathbb{E}_{\pi\sim \widehat{\xi}}\left[\overline{V}_{i,h+1}^{\pi,\ror_i}\right] \right]\notag\\
   &\leq\sum_{k=1}^K\alpha_k^K\mathbb{E}_{a_i\sim \pi_{i,h}^k(s)}\left[r_{i,h}^k(s,a_i)+\inf_{\mathcal{P}\in\mathcal{U}^{\sigma_i}\left(P_{i,h,s,a_i}^k\right)}\mathcal{P}\widehat{V}_{i,h+1} \right]+\beta_{i,h}(s)\notag\\
   &\quad -\sum_{k=1}^K\alpha_k^K\mathbb{E}_{a_i\sim \pi_{i,h}^k(s)}\left[r_{i,h}^k(s,a_i)+\inf_{\mathcal{P}\in\mathcal{U}^{\sigma_i}\left(P_{i,h,s,a_i}^k\right)}\mathcal{P}\mathbb{E}_{\pi\sim \widehat{\xi}}\left[\overline{V}_{i,h+1}^{\pi,\ror_i}\right] \right]\notag\\
    &=\sum_{k=1}^K\alpha_k^K\mathbb{E}_{a_i\sim \pi_{i,h}^k(s)}\left[\inf_{\mathcal{P}\in\mathcal{U}^{\sigma_i}\left(P_{i,h,s,a_i}^k\right)}\mathcal{P}\widehat{V}_{i,h+1} \right]+\beta_{i,h}(s)-\sum_{k=1}^K\alpha_k^K\mathbb{E}_{a_i\sim \pi_{i,h}^k}\left[\inf_{\mathcal{P}\in\mathcal{U}^{\sigma_i}\left(P_{i,h,s,a_i}^k\right)}\mathcal{P}\mathbb{E}_{\pi\sim \widehat{\xi}}\left[\overline{V}_{i,h+1}^{\pi,\ror_i}\right]\right]\label{eq:term_B_recursion_eq1}
\end{align}
To simplify the notation, we define the transition kernel associated with value function estimates in a manner similar to  \eqref{eq:preliminary-P-hat-k}. For any time step $h$ and iteration $k\in[K]$, we define matrix notations $\widehat{P}_{i,h}^{\pi^k,\widehat{V}}  \in \mathbb{R}^{S A_i \times S}$ and $\widehat{P}_{i,h}^{\widehat{\pi}^k,\overline{V}}  \in \mathbb{R}^{S A_i \times S}$ as: 
\begin{align}
   &P_{i,h}^{k,\widehat{V}} := P_{i,h}^{k,\widehat{V}_{i,h+1}}, \quad \text{where} \quad P_{i,h,s,a_i}^{k, \widehat{V}} := P_{i,h,s,a_i}^{k,\widehat{V}_{i,h+1}} = \mathrm{argmin}_{\mathcal{P} \in \mathcal{U}^{\sigma_i}\left(P_{i,h,s,a_i}^{k}\right)} \mathcal{P} \widehat{V}_{i,h+1}, \notag \\
    &P_{i,h}^{k, \widehat{\xi},\overline{V}} := P_{i,h}^{k, \mathbb{E}_{\pi\sim \widehat{\xi}}\left[\overline{V}_{i,h+1}^{\pi,\ror_i}\right]}, \quad \text{where} \quad P_{i,h,s,a_i}^{k, \widehat{\xi}, \overline{V}} :=  P_{i,h,s,a_i}^{k, \mathbb{E}_{\pi\sim \widehat{\xi}}\left[\overline{V}_{i,h+1}^{\pi,\ror_i}\right]} = \mathrm{argmin}_{\mathcal{P} \in \mathcal{U}^{\sigma_i}\left(P_{i,h,s,a_i}^{k}\right)} \mathcal{P}\mathbb{E}_{\pi\sim \widehat{\xi}}\left[\overline{V}_{i,h+1}^{\pi,\ror_i}\right]. \label{eq:extra-matrix-def}
\end{align}
Additionally, we define square matrices \( \underline{P}_{i,h}^{k,\widehat{V}}  \in \mathbb{R}^{S \times S} \) and $\underline{P}_{i,h}^{k, \widehat{\xi}, \overline{V}} \in \mathbb{R}^{S \times S}$ as
\begin{align}
\underline{P}_{i,h}^{k,\widehat{V}} := \Pi_h^{\pi_i^k} P_{i,h}^{k, \widehat{V}} \quad  \text{and} \quad  \underline{P}_{i,h}^{k, \widehat{\xi}, \overline{V}} := \Pi_h^{\pi_i^k} P_{i,h}^{k, \widehat{\xi},\overline{V}} \label{eq:extra-matrix-def2}
\end{align}
Then we rewrite \eqref{eq:term_B_recursion_eq1} in a vector form as below:
\begin{align}
    &\widehat{V}_{i,h}-\mathbb{E}_{\pi\sim \widehat{\xi}}\left[\overline{V}_{i,h}^{\pi,\ror_i}\right]\\
    &\leq\sum_{k=1}^K\alpha_k^K\Pi_{h}^{\pi_i}\left[\inf_{\mathcal{P}\in\mathcal{U}^{\sigma_i}\left(P_{i,h,s,a_i}^{k}\right)}\mathcal{P}\widehat{V}_{i,h+1}\right]+\beta_{i,h}-\sum_{k=1}^K\alpha_k^K\Pi_{h}^{\pi_i}\left[\inf_{\mathcal{P}\in\mathcal{U}^{\sigma_i}\left(P_{i,h,s,a_i}^{k}\right)}\mathcal{P}\overline{V}_{i,h+1}^{\pi,\ror_i}\right] \notag \\
    &\overset{\mathrm{(i)}}{=} \sum_{k=1}^K\alpha_k^K \underline{P}_{i,h}^{k,\widehat{V}}\widehat{V}_{i,h+1}+\beta_{i,h}-\sum_{k=1}^K\alpha_k^KP_{i,h}^{k, \widehat{\xi},\overline{V}}\mathbb{E}_{\pi\sim \widehat{\xi}}\left[\overline{V}_{i,h+1}^{\pi,\ror_i}\right] \notag\\
   & \overset{\mathrm{(ii)}}{\leq} \sum_{k=1}^K\alpha_k^K\underline{P}_{i,h}^{k, \widehat{\xi}, \overline{V}}\left(\widehat{V}_{i,h+1}-\mathbb{E}_{\pi\sim \widehat{\xi}}\left[\overline{V}_{i,h+1}^{\pi,\ror_i}\right]\right)+\beta_{i,h} \notag\\
   & \leq \left( \sum_{k=1}^K\alpha_k^K\underline{P}_{i,h}^{k, \widehat{\xi}, \overline{V}} \right) \left( \sum_{k=1}^K\alpha_k^K\underline{P}_{i,h+1}^{k,\widehat{\xi},\overline{V}} \right) \left(\widehat{V}_{i,h+2}-\mathbb{E}_{\pi\sim \widehat{\xi}}\left[\overline{V}_{i,h+2}^{\pi,\ror_i}\right]\right)+\beta_{i,h} + \left( \sum_{k=1}^K\alpha_k^K\underline{P}_{i,h}^{k, \widehat{\xi}, \overline{V}} \right)\beta_{i,h+1}\notag\\
   &\leq ... \leq \beta_{i,h} + \sum_{j=h+1}^H\left[\prod_{r=h}^{j-1}\left(\sum_{k=1}^K\alpha_k^K\underline{P}_{i,r}^{k,\widehat{\xi},\overline{V}}\right)\right] \beta_{i,j}, \label{eq:recursively-beta}
\end{align}
where (i) holds by the definitions in \eqref{eq:extra-matrix-def}, and the last inequality holds by recursively applying (ii).

To continue, we first introduce an useful upper bound of the bonus term $\beta_{i,h}$.
\begin{lemma}
\label{lm:upper_bound_bonus}
The bonus vector $\beta_{i,h}$ is bounded by the following inequality:
    \begin{align*}
        \beta_{i,h}\leq 3c_{\mathsf{b}}\sqrt{\frac{\log^3(\frac{KS\sum_{i=1}^nA_i}{\delta})}{KH}}\left(H \cdot 1+\sum_{k=1}^K\alpha_k^K\mathsf{Var}_{\underline{P}_{i,h}^{k,\widehat{V}}}\widehat{V}_{i,h+1}\right)
    \end{align*}
\end{lemma}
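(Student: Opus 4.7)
The plan is to start from the definition
\[
\beta_{i,h}(s) = c_{\mathsf{b}}\sqrt{\frac{\log^3(\frac{KS\sum_{i=1}^nA_i}{\delta})}{KH}}\sum_{k=1}^K\alpha_k^K\Big\{\mathsf{Var}_{\pi_{i,h}^k(\cdot\mid s)}\big(q_{i,h}^k(s,\cdot)\big)+H\Big\}
\]
and express the inner variance in terms of the target quantity $\mathsf{Var}_{\underline{P}_{i,h}^{k,\widehat{V}}}(\widehat{V}_{i,h+1})$ plus harmless lower-order terms. Recalling from \eqref{eq:nvi-iteration} that $q_{i,h}^k(s,a_i) = r_{i,h}^k(s,a_i) + P_{i,h,s,a_i}^{k,\widehat{V}}\widehat{V}_{i,h+1}$, and using the elementary inequality $\mathsf{Var}(X+Y) \le 2\,\mathsf{Var}(X) + 2\,\mathsf{Var}(Y)$, I would split
\[
\mathsf{Var}_{\pi_{i,h}^k(\cdot\mid s)}\big(q_{i,h}^k(s,\cdot)\big) \;\le\; 2\,\mathsf{Var}_{\pi_{i,h}^k(\cdot\mid s)}\big(r_{i,h}^k(s,\cdot)\big) \;+\; 2\,\mathsf{Var}_{\pi_{i,h}^k(\cdot\mid s)}\big(P_{i,h,s,\cdot}^{k,\widehat{V}}\widehat{V}_{i,h+1}\big).
\]
The reward variance is at most $1$ by the $[0,1]$ normalization of $r_{i,h}$, so that term contributes only $O(1)$.

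The main (though standard) step is connecting the second piece to the variance under the mixed kernel $\underline{P}_{i,h}^{k,\widehat{V}}(s,\cdot) = \sum_{a_i}\pi_{i,h}^k(a_i\mid s)P_{i,h,s,a_i}^{k,\widehat{V}}$. I would invoke the second part of Lemma~\ref{lm:weight_variance} (equivalently, the law of total variance, which drops the conditional-expectation-of-conditional-variance term), applied with weights $a_{a_i} = \pi_{i,h}^k(a_i\mid s)$ and transition kernels $P_{a_i} = P_{i,h,s,a_i}^{k,\widehat{V}}$, to obtain
\[
\mathsf{Var}_{\pi_{i,h}^k(\cdot\mid s)}\big(P_{i,h,s,\cdot}^{k,\widehat{V}}\widehat{V}_{i,h+1}\big) \;\le\; \mathsf{Var}_{\underline{P}_{i,h}^{k,\widehat{V}}(s,\cdot)}\big(\widehat{V}_{i,h+1}\big).
\]
(This is valid because $\widehat{V}_{i,h+1}\in[0,H]$ under the truncation in \eqref{eq:line-number-policy-update}, so $\mathsf{Var}_{\underline{P}_{i,h}^{k,\widehat{V}}(s,\cdot)}(\widehat{V}_{i,h+1})$ is well-defined and finite.)

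Substituting both bounds back yields, after using $\sum_{k}\alpha_k^K = 1$ from Lemma~\ref{lem:weight} and grouping the $H$-scale terms:
\[
\sum_{k=1}^K\alpha_k^K\Big\{\mathsf{Var}_{\pi_{i,h}^k(\cdot\mid s)}\big(q_{i,h}^k(s,\cdot)\big)+H\Big\} \;\le\; (2+H) + 2\sum_{k=1}^K\alpha_k^K\,\mathsf{Var}_{\underline{P}_{i,h}^{k,\widehat{V}}(s,\cdot)}\big(\widehat{V}_{i,h+1}\big) \;\le\; 3H + 3\sum_{k=1}^K\alpha_k^K\,\mathsf{Var}_{\underline{P}_{i,h}^{k,\widehat{V}}(s,\cdot)}\big(\widehat{V}_{i,h+1}\big),
\]
since $H\ge 1$ makes $2+H\le 3H$ and $2\le 3$. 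Multiplying through by the prefactor gives exactly the claimed bound. I do not anticipate a serious obstacle here; the only subtlety is that the law-of-total-variance inequality must be applied pointwise in the state coordinate (i.e., entrywise in $s$) so that the vector form in Lemma~\ref{lm:weight_variance} applies to each row of the relevant kernels, and the final consolidation of constants requires only the mild assumption $H\ge 1$.
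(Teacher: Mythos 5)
Your proposal is correct and follows essentially the same route as the paper's proof: the same $\mathsf{Var}(X+Y)\le 2\mathsf{Var}(X)+2\mathsf{Var}(Y)$ split of $q_{i,h}^k$, the bound $\mathsf{Var}(r)\le 1$, the law-of-total-variance step (the second inequality of Lemma~\ref{lm:weight_variance}) to pass to $\mathsf{Var}_{\underline{P}_{i,h}^{k,\widehat{V}}}(\widehat{V}_{i,h+1})$, and the same consolidation $2+H\le 3H$ using $\sum_k\alpha_k^K=1$. No gaps.
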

\begin{proof}
    See Appendix~\ref{proof:lm:upper_bound_bonus} 
\end{proof}

Then we introduce the following notations for convenience. Let $e_s$ denote the $S$-dimensional standard basis vector, with support on the $s$-th element. Additionally, we denote
\begin{align}
    b_{h}^h = e_s \quad \text{and} \quad \big(b_h^j \big)^\top = e_s^\top \left[\prod_{r=h}^{j-1}\left(\sum_{k=1}^K\alpha_k^K\underline{P}_{i,r}^{k,\widehat{\xi},\overline{V}}\right)\right], \quad \forall j = h+1, \dots, H, \label{eq:defn-of-d}
\end{align}
which obey 
\begin{align}
b_{j}^h \cdot 1 = 1 , \quad \forall j= h, h+ 1, \cdots, H. \label{eq:property-bh}
\end{align}
Combining the above notations and \eqref{eq:recursively-beta}, we have for any $s\in\mathcal{S}$, 
\begin{align}
    &\widehat{V}_{i,h}(s)-\mathbb{E}_{\pi\sim \widehat{\xi}}\left[\overline{V}^{\pi,\ror_i}_{i,h}(s)\right]=\left<e_s,\widehat{V}_{i,h}-\mathbb{E}_{\pi\sim \widehat{\xi}}\left[\overline{V}^{\pi,\ror_i}_{i,h}\right]\right> \leq \sum_{j=h}^H\left<b_h^j,\beta_{i,j}\right>\notag\\
   &\leq\sum_{j=h}^H\left<b_h^j,3c_{\mathsf{b}}H\sqrt{\frac{\log^3(\frac{KS\sum_{i=1}^nA_i}{\delta})}{KH}}1\right>+\sum_{j=h}^H\sum_{k=1}^K\alpha_k^K\left<b_h^j,3c_{\mathsf{b}}\sqrt{\frac{\log^3(\frac{KS\sum_{i=1}^nA_i}{\delta})}{KH}}\mathsf{Var}_{\underline{P}_{i,j}^{k,\widehat{V}}}\widehat{V}_{i,j+1}\right>\notag\\
    &=3c_{\mathsf{b}}\sqrt{\frac{H^3\log^3(\frac{KS\sum_{i=1}^nA_i}{\delta})}{K}}+3c_{\mathsf{b}}\sqrt{\frac{\log^3(\frac{KS\sum_{i=1}^nA_i}{\delta})}{KH}}\sum_{j=h}^H\sum_{k=1}^K\alpha_k^K\left<b_h^j,\mathsf{Var}_{\underline{P}_{i,j}^{k,\widehat{V}}}\widehat{V}_{i,j+1}\right>.\label{eq:term_B_error_decomposion_1}
\end{align}
where the first inequality holds by Lemma~\ref{lm:upper_bound_bonus}.
We further decompose \eqref{eq:term_B_error_decomposion_1} as below:
\begin{align}
    &\widehat{V}_{i,h}(s)-\mathbb{E}_{\pi\sim \widehat{\xi}}\left[\overline{V}^{\pi,\ror_i}_{i,h}(s)\right] \notag \\
   &\leq 3c_{\mathsf{b}}\sqrt{\frac{H^3\log^3(\frac{KS\sum_{i=1}^nA_i}{\delta})}{K}}+3c_{\mathsf{b}}\sqrt{\frac{\log^3(\frac{KS\sum_{i=1}^nA_i}{\delta})}{KH}}\sum_{j=h}^H\sum_{k=1}^K\alpha_k^K \Big<b_h^j,\mathsf{Var}_{\underline{P}_{i,j}^{k,\widehat{V}}}\widehat{V}_{i,j+1}\Big> \notag \\
   &\leq3c_{\mathsf{b}}\sqrt{\frac{H^3\log^3(\frac{KS\sum_{i=1}^nA_i}{\delta})}{K}}+6c_{\mathsf{b}}\sqrt{\frac{\log^3(\frac{KS\sum_{i=1}^nA_i}{\delta})}{KH}}\sum_{k=1}^K\alpha_k^K\sum_{j=h}^H\Big<b_h^j,\mathsf{Var}_{\underline{P}_{i,j}^{k,\widehat{V}}}\left(\widehat{V}_{i,j+1}-\mathbb{E}_{\pi\sim \widehat{\xi}}\left[\overline{V}_{i,j+1}^{\pi,\ror_i}\right]\right)\Big> \notag \\
    &\quad +6c_{\mathsf{b}}\sqrt{\frac{\log^3(\frac{KS\sum_{i=1}^nA_i}{\delta})}{KH}}\sum_{k=1}^K\alpha_k^K\sum_{j=h}^H\Big<b_h^j,\mathsf{Var}_{\underline{P}_{i,j}^{k,\widehat{V}}}\left(\mathbb{E}_{\pi\sim \widehat{\xi}}\left[\overline{V}_{i,j+1}^{\pi,\ror_i}\right]\right)\Big>\\
   & = 3c_{\mathsf{b}}\sqrt{\frac{H^3\log^3(\frac{KS\sum_{i=1}^nA_i}{\delta})}{K}}+\underbrace{6c_{\mathsf{b}}\sqrt{\frac{\log^3(\frac{KS\sum_{i=1}^nA_i}{\delta})}{KH}}\sum_{j=h}^H\sum_{k=1}^K\alpha_k^K\Big<b_h^j,\mathsf{Var}_{\underline{P}_{i,j}^{k,\widehat{V}}}\left(\widehat{V}_{i,j+1}-\mathbb{E}_{\pi\sim \widehat{\xi}}\left[\overline{V}_{i,j+1}^{\pi,\ror_i}\right]\right)\Big>}_{\mathcal{D}_1}\notag \\
    &\quad +\underbrace{6c_{\mathsf{b}}\sqrt{\frac{\log^3(\frac{KS\sum_{i=1}^nA_i}{\delta})}{KH}}\sum_{j=h}^H\sum_{k=1}^K\alpha_k^K\Big<b_h^j,\mathsf{Var}_{\underline{P}_{i,j}^{k,\widehat{V}}}\left(\mathbb{E}_{\pi\sim \widehat{\xi}}\left[\overline{V}_{i,j+1}^{\pi,\ror_i}\right]\right) - \mathsf{Var}_{\underline{P}_{i,j}^{k,\widehat{\xi},\overline{V}}}\left(\mathbb{E}_{\pi\sim \widehat{\xi}}\left[\overline{V}_{i,j+1}^{\pi,\ror_i}\right]\right)\Big>}_{\mathcal{D}_2}\notag \\
    &\quad +\underbrace{6c_{\mathsf{b}}\sqrt{\frac{\log^3(\frac{KS\sum_{i=1}^nA_i}{\delta})}{KH}}\sum_{j=h}^H\sum_{k=1}^K\alpha_k^K\Big<b_h^j,\mathsf{Var}_{\underline{P}_{i,j}^{k,\widehat{\xi},\overline{V}}}\left(\mathbb{E}_{\pi\sim \widehat{\xi}}\left[\overline{V}_{i,j+1}^{\pi,\ror_i}\right]\right)\Big>}_{\mathcal{D}_3}, \label{eq:splitting-up-D123}
\end{align}
where the second inequality is due to the elementary inequality $\mathsf{Var}_P(V+V^\prime) \leq2 \left(\mathsf{Var}_P(V)+ \mathsf{Var}_P(V^\prime) \right)$ for any transition kernel $P\in\mathbb{R}^S$ and vector $V,V^\prime\in\mathbb{R}^S$.

\paragraph{Step 3: controlling $\mathcal{D}_1,\mathcal{D}_2,\mathcal{D}_3$ separately.}

First, we focus on  $\mathcal{D}_1$ and directly obtain the following upper bound:
\begin{align}
    \mathcal{D}_1&=6c_{\mathsf{b}}\sqrt{\frac{\log^3(\frac{KS\sum_{i=1}^nA_i}{\delta})}{KH}}\sum_{j=h}^H\sum_{k=1}^K\alpha_k^K \Big<b_h^j,\mathsf{Var}_{\underline{P}_{i,h}^{k,\widehat{V}}}\left(\widehat{V}_{i,j+1}-\mathbb{E}_{\pi\sim \widehat{\xi}}\left[\overline{V}_{i,j+1}^{\pi,\ror_i}\right]\right)\Big>\notag\\
   &\leq6c_{\mathsf{b}}\sqrt{\frac{\log^3(\frac{KS\sum_{i=1}^nA_i}{\delta})}{KH}}\sum_{j=h}^H\sum_{k=1}^K\alpha_k^K\Big<b_h^j,\Big\lVert\mathsf{Var}_{\underline{P}_{i,h}^{k,\widehat{V}}}\left(\widehat{V}_{i,j+1}-\mathbb{E}_{\pi\sim \widehat{\xi}}\left[\overline{V}_{i,j+1}^{\pi,\ror_i}\right]\right)\Big\rVert_\infty\cdot 1\Big>\notag\\
    & \leq 6c_{\mathsf{b}}\sqrt{\frac{\log^3(\frac{KS\sum_{i=1}^nA_i}{\delta})}{KH}}\sum_{j=h}^H\sum_{k=1}^K\alpha_k^K\Big<b_h^j,\left\lVert\widehat{V}_{i,j+1}-\mathbb{E}_{\pi\sim \widehat{\xi}}\left[\overline{V}_{i,j+1}^{\pi,\ror_i}\right]\right\rVert_\infty^2\cdot 1\Big>\notag\\
    &\overset{\mathsf{(i)}}{\leq}6c_{\mathsf{b}}\sqrt{\frac{H\log^3(\frac{KS\sum_{i=1}^nA_i}{\delta})}{K}}\sum_{j=h}^H\sum_{k=1}^K\alpha_k^K\left<b_h^j,\left\lVert\widehat{V}_{i,j+1}-\mathbb{E}_{\pi\sim \widehat{\xi}}\left[\overline{V}_{i,j+1}^{\pi,\ror_i}\right]\right\rVert_\infty\cdot 1\right>\notag\\
   &\leq 6c_{\mathsf{b}}\sqrt{\frac{H^3\log^3(\frac{KS\sum_{i=1}^nA_i}{\delta})}{K}}\max_{h\leq j\leq H}\left\lVert\widehat{V}_{i,j+1}-\mathbb{E}_{\pi\sim \widehat{\xi}}\left[\overline{V}_{i,j+1}^{\pi,\ror_i}\right]\right\rVert_\infty\label{eq:upper_bound_D_1}
\end{align}
where $\mathsf{(i)}$ follows from the elementary upper bound $\big\lVert\widehat{V}_{i,j+1}\big\rVert_\infty\leq H$, $\big\lVert\mathbb{E}_{\pi\sim \widehat{\xi}}\big[\overline{V}_{i,j+1}^{\pi,\ror_i}\big]\big\rVert_\infty\leq H$ for all $h\leq j\leq H$, and the last line holds by \eqref{eq:property-bh}.

Before deriving the upper bounds for the terms $\mathcal{D}_2$ and $\mathcal{D}_3$, we introduce the following useful lemmas.
\begin{lemma}[Lemma 3, \citep{shi2024sample}]\label{lemma:pnorm-key-value-range}
    For all $(i,h) \in [n] \times [H]$, the auxiliary estimate $ \mathbb{E}_{\pi \sim \widehat{\xi}}\left[\overline{V}_{i,h}^{\pi,\ror_i}\right]$ satisfies
    \begin{align*}
        \max_{s \in \mathcal{S}} \mathbb{E}_{\pi \sim \widehat{\xi}}\left[ \overline{V}_{i,h}^{\pi,\ror_i}(s)\right] - \min_{s \in \mathcal{S}} \mathbb{E}_{\pi \sim \widehat{\xi}}\left[\overline{V}_{i,h}^{\pi,\ror_i}(s)\right] & \leq \min \left\{ \frac{1}{\sigma_i}, H-h+1 \right\}.
    \end{align*}
\end{lemma}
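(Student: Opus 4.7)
I would prove the lemma by backward induction on $h$, with the induction hypothesis being exactly the desired range bound at step $h+1$. The base case $h=H+1$ is trivial since $\mathbb{E}_{\pi\sim\widehat{\xi}}[\overline{V}^{\pi,\ror_i}_{i,H+1}(s)] = 0$ by \eqref{eq:auxiliary_value_function_4}. The trivial $H-h+1$ part of the bound is immediate: since $r_{i,h}^k \in [0,1]$ and the auxiliary value in \eqref{eq:auxiliary_value_function_3} is a weighted sum of at most $H-h+1$ reward-type terms, $\mathbb{E}_{\pi\sim\widehat{\xi}}[\overline{V}^{\pi,\ror_i}_{i,h}(s)] \in [0, H-h+1]$ for every $s$. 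All the real work lies in establishing the $1/\sigma_i$ bound, for which I would use strong duality for the TV uncertainty set combined with a contraction-style recursion.

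\textbf{Key steps.} Abbreviate $V_{h+1}(s):=\mathbb{E}_{\pi\sim\widehat{\xi}}[\overline{V}^{\pi,\ror_i}_{i,h+1}(s)]$ and $R_{h+1}:=\max_s V_{h+1}(s)-\min_s V_{h+1}(s)$, and assume inductively $R_{h+1}\leq 1/\sigma_i$. Applying the strong-duality reformulation used in \eqref{eq:nvi-iteration-dual-alg} to each empirical triple $(s,a_i,k)$ gives
\begin{align*}
\inf_{\mathcal{P}\in\mathcal{U}^{\sigma_i}(P_{i,h,s,a_i}^k)}\mathcal{P}V_{h+1} \;=\; \max_{\alpha\in[\min_{s'}V_{h+1}(s'),\,\max_{s'}V_{h+1}(s')]}\Bigl\{P_{i,h,s,a_i}^k[V_{h+1}]_\alpha \;-\; \sigma_i\bigl(\alpha-\min_{s'}[V_{h+1}]_\alpha(s')\bigr)\Bigr\}.
\end{align*}
From the clipping definition \eqref{eq:V-alpha-defn}, $[V_{h+1}]_\alpha(s')\leq\alpha$ and $\min_{s'}[V_{h+1}]_\alpha(s')\geq\min_{s'}V_{h+1}(s')$ for any feasible $\alpha$. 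Plugging these in bounds the dual objective by $\alpha(1-\sigma_i)+\sigma_i\min V_{h+1}\leq \max V_{h+1}-\sigma_i R_{h+1}$, uniformly over $(s,a_i,k)$. Conversely, the feasible choice $\alpha=\min V_{h+1}$ makes the dual objective equal to $\min V_{h+1}$, so $\inf_{\mathcal{P}}\mathcal{P}V_{h+1}\geq \min V_{h+1}$ for every $(s,a_i,k)$. Hence the function $(s,a_i,k)\mapsto \inf_{\mathcal{P}\in\mathcal{U}^{\sigma_i}(P_{i,h,s,a_i}^k)}\mathcal{P}V_{h+1}$ has oscillation at most $(1-\sigma_i)R_{h+1}$.

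\textbf{Closing the recursion.} By \eqref{eq:auxiliary_value_function_3}, $V_h(s)=\sum_k\alpha_k^K\sum_{a_i}\pi_{i,h}^k(a_i\mid s)\,f_{h,k}(s,a_i)$, where $f_{h,k}(s,a_i):=r_{i,h}^k(s,a_i)+\inf_{\mathcal{P}\in\mathcal{U}^{\sigma_i}(P_{i,h,s,a_i}^k)}\mathcal{P}V_{h+1}$. Since each $V_h(s)$ is a convex combination of $\{f_{h,k}(s,a_i)\}_{a_i,k}$, it lies in the common enclosing interval $\bigl[\min_{s',a_i,k}f_{h,k}(s',a_i),\max_{s',a_i,k}f_{h,k}(s',a_i)\bigr]$. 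The reward component contributes at most $1$ to this range (since $r_{i,h}^k\in[0,1]$), and the previous step showed the transition component contributes at most $(1-\sigma_i)R_{h+1}$. Thus $R_h\leq 1+(1-\sigma_i)R_{h+1}$, and unrolling yields $R_h\leq\sum_{j=0}^{H-h}(1-\sigma_i)^j\leq 1/\sigma_i$, which, combined with the trivial bound, gives the desired $\min\{1/\sigma_i,H-h+1\}$.

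\textbf{Main obstacle.} The only delicate point is that $V_h(s)$ is a convex combination over $(a_i,k)$ whose mixing weights $\pi_{i,h}^k(a_i\mid s)$ \emph{depend on $s$}, so one cannot immediately transfer the oscillation bound from the $f_{h,k}$'s to $V_h$ via a single averaging argument. The remedy, described above, is to enclose both $V_h(s)$ and $V_h(s')$ in the common range of $f_{h,k}$ over all $s'$, which sidesteps the $s$-dependence of the weights. A secondary technical care-point is to check that the worst-case (clipped) dual argument also carries over cleanly to the \emph{empirical} transition $P_{i,h,s,a_i}^k$; but since strong duality and the clipping identities are purely kernel-agnostic, nothing changes compared to the population version.
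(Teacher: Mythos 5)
Your proof is correct and follows essentially the same route the paper relies on: the paper's own ``proof'' merely defers to the pipeline of \citet[Lemma 3]{shi2024sample}, which is exactly the backward induction you carry out --- the TV dual form shrinks the span of the $\inf_{\mathcal{P}}\mathcal{P}V_{h+1}$ term by a factor $(1-\sigma_i)$, the reward adds at most $1$, and the geometric sum gives $1/\sigma_i$, with the $s$-dependent mixing weights handled by your common-enclosing-interval observation. One harmless slip: you invoke $\min_{s'}[V_{h+1}]_\alpha(s')\geq\min_{s'}V_{h+1}(s')$, which is the wrong direction for an upper bound on the dual objective, but since $[V]_\alpha(s')=\min\{V(s'),\alpha\}$ forces $\min_{s'}[V_{h+1}]_\alpha(s')=\min_{s'}V_{h+1}(s')$ exactly for every feasible $\alpha$, the substitution you actually perform is an equality and the bound stands.
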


\begin{proof}
    Lemma~\ref{lemma:pnorm-key-value-range} can be directly verified by following the same proof pipeline of \citep[Lemma 3]{shi2024sample}.
\end{proof}
 With Lemma~\ref{lemma:pnorm-key-value-range} in hand, we introduce the following lemma on the variance with the proof postponed to Appendix~\ref{proof:eq:extra-lemma1}.

\begin{lemma}\label{eq:extra-lemma1}
    Consider any fixed vector $V\in\mathbb{R}^{S}$ with $\max_{s \in \mathcal{S}} V(s) - \min_{s \in \mathcal{S}} V(s)  \leq \min \left\{ \frac{1}{\sigma_i}, H-h+1 \right\}$. For a transition kernel $P' \in \mathbb{R}^S$ and any $\widetilde{P} \in \mathbb{R}^S$ such that $\widetilde{P} \in \mathcal{U}^{\sigma_i}(P^\prime)$, the following bound holds for all $(i,h,) \in [n] \times [H]$:
    \begin{subequations}
    \begin{align}
        & \left| \mathsf{Var}_{P'}\left(V\right) - \mathsf{Var}_{\widetilde{P}}\left(V\right)\right| \leq \min \left\{\frac{1}{\sigma_i}, H-h+1 \right\}. \label{eq:lemma-6-line2}
    \end{align}
    \end{subequations}
\end{lemma}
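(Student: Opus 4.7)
The plan is to exploit the shift-invariance of variance together with a carefully chosen recentering of $V$. Recall that for any probability vector $P \in \Delta(\mathcal{S})$ and any scalar $\mu$, one has $\mathsf{Var}_P(V) = \mathsf{Var}_P(V - \mu \mathbf{1})$. I pick the specific shift $\mu := (P'V + \widetilde{P}V)/2$ and set $U := V - \mu \mathbf{1}$. A one-line computation then gives $P'U = (P'V - \widetilde{P}V)/2 = -\widetilde{P}U$, so $(P'U)^2 = (\widetilde{P}U)^2$. Writing $\mathsf{Var}_P(V) = P(U\circ U) - (PU)^2$ and differencing, the second-order ``mean-squared'' pieces cancel exactly, leaving
\begin{equation*}
  \bigl|\mathsf{Var}_{P'}(V) - \mathsf{Var}_{\widetilde{P}}(V)\bigr| = \bigl|P'(U \circ U) - \widetilde{P}(U \circ U)\bigr|.
\end{equation*}

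Next, I apply the standard total-variation bound $|Pf - Qf| \le \|P-Q\|_{\mathsf{TV}} \cdot (\max f - \min f)$ with $f = U \circ U$. Since $\widetilde{P} \in \mathcal{U}^{\sigma_i}(P')$ means $\rho_{\mathsf{TV}}(P',\widetilde{P}) \le \sigma_i$, and since $U\circ U \geq 0$ componentwise,
\begin{equation*}
   \bigl|P'(U\circ U) - \widetilde{P}(U\circ U)\bigr| \le \sigma_i \bigl(\max_s U(s)^2 - \min_s U(s)^2\bigr) \le \sigma_i \max_s U(s)^2.
\end{equation*}
Because $\mu$ is a convex combination of $P'V$ and $\widetilde{P}V$, both lying in $[\min_s V(s), \max_s V(s)]$, we have $\mu \in [\min_s V(s), \max_s V(s)]$, and therefore $|U(s)| = |V(s) - \mu| \le \max_s V(s) - \min_s V(s)$ for every $s$. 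Writing $B := \min\{1/\sigma_i, H-h+1\}$, the hypothesis yields $\max_s V(s) - \min_s V(s) \le B$, hence $\max_s U(s)^2 \le B^2$. Combining everything,
\begin{equation*}
  \bigl|\mathsf{Var}_{P'}(V) - \mathsf{Var}_{\widetilde{P}}(V)\bigr| \le \sigma_i B^2 = (\sigma_i B)\cdot B \le B,
\end{equation*}
where the last inequality uses $\sigma_i B \le 1$, which holds because $B \le 1/\sigma_i$ by definition.

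The key insight, and what I expect to be the main subtle point, is the symmetric centering choice $\mu = (P'V + \widetilde{P}V)/2$: more naive centerings (at $\min_s V(s)$, at $\max_s V(s)$, or at their midpoint) leave a non-vanishing cross term $(P'V)^2 - (\widetilde{P}V)^2$ on the order of $\sigma_i B^2$, which inflates the final bound by a multiplicative constant of roughly $5/4$ or $2$ and prevents us from closing with the tight constant $1$ on the right-hand side.
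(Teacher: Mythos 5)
Your proof is correct, and it reaches the stated constant $1$ by a more careful route than the paper's. The paper's argument also recenters $V$ (it subtracts $\min_s V(s)$ and uses shift-invariance of the variance), but then compresses everything into a single asserted Lipschitz step, $|\mathsf{Var}_{P'}(W)-\mathsf{Var}_{\widetilde{P}}(W)|\leq \|\widetilde{P}-P'\|_{1}\,\|W\|_{\infty}$ (evidently with a missing square on $\|W\|_{\infty}$), followed by $\leq \sigma_i B^{2}\leq B$ with $B:=\min\{1/\sigma_i,\,H-h+1\}$. Unpacked naively, that step must control both the difference of the second moments and the difference of the squared means, which costs an extra constant (roughly $3\sigma_i B^{2}$ for centering at the minimum, or $\tfrac{5}{4}\sigma_i B^{2}$ for midpoint centering, as you note), and it also glosses over the factor of $2$ between the TV radius $\sigma_i$ and $\|\widetilde{P}-P'\|_{1}$. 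Your symmetric centering $\mu=(P'V+\widetilde{P}V)/2$ makes the squared-mean terms cancel identically, so only $|(P'-\widetilde{P})(U\circ U)|$ survives; combined with the sharp total-variation inequality $|Pf-Qf|\leq \tfrac{1}{2}\|P-Q\|_{1}(\max f-\min f)$ and $\min_s U(s)^2\geq 0$, this yields exactly $\sigma_i B^{2}\leq B$ with no slack. In short, your argument proves the lemma as stated with the advertised constant, whereas the paper's one-line justification would need either your cancellation trick or a slightly weakened constant to be fully rigorous.
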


Now we are ready to control $\mathcal{D}_2$ and $\mathcal{D}_3$. 
The key part in $\mathcal{D}_2$ obeys
\begin{align}
\label{eq:term_B_variance_difference}
    & \left|\mathsf{Var}_{\underline{P}_{i,h}^{k,\widehat{V}}}\left(\mathbb{E}_{\pi\sim \widehat{\xi}}\left[\overline{V}_{i,h+1}^{\pi,\ror_i}\right]\right)-\mathsf{Var}_{\underline{P}_{i,h}^{k, \widehat{\xi}, \overline{V}}}\left(\mathbb{E}_{\pi\sim \widehat{\xi}}\left[\overline{V}_{i,h+1}^{\pi,\ror_i}\right]\right)\right| \notag \\
    &\leq\left|\mathsf{Var}_{\underline{P}_{i,h}^{k,\widehat{V}}}\left(\mathbb{E}_{\pi\sim \widehat{\xi}}\left[\overline{V}_{i,h+1}^{\pi,\ror_i}\right]\right)-\mathsf{Var}_{\underline{P}_{i,h}^{k}}\left(\mathbb{E}_{\pi\sim \widehat{\xi}}\left[\overline{V}_{i,h+1}^{\pi,\ror_i}\right]\right)\right| \notag \\
    &\quad  + \left|\mathsf{Var}_{\underline{P}_{i,h}^{k}}\left(\mathbb{E}_{\pi\sim \widehat{\xi}}\left[\overline{V}_{i,h+1}^{\pi,\ror_i}\right]\right)-\mathsf{Var}_{\underline{P}_{i,h}^{k, \widehat{\xi}, \overline{V}}}\left(\mathbb{E}_{\pi\sim \widehat{\xi}}\left[\overline{V}_{i,h+1}^{\pi,\ror_i}\right]\right)\right| \notag\\
     &\leq  2\min\left\{\frac{1}{\sigma_i},H\right\} 1,
\end{align}
which holds by applying Lemma~\ref{eq:extra-lemma1} with $V = \mathbb{E}_{\pi\sim \widehat{\xi}}\left[\overline{V}_{i,h+1}^{\pi,\ror_i}\right]$.

Inserting \eqref{eq:term_B_variance_difference} back to $\mathcal{D}_2$ gives
\begin{align}
\mathcal{D}_2&=6c_{\mathsf{b}}\sqrt{\frac{\log^3(\frac{KS\sum_{i=1}^nA_i}{\delta})}{KH}}\sum_{j=h}^H\sum_{k=1}^K\alpha_k^K\Big<b_h^j,\mathsf{Var}_{\underline{P}_{i,j}^{k,\widehat{V}}}\left(\mathbb{E}_{\pi\sim \widehat{\xi}}\left[\overline{V}_{i,j+1}^{\pi,\ror_i}\right]\right)-\mathsf{Var}_{\underline{P}_{i,j}^{k,\widehat{\xi},\overline{V}}}\left(\mathbb{E}_{\pi\sim \widehat{\xi}}\left[\overline{V}_{i,j+1}^{\pi,\ror_i}\right]\right)\Big>\notag\\
&\leq6c_{\mathsf{b}}\sqrt{\frac{\log^3(\frac{KS\sum_{i=1}^nA_i}{\delta})}{KH}}\sum_{j=h}^H\left<b_h^j,2\min\left\{\frac{1}{\sigma_i},H\right\}1\right>\notag\\
&=12c_{\mathsf{b}}\sqrt{\frac{H\log^3(\frac{KS\sum_{i=1}^nA_i}{\delta})}{K}}\min\left\{\frac{1}{\sigma_i},H\right\},\label{eq:upper_bound_D_2}
\end{align}
where the last line is achieved by applying \eqref{eq:property-bh}.

To control $\mathcal{D}_3$, we apply Lemma~\ref{lm:weight_variance} gives, which gives:
\begin{align}
    \mathcal{D}_3&=6c_{\mathsf{b}}\sqrt{\frac{\log^3(\frac{KS\sum_{i=1}^nA_i}{\delta})}{KH}}\sum_{j=h}^H\sum_{k=1}^K\alpha_k^K\left<b_h^j,\mathsf{Var}_{\underline{P}_{i,j}^{k,\widehat{\xi},\overline{V}}}\left(\mathbb{E}_{\pi\sim \widehat{\xi}}\left[\overline{V}_{i,j+1}^{\pi,\ror_i}\right]\right)\right> \notag \\
   &\leq 6c_{\mathsf{b}}\sqrt{\frac{\log^3(\frac{KS\sum_{i=1}^nA_i}{\delta})}{KH}}\sum_{j=h}^H\left<b_h^j,\mathsf{Var}_{\sum_{k=1}^K\alpha_k^K\underline{P}_{i,j}^{k,\widehat{\xi},\overline{V}}}\left(\mathbb{E}_{\pi\sim \widehat{\xi}}\left[\overline{V}_{i,j+1}^{\pi,\ror_i}\right]\right)\right> \notag \\
   &\overset{(\mathrm{i})}{\leq} 18c_{\mathsf{b}}\sqrt{\frac{H\log^3(\frac{KS\sum_{i=1}^nA_i}{\delta})}{K}}\left(\max_{s\in\mathcal{S}}\mathbb{E}_{\pi\sim \widehat{\xi}}\left[\overline{V}_{i,h}^{\pi,\ror_i}(s)\right] - \min_{s\in\mathcal{S}}\mathbb{E}_{\pi\sim \widehat{\xi}}\left[\overline{V}_{i,h}^{\pi,\ror_i}(s)\right]\right) \notag\\
  &  \overset{(\mathrm{ii})}{\leq} 18c_{\mathsf{b}}\sqrt{\frac{H\log^3(\frac{KS\sum_{i=1}^nA_i}{\delta})}{K}}\min\left\{\frac{1}{\sigma_i},H\right\}\label{eq:upper_bound_D_3}
\end{align}
where (i) can be verified by the following lemma, and (ii) holds by Lemma~\ref{lemma:pnorm-key-value-range}.

\begin{lemma}\label{lem:key-lemma-reduce-H-3}
    Consider any $\delta \in (0,1)$. With probability at least $1-\delta$, it holds for all $(h,i) \in [H] \times [n]$ that
    \begin{align}
    \sum_{j=h}^H \left< b_h^j, \mathsf{Var}_{\sum_{k=1}^K \alpha_k^K \underline{P}_{i,j}^{k, \overline{V}}}\left(\mathbb{E}_{\pi \sim \widehat{\xi}}\left[\overline{V}_{i,j+1}^{\pi,\ror_i}\right]\right) \right>  \leq 3H \left(\max_{s \in \mathcal{S}} \mathbb{E}_{\pi \sim \widehat{\xi}}\left[\overline{V}_{i,h}^{\pi,\ror_i}(s)\right] - \min_{s \in \mathcal{S}} \mathbb{E}_{\pi \sim \widehat{\xi}}\left[\overline{V}_{i,h}^{\pi,\ror_i}(s)\right]\right).
    \end{align}
\end{lemma}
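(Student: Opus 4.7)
The plan is to identify the sum with the variance of a cumulative reward along an auxiliary Markov chain, and then bound this variance via a Bellman-type telescoping together with the span bound from Lemma~\ref{lemma:pnorm-key-value-range}.

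First, I observe that $\overline{V}_j \defn \mathbb{E}_{\pi \sim \widehat{\xi}}\big[\overline{V}_{i,j}^{\pi, \ror_i}\big]$ satisfies a Bellman-style recursion under the aggregated worst-case operator $\underline{P}_j \defn \sum_{k=1}^K \alpha_k^K \underline{P}_{i,j}^{k,\widehat{\xi},\overline{V}}$. Indeed, from \eqref{eq:auxiliary_value_function_3} together with the definition of $P_{i,j,s,a_i}^{k,\widehat{\xi},\overline{V}}$ in \eqref{eq:extra-matrix-def}, one obtains $\overline{V}_j = \widetilde{r}_j + \underline{P}_j \overline{V}_{j+1}$, where $\widetilde{r}_j(s) \defn \sum_k \alpha_k^K \mathbb{E}_{a_i \sim \pi_{i,j}^k(\cdot \mid s)}\big[r_{i,j}^k(s, a_i)\big] \in [0,1]$.

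Second, by the classical law of total variance applied to the non-homogeneous Markov chain with transitions $\{\underline{P}_l\}_{l \geq h}$ started at $s_h = s$ (whose time-$j$ marginal is precisely $b_h^j$ from \eqref{eq:defn-of-d}),
\[
\sum_{j=h}^H \Big\langle b_h^j,\, \mathsf{Var}_{\underline{P}_j}(\overline{V}_{j+1}) \Big\rangle \;=\; \mathsf{Var}\!\Big(\, \sum_{l=h}^H \widetilde{r}_l(s_l) \,\Big|\, s_h = s \Big).
\]
Combining the one-step identity $\mathsf{Var}_{\underline{P}_l}(\overline{V}_{l+1}) = \underline{P}_l \overline{V}_{l+1}^2 - (\overline{V}_l - \widetilde{r}_l)^2$ with the telescoping $\sum_l \big[\langle b_h^{l+1}, \overline{V}_{l+1}^2 \rangle - \langle b_h^l, \overline{V}_l^2 \rangle \big] = -\overline{V}_h^2(s)$ (since $\overline{V}_{H+1} \equiv 0$), and discarding the nonnegative $\langle b_h^l, \widetilde{r}_l^2 \rangle$ terms, yields
\[
\sum_{l=h}^H \Big\langle b_h^l,\, \mathsf{Var}_{\underline{P}_l}(\overline{V}_{l+1}) \Big\rangle \;\leq\; -\overline{V}_h^2(s) + 2\sum_{l=h}^H \Big\langle b_h^l,\, \overline{V}_l\, \widetilde{r}_l \Big\rangle.
\]

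Finally, using translation invariance of variance I center each $\overline{V}_l$ by $\min_{s'} \overline{V}_l(s')$, so that the centered vector lives in $[0, \Delta_l]$ for $\Delta_l \defn \max_{s'} \overline{V}_l(s') - \min_{s'} \overline{V}_l(s')$. Combining the span bound $\Delta_l \leq \min\{1/\ror_i, H - l + 1\}$ from Lemma~\ref{lemma:pnorm-key-value-range} with the reward-total identity $\sum_l \langle b_h^l, \widetilde{r}_l \rangle = \overline{V}_h(s)$ and careful algebraic manipulation of the cross term then delivers the target bound $3H\big(\max_{s'} \overline{V}_h(s') - \min_{s'} \overline{V}_h(s')\big)$.

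I expect the main obstacle to be this last step. Naive bounds on $\sum_l \langle b_h^l, \overline{V}_l \widetilde{r}_l \rangle$ yield either $H \cdot \overline{V}_h(s)$ or $H \cdot \min\{1/\ror_i, H\}^2$, both strictly weaker than $3H\Delta_h$. Closing this gap requires leveraging the robust structure of $\underline{P}_l$ (which concentrates mass on $\arg\min \overline{V}_{l+1}$), so that along trajectories of the auxiliary chain the deviation of $\overline{V}_{l+1}(s_{l+1})$ from $(\underline{P}_l \overline{V}_{l+1})(s_l)$ is controlled by the span of the successive value functions rather than by their magnitudes, and then carrying this span control through the telescoping to produce the factor $\Delta_h$ rather than $\|\overline{V}_h\|_\infty$.
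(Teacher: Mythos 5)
Your setup is the same as the paper's: you identify the Bellman-type recursion $\overline{V}_j = \widetilde{r}_j + \underline{P}_j \overline{V}_{j+1}$ for the aggregated worst-case kernel, expand $\mathsf{Var}_{\underline{P}_l}(\overline{V}_{l+1}) = \underline{P}_l(\overline{V}_{l+1}\circ\overline{V}_{l+1}) - (\overline{V}_l - \widetilde{r}_l)^{\circ 2}$, and telescope against the occupancy vectors $b_h^j$. Up to \eqref{eq:recursion_decomposition_target}-style bookkeeping this is exactly the paper's argument. But you then stop at the cross term $2\sum_l \langle b_h^l, \overline{V}_l\circ\widetilde{r}_l\rangle$, acknowledge you cannot get from $H\,\|\overline{V}_h\|_\infty$ down to $H\cdot\mathrm{span}(\overline{V}_h)$, and conjecture that closing the gap requires exploiting the structure of the worst-case transitions (mass concentrating on $\arg\min \overline{V}_{l+1}$). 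That last step is a genuine gap, and the direction you propose for closing it is not the one that works: no property of the robust kernel beyond stochasticity is used.

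The missing idea is to perform the centering \emph{inside the recursion}, not after the telescoping. Set $\overline{V}_l^{\min} \defn \min_{s}\overline{V}_l(s)$, $\overline{V}_l' \defn \overline{V}_l - \overline{V}_l^{\min}\mathbf{1}$, and absorb the shift into the reward: $\overline{r}_l^{\min} \defn \widetilde{r}_l + (\overline{V}_{l+1}^{\min} - \overline{V}_l^{\min})\mathbf{1}$, so that $\overline{V}_l' = \overline{r}_l^{\min} + \underline{P}_l\overline{V}_{l+1}'$ holds with the \emph{same} kernel. Since $\overline{V}_l \geq \underline{P}_l\overline{V}_{l+1} \geq \overline{V}_{l+1}^{\min}\mathbf{1}$, the minima are nonincreasing backward in time, hence $\overline{r}_l^{\min} \leq \widetilde{r}_l \leq \mathbf{1}$; combined with $\overline{V}_l' \geq 0$ this gives $2\,\overline{V}_l'\circ\overline{r}_l^{\min} - \overline{r}_l^{\min}\circ\overline{r}_l^{\min} \leq 2\|\overline{V}_l'\|_\infty\mathbf{1}$, and the cross term sums to $2H\|\overline{V}_h'\|_\infty = 2H\cdot\mathrm{span}(\overline{V}_h)$ (the paper uses monotonicity of the spans here to replace $\|\overline{V}_l'\|_\infty$ by $\|\overline{V}_h'\|_\infty$). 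The telescoped quadratic terms contribute nothing since $\overline{V}_{H+1}'=0$, yielding the stated $3H\cdot\mathrm{span}$ bound. So your skeleton is right, but the proof is not complete as written, and the decisive trick is the shifted-reward/centered-value recursion rather than any fine property of the $\arg\min$ kernel.
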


\begin{proof}
    See Appendix~\ref{proof:lem:key-lemma-reduce-H-3}.
\end{proof}

\paragraph{Step 4: summing up the results.}
Plugging in the results in \eqref{eq:upper_bound_D_1}, \eqref{eq:upper_bound_D_2}, and \eqref{eq:upper_bound_D_3} in \eqref{eq:splitting-up-D123} gives
\begin{align*}
    &\widehat{V}_{i,h}-\mathbb{E}_{\pi\sim \widehat{\xi}}\left[\overline{V}^{\pi,\ror_i}_{i,h}\right] \leq 3c_{\mathsf{b}}\sqrt{\frac{H^3\log^3(\frac{KS\sum_{i=1}^nA_i}{\delta})}{K}}1+\mathcal{D}_1+\mathcal{D}_2+\mathcal{D}_3\\
   &\leq c_{\mathsf{b}}\sqrt{\frac{H\log^3(\frac{KS\sum_{i=1}^nA_i}{\delta})}{K}}\left(3H+30\min\left\{\frac{1}{\sigma_i},H\right\}\right)1 \notag \\
   &\quad +6c_{\mathsf{b}}\sqrt{\frac{H^3\log^3(\frac{KS\sum_{i=1}^nA_i}{\delta})}{K}}\max_{h\leq j\leq H}\left\lVert\widehat{V}_{i,j+1}-\mathbb{E}_{\pi\sim \widehat{\xi}}\left[\overline{V}_{i,j+1}^{\pi,\ror_i}\right]\right\rVert_\infty 1.
\end{align*}
In addition, Lemma~\ref{lm:term_B_value_vector_comparison} shows that $\widehat{V}_{i,h}-\mathbb{E}_{\pi\sim \widehat{\xi}}\left[\overline{V}^{\pi,\ror_i}_{i,h}\right]=\left|\widehat{V}_{i,h}-\mathbb{E}_{\pi\sim \widehat{\xi}}\left[\overline{V}^{\pi,\ror_i}_{i,h}\right]\right|$, which indicates that  
\begin{align*}
    &\max_{h\in[H]}\left\lVert \widehat{V}_{i,h} -\mathbb{E}_{\pi\sim \widehat{\xi}}\left[\overline{V}^{\pi,\ror_i}_{i,h}\right]\right\rVert_\infty\\
   &\leq c_{\mathsf{b}}\sqrt{\frac{H\log^3(\frac{KS\sum_{i=1}^nA_i}{\delta})}{K}}\left(3H+30\min\left\{\frac{1}{\sigma_i},H\right\}\right) \notag \\
   &\quad +6c_{\mathsf{b}}\sqrt{\frac{H^3\log^3(\frac{KS\sum_{i=1}^nA_i}{\delta})}{K}}\max_{h\leq j\leq H}\left\lVert\widehat{V}_{i,j+1}-\mathbb{E}_{\pi\sim \widehat{\xi}}\left[\overline{V}_{i,j+1}^{\pi,\ror_i}\right]\right\rVert_\infty \\
  &  \overset{\mathrm{(i)}}{\leq}33c_{\mathsf{b}}\sqrt{\frac{H^3\log^3(\frac{KS\sum_{i=1}^nA_i}{\delta})}{K}}+\frac{1}{2}\max_{h\in[H]}\left\lVert \widehat{V}_{i,h}-\mathbb{E}_{\pi\sim \widehat{\xi}}\left[\overline{V}^{\pi,\ror_i}_{i,h}\right]\right\rVert_\infty\\
    & \leq 66c_{\mathsf{b}}\sqrt{\frac{H^3\log^3(\frac{KS\sum_{i=1}^nA_i}{\delta})}{K}}
\end{align*}
where (i) holds as long as $K\geq 12c_{\mathsf{b}}^2H^3\log^3(\frac{KS\sum_{i=1}^nA_i}{\delta})$. Finally, plugging the above fact back to \eqref{eq:control-B-change-goal}, we can control B as below: 
\begin{align}
    B = \mathbb{E}_{\pi\sim \widehat{\xi}}\left[\overline{V}^{\star,\pi_{-i},\ror_i}_{i,h}\right]-\mathbb{E}_{\pi\sim \widehat{\xi}}\left[\overline{V}_{i,h}^{\pi,\ror_i}\right]\leq66c_{\mathsf{b}}\sqrt{\frac{H^3\log^3(\frac{KS\sum_{i=1}^nA_i}{\delta})}{K}}1.\label{eq:final_upper_bound_term_B}
\end{align}

    \subsection{Controlling terms A and C}
    \label{sec:upper_bound_estimation_error}

   \paragraph{Step 1: considering a general term.}
Recall term A and C in \eqref{eq:error_decomposition}
\begin{align}
A= \mathbb{E}_{\pi\sim \widehat{\xi}}\left[V_{i,h}^{\star,\pi_{-i},\ror_i}\right] - \mathbb{E}_{\pi\sim \widehat{\xi}}\left[\overline{V}_{i,h}^{\tilde{\pi}_i^\star,\pi_{-i},\ror_i}\right], \quad C=\mathbb{E}_{\pi\sim \widehat{\xi}}\left[\overline{V}_{i,h}^{\pi,\ror_i}\right] - \mathbb{E}_{\pi\sim \widehat{\xi}}\left[V_{i,h}^{\pi,\ror_i}\right].
\end{align}
We will control these two terms in a similar manner by considering a general set of policies \( \left\{\widehat{\pi}_h^k: \cS \mapsto \prod_{i \in [n]} \Delta(\cA_i)\right\}_{(h,k)\in[H]\times[K]} \).
Additionally, we introduce a distribution over this set of policies as \( \zeta \coloneqq \{\zeta_h\}_{h \in [H]} \), with \( \zeta_h : \cS \mapsto \Delta(\prod_{i \in [n]} \Delta(\mathcal{A}_i)) \), where \( \zeta_h\left(\widehat{\pi}_h^k\right) = \alpha_k^K \) for all \( (h,k) \in [H] \times [K] \). Moreover, similar to \eqref{eq:auxiliary_value_function_3}, we introduce a set of auxiliary value vectors, defined as 
\begin{align}
    &\mathbb{E}_{\pi\sim\zeta}\left[\overline{V}^{\pi,\ror_i}_{i,H+1}(s)\right] = 0, \notag \\
    &\mathbb{E}_{\pi\sim\zeta}\left[\overline{V}^{\pi,\ror_i}_{i,h}(s)\right] = \sum_{k=1}^K \alpha_{k}^K \mathbb{E}_{a_i \sim \widehat{\pi}_{i,h}^k(s)}[r_{i,h}^k(s,a_i)] + \sum_{k=1}^K \alpha_{k}^K \mathbb{E}_{a_i \sim \widehat{\pi}_{i,h}^k(s)} \left[ \inf_{\mathcal{P} \in \mathcal{U}^{\sigma_i}\left(P_{i,h,s,a_i}^{k}\right)} \mathcal{P} \mathbb{E}_{\pi \sim \zeta}\left[\overline{V}^{\pi,\ror_i}_{i,h+1}\right] \right], \label{eq:auxiliary-V-hat-2}
\end{align}
for all $h\in [H]$ and  $s\in\cS$.

With this general set of policies and its corresponding distribution $\zeta$, it can be observed that $\mathbb{E}_{\pi\sim\zeta}\left[V^{\pi,\ror_i}_{i,h}\right] - \mathbb{E}_{\pi\sim\zeta}\left[\overline{V}^{\pi,\ror_i}_{i,h}\right]$ is equal to A or negative C by setting either \( \widehat{\pi}_h^k = \tilde{\pi}_i^\star \times \pi_{-i,h}^k \) or \( \widehat{\pi}_h^k = \pi_h^k \) for all \( (h,k) \in [H] \times [K] \). Therefore, the rest of the proof will focus on controlling
\begin{align}
\left| \mathbb{E}_{\pi\sim\zeta}\left[V^{\pi,\ror_i}_{i,h}\right] - \mathbb{E}_{\pi\sim\zeta}\left[\overline{V}^{\pi,\ror_i}_{i,h}\right]\right|. \label{eq:goal-of-A-C}
\end{align}

Towards this, we first introduce some useful notations for convenience. We denote $\overline{r}_{i,h}^{\widehat{\pi}^k} \in \mathbb{R}^S$ with $\overline{r}_{i,h}^{\widehat{\pi}^k}(s) \defn \mathbb{E}_{a_i\sim \widehat{\pi}_{i,h}^k(s)}[r_{i,h}^k(s,a_i)]$ for all $s \in \mathcal{S}$. We also define the following extra notations of worst-case transition kernels similar to \eqref{eq:inf-p-special-marl}: for all $(i,h,k)\in[n]\times[H]\times[K]$,
\begin{align}
    &P_{i,h}^{\widehat{\pi}^k,V} := P_{i,h}^{\widehat{\pi}_{-i}^k, \mathbb{E}_{\pi\sim \zeta}\left[V_{i,h+1}^{\pi,\ror_i}\right]}, \quad \text{where} \quad P_{i,h,s,a_i}^{\widehat{\pi}^k,V} := P_{i,h,s,a_i}^{\widehat{\pi}_{-i}^k, \mathbb{E}_{\pi\sim \zeta}\left[V_{i,h+1}^{\pi,\ror_i}\right]}= \mathrm{argmin}_{\mathcal{P} \in \mathcal{U}^{\sigma_i}\left(P_{h,s,a_i}^{\widehat{\pi}^k_{-i}}\right)} \mathcal{P} \mathbb{E}_{\pi\sim \zeta}\left[V_{i,h+1}^{\pi,\ror_i}\right], \notag \\
    &P_{i,h}^{\widehat{\pi}^k,\overline{V}} := P_{i,h}^{\widehat{\pi}_{-i}^k, \mathbb{E}_{\pi\sim \zeta}\left[\overline{V}_{i,h+1}^{\pi,\ror_i}\right]}, \quad \text{where} \quad P_{i,h,s,a_i}^{\widehat{\pi}^k,\overline{V}} := P_{i,h,s,a_i}^{\widehat{\pi}_{-i}^k, \mathbb{E}_{\pi\sim \zeta}\left[\overline{V}_{i,h+1}^{\pi,\ror_i}\right]}= \mathrm{argmin}_{\mathcal{P} \in \mathcal{U}^{\sigma_i}\left(P_{h,s,a_i}^{\widehat{\pi}^k_{-i}}\right)} \mathcal{P} \mathbb{E}_{\pi\sim \zeta}\left[\overline{V}_{i,h+1}^{\pi,\ror_i}\right], \notag \\
    &P_{i,h}^{k, \zeta,\overline{V}} := P_{i,h}^{k, \mathbb{E}_{\pi\sim \zeta}\left[\overline{V}_{i,h+1}^{\pi,\ror_i}\right]} \quad \text{and} \quad P_{i,h,s,a_i}^{k, \zeta,\overline{V}} := P_{i,h,s,a_i}^{k, \mathbb{E}_{\pi\sim \zeta}\left[\overline{V}_{i,h+1}^{\pi,\ror_i}\right]}= \mathrm{argmin}_{\mathcal{P} \in \mathcal{U}^{\sigma_i}\left(P_{i,h,s,a_i}^{k}\right)} \mathcal{P} \mathbb{E}_{\pi\sim \zeta}\left[\overline{V}_{i,h+1}^{\pi,\ror_i}\right]. \label{eq:matrix-notations-A-C-0}
\end{align}
Additionally, we also introduce the square matrices $\underline{P}_{i,h}^{\widehat{\pi}^k,V} \in \mathbb{R}^{S\times S}, \underline{P}_{i,h}^{\widehat{\pi}^k,\overline{V}} \in \mathbb{R}^{S \times S}$ and \( \underline{P}_{i,h}^{k, \zeta, \overline{V}} \in \mathbb{R}^{S \times S} \) as:
\begin{align}
    \underline{P}_{i,h}^{\widehat{\pi}^k,V}: = \Pi_h^{\widehat{\pi}_i^k} P_{i,h}^{\widehat{\pi}^k,V}, \quad 
    \underline{P}_{i,h}^{\widehat{\pi}^k,\overline{V}} := \Pi_h^{\widehat{\pi}_i^k} P_{i,h}^{\widehat{\pi}_{-i}^k, \overline{V}}, \quad \underline{P}_{i,h}^{k, \zeta, \overline{V}} := \Pi_h^{\widehat{\pi}_i^k} P_{i,h}^{k, \zeta,\overline{V}}. \label{eq:matrix-notations-A-C}
\end{align}

With those in mind, we are ready to control \eqref{eq:goal-of-A-C}: at any time step $h\in[H]$,
    \begin{align}
        &\mathbb{E}_{\pi\sim\zeta}\left[V^{\pi,\ror_i}_{i,h}\right]-\mathbb{E}_{\pi\sim\zeta}\left[\overline{V}^{\pi,\ror_i}_{i,h}\right] \notag \\
        &\overset{(\mathrm{i})}{=}\sum_{k=1}^K\alpha_k^Kr_{i,h}^{\widehat{\pi}^k}+\sum_{k=1}^K \alpha_{k}^K \Pi_{h}^{\widehat{\pi}_{i,h}^k}\left[ \inf_{\mathcal{P}\in\mathcal{U}^{\sigma_i}\left(P_{h,s,a_i}^{\widehat{\pi}_{-i}^k}\right)} \mathcal{P} \mathbb{E}_{\pi\sim\zeta}\left[V^{\pi,\ror_i}_{i,h+1}\right]\right] \notag\\
        &\quad -\sum_{k=1}^K\alpha_k^K\overline{r}_{i,h}^{\widehat{\pi}^k}-\sum_{k=1}^K \alpha_{k}^K \Pi_{h}^{\widehat{\pi}_{i,h}^k} \left[ \inf_{\mathcal{P}\in\mathcal{U}^{\sigma_i}\left(P_{h,s,a_i}^{k}\right)} \mathcal{P} \mathbb{E}_{\pi\sim\zeta}\left[\overline{V}^{\pi,\ror_i}_{i,h+1}\right]\right] \notag \\
        &\overset{(\mathrm{ii})}{=} \sum_{k=1}^K\alpha_k^Kr_{i,h}^{\widehat{\pi}^k}+\sum_{k=1}^K \alpha_{k}^K\underline{P}_{i,h}^{\widehat{\pi}^k,V}\mathbb{E}_{\pi\sim\zeta}\left[V_{i,h+1}^{\pi,\ror_i}\right]-\sum_{k=1}^K\alpha_k^K\overline{r}_{i,h}^{\widehat{\pi}^k}-\sum_{k=1}^K \alpha_{k}^K\underline{P}_{i,h}^{k, \zeta, \overline{V}}\mathbb{E}_{\pi\sim\zeta}\left[\overline{V}_{i,h+1}^{\pi,\ror_i}\right] \label{eq:no-name-1} \\
        &=\sum_{k=1}^K\alpha_k^K\bigg[\left(r_{i,h}^{\widehat{\pi}^k}-\overline{r}_{i,h}^{\widehat{\pi}^k}\right)+\left(\underline{P}_{i,h}^{\widehat{\pi}^k,V}\mathbb{E}_{\pi\sim\zeta}[V_{i,h+1}^{\pi,\ror_i}]-\underline{P}_{i,h}^{\widehat{\pi}^k,\overline{V}}\mathbb{E}_{\pi\sim\zeta}\left[\overline{V}_{i,h+1}^{\pi,\ror_i}\right]\right) \notag \\
        &\quad 
        +\left(\underline{P}_{i,h}^{\widehat{\pi}^k,\overline{V}}\mathbb{E}_{\pi\sim\zeta}\left[\overline{V}_{i,h+1}^{\pi,\ror_i}\right]-\underline{P}_{i,h}^{k, \zeta, \overline{V}}\mathbb{E}_{\pi\sim\zeta}\left[\overline{V}_{i,h+1}^{\pi,\ror_i}\right]\right)\bigg]\notag\\
        &\leq\sum_{k=1}^K\alpha_k^K\left(\underline{P}_{i,h}^{\widehat{\pi}^k,\overline{V}}\mathbb{E}_{\pi\sim\zeta}\left[V_{i,h+1}^{\pi,\ror_i}\right]-\underline{P}_{i,h}^{\widehat{\pi}^k,\overline{V}}\mathbb{E}_{\pi\sim\zeta}\left[\overline{V}_{i,h+1}^{\pi,\ror_i}\right]\right)\notag\\
        &\quad +\underbrace{\sum_{k=1}^K\alpha_k^K\left[\left|r_{i,h}^{\widehat{\pi}^k}-\overline{r}_{i,h}^{\widehat{\pi}^k}\right|+\left|\underline{P}_{i,h}^{\widehat{\pi}^k,\overline{V}}\mathbb{E}_{\pi\sim\zeta}\left[\overline{V}_{i,h+1}^{\pi,\ror_i}\right]-\underline{P}_{i,h}^{k, \zeta, \overline{V}}\mathbb{E}_{\pi\sim\zeta}\left[\overline{V}_{i,h+1}^{\pi,\ror_i}\right]\right|\right]}_{:=u_{i,h}^\zeta}\label{eq:recursion_decomposition}.
    \end{align}
    To proceed, applying \eqref{eq:recursion_decomposition} recursively leads to
    \begin{align}
    \label{eq:matrix_multiply_recursion}
        &\mathbb{E}_{\pi\sim\zeta}\left[V^{\pi,\ror_i}_{i,h}\right]-\mathbb{E}_{\pi\sim\zeta}\left[\overline{V}^{\pi,\ror_i}_{i,h}\right]\leq\sum_{j=h}^H\left[\prod_{r=h}^{j-1}\left(\sum_{k=1}^K\alpha_k^K\underline{P}_{i,r}^{\widehat{\pi}^k,\overline{V}}\right)\right]u_{i,j}^{\zeta},
    \end{align}
    where the inequality follows from the abuse of notation described below:
    \begin{align*}
        &\left[\prod_{r=h}^{h-1}\left(\sum_{k=1}^K\alpha_k^K\underline{P}_{i,r}^{\widehat{\pi}^k,\overline{V}}\right)\right]=I, \notag \\
        &\left[\prod_{r=h}^{j-1}\left(\sum_{k=1}^K\alpha_k^K\underline{P}_{i,r}^{\widehat{\pi}^k,\overline{V}}\right)\right]=\left(\sum_{k=1}^K\alpha_k^K\underline{P}_{i,h}^{\widehat{\pi}^k,\overline{V}}\right)\cdot\left(\sum_{k=1}^K\alpha_k^K\underline{P}_{i,h+1}^{\widehat{\pi}^k,\overline{V}}\right)\cdots\left(\sum_{k=1}^K\alpha_k^K\underline{P}_{i,j-1}^{\widehat{\pi}^k,\overline{V}}\right), \quad for j= h+1, \cdots, H.
    \end{align*}
    Conversely, we can achieve the following in a manner analogous to \eqref{eq:recursion_decomposition}, 
    \begin{align}
        &\mathbb{E}_{\pi\sim\zeta}\left[\overline{V}^{\pi,\ror_i}_{i,h}\right]-\mathbb{E}_{\pi\sim\zeta}\left[V^{\pi,\ror_i}_{i,h}\right] \notag\\
        =& \sum_{k=1}^K\alpha_k^K\overline{r}_{i,h}^{\widehat{\pi}^k}+\sum_{k=1}^K\alpha_k^K\underline{P}_{i,h}^{k, \zeta, \overline{V}}\mathbb{E}_{\pi\sim\zeta}\left[\overline{V}_{i,h+1}^{\pi,\ror_i}\right]-\sum_{k=1}^K\alpha_k^Kr_{i,h}^{\widehat{\pi}^k}-\sum_{k=1}^K\alpha_k^K\underline{P}_{i,h}^{\widehat{\pi}^k,V}\mathbb{E}_{\pi\sim\zeta}\left[V_{i,h+1}^{\pi,\ror_i}\right] \notag\\
        =&\sum_{k=1}^K\alpha_k^K\Bigg[\left(\overline{r}_{i,h}^{\widehat{\pi}^k}-r_{i,h}^{\widehat{\pi}^k}\right)+\left(\underline{P}_{i,h}^{k, \zeta, \overline{V}}\mathbb{E}_{\pi\sim\zeta}\left[\overline{V}_{i,h+1}^{\pi,\ror_i}\right]-\underline{P}_{i,h}^{\widehat{\pi}^k,\overline{V}}\mathbb{E}_{\pi\sim\zeta}\left[\overline{V}_{i,h+1}^{\pi,\ror_i}\right]\right)\\
        &\quad +\left(\underline{P}_{i,h}^{\widehat{\pi}^k,\overline{V}}\mathbb{E}_{\pi\sim\zeta}\left[\overline{V}_{i,h+1}^{\pi,\ror_i}\right]-\underline{P}_{i,h}^{\widehat{\pi}^k,V}\mathbb{E}_{\pi\sim\zeta}\left[V_{i,h+1}^{\pi,\ror_i}\right]\right)\Bigg] \notag\\
        \overset{(\mathrm{i})}{\leq}&\sum_{k=1}^K\alpha_k^K\left(\underline{P}_{i,h}^{\widehat{\pi}^k,V}\mathbb{E}_{\pi\sim\zeta}\left[\overline{V}_{i,h+1}^{\pi,\ror_i}\right]-\underline{P}_{i,h}^{\widehat{\pi}^k,V}\mathbb{E}_{\pi\sim\zeta}\left[V_{i,h+1}^{\pi,\ror_i}\right]\right) \notag\\
        &\quad +\sum_{k=1}^K\alpha_k^K\left[\left|r_{i,h}^{\widehat{\pi}^k}-\overline{r}_{i,h}^{\widehat{\pi}^k}\right|+\left|\underline{P}_{i,h}^{\widehat{\pi}^k,\overline{V}}\mathbb{E}_{\pi\sim\zeta}\left[\overline{V}_{i,h+1}^{\pi,\ror_i}\right]-\underline{P}_{i,h}^{k, \zeta, \overline{V}}\mathbb{E}_{\pi\sim\zeta}\left[\overline{V}_{i,h+1}^{\pi,\ror_i}\right]\right|\right] \notag \\
        \leq &\sum_{j=h}^H\left[\prod_{r=h}^{j-1}\left(\sum_{k=1}^K\alpha_k^K\underline{P}_{i,r}^{\widehat{\pi}^k,V}\right)\right]u_{i,j}^{\zeta}, \label{eq:matrix_multiply_recursion_reverse}
    \end{align}
    where (i) holds by the fact $\underline{P}_{i,h}^{\widehat{\pi}^k,\overline{V}}\mathbb{E}_{\pi\sim\zeta}\left[\overline{V}_{i,h+1}^{\pi,\ror_i}\right]\leq \underline{P}_{i,h}^{\widehat{\pi}^k,V}\mathbb{E}_{\pi\sim\zeta}\left[\overline{V}_{i,h+1}^{\pi,\ror_i}\right]$. 
    Summing up the results in \eqref{eq:matrix_multiply_recursion_reverse} and \eqref{eq:matrix_multiply_recursion}, one has
    \begin{align}
        \left|\mathbb{E}_{\pi\sim\zeta}\left[\overline{V}^{\pi,\ror_i}_{i,h}\right]-\mathbb{E}_{\pi\sim\zeta}\left[V^{\pi,\ror_i}_{i,h}\right]\right|&\leq \max\left\{\mathbb{E}_{\pi\sim\zeta}\left[V^{\pi,\ror_i}_{i,h}\right]-\mathbb{E}_{\pi\sim\zeta}\left[\overline{V}^{\pi,\ror_i}_{i,h}\right],\mathbb{E}_{\pi\sim\zeta}\left[\overline{V}^{\pi,\ror_i}_{i,h}\right]-\mathbb{E}_{\pi\sim\zeta}\left[V^{\pi,\ror_i}_{i,h}\right]\right\} \notag\\
       &\leq\max\left\{\sum_{j=h}^H\left[\prod_{r=h}^{j-1}\left(\sum_{k=1}^K\alpha_k^K\underline{P}_{i,r}^{\widehat{\pi}^k,\overline{V}}\right)\right]u_{i,j}^{\zeta},\sum_{j=h}^H\left[\prod_{r=h}^{j-1}\left(\sum_{k=1}^K\alpha_k^K\underline{P}_{i,r}^{\widehat{\pi}^k,V}\right)\right]u_{i,j}^{\zeta}\right\},\label{eq:recursion_decomposition_target}
    \end{align}
    where the max operator is taken entry-wise for vectors.

\paragraph{Step 2: controlling $u_{i,j}^{\zeta}$.}
Towards this, we introduce the following two lemmas in terms of estimation error on transition kernels and rewards, respectively.

\begin{lemma}\label{lemma:tv-dro-b-bound-star-marl}
    Let $\delta \in (0,1)$ and consider any $(h,i,k) \in [H] \times [n] \times [K]$. When $N\geq  \log \left(\frac{2S \sum_{i=1}^n A_i N K n}{\delta}\right)$, with a probability of at least $1 - \delta$, for any fixed value vector $V \in \mathbb{R}^S$ with $0 \leq V(s) \leq H$ for all $s \in \mathcal{S}$, we have
    \begin{align*}
        & \left|\underline{P}_{i,h}^{\widehat{\pi}^k,\overline{V}}\mathbb{E}_{\pi\sim\zeta}\left[\overline{V}_{i,h+1}^{\pi,\ror_i}\right]-\underline{P}_{i,h}^{k, \zeta, \overline{V}}\mathbb{E}_{\pi\sim\zeta}\left[\overline{V}_{i,h+1}^{\pi,\ror_i}\right]\right|  \notag \\
         & \leq 2  \sqrt{\frac{\log\left(\frac{18S\sum_{i=1}^nA_iNHK}{\delta}\right)}{N}} \sqrt{\mathsf{Var}_{\underline{P}_{h}^{\widehat{\pi}^k}}\left(\mathbb{E}_{\pi\sim\zeta}\left[\overline{V}_{i,h+1}^{\pi,\ror_i}\right]\right)}  + \frac{\log\left(\frac{18S\sum_{i=1}^nA_iNHK}{\delta}\right)}{N} 1\\
        & \leq  3 \sqrt{\frac{H^2 \log\left(\frac{18S\sum_{i=1}^nA_iNKH}{\delta}\right)}{N}}1,
    \end{align*}
    where $\mathsf{Var}_{\underline{P}_{h}^{\widehat{\pi}^k}}(\cdot)$ was defined in \eqref{eq:defn-variance-vector-marl}. 
    \end{lemma}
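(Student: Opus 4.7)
The plan is to reduce the claim to a uniform Bernstein-type concentration bound on a single scalar-valued inner product, via the strong duality for TV-ball robust optimization. Observe first that conditioned on the randomness used to build the empirical kernels at future time steps $h+1,\ldots,H$ (which determines $\overline{V}^{\pi,\sigma_i}_{i,h+1}$ for each $\pi$ in the support of $\zeta$), the fresh samples drawn inside Algorithm~\ref{alg:sampling-function} at time step $h$ for agent $i$ are i.i.d.\ copies of $(\ba_{-i},s') \sim \widehat{\pi}^k_{-i,h}(\cdot\mymid s)\otimes P^0_h(\cdot\mymid s,(a_i,\ba_{-i}))$. In particular, the empirical transition $P^k_{i,h,s,a_i}$ is an unbiased $N$-sample estimator of the mean transition $P^{\widehat{\pi}^k_{-i}}_{h,s,a_i}$. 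This is the reason we can treat the target value vector $V \defn \mathbb{E}_{\pi\sim\zeta}[\overline{V}^{\pi,\sigma_i}_{i,h+1}]\in[0,H]^S$ as ``fixed'' for the concentration step.

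Next, I would invoke strong duality for the TV uncertainty set (see, e.g., \citet{shi2023curious}): for any probability vector $Q\in\Delta(\cS)$ and any $W\in[0,H]^S$,
\begin{align*}
\inf_{\cP\in\cU^{\sigma_i}(Q)} \cP W \;=\; \max_{\alpha\in[\min_s W(s),\max_s W(s)]} \Big\{ Q[W]_\alpha - \sigma_i\big(\alpha-\min_{s'}[W]_\alpha(s')\big)\Big\}.
\end{align*}
Applying this identity to both $Q = P^{\widehat{\pi}^k_{-i}}_{h,s,a_i}$ and $Q = P^k_{i,h,s,a_i}$ with $W=V$, and using that the $\sigma_i$ penalty does not depend on $Q$, I would obtain
\begin{align*}
\Big| P^{\widehat{\pi}^k_{-i},\overline V}_{i,h,s,a_i} V - P^{k,\zeta,\overline V}_{i,h,s,a_i} V \Big| \;\le\; \sup_{\alpha\in[0,H]} \Big|\big(P^k_{i,h,s,a_i}-P^{\widehat{\pi}^k_{-i}}_{h,s,a_i}\big)[V]_\alpha\Big|.
\end{align*}
Since the inner product is $1$-Lipschitz in $\alpha$ and $\alpha$ ranges over a bounded interval, a standard $\tfrac{1}{NH}$-cover over $\alpha$ of size $O(NH^2)$ reduces the supremum to a finite maximum at negligible discretization cost (absorbed into the $\log$).

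For each fixed $\alpha$ on the cover, I would apply Bernstein's inequality to the average of $N$ bounded i.i.d.\ random variables $[V]_\alpha(s^t_{s,a_i})\in[0,H]$, yielding
\begin{align*}
\Big|\big(P^k_{i,h,s,a_i}-P^{\widehat{\pi}^k_{-i}}_{h,s,a_i}\big)[V]_\alpha\Big| \;\lesssim\; \sqrt{\frac{\mathsf{Var}_{P^{\widehat{\pi}^k_{-i}}_{h,s,a_i}}([V]_\alpha)\,\iota}{N}} \;+\; \frac{H\,\iota}{N},
\end{align*}
with $\iota = \log(S\sum_i A_i NHK/\delta)$. Then I would use the elementary fact $\mathsf{Var}_Q([V]_\alpha)\le \mathsf{Var}_Q(V)$ (clipping is a contraction) together with $\mathsf{Var}_Q(V)\le H^2$ to obtain both stated bounds: the first (variance-aware) bound arises by keeping $\mathsf{Var}_{P^{\widehat{\pi}^k_{-i}}_{h,s,a_i}}(V)$ explicit and noting $\underline P^{\widehat{\pi}^k}_h$ merely averages these entries against $\widehat\pi^k_{i,h}$; the second (crude) bound uses $\mathsf{Var}\le H^2$ directly. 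A final union bound over $(s,a_i,h,k,i)$ and the $\alpha$-cover yields the logarithmic factor stated. The assumption $N\ge \log(2S\sum_i A_i NKn/\delta)$ ensures the low-order $H\iota/N$ term is dominated by the square-root term when $V$ is bounded by $H$, giving the clean $3\sqrt{H^2\iota/N}$ upper bound.

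The main technical obstacle is ensuring the uniformity of the bound over $\alpha$ and $V$: while $V=\mathbb{E}_{\pi\sim\zeta}[\overline{V}^{\pi,\sigma_i}_{i,h+1}]$ is independent of the step-$h$ samples (by backward-induction), it is still a random vector depending on the algorithm's earlier iterations at times $>h$. The cleanest fix is to first prove the concentration event for all fixed $V$ on an $\epsilon$-net of $[0,H]^S$ (size $O((H/\epsilon)^S)$), absorb the resulting $S\log(H/\epsilon)$ into $\iota$, and then invoke $1$-Lipschitzness in $V$ to lift the bound to arbitrary $V\in[0,H]^S$. This is why the $\log$ factor contains $S$ in the denominator inside the $\log$.
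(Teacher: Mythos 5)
Your core argument is essentially the paper's proof: strong duality for the TV ball reduces the difference of two robust expectations to $\sup_\alpha |(P^k_{i,h,s,a_i}-P^{\widehat\pi^k_{-i}}_{h,s,a_i})[V]_\alpha|$ via $1$-Lipschitzness of the max, Bernstein's inequality handles each fixed $\alpha$, a $\Theta(H\log(\cdot)/N)$-net over $\alpha$ (of size $O(N)$) plus $1$-Lipschitzness in $\alpha$ gives uniformity, the contraction $\mathsf{Var}_Q([V]_\alpha)\le\mathsf{Var}_Q(V)$ removes the clipping, and left-multiplication by $\Pi_h^{\widehat\pi^k_i}$ together with Lemma~\ref{lm:weight_variance} yields the vector-form variance bound. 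Your opening observation — that the step-$h$ samples are conditionally independent of $\mathbb{E}_{\pi\sim\zeta}[\overline{V}^{\pi,\sigma_i}_{i,h+1}]$ because the recursion runs backward in $h$ — is exactly the justification the paper relies on for treating $V$ as fixed.

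The one misstep is your closing paragraph. The $\epsilon$-net over $V\in[0,H]^S$ is not needed: conditioning on the $\sigma$-algebra generated by steps $h+1,\dots,H$ (which you already invoked) makes $V$ deterministic relative to the fresh step-$h$ samples, so the fixed-$V$ Bernstein bound applies directly. Moreover, if you did cover $[0,H]^S$, the union bound would contribute $S\log(H/\epsilon)$ \emph{multiplying} the log term rather than sitting inside it, inflating the bound by a factor of order $\sqrt{S}$ and ultimately the sample complexity by a factor of $S$ — inconsistent with the stated $\log\bigl(\tfrac{18S\sum_i A_i NHK}{\delta}\bigr)$. The $S\sum_i A_i$ inside the paper's logarithm comes from the union bound over the $(s,a_i)$ pairs (and the $N$ from the $\alpha$-net), not from covering value-function space. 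Drop that paragraph and the proof is correct as you wrote it.
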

    
    \begin{proof}
        See Appendix~\ref{proof:lemma:tv-dro-b-bound-star-marl}.
    \end{proof}
    \begin{lemma}\label{lm:reward_concentration}
    There exists some constant $c_r$ such that for any $(h,i) \in [H] \times [n]$, with probability at least $1 - \delta$, one has
    \begin{align*}
        \left| \sum_{k=1}^K \alpha_k^K r_{i,h}^{\widehat{\pi}^k} - \sum_{k=1}^K \alpha_k^K \overline{r}_{i,h}^{\widehat{\pi}^k} \right| \leq c_r \sqrt{\frac{\log\left(\frac{KS}{\delta}\right)}{K}}1.
    \end{align*}
    \end{lemma}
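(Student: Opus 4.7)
\textbf{Proof plan for Lemma~\ref{lm:reward_concentration}.}
The plan is to view the error as a weighted martingale sum and apply Azuma--Hoeffding. For each time step $h$ and iteration $k$, let $\mathcal{F}_{k-1}$ be the $\sigma$-algebra generated by all samples drawn in the $N$-sample estimation step at time step $h$ during iterations $1,\ldots,k-1$. By construction of Algorithm~\ref{alg:summary}, the policy $\widehat{\pi}^k$---equal to either $\pi_h^k$ (for bounding $C$) or $\widetilde{\pi}_i^\star \times \pi_{-i,h}^k$ (for bounding $A$)---is $\mathcal{F}_{k-1}$-measurable, whereas the fresh samples used to form $r_{i,h}^k$ at iteration $k$ are independent of $\mathcal{F}_{k-1}$ with $\ba_{-i}^t(s,a_i)\sim \pi_{-i,h}^k(\cdot\mymid s) = \widehat{\pi}_{-i,h}^k(\cdot\mymid s)$. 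Since $\widehat{\pi}_h^k$ is a product policy, it follows that
\[
\mathbb{E}\!\left[\overline{r}_{i,h}^{\widehat{\pi}^k}(s)\,\Big|\,\mathcal{F}_{k-1}\right] = \sum_{a_i}\widehat{\pi}_{i,h}^k(a_i\mymid s)\,\mathbb{E}_{\ba_{-i}\sim \widehat{\pi}_{-i,h}^k(\cdot\mymid s)}\!\left[r_{i,h}(s,(a_i,\ba_{-i}))\right] = r_{i,h}^{\widehat{\pi}^k}(s).
\]

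\textbf{Main argument.} Fix $s\in\mathcal{S}$ and define the martingale differences $X_k(s) := \alpha_k^K\big(\overline{r}_{i,h}^{\widehat{\pi}^k}(s) - r_{i,h}^{\widehat{\pi}^k}(s)\big)$ adapted to $\{\mathcal{F}_k\}$. Because $r_{i,h}\in[0,1]$, both $\overline{r}_{i,h}^{\widehat{\pi}^k}(s)$ and $r_{i,h}^{\widehat{\pi}^k}(s)$ lie in $[0,1]$, so $|X_k(s)|\leq \alpha_k^K$. The Azuma--Hoeffding inequality then yields, for any $\delta'\in(0,1)$,
\[
\mathbb{P}\!\left(\Big|\sum_{k=1}^K X_k(s)\Big| > \sqrt{2\log(2/\delta')\sum_{k=1}^K (\alpha_k^K)^2}\right)\le \delta'.
\]
By Lemma~\ref{lem:weight}, $\max_{1\le k\le K}\alpha_k^K \leq \frac{2c_\alpha \log K}{K}$ while $\sum_{k=1}^K \alpha_k^K = 1$, giving
\[
\sum_{k=1}^K (\alpha_k^K)^2 \;\leq\; \Big(\max_{k}\alpha_k^K\Big)\sum_{k=1}^K \alpha_k^K \;\leq\; \frac{2c_\alpha \log K}{K}.
\]
Taking $\delta' = \delta/S$ and applying a union bound over $s\in\mathcal{S}$, we conclude that with probability at least $1-\delta$,
\[
\max_{s\in\mathcal{S}}\Big|\sum_{k=1}^K \alpha_k^K\big(\overline{r}_{i,h}^{\widehat{\pi}^k}(s) - r_{i,h}^{\widehat{\pi}^k}(s)\big)\Big| \;\leq\; c_r\sqrt{\frac{\log(KS/\delta)}{K}}
\]
for some absolute constant $c_r>0$, which is the desired statement (in entrywise absolute value).

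\textbf{Expected obstacle.} The argument is largely routine, and the only subtle point is the filtration bookkeeping: one must confirm that $\widehat{\pi}^k$ is measurable with respect to the past-iteration samples and that the marginal $\widehat{\pi}_{-i,h}^k$ coincides with the sampling distribution $\pi_{-i,h}^k$ in both use cases from Section~\ref{sec:upper_bound_estimation_error}. Once this measurability is verified, so that the conditional mean equals $r_{i,h}^{\widehat{\pi}^k}(s)$, the remainder is a direct application of Azuma--Hoeffding together with the learning-rate bookkeeping of Lemma~\ref{lem:weight}. Note that unlike Lemma~\ref{lemma:tv-dro-b-bound-star-marl}, we do not need to exploit the $N$ inner samples for a tighter Bernstein-type concentration, since the target rate $\sqrt{\log(KS/\delta)/K}$ is independent of $N$.
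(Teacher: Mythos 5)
Your proof is correct and follows the same basic strategy as the paper's: treat the weighted sum over $k$ as a martingale (the fresh $N$-sample batch at iteration $k$ is drawn after $\pi_{-i,h}^k$ is fixed, so the conditional mean of $\overline{r}_{i,h}^{\widehat{\pi}^k}(s)$ is $r_{i,h}^{\widehat{\pi}^k}(s)$), exploit $\sum_k \alpha_k^K = 1$ together with $\max_k \alpha_k^K \lesssim \log K / K$ from Lemma~\ref{lem:weight}, and union-bound over $s \in \cS$. The one substantive difference is the concentration tool: the paper invokes the Freedman-type inequality of Theorem~\ref{thm:Freedman} with the conditional variance $W_K$, whereas you use Azuma--Hoeffding with the crude bound $|X_k(s)| \leq \alpha_k^K$. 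Since the rewards are bounded by $1$ and the paper never actually exploits smallness of its variance term (it is bounded by a constant in the end), your route is simpler and loses nothing; both arguments land at $O(\sqrt{\mathrm{polylog}/K})$, matching the stated bound up to the same mild logarithmic slack that the paper's own proof tolerates (your $\sqrt{\log K \cdot \log(S/\delta)}$ versus their $\sqrt{\log^3(K/\delta)}$, both absorbed into $\widetilde{O}$ downstream).

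On the point you flag as needing confirmation: measurability of $\widehat{\pi}_{i,h}^k$ with respect to $\cF_{k-1}$ is clear when $\widehat{\pi}_h^k = \pi_h^k$, but in the term-$A$ instantiation $\widehat{\pi}_{i,h}^k = \widetilde{\pi}_{i,h}^\star$ is a best response to the full output distribution $\widehat{\xi}$ and therefore depends on future iterations, so it is \emph{not} $\cF_{k-1}$-measurable. The clean fix is to note that these weights do not depend on $k$, pull the average over $a_i$ outside the sum over $k$, run your Azuma argument for each fixed $a_i \in \cA_i$ (for which the summands genuinely form a martingale difference sequence), and union-bound over $a_i$ as well as $s$, at the cost of an extra $\log A_i$ inside the logarithm. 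The paper's own proof glosses over the same issue, so this is a shared loose end rather than a defect specific to your argument.
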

    
    \begin{proof}
        See Appendix~\ref{sec:lm_reward_concentration}.
    \end{proof}

Armed with Lemma~\ref{lemma:tv-dro-b-bound-star-marl} and Lemma~\ref{lm:reward_concentration}, to control $u_{i,h}^\zeta$ in \eqref{eq:recursion_decomposition}, one has for all $(i,j)\in[n]\times[H]$,
    \begin{align}
        u_{i,h}^\zeta&=\sum_{k=1}^K\alpha_k^K\left[\left|r_{i,h}^{\widehat{\pi}^k}-\overline{r}_{i,h}^{\widehat{\pi}^k}\right|+\left|\underline{P}_{i,h}^{\widehat{\pi}^k,\overline{V}}\mathbb{E}_{\pi\sim\zeta}\left[\overline{V}_{i,h+1}^{\pi,\ror_i}\right]-\underline{P}_{i,h}^{k, \zeta, \overline{V}}\mathbb{E}_{\pi\sim\zeta}\left[\overline{V}_{i,h+1}^{\pi,\ror_i}\right]\right|\right] \notag \\
       &\leq2\sum_{k=1}^K\alpha_k^K\sqrt{\frac{\log\left(\frac{18S\sum_{i=1}^nA_iKNnH}{\delta}\right)}{N}}\sqrt{\mathsf{Var}_{\underline{P}_{h}^{\widehat{\pi}^k}}\left(\mathbb{E}_{\pi\sim\zeta}\left[\overline{V}^\pi_{i,h+1}\right]\right)} \notag \\
       &\quad +\frac{\log\left(\frac{18S\sum_{i=1}^nA_iKNnH}{\delta}\right)}{N}1+c_r\sqrt{\frac{\log(\frac{KSnH}{\delta})}{K}}1 \label{eq:bound-of-u-ih}
    \end{align} 
holds with probability at least $1-\delta$ when $N\geq  \log \left(\frac{2S \sum_{i=1}^n A_i N K n}{\delta}\right)$

\paragraph{Step 3: Controlling the first term in \eqref{eq:recursion_decomposition_target}.}
To continue, we first focus on controlling the first term in \eqref{eq:recursion_decomposition_target}. To proceed, recalling that $e_s$ represents the standard basis vector in $S$-dimensional space associated with the $s$-th component, and defining
\begin{align}
    d_h^h \defn e_s \quad \text{and} \quad \big(d_h^j\big)^\top \defn e_s^\top \left[\prod_{r=h}^{j-1} \left(\sum_{k=1}^K \alpha_k^K \underline{P}_{i,r}^{\widehat{\pi}^k, \overline{V}}\right)\right], \quad \text{for } j = h+1, \ldots, H, \label{eq:defn-of-d}
\end{align}
which obey 
\begin{align}
d_{j}^h \cdot 1 = 1 , \quad \forall j= h, h+ 1, \cdots, H. \label{eq:property-dh}
\end{align}
With these notations in place, we have for any $s\in\cS$,
\begin{align}
     &\mathbb{E}_{\pi\sim\zeta}\left[V_{i,h}^{\pi,\ror_i}(s)\right] - \mathbb{E}_{\pi\sim\zeta}\left[\overline{V}_{i,h}^{\pi,\ror_i}(s)\right] = \left\langle e_s, \mathbb{E}_{\pi\sim\zeta}\left[V_{i,h}^{\pi,\ror_i}\right] - \mathbb{E}_{\pi\sim\zeta}\left[\overline{V}_{i,h}^{\pi,\ror_i}\right] \right\rangle \notag \\
    & \overset{\mathrm{(i)}}{\leq} \sum_{j=h}^H \left\langle d_h^j, u_{i,j}^\zeta \right\rangle \notag \\
    & \overset{\mathrm{(ii)}}{\leq}  \sum_{j=h}^H \left\langle d_h^j, 2 \sum_{k=1}^K \alpha_k^K \sqrt{\frac{\log\left(\frac{18S \sum_{i=1}^n A_i K N H}{\delta}\right)}{N}} \sqrt{\mathsf{Var}_{\underline{P}_{j}^{\widehat{\pi}^k}}\left( \mathbb{E}_{\pi\sim\zeta}\left[\overline{V}_{i,j+1}^{\pi,\ror_i}\right]\right)} \right\rangle \notag \\
   &\quad + \frac{\log\left(\frac{18S \sum_{i=1}^n A_i K N H}{\delta}\right)}{N}  + c_r \sqrt{\frac{\log\left(\frac{K S n H}{\delta}\right)}{K}} \notag\\
     &\leq  \frac{H \log\left(\frac{18S \sum_{i=1}^n A_i K N H}{\delta}\right)}{N} + c_r \sqrt{\frac{H^2 \log\left(\frac{K S n H}{\delta}\right)}{K}}\notag\\ 
    &\quad + \sum_{j=h}^H \left\langle d_h^j, 2 \sum_{k=1}^K \alpha_k^K \sqrt{\frac{\log\left(\frac{18S \sum_{i=1}^n A_i K N H}{\delta}\right)}{N}} \sqrt{\mathsf{Var}_{\underline{P}_{j}^{\widehat{\pi}^k}}\left( \mathbb{E}_{\pi\sim\zeta}\left[\overline{V}_{i,j+1}^{\pi,\ror_i}\right]\right)} \right\rangle.  \label{eq:first-term-sum-control}
\end{align}
where (i) holds by \eqref{eq:matrix_multiply_recursion}, and (ii) is achieved by \eqref{eq:bound-of-u-ih}

We further decompose the term of interest as follows, along with the triangle inequality:
\begin{align}
    &\mathbb{E}_{\pi\sim\zeta}\left[V_{i,h}^{\pi,\ror_i}(s)\right] - \mathbb{E}_{\pi\sim\zeta}\left[\overline{V}_{i,h}^{\pi,\ror_i}(s)\right] \notag\\
     &\leq  \frac{H \log\left(\frac{18S \sum_{i=1}^n A_i K N H}{\delta}\right)}{N} + c_r \sqrt{\frac{H^2 \log\left(\frac{K S n H}{\delta}\right)}{K}}\notag\\ 
    &\quad + \underbrace{\sum_{j=h}^H \left\langle d_h^j, 2 \sum_{k=1}^K \alpha_k^K \sqrt{\frac{\log\left(\frac{18S \sum_{i=1}^n A_i K N H}{\delta}\right)}{N}} \sqrt{\mathsf{Var}_{\underline{P}_{i,j}^{\widehat{\pi}^k,\overline{V}}}\left(\mathbb{E}_{\pi\sim\zeta}\left[\overline{V}_{i,j+1}^{\pi,\ror_i}\right]\right)} \right\rangle}_{\mathcal{B}_1} \notag\\
    & + \underbrace{\sum_{j=h}^H 2 \sum_{k=1}^K \alpha_k^K \sqrt{\frac{\log\left(\frac{18S \sum_{i=1}^n A_i K N H}{\delta}\right)}{N}} \left\langle d_h^j, \sqrt{\left|\mathsf{Var}_{\underline{P}_{i,j}^{\widehat{\pi}^k,\overline{V}}}\left(\mathbb{E}_{\pi\sim\zeta}\left[\overline{V}_{i,j+1}^{\pi,\ror_i}\right]\right) - \mathsf{Var}_{\underline{P}_{j}^{\widehat{\pi}^k}}\left(\mathbb{E}_{\pi\sim\zeta}\left[\overline{V}_{i,j+1}^{\pi,\ror_i}\right]\right)\right|} \right\rangle}_{\mathcal{B}_2}. \label{eq:key-concentration-bound2}
\end{align}

We then analyze the bounds for the terms $\mathcal{B}_1$ and $\mathcal{B}_2$ separately.

\begin{itemize}

\item Controlling $\mathcal{B}_1$. 

First, applying Lemma~\ref{lm:weight_variance} gives
\begin{align}
    \mathcal{B}_1&=\sum_{j=h}^H\left<d_h^j,2\sum_{k=1}^K\alpha_k^K\sqrt{\frac{\log\left(\frac{18S\sum_{i=1}^nA_iKNnH}{\delta}\right)}{N}}\sqrt{\mathsf{Var}_{\underline{P}_{i,j}^{\widehat{\pi}^k,\overline{V}}}\left(\mathbb{E}_{\pi\sim\zeta}\left[\overline{V}_{i,j+1}^{\pi,\ror_i}\right]\right)}\right> \notag\\
   &\leq2\sqrt{\frac{\log\left(\frac{18S\sum_{i=1}^nA_iKNnH}{\delta}\right)}{N}}\sum_{j=h}^H\left<d_h^j,\sqrt{\mathsf{Var}_{\sum_{k=1}^K\alpha_k^K\underline{P}_{i,j}^{\widehat{\pi}^k,\overline{V}}}\left(\mathbb{E}_{\pi\sim\zeta}\left[\overline{V}_{i,j+1}^{\pi,\ror_i}\right]\right)}\right> \notag\\
   &\leq2\sqrt{\frac{\log\left(\frac{18S\sum_{i=1}^nA_iKNnH}{\delta}\right)}{N}}\sqrt{H\sum_{j=h}^H\left<d_h^j,\mathsf{Var}_{\sum_{k=1}^K\alpha_k^K\underline{P}_{i,j}^{\widehat{\pi}^k,\overline{V}}}\left(\mathbb{E}_{\pi\sim\zeta}\left[\overline{V}_{i,j+1}^{\pi,\ror_i}\right]\right)\right>}, \label{eq:bound_B_1_variance}
\end{align}
where the last inequality arises from the Cauchy-Schwartz inequality. Before proceeding, we introduce the following facts for the key elements in \eqref{eq:bound_B_1_variance}.
\begin{lemma}
    \label{lem:key-lemma-reduce-H}
    Consider any $\delta\in(0,1)$. With probability at least $1-\delta$, one has for all $(h,i)\in [H] \times [n]$:
    \begin{align}
       &\sum_{j=h}^H\left<d_h^j,\mathsf{Var}_{\sum_{k=1}^K\alpha_k^K\underline{P}_{i,j}^{\widehat{\pi}^k,\overline{V}}}\left(\mathbb{E}_{\pi\sim\zeta}\left[\overline{V}_{i,j+1}^{\pi,\ror_i}\right]\right)\right>  \notag\\
      \leq &3H \left(\max_{s\in\mathcal{S}}\mathbb{E}_{\pi\sim\zeta}\left[\overline{V}_{i,h+1}^\pi(s)\right] - \min_{s\in\mathcal{S}}\mathbb{E}_{\pi\sim\zeta}\left[\overline{V}_{i,h+1}^\pi(s)\right]\right) \left(1 + 2H\sqrt{\frac{\log(\frac{18S\sum_{i=1}^nA_inKNH}{\delta})}{N}} \right).
    \end{align}
\end{lemma}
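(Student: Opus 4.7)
The argument structurally parallels Lemma~\ref{lem:key-lemma-reduce-H-3}, which proves the same inequality with the nominal-center worst-case transition $\underline{P}^{\widehat{\pi}^k,\overline{V}}_{i,j}$ replaced by the empirical-center one $\underline{P}^{k,\widehat{\xi},\overline{V}}_{i,j}$. Writing $V_{j+1}:=\mathbb{E}_{\pi\sim\zeta}[\overline{V}^{\pi,\ror_i}_{i,j+1}]$, $y_j:=\sum_{k=1}^K\alpha_k^K\underline{P}^{\widehat{\pi}^k,\overline{V}}_{i,j}$, and $y'_j:=\sum_{k=1}^K\alpha_k^K\underline{P}^{k,\widehat{\xi},\overline{V}}_{i,j}$ (both row-stochastic), the extra $(1+2H\sqrt{\log(\cdot)/N})$ factor in the target accounts for the concentration cost of swapping the two uncertainty centers: Lemma~\ref{lemma:tv-dro-b-bound-star-marl} gives $\|y_j V_{j+1}-y'_j V_{j+1}\|_\infty\lesssim H\sqrt{\log(\cdot)/N}$.

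The plan is to execute the telescoping strategy of Lemma~\ref{lem:key-lemma-reduce-H-3} on the nominal side. Expand $\mathsf{Var}_{y_j}(V_{j+1})=y_j(V_{j+1}\circ V_{j+1})-(y_j V_{j+1})\circ(y_j V_{j+1})$ and use the approximate Bellman identity $y_j V_{j+1}=V_j-r_j+\epsilon_j$ (derived from the Bellman equation for $V$ in \eqref{eq:auxiliary-V-hat-2} combined with Lemma~\ref{lemma:tv-dro-b-bound-star-marl}), where $r_j:=\sum_k\alpha_k^K\overline{r}^{\widehat{\pi}^k}_{i,j}$ and $\|\epsilon_j\|_\infty\lesssim H\sqrt{\log(\cdot)/N}$. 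Substituting $(y_j V_{j+1})^2=(V_j-r_j+\epsilon_j)^2$ and $\langle d_h^j,y_j(V_{j+1}\circ V_{j+1})\rangle=\langle d_h^{j+1},V_{j+1}\circ V_{j+1}\rangle$ (by the definition of $d_h^{j+1}$ in \eqref{eq:defn-of-d}), the sum $\sum_{j=h}^H\langle d_h^j,\mathsf{Var}_{y_j}(V_{j+1})\rangle$ telescopes to
\[
-V_h(s)^2+\sum_{j=h}^H\langle d_h^j,\,2V_j(r_j-\epsilon_j)-(r_j-\epsilon_j)^2\rangle.
\]
The $\epsilon$-free portion produces the leading $3H\cdot(\max_s V_{h+1}(s)-\min_s V_{h+1}(s))$ term, while the $\epsilon$-dependent portion folds into the multiplicative $2H\sqrt{\log(\cdot)/N}$ factor.

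The main obstacle is extracting the ``$H\cdot(\max-\min)$'' leading order rather than a loose ``$H\cdot H$'' bound from the telescoping residual $\sum_j\langle d_h^j,2V_j r_j\rangle$. This requires shifting $\widetilde V_j:=V_j-c$ by $c:=\min_s V_{h+1}(s)$ so that, using Lemma~\ref{lemma:pnorm-key-value-range}, $\widetilde V_j$ remains within a band of width at most $\min\{1/\ror_i,H\}$ on the support of $d_h^j$; since variance is shift-invariant, $\mathsf{Var}_{y_j}(V_{j+1})=\mathsf{Var}_{y_j}(\widetilde V_{j+1})$, and the shifted telescoping exposes the range rather than the magnitude. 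A second subtlety is that Lemma~\ref{lemma:tv-dro-b-bound-star-marl}'s variance-dependent form (its first displayed inequality) should be used in place of its uniform bound, producing a self-referential inequality $S\le A+B\sqrt{HS}$ with $S:=\sum_{j=h}^H\langle d_h^j,\mathsf{Var}_{y_j}(V_{j+1})\rangle$, $A=3H\cdot(\max-\min)$, and $B\lesssim H\sqrt{\log(\cdot)/N}$; resolving this quadratic-style inequality gives the claimed bound with the multiplicative $(1+2H\sqrt{\log(\cdot)/N})$ factor.
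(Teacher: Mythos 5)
Your proposal matches the paper's proof in all essential respects: the paper likewise shifts $\mathbb{E}_{\pi\sim\zeta}[\overline{V}^{\pi,\ror_i}_{i,j}]$ by its minimum entry (Definition~\ref{defn:lemma-15}), writes the resulting approximate Bellman identity $\overline{V}'_h=\overline{r}^{\min}_{i,h}+\sum_k\alpha_k^K\underline{P}^{\widehat{\pi}^k,\overline{V}}_{i,h}\overline{V}'_{h+1}+\epsilon_h$ with $\epsilon_h$ the kernel-swap error controlled by Lemma~\ref{lemma:tv-dro-b-bound-star-marl}, expands the variance, and telescopes against the probability vectors $d_h^j$ exactly as you describe. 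The only place you diverge is your ``second subtlety'': the paper does not need the variance-dependent form of Lemma~\ref{lemma:tv-dro-b-bound-star-marl} or a self-referential inequality --- the uniform bound $3\sqrt{H^2\log(\cdot)/N}$ on $\epsilon_j$, multiplied by $2\|\overline{V}'_h\|_\infty$ and summed over the $H$ steps, already produces the additive term $6H^2\|\overline{V}'_h\|_\infty\sqrt{\log(\cdot)/N}=3H\|\overline{V}'_h\|_\infty\cdot 2H\sqrt{\log(\cdot)/N}$, which is exactly the multiplicative factor in the statement, whereas resolving $S\le A+B\sqrt{HS}$ would instead yield an additive $O(H^3\log(\cdot)/N)$ remainder of a different form.
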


\begin{proof}
    See Appendix \ref{sec:key-lemma-reduce-H}.
\end{proof}
\begin{lemma}[Lemma 3, \citep{shi2024sample}]\label{lemma:pnorm-key-value-range_A_and_C}
    For all $(i,h) \in [n] \times [H]$, the auxiliary estimate $ \mathbb{E}_{\pi \sim \zeta}\left[\overline{V}_{i,h}^{\pi,\ror_i}\right]$ obeys
    \begin{align*}
        \max_{s \in \mathcal{S}} \mathbb{E}_{\pi \sim \zeta}\left[ \overline{V}_{i,h}^{\pi,\ror_i}(s)\right] - \min_{s \in \mathcal{S}} \mathbb{E}_{\pi \sim \zeta}\left[\overline{V}_{i,h}^{\pi,\ror_i}(s)\right] & \leq \min \left\{ \frac{1}{\sigma_i}, H-h+1 \right\}. \notag \\
         \max_{s \in \mathcal{S}} \mathbb{E}_{\pi \sim \zeta}\left[ V_{i,h}^{\pi,\ror_i}(s)\right] - \min_{s \in \mathcal{S}} \mathbb{E}_{\pi \sim \zeta}\left[V_{i,h}^{\pi,\ror_i}(s)\right] & \leq \min \left\{ \frac{1}{\sigma_i}, H-h+1 \right\}.
    \end{align*}
\end{lemma}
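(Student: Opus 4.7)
\medskip

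\noindent\textbf{Proof plan for Lemma~\ref{lemma:pnorm-key-value-range_A_and_C}.}
This statement is essentially Lemma~3 of \citet{shi2024sample}, and the plan is to adapt that argument to the two auxiliary objects $\mathbb{E}_{\pi\sim\zeta}[\overline{V}_{i,h}^{\pi,\sigma_i}]$ (defined in \eqref{eq:auxiliary-V-hat-2} via the empirical kernel $P_{i,h}^k$) and $\mathbb{E}_{\pi\sim\zeta}[V_{i,h}^{\pi,\sigma_i}]$ (defined via the nominal kernel $P_{h,s,a_i}^{\widehat{\pi}_{-i}^k}$). The trivial bound $H-h+1$ is immediate: both quantities are nonnegative convex combinations of cumulative rewards over the remaining $H-h+1$ steps, and each per-step reward lies in $[0,1]$, so the value functions themselves lie in $[0, H-h+1]$ and the range is at most $H-h+1$.

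For the nontrivial bound $1/\sigma_i$, the plan is to proceed by backwards induction on $h$, starting from the base case $h=H+1$ where both value vectors are identically zero. For the inductive step, I will apply the exact same argument as in Shi et al., keyed on the strong-duality formula \eqref{eq:nvi-iteration-dual-alg} for the TV-constrained inner minimization,
\[
\inf_{\mathcal{P}\in\mathcal{U}^{\sigma_i}(P^0)} \mathcal{P} W \;=\; \max_{\alpha\in[\min_s W(s),\,\max_s W(s)]}\Big\{ P^0 [W]_\alpha - \sigma_i\bigl(\alpha - \min_{s'}[W]_\alpha(s')\bigr)\Big\}.
\]
Fix any state $s$; for any $\alpha$, the right-hand side is bounded above by $\alpha - \sigma_i(\alpha-\min[W]_\alpha)$ and bounded below by $\min_{s'}[W]_\alpha(s')$. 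Optimizing in $\alpha$ and examining the envelope shows that as $s$ ranges over $\mathcal{S}$ with $P^0$ replaced by different rows $P^0_{s,\cdot}$, the resulting value varies by at most $(1-\sigma_i)$ times the range of $[W]_\alpha$, which itself is at most the range of $W$. Telescoping this one-step contraction across the remaining $H-h+1$ horizon yields a geometric series in $(1-\sigma_i)^t$ whose sum is $1/\sigma_i$.

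To promote this single-policy single-kernel bound to our setting, I will make two observations. First, the same dual argument applies verbatim regardless of whether the reference kernel is $P_{i,h}^k$ (for $\overline{V}$) or $P_{h,s,a_i}^{\widehat{\pi}_{-i}^k}$ (for $V$), since both are valid probability vectors on $\mathcal{S}$ and the TV ball of radius $\sigma_i$ is centered on them respectively. Second, the bound is uniform in the policy $\pi\in\mathrm{supp}(\zeta)$ and indeed uniform in the (convex) way one averages; thus taking the expectation $\mathbb{E}_{\pi\sim\zeta}[\cdot]$ commutes with the bound: if $\max_s f(s,\pi)-\min_s f(s,\pi)\le c$ for every $\pi$, then $\max_s \mathbb{E}_\pi f(s,\pi)-\min_s \mathbb{E}_\pi f(s,\pi)\le c$ as well, because the max (resp.\ min) of an expectation is dominated by the expectation of the max (resp.\ min).

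The only subtlety, which I anticipate being the main bookkeeping hurdle, is that $\mathbb{E}_{\pi\sim\zeta}[\overline{V}_{i,h}^{\pi,\sigma_i}]$ is not itself the robust value of any single policy but rather the common fixed point of the weighted robust Bellman recursion \eqref{eq:auxiliary-V-hat-2}. To handle this, I will unroll \eqref{eq:auxiliary-V-hat-2} one step at a time and apply the dual bound inside each $\inf_\mathcal{P}$ with $W=\mathbb{E}_{\pi\sim\zeta}[\overline{V}_{i,h+1}^{\pi,\sigma_i}]$, using the inductive hypothesis that $W$ has range at most $\min\{1/\sigma_i, H-h\}$. Combining the one-step reward contribution (range bounded by $\sigma_i\cdot\text{range}(W)$ absorbed by a factor $(1-\sigma_i)$ on the propagated range, plus the base increment from the reward bounded by $1$) and summing the resulting geometric series $\sum_{t\ge 0}(1-\sigma_i)^t = 1/\sigma_i$ closes the induction for both value vectors simultaneously and completes the proof.
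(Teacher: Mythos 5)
Your plan is correct and is exactly the argument the paper intends: the paper's own "proof" of this lemma is a one-line deferral to the pipeline of Lemma~3 of \citet{shi2024sample}, namely the backward induction combining the trivial bound from bounded rewards with the one-step span contraction $\mathrm{span}(W_h)\leq 1+(1-\sigma_i)\,\mathrm{span}(W_{h+1})$ obtained from the TV dual form (upper bound $(1-\sigma_i)\max W+\sigma_i\min W$, lower bound $\min W$, both uniform over the reference row), followed by summing the geometric series. Your handling of the two subtleties — that span is subadditive under the convex combinations over $k$ and $a_i$ in the recursion \eqref{eq:auxiliary-V-hat-2}, and that the bound survives the expectation over $\zeta$ — is also the right bookkeeping, so no gap remains.
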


\begin{proof}
    Similar to Lemma~\ref{lemma:pnorm-key-value-range}, Lemma~\ref{lemma:pnorm-key-value-range_A_and_C} can be directly verified by following the same proof pipeline of \citep[Lemma 3]{shi2024sample}.
\end{proof}
Apply Lemma~\ref{lem:key-lemma-reduce-H} to \eqref{eq:bound_B_1_variance}, we arrive at
\begin{align}
    \mathcal{B}_1&\leq2\sqrt{\frac{\log\left(\frac{18S\sum_{i=1}^nA_iKNnH}{\delta}\right)}{N}}\sqrt{H\sum_{j=h}^H\left<d_h^j,\mathsf{Var}_{\sum_{k=1}^K\alpha_k^K\underline{P}_{i,j}^{\widehat{\pi}^k,\overline{V}}}\left(\mathbb{E}_{\pi \sim \zeta}\left[\overline{V}_{i,j+1}^{\pi,\ror_i}\right]\right)\right>}\notag\\
   &\leq\sqrt{3H^2 \left(\max_{s\in\mathcal{S}}\mathbb{E}_{\pi \sim \zeta}\left[\overline{V}_{i,h+1}^\pi(s)\right] - \min_{s\in\mathcal{S}}\mathbb{E}_{\pi \sim \zeta}\left[\overline{V}_{i,h+1}^\pi(s)\right]\right)}\notag\\
   &\cdot2\sqrt{\frac{\log\left(\frac{18S\sum_{i=1}^nA_iKNnH}{\delta}\right)}{N}\cdot\left(1 + 2H\sqrt{\frac{\log(\frac{18S\sum_{i=1}^nA_inKNH}{\delta})}{N}} \right)}\notag\\
    \overset{(\mathsf{i})}{\leq}&2\sqrt{\frac{\log\left(\frac{18S\sum_{i=1}^nA_inKNnH}{\delta}\right)}{N}}\sqrt{3H^2 \min\left\{\frac{1}{\sigma_i},H-h+1\right\}\left(1 + 2H\sqrt{\frac{\log(\frac{18S\sum_{i=1}^nA_inKNH}{\delta})}{N}} \right)}\notag\\
   &\leq6\sqrt{\frac{H^2\min\{1/\sigma_i,H\}\log\left(\frac{18S\sum_{i=1}^nA_iKNnH}{\delta}\right)}{N}}\label{eq:key-concentration-bound2-solve1},
\end{align}
where (i) holds by  Lemma~\ref{lemma:pnorm-key-value-range_A_and_C}, and the final inequality follows by taking $N \geq 4H^2\log\left(\frac{18S\sum_{i=1}^nA_iKnNH}{\delta}\right)$.

\item Controlling $\mathcal{B}_2$.
Initially, we introduce the following fact that can be verifeid similar to Lemma~\ref{eq:extra-lemma1}.
\begin{lemma}\label{eq:extra-lemma1_A_and_C}
            For transition kernel $P' \in \mathbb{R}^S$ and any $\widetilde{P} \in \mathbb{R}^S$ such that $\widetilde{P} \in \mathcal{U}^{\sigma_i}\left(P^\prime\right)$, the following bounds are established for all $(i,h)\in[n]\times[H]$:
            \begin{align*}
                \left| \mathsf{Var}_{P'}\left(\mathbb{E}_{\pi \sim \zeta}\left[ \overline{V}_{i,h}^{\pi,\ror_i}\right]\right) - \mathsf{Var}_{\widetilde{P}}\left(\mathbb{E}_{\pi \sim \zeta}\left[ \overline{V}_{i,h}^{\pi,\ror_i}\right]\right)\right| \leq \min \left\{\frac{1}{\sigma_i}, H-h+1 \right\}.
            \end{align*}
            \end{lemma}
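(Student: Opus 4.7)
The plan is to reduce Lemma~\ref{eq:extra-lemma1_A_and_C} directly to the earlier Lemma~\ref{eq:extra-lemma1}, which is the single-vector analogue already established. The statement of Lemma~\ref{eq:extra-lemma1} bounds $|\mathsf{Var}_{P'}(V) - \mathsf{Var}_{\widetilde{P}}(V)|$ by $\min\{1/\sigma_i, H-h+1\}$ for \emph{any} fixed vector $V \in \mathbb{R}^S$ whose oscillation $\max_s V(s) - \min_s V(s)$ is itself at most $\min\{1/\sigma_i, H-h+1\}$. So the only thing to check for the present lemma is that the vector $V := \mathbb{E}_{\pi\sim\zeta}\big[\overline{V}_{i,h}^{\pi,\sigma_i}\big]$ satisfies this oscillation hypothesis.

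The first (and essentially only) step is to invoke Lemma~\ref{lemma:pnorm-key-value-range_A_and_C}, which states precisely that
\[
\max_{s \in \mathcal{S}} \mathbb{E}_{\pi \sim \zeta}\big[\overline{V}_{i,h}^{\pi,\sigma_i}(s)\big] - \min_{s \in \mathcal{S}} \mathbb{E}_{\pi \sim \zeta}\big[\overline{V}_{i,h}^{\pi,\sigma_i}(s)\big] \leq \min\Big\{\tfrac{1}{\sigma_i}, H-h+1\Big\}.
\]
The second step is then to apply Lemma~\ref{eq:extra-lemma1} with this choice of $V$, together with any $\widetilde{P} \in \mathcal{U}^{\sigma_i}(P')$, to conclude the desired variance difference bound. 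No additional algebra is required beyond these two invocations.

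It is worth recording, for context, the argument internal to Lemma~\ref{eq:extra-lemma1} (since the proof of Lemma~\ref{eq:extra-lemma1_A_and_C} is transparent only once that lemma is in place). The key ingredients there are: (i) variance is translation-invariant, so one can replace $V$ by $W := V - \min_s V(s)$ with $0 \le W \le w := \max V - \min V$; (ii) the decomposition $\mathsf{Var}_P(W) - \mathsf{Var}_{\widetilde{P}}(W) = [P(W \circ W) - \widetilde{P}(W \circ W)] - [(PW)^2 - (\widetilde{P}W)^2]$; (iii) the elementary bound $|P f - \widetilde{P} f| \le (\max f - \min f) \cdot \rho_{\mathsf{TV}}(P, \widetilde{P}) \le (\max f - \min f)\,\sigma_i$ applied to $f = W$ and $f = W \circ W$; and (iv) a case split on whether $\sigma_i \geq 1/(H-h+1)$, showing that the resulting bound $w^2 \sigma_i$ (up to a small constant) is dominated by $\min\{1/\sigma_i, H-h+1\}$ in both regimes.

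The main ``obstacle'' is therefore essentially nonexistent: Lemma~\ref{lemma:pnorm-key-value-range_A_and_C} is tailored exactly to feed into Lemma~\ref{eq:extra-lemma1}, so the present lemma amounts to a one-line combination of the two. In the writeup I would simply state ``Apply Lemma~\ref{eq:extra-lemma1} with $V = \mathbb{E}_{\pi\sim\zeta}[\overline{V}_{i,h}^{\pi,\sigma_i}]$, whose range is controlled by Lemma~\ref{lemma:pnorm-key-value-range_A_and_C}'' and conclude.
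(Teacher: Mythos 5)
Your proposal is correct and matches the paper's intent exactly: the paper itself states this lemma "can be verified similar to Lemma~\ref{eq:extra-lemma1}," and since Lemma~\ref{eq:extra-lemma1} is already stated for an arbitrary vector $V$ with oscillation at most $\min\{1/\sigma_i, H-h+1\}$, feeding in $V = \mathbb{E}_{\pi\sim\zeta}\big[\overline{V}_{i,h}^{\pi,\sigma_i}\big]$ together with the range bound of Lemma~\ref{lemma:pnorm-key-value-range_A_and_C} is precisely the intended one-line argument. Your recap of the internal mechanics of Lemma~\ref{eq:extra-lemma1} (translation invariance, the TV bound on the variance difference, and $\sigma_i w^2 \le w$ when $w \le 1/\sigma_i$) is also consistent with the paper's proof.
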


It is observed that the main term of $\mathcal{B}_2$ in \eqref{eq:key-concentration-bound2} obeys
\begin{align}
	&\left| \mathsf{Var}_{\underline{P}_{j}^{\widehat{\pi}^k}}\left(\mathbb{E}_{\pi \sim \zeta}\left[\overline{V}_{i,j+1}^{\pi,\ror_i}\right]\right) - \mathsf{Var}_{\underline{P}_{i,j}^{\widehat{\pi}^k, \overline{V}}}\left(\mathbb{E}_{\pi \sim \zeta}\left[\overline{V}_{i,j+1}^{\pi,\ror_i}\right]\right)\right|\notag\\
&	 \overset{\mathrm{(i)}}{\leq} \left\| \mathsf{Var}_{\underline{P}_{j}^{\widehat{\pi}^k}}\left(\mathbb{E}_{\pi \sim \zeta}\left[\overline{V}_{i,j+1}^{\pi,\ror_i}\right]\right) - \mathsf{Var}_{\underline{P}_{i,j}^{\widehat{\pi}^k, \overline{V}}}\left(\mathbb{E}_{\pi \sim \zeta}\left[\overline{V}_{i,j+1}^{\pi,\ror_i}\right]\right) \right\|_\infty 1    \notag \\
&	 \leq \min \left\{\frac{1}{\sigma_i}, H-h+1 \right\} 1, \label{eq:tv-first-C2}
\end{align}
where the last inequality holds by applying Lemma~\ref{eq:extra-lemma1} for each $s\in\cS$ in a entry-wise manner. Specifically,  for each $s\in\cS$, let $V= \overline{V}_{i,j+1}^{\pi,\ror_i}$, $P' = \mathbb{E}_{a \sim \widehat{\pi}^k_{j}(s)}[P^{\widehat{\pi}_{-i}^k}_{j,s,a_i}]$, and $\widetilde{P} = \mathbb{E}_{a_i \sim \widehat{\pi}^k_{i,j}(s)}[P_{i,j,s,a_i}^{\widehat{\pi}^k, \overline{V}}]$, which satisfies
\begin{align}
    \|P' - \widetilde{P}\|_1 & = \left\|\mathbb{E}_{a_i \sim \widehat{\pi}^k_{i,j}(s)}[P^{\widehat{\pi}_{-i}^k}_{j,s,a_i}] - \mathbb{E}_{a_i \sim \widehat{\pi}^k_{i,j}(s)}[P_{i,j,s,a_i}^{\widehat{\pi}^k, \overline{V}}] \right\|_1 \notag \\
    & \leq \mathbb{E}_{a_i \sim \widehat{\pi}^k_{i,j}(s)} \Big\| P^{\widehat{\pi}_{-i}^k}_{j,s,a_i} - \mathrm{argmin}_{\mathcal{P} \in \mathcal{U}^{\sigma_i}\left(P_{j,s,a_i}^{\widehat{\pi}^k_{-i}}\right)} \mathcal{P} \mathbb{E}_{\pi\sim \zeta}\left[\overline{V}_{i,j+1}^{\pi,\ror_i}\right] \Big\|_1 \leq \ror_i.
\end{align}
The above matrix notations can be referred to $\underline{P}_{j}^{\widehat{\pi}^k}$ (cf~\ref{sec:matrix_notation}) and $\underline{P}_{i,j}^{\widehat{\pi}^k, \overline{V}}$ (cf.~\eqref{eq:matrix-notations-A-C-0}).

Plugging \eqref{eq:tv-first-C2} back to \eqref{eq:key-concentration-bound2} yields
\begin{align}
    \mathcal{B}_2&=\sum_{j=h}^H2\sum_{k=1}^K\alpha_k^K\sqrt{\frac{\log\left(\frac{18S\sum_{i=1}^nA_iKNnH}{\delta}\right)}{N}}\left<d_h^j,\sqrt{\left|\mathsf{Var}_{\underline{P}_{i,j}^{\widehat{\pi}^k,\overline{V}}}\left(\mathbb{E}_{\pi \sim \zeta}\left[\overline{V}_{i,j+1}^{\pi,\ror_i}\right]\right)-\mathsf{Var}_{\underline{P}_{j}^{\widehat{\pi}^k}}\left(\mathbb{E}_{\pi \sim \zeta}\left[\overline{V}_{i,j+1}^{\pi,\ror_i}\right]\right)\right|}\right>\notag\\
   &\leq\sum_{j=h}^H2\sum_{k=1}^K\alpha_k^K\sqrt{\frac{\log\left(\frac{18S\sum_{i=1}^nA_iKNnH}{\delta}\right)}{N}}\left<d_h^j,\sqrt{\min\left\{\frac{1}{\sigma_i},H\right\}}1\right>\notag\\
   &\leq2\sqrt{\frac{H^2\min\left\{\frac{1}{\sigma_i},H\right\}\log\left(\frac{18S\sum_{i=1}^nA_iKNnH}{\delta}\right)}{N}}\label{eq:key-concentration-bound2-solve2}
\end{align}

Consequently, combining the results in \eqref{eq:key-concentration-bound2-solve1} and \eqref{eq:key-concentration-bound2-solve2} and inserting back to \eqref{eq:key-concentration-bound2} gives
\begin{align}
    &\mathbb{E}_{\pi \sim \zeta}\left[V_{i,h}^{\pi,\ror_i}(s)\right]-\mathbb{E}_{\pi \sim \zeta}\left[\overline{V}_{i,h}^{\pi,\ror_i}(s)\right]\notag\\
   &\leq\frac{H\log\left(\frac{18S\sum_{i=1}^nA_iKNnH}{\delta}\right)}{N}+c_r\sqrt{\frac{H^2\log(\frac{KSnH}{\delta})}{K}}+\mathcal{B}_1+\mathcal{B}_2\notag\\
   &\leq\frac{H\log\left(\frac{18S\sum_{i=1}^nA_iKNnH}{\delta}\right)}{N}+c_r\sqrt{\frac{H^2\log(\frac{KSnH}{\delta})}{K}}+8\sqrt{\frac{H^2\min\left\{\frac{1}{\sigma_i},H\right\}\log\left(\frac{18S\sum_{i=1}^nA_iKNnH}{\delta}\right)}{N}}\notag\\
   &\leq c_r\sqrt{\frac{H^2\log(\frac{KSnH}{\delta})}{K}}+12\sqrt{\frac{H^2\min\left\{\frac{1}{\sigma_i},H\right\}\log\left(\frac{18S\sum_{i=1}^nA_iKNnH}{\delta}\right)}{N}},\label{eq:upper-final1}
\end{align}
where the last inequality holds by taking $N\geq 4H^2\log\left(\frac{18S\sum_{i=1}^nA_iKNnH}{\delta}\right)$.

\end{itemize}

\paragraph{Step 4: controlling the second term in \eqref{eq:recursion_decomposition_target}.}
To do so, similar to \eqref{eq:defn-of-d}, we define
\begin{align}
    \label{eq:defn-of-d-2}
	w_{h}^h = e_s \quad \text{and} \quad \big( w_h^j \big)^\top = e_s^\top \left[\prod_{r=h}^{j-1}\left(\sum_{k=1}^K\alpha_k^K\underline{P}_{i,r}^{\widehat{\pi}^k,V}\right)\right], \quad \forall j =h+1, \cdots, H, 
\end{align}
and observe that 
\begin{align}
w_{j}^h \cdot 1 = 1 , \quad \forall j= h, h+ 1, \cdots, H. \label{eq:property-wh}
\end{align}
With the above notations in mind, following the routine of \eqref{eq:first-term-sum-control} gives: for any $s\in\mathcal{S}$,
\begin{align}
    &\mathbb{E}_{\pi \sim \zeta}\left[V^{\pi,\ror_i}_{i,h}(s)\right]-\mathbb{E}_{\pi \sim \zeta}\left[\overline{V}^{\pi,\ror_i}_{i,h}(s)\right]\notag\\&\leq\sum_{j=h}^H\bigg<w_h^j,2\sum_{k=1}^K\alpha_k^K\sqrt{\frac{\log\left(\frac{18S\sum_{i=1}^nA_iKNnH}{\delta}\right)}{N}}\sqrt{\mathsf{Var}_{\underline{P}_{j}^{\widehat{\pi}^k}}\left(\mathbb{E}_{\pi \sim \zeta}\left[\overline{V}_{i,j+1}^{\pi,\ror_i}\right]\right)} \notag \\
    &\quad  +\frac{\log\left(\frac{18S\sum_{i=1}^nA_iKNnH}{\delta}\right)}{N}1+c_r\sqrt{\frac{\log(\frac{KSnH}{\delta})}{K}}1\bigg>\notag\\
   &\leq\frac{H\log\left(\frac{18S\sum_{i=1}^nA_iKNnH}{\delta}\right)}{N}+c_r\sqrt{\frac{H^2\log(\frac{KSnH}{\delta})}{K}}\notag\\
    &\quad +\sum_{j=h}^H\left<w_h^j,2\sum_{k=1}^K\alpha_k^K\sqrt{\frac{\log\left(\frac{18S\sum_{i=1}^nA_iKNnH}{\delta}\right)}{N}}\sqrt{\mathsf{Var}_{\underline{P}_{j}^{\widehat{\pi}^k}}\left(\mathbb{E}_{\pi \sim \zeta}\left[\overline{V}_{i,j+1}^{\pi,\ror_i}\right]\right)}\right> \label{eq:variance_bound_term_2},
\end{align}
where the first inequality holds by \eqref{eq:matrix_multiply_recursion_reverse} and  \eqref{eq:bound-of-u-ih}.
Furthermore, we can further decompose the above expression as follows:
\begin{align}
    &\mathbb{E}_{\pi \sim \zeta}\left[V^{\pi,\ror_i}_{i,h}(s)\right]-\mathbb{E}_{\pi \sim \zeta}\left[\overline{V}^{\pi,\ror_i}_{i,h}(s)\right]\notag\\
 &    \overset{(\mathrm{i})}{\leq} \frac{H\log\left(\frac{18S\sum_{i=1}^nA_iKNnH}{\delta}\right)}{N}+ c_r\sqrt{\frac{H^2\log(\frac{KSnH}{\delta})}{K}} \notag \\
 &+ \underbrace{\sum_{j=h}^H\left<w_h^j,2\sum_{k=1}^K\alpha_k^K\sqrt{\frac{\log\left(\frac{18S\sum_{i=1}^nA_iKNnH}{\delta}\right)}{N}}\sqrt{\mathsf{Var}_{\underline{P}_{i,j}^{\widehat{\pi}^k,V}}\left(\mathbb{E}_{\pi \sim \zeta}\left[V_{i,j+1}^{\pi,\ror_i}\right]\right)}\right>}_{\mathcal{B}_3}\notag\\
    &\quad +\underbrace{\sum_{j=h}^H\left<w_h^j,2\sum_{k=1}^K\alpha_k^K\sqrt{\frac{\log\left(\frac{18S\sum_{i=1}^nA_iKNnH}{\delta}\right)}{N}}\sqrt{\left|\mathsf{Var}_{\underline{P}_{j}^{\widehat{\pi}^k}}\left(\mathbb{E}_{\pi \sim \zeta}\left[V_{i,j+1}^{\pi,\ror_i}\right]\right)-\mathsf{Var}_{\underline{P}_{i,j}^{\widehat{\pi}^k,V}}\left(\mathbb{E}_{\pi \sim \zeta}\left[V_{i,j+1}^{\pi,\ror_i}\right]\right)\right|}\right>}_{\mathcal{B}_4}\notag\\
    &\quad +\underbrace{\sum_{j=h}^H\left<w_h^j,2\sum_{k=1}^K\alpha_k^K\sqrt{\frac{\log\left(\frac{18S\sum_{i=1}^nA_iKNnH}{\delta}\right)}{N}}\sqrt{\mathsf{Var}_{\underline{P}_{j}^{\widehat{\pi}^k}}\left(\mathbb{E}_{\pi \sim \zeta}\left[\overline{V}_{i,j+1}^{\pi,\ror_i}\right]-\mathbb{E}_{\pi \sim \zeta}\left[V_{i,j+1}^{\pi,\ror_i}\right]\right)}\right>}_{\mathcal{B}_5}
    \label{eq:key-concentration-bound3}
\end{align}
where (i) holds due to the triangle inequality and the fundamental inequality $\sqrt{\mathsf{Var}_P(V+V')} \leq \sqrt{\mathsf{Var}_P(V)} + \sqrt{\mathsf{Var}_P(V')}$ for any transition kernel \( P \in \mathbb{R}^S \) and vectors \( V, V' \in \mathbb{R}^S \).

In the following, we will control the three main terms \( \mathcal{B}_3, \mathcal{B}_4, \mathcal{B}_5 \) separately.

\begin{itemize}
\item Controlling $\mathcal{B}_3$.
To begin with, we observe that 
\begin{align*}
    \mathcal{B}_3&=\sum_{j=h}^H\left<w_h^j,2\sum_{k=1}^K\alpha_k^K\sqrt{\frac{\log\left(\frac{18S\sum_{i=1}^nA_iKNnH}{\delta}\right)}{N}}\sqrt{\mathsf{Var}_{\underline{P}_{i,j}^{\widehat{\pi}^k,V}}\left(\mathbb{E}_{\pi \sim \zeta}\left[V_{i,j+1}^{\pi,\ror_i}\right]\right)}\right>\\
   &\leq2\sqrt{\frac{\log\left(\frac{18S\sum_{i=1}^nA_iKNnH}{\delta}\right)}{N}}\sum_{j=h}^H\left<w_h^j,\sqrt{\mathsf{Var}_{\sum_{k=1}^K\alpha_k^K\underline{P}_{i,j}^{\widehat{\pi}^k,V}}\left(\mathbb{E}_{\pi \sim \zeta}\left[V_{i,j+1}^{\pi,\ror_i}\right]\right)}\right> \notag \\
   &\leq2\sqrt{\frac{\log\left(\frac{18S\sum_{i=1}^nA_iKNnH}{\delta}\right)}{N}}\sqrt{H\sum_{j=h}^H\left<w_h^j,\mathsf{Var}_{\sum_{k=1}^K\alpha_k^K\underline{P}_{i,j}^{\widehat{\pi}^k,V}}\left(\mathbb{E}_{\pi \sim \zeta}\left[V_{i,j+1}^{\pi,\ror_i}\right]\right)\right>}
\end{align*}
where the first inequality holds by applying the two inequalities in Lemma~\ref{lm:weight_variance} subsequently, and the last line arises from Cauchy-Schwartz inequality. 

Before proceeding, we introduce the following lemma for essential terms in $\mathcal{B}_3$.
\begin{lemma}\label{lem:key-lemma-reduce-H-2}
     For any joint policy $\pi$, we have for all $(h,i)\in [H] \times [n]$:
    \begin{align}
     \sum_{j=h}^H \left< w_h^j, \mathsf{Var}_{\sum_{k=1}^K\alpha_k^K\underline{P}_{i,j}^{\widehat{\pi}^k, V}}\left(\mathbb{E}_{\pi \sim \zeta}\left[V_{i,j+1}^{\pi,\ror_i}\right]\right) \right>  \leq 3H \left(\max_{s\in\mathcal{S}}\mathbb{E}_{\pi \sim \zeta}\left[V_{i,h}^{\pi,\ror_i}(s)\right] - \min_{s\in\mathcal{S}}\mathbb{E}_{\pi \sim \zeta}\left[V_{i,h}^{\pi,\ror_i}(s)\right]\right).
    \end{align}
\end{lemma}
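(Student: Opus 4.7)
}
The plan is to mirror the law-of-total-variance argument used for Lemma~\ref{lem:key-lemma-reduce-H-3} (the analogous statement for the $\overline{V}$-chain), but adapt it to the matrix $M_j := \sum_{k=1}^K \alpha_k^K \underline{P}_{i,j}^{\widehat{\pi}^k,V}$ and the value vector $U_{j+1} := \mathbb{E}_{\pi \sim \zeta}[V_{i,j+1}^{\pi,\ror_i}]$. Since each $\underline{P}_{i,j}^{\widehat{\pi}^k,V}$ is row-stochastic and $\sum_k \alpha_k^K = 1$, the matrix $M_j$ is also row-stochastic and $w_h^j$ is literally the distribution over states at step $j$ of a Markov chain started at $s$ with transitions $M_h,\ldots,M_{j-1}$.

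First, I would rewrite the one-step variance as $\mathsf{Var}_{M_j}(U_{j+1}) = M_j(U_{j+1}\circ U_{j+1}) - (M_jU_{j+1})\circ(M_jU_{j+1})$ and telescope using the key identity $\langle w_h^j, M_j x\rangle = \langle w_h^{j+1}, x\rangle$. This yields
\begin{align*}
\sum_{j=h}^H \langle w_h^j, \mathsf{Var}_{M_j}(U_{j+1})\rangle
 = -\,(M_h U_{h+1})^2(s) + \sum_{j=h+1}^{H}\Big\langle w_h^j,\, U_j\circ U_j - (M_j U_{j+1})\circ(M_j U_{j+1})\Big\rangle,
\end{align*}
after using $U_{H+1}=0$ and $w_h^h=e_s$. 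I then factor each residual as $(U_j - M_j U_{j+1})\circ(U_j + M_j U_{j+1})$, so the crucial quantity to control is the entrywise gap $U_j - M_j U_{j+1}$.

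The main step is establishing a Bellman-type comparison $0 \le U_j - M_j U_{j+1} \le \bar{r}_j := \sum_k \alpha_k^K r_{i,j}^{\widehat{\pi}^k}$ (up to a benign slack). This is where the proof diverges from Lemma~\ref{lem:key-lemma-reduce-H-3}: the matrix $M_j$ is built from the worst-case kernels $\underline{P}_{i,j}^{\widehat{\pi}^k,V}$ that optimize against $U_{j+1}$ rather than against the individual $V^{\widehat{\pi}^k,\ror_i}_{i,j+1}$ that enters the robust Bellman equation for each $\widehat{\pi}^k$. I would resolve this by exploiting that $\inf_{\mathcal{P}\in\mathcal{U}^{\sigma_i}(\cdot)}\mathcal{P}V$ is concave in $V$: since $U_{j+1} = \sum_{k'}\alpha_{k'}^K V^{\widehat{\pi}^{k'},\ror_i}_{i,j+1}$, applying concavity inside each $k$-th inf and then summing with weights $\alpha_k^K$ allows comparison to the per-policy Bellman recursions $V^{\widehat{\pi}^k,\ror_i}_{i,j} - r^{\widehat{\pi}^k}_{i,j} = \underline{P}_{i,j}^{\widehat{\pi}^k,V^{\widehat{\pi}^k}} V^{\widehat{\pi}^k,\ror_i}_{i,j+1}$. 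The cross terms ($k\ne k'$) produced by this comparison are controlled by the global range bound from Lemma~\ref{lemma:pnorm-key-value-range_A_and_C}, namely $\max_s U_j(s) - \min_s U_j(s) \le \min\{1/\sigma_i, H-j+1\}$, which ensures that for each $k'$, $\|V^{\widehat{\pi}^{k'},\ror_i}_{i,j+1} - U_{j+1}\|_\infty$ is uniformly bounded.

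With this comparison in place, I would bound $\|U_j + M_jU_{j+1}\|_\infty \le 2H$ trivially, substitute back into the telescoped sum, and absorb the total-reward contribution via $\sum_{j=h}^H \langle w_h^j, \bar r_j\rangle \le H$. The final step combines the residual bound with the range estimate $\max U_h - \min U_h$ to extract the factor $3H(\max_s U_h(s) - \min_s U_h(s))$. The principal obstacle I anticipate is bookkeeping the cross-policy slack terms so that a single factor of $(\max_s U_h(s) - \min_s U_h(s))$—rather than its square—appears in the end; I expect this to work because the slack is itself an inner product against $w_h^j$ and so telescopes against the range bound for $U_j$ at each level, in direct analogy with how the $\overline{V}$-version is handled in Appendix~\ref{proof:lem:key-lemma-reduce-H-3}.
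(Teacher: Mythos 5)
Your skeleton --- interpreting $w_h^j$ as the state distribution of the Markov chain with transitions $M_j:=\sum_{k=1}^K\alpha_k^K\underline{P}_{i,j}^{\widehat{\pi}^k,V}$ and telescoping the law of total variance via $\langle w_h^j,M_jx\rangle=\langle w_h^{j+1},x\rangle$ --- matches the paper's, but you have inverted where the difficulty lies. The Bellman comparison you flag as the crux is an identity, not an inequality to be fought for: by \eqref{eq:matrix-notations-A-C-0}, each row of $P_{i,j}^{\widehat{\pi}^k,V}$ is by construction the minimizer of $\mathcal{P}\mapsto\mathcal{P}\,U_{j+1}$ with $U_{j+1}:=\mathbb{E}_{\pi\sim\zeta}[V_{i,j+1}^{\pi,\ror_i}]$ over $\mathcal{U}^{\sigma_i}(P_{j,s,a_i}^{\widehat{\pi}^k_{-i}})$, and the recursion the paper uses for $\mathbb{E}_{\pi\sim\zeta}[V_{i,j}^{\pi,\ror_i}]$ (step (i) of \eqref{eq:no-name-1}) is exactly $U_j=\bar r_j+M_jU_{j+1}$ with $\bar r_j:=\sum_k\alpha_k^Kr_{i,j}^{\widehat{\pi}^k}$ and those minimizers plugged in. So no concavity argument and no cross-policy slack terms are needed; had you pursued that route you would obtain only one-sided inequalities with an additive per-step error of order $\max_sU_{j+1}(s)-\min_sU_{j+1}(s)$, which after multiplying by your $2H$ factor and summing over $H$ levels degrades the bound to order $H^2$ times the range rather than $3H$ times the range.

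The genuine gap is in the step you treat as routine. Your factorization $U_j\circ U_j-(M_jU_{j+1})^{\circ 2}=(U_j-M_jU_{j+1})\circ(U_j+M_jU_{j+1})$ with $\|U_j+M_jU_{j+1}\|_\infty\le 2H$ cannot produce the factor $\max_sU_h(s)-\min_sU_h(s)$: telescoping $U_j-M_jU_{j+1}=\bar r_j\ge 0$ against $w_h^j$ gives $\sum_{j}\langle w_h^j,\bar r_j\rangle=U_h(s)\le H$, so your route terminates at $2H\,U_h(s)\le 2H^2$ --- a true bound, but not the lemma, and it forfeits exactly the $\min\{1/\sigma_i,H\}$ savings (via Lemma~\ref{lemma:pnorm-key-value-range_A_and_C}) that this lemma is responsible for delivering to Theorem~\ref{thm:robust-mg-upper-bound}. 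The missing device is the centering of Definition~\ref{def:lemma-18}: replace $U_j$ by $V_j':=U_j-\min_sU_j(s)\cdot 1$ \emph{before} expanding the variance (legitimate since $\mathsf{Var}$ is shift-invariant) and correspondingly replace $\bar r_j$ by $r_{i,j}^{\min}=\bar r_j+(U_{j+1}^{\min}-U_j^{\min})1\le 1$, so that \eqref{eq:bellman-minus-vmin-vstar2} reads $V_j'=r_{i,j}^{\min}+M_jV_{j+1}'$. Then the cross term $2V_j'\circ r_{i,j}^{\min}$ is bounded by $2\|V_h'\|_\infty 1$, the quadratic terms telescope to a vanishing boundary contribution (since $V_{H+1}'=0$), and the sum is at most $3H\|V_h'\|_\infty=3H(\max_sU_h(s)-\min_sU_h(s))$. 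You anticipated that a single range factor should emerge because \emph{the slack telescopes}, but telescoping alone does not produce it; the centering does.
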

\begin{proof}
    See Appendix~\ref{proof:lem:key-lemma-reduce-H-2}.
    \end{proof}

Then applying Lemma~\ref{lem:key-lemma-reduce-H-2} and Lemma~\ref{lemma:pnorm-key-value-range_A_and_C} to the above expression yields
\begin{align}
    \mathcal{B}_3&\leq2\sqrt{\frac{\log\left(\frac{18S\sum_{i=1}^nA_iKNnH}{\delta}\right)}{N}}\sqrt{H\sum_{j=h}^H\left<w_h^j,\mathsf{Var}_{\sum_{k=1}^K\alpha_k^K\underline{P}_{i,j}^{\widehat{\pi}^k,V}}\left(\mathbb{E}_{\pi \sim \zeta}\left[V_{i,j+1}^{\pi,\ror_i}\right]\right)\right>}\notag\\
   &\leq2\sqrt{\frac{\log\left(\frac{18S\sum_{i=1}^nA_iKNnH}{\delta}\right)}{N}}\sqrt{3H^2 \left(\max_{s\in\mathcal{S}}\mathbb{E}_{\pi \sim \zeta}\left[V_{i,h}^{\pi, \sigma_i}(s)\right] - \min_{s\in\mathcal{S}}\mathbb{E}_{\pi \sim \zeta}\left[V_{i,h}^{\pi, \sigma_i}(s)\right)\right]}\notag\\
   &\leq 4\sqrt{\frac{H^2\min\left\{1/\sigma_i,H\right\}\log\left(\frac{18S\sum_{i=1}^nA_iKNnH}{\delta}\right)}{N}}.\label{eq:key-concentration-bound2-solve2-B3}
\end{align}

\item Controlling $\mathcal{B}_4$ and $\mathcal{B}_5$.
With similar analysis as Lemma~\ref{eq:extra-lemma1}, we have the following lemma:
\begin{lemma}\label{eq:extra-lemma1_A_and_C_true_value}
            For any joint policy $\pi$, transition kernel $P' \in \mathbb{R}^S$, and any $\widetilde{P} \in \mathbb{R}^S$ such that $\widetilde{P} \in \mathcal{U}^{\sigma_i}\left(P^\prime\right)$, the following bounds are established for all $(i,h)\in[n]\times[H]$:
            \begin{align*}
                \left| \mathsf{Var}_{P'}\left(\mathbb{E}_{\pi \sim \zeta}\left[ V_{i,h}^{\pi,\ror_i}\right]\right) - \mathsf{Var}_{\widetilde{P}}\left(\mathbb{E}_{\pi \sim \zeta}\left[ V_{i,h}^{\pi,\ror_i}\right]\right)\right| \leq \min \left\{\frac{1}{\sigma_i}, H-h+1 \right\}.
            \end{align*}
            \end{lemma}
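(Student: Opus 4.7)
The plan is to prove this lemma by direct algebraic manipulation, combining TV-duality with the bounded range of the value vector $V \defn \mathbb{E}_{\pi \sim \zeta}\!\left[V_{i,h}^{\pi,\ror_i}\right]$ guaranteed by Lemma~\ref{lemma:pnorm-key-value-range_A_and_C}. The argument is essentially parallel to that of the earlier Lemma~\ref{eq:extra-lemma1}; only the underlying value vector changes, while the structural ingredients are identical.

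\textbf{Step 1 (range reduction).} First I would invoke Lemma~\ref{lemma:pnorm-key-value-range_A_and_C} to get $\max_s V(s) - \min_s V(s) \le R$, where $R \defn \min\{1/\sigma_i,\, H - h + 1\}$. Since $\mathsf{Var}_P(V + c \cdot \mathbf{1}) = \mathsf{Var}_P(V)$ for any scalar $c$ and any distribution $P$, it suffices (after shifting by $-\min_s V(s)$) to work under the assumption $0 \le V(s) \le R$ for all $s \in \cS$.

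\textbf{Step 2 (expanding the variance difference).} I would then expand
\begin{align*}
\mathsf{Var}_{P'}(V) - \mathsf{Var}_{\widetilde{P}}(V) = (P' - \widetilde{P})(V \circ V) - (P' V + \widetilde{P} V)(P' V - \widetilde{P} V),
\end{align*}
and bound each factor separately. Using the standard TV-duality inequality $|Pf - Qf| \le \rho_{\mathsf{TV}}(P, Q) \cdot (\max_s f(s) - \min_s f(s))$, combined with $\rho_{\mathsf{TV}}(P', \widetilde{P}) \le \sigma_i$ (which follows from $\widetilde{P} \in \cU^{\sigma_i}(P')$ and the definition \eqref{eq:general-infinite-P}), the first factor is at most $\sigma_i R^2$ (since $V \circ V$ has range at most $R^2$), and $|P' V - \widetilde{P} V| \le \sigma_i R$. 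Combining this with the trivial pointwise bound $P' V + \widetilde{P} V \le 2R$ produces
\begin{align*}
\left|\mathsf{Var}_{P'}(V) - \mathsf{Var}_{\widetilde{P}}(V)\right| \le \sigma_i R^2 + 2R \cdot \sigma_i R = 3 \sigma_i R^2.
\end{align*}

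\textbf{Step 3 (closing).} By definition of $R$ we have $\sigma_i R \le 1$, so $3 \sigma_i R^2 \le 3 R = 3 \min\{1/\sigma_i,\, H - h + 1\}$, giving the stated bound up to an absolute constant that is absorbed into the constants elsewhere in the analysis. No step is genuinely difficult; the only subtlety is the shift-invariance argument in Step 1 (both variances must be computed on the same shifted vector), and the key conceptual idea is that the range bound from Lemma~\ref{lemma:pnorm-key-value-range_A_and_C} together with $\sigma_i R \le 1$ collapses the naive $R^2$ scaling of the second moment difference down to a linear dependence on $R$.
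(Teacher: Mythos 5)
Your approach is the same as the paper's: shift the value vector by its minimum (variance is shift-invariant), invoke Lemma~\ref{lemma:pnorm-key-value-range_A_and_C} to bound the range by $R \defn \min\{1/\sigma_i, H-h+1\}$, and then control the variance difference through the total-variation distance between $P'$ and $\widetilde{P}$; the paper's own proof of the analogous Lemma~\ref{eq:extra-lemma1} does exactly this in one line. The only discrepancy is that your term-by-term expansion yields $3\sigma_i R^2 \le 3R$, whereas the lemma is stated with constant $1$; while this slack is harmless downstream (it only perturbs absolute constants in the bound on $\mathcal{B}_4$), the clean constant is recoverable without extra work by centering before expanding: since $\mathsf{Var}_{P'}(V) = \min_c P'\big[(V-c)^2\big] \le P'\big[(V-\widetilde{P}V)^2\big]$ and $\mathsf{Var}_{\widetilde{P}}(V) = \widetilde{P}\big[(V-\widetilde{P}V)^2\big]$, one gets $\mathsf{Var}_{P'}(V) - \mathsf{Var}_{\widetilde{P}}(V) \le (P'-\widetilde{P})\big[(V-\widetilde{P}V)^2\big] \le \sigma_i R^2 \le R$, and symmetrically in the other direction. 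With that one-line modification your argument proves the statement exactly as written.
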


First, following the same pipeline of \eqref{eq:tv-first-C2}, we have
\begin{align}
\left|\mathsf{Var}_{\underline{P}_{j}^{\widehat{\pi}^k}}\left(\mathbb{E}_{\pi \sim \zeta}\left[V_{i,j+1}^{\pi,\ror_i}\right]\right)-\mathsf{Var}_{\underline{P}_{i,j}^{\widehat{\pi}^k,V}}\left(\mathbb{E}_{\pi \sim \zeta}\left[V_{i,j+1}^{\pi,\ror_i}\right]\right)\right| \leq  \min \left\{\frac{1}{\sigma_i}, H-h+1 \right\} 1, \label{eq:tv-first-C3}
\end{align}
which directly leads to 
\begin{align}
    \mathcal{B}_4&=\sum_{j=h}^H\left<w_h^j,2\sum_{k=1}^K\alpha_k^K\sqrt{\frac{\log\left(\frac{18S\sum_{i=1}^nA_iKNnH}{\delta}\right)}{N}}\sqrt{\left|\mathsf{Var}_{\underline{P}_{j}^{\widehat{\pi}^k}}\left(\mathbb{E}_{\pi \sim \zeta}\left[V_{i,j+1}^{\pi,\ror_i}\right]\right)-\mathsf{Var}_{\underline{P}_{i,j}^{\widehat{\pi}^k,V}}\left(\mathbb{E}_{\pi \sim \zeta}\left[V_{i,j+1}^{\pi,\ror_i}\right]\right)\right|}\right>\notag\\
   &\leq\sum_{j=h}^H\left<w_h^j,2\sum_{k=1}^K\alpha_k^K\sqrt{\frac{\log\left(\frac{18S\sum_{i=1}^nA_iKNnH}{\delta}\right)}{N}}\sqrt{\min\left\{\frac{1}{\sigma_i},H\right\}}1\right>\notag\\
    &=2\sqrt{\frac{H^2\min\{1/\sigma_i,H\}\log\left(\frac{18S\sum_{i=1}^nA_iKNnH}{\delta}\right)}{N}}. \label{eq:key-concentration-bound2-solve2-B4}
\end{align}
Then the remainder of the proof shall focus on $\mathcal{B}_5$. Recalling the definition in \eqref{eq:key-concentration-bound3}, one has
\begin{align}
    \mathcal{B}_5&=\sum_{j=h}^H\left<w_h^j,2\sum_{k=1}^K\alpha_k^K\sqrt{\frac{\log\left(\frac{18S\sum_{i=1}^nA_iKNnH}{\delta}\right)}{N}}\sqrt{\mathsf{Var}_{\underline{P}_{j}^{\widehat{\pi}^k}}\left(\mathbb{E}_{\pi \sim \zeta}\left[\overline{V}_{i,j+1}^{\pi,\ror_i}\right]-\mathbb{E}_{\pi \sim \zeta}\left[V_{i,j+1}^{\pi,\ror_i}\right]\right)}\right>\notag\\
   &\leq\sum_{j=h}^H\left<w_h^j,2\sum_{k=1}^K\alpha_k^K\sqrt{\frac{\log\left(\frac{18S\sum_{i=1}^nA_iKNnH}{\delta}\right)}{N}}\sqrt{\left\lVert\mathsf{Var}_{\underline{P}_{j}^{\widehat{\pi}^k}}\left(\mathbb{E}_{\pi \sim \zeta}\left[\overline{V}_{i,j+1}^{\pi,\ror_i}\right]-\mathbb{E}_{\pi \sim \zeta}\left[V_{i,j+1}^{\pi,\ror_i}\right]\right)\right\rVert_\infty}1\right>\notag\\
   &\leq2\sqrt{\frac{\log\left(\frac{18S\sum_{i=1}^nA_iKNnH}{\delta}\right)}{N}}\sum_{j=h}^H\left<w_h^j,\left\lVert\mathbb{E}_{\pi \sim \zeta}\left[\overline{V}_{i,j+1}^{\pi,\ror_i}\right]-\mathbb{E}_{\pi \sim \zeta}\left[V_{i,j+1}^{\pi,\ror_i}\right]\right\rVert_\infty 1\right>\notag\\
    & \leq 2\sqrt{\frac{H^2\log\left(\frac{18S\sum_{i=1}^nA_iKNnH}{\delta}\right)}{N}}\max_{h\leq j\leq H}\left\lVert\mathbb{E}_{\pi \sim \zeta}\left[\overline{V}_{i,j+1}^{\pi,\ror_i}\right]-\mathbb{E}_{\pi \sim \zeta}\left[V_{i,j+1}^{\pi,\ror_i}\right]\right\rVert_\infty.\label{eq:key-concentration-bound2-solve2-B5}
\end{align}
Summing up the results in \eqref{eq:key-concentration-bound2-solve2-B3}, \eqref{eq:key-concentration-bound2-solve2-B4}, and \eqref{eq:key-concentration-bound2-solve2-B5} and inserting back to \eqref{eq:key-concentration-bound3}, we conclude that
\begin{align}
    &\mathbb{E}_{\pi \sim \zeta}\left[V^{\pi,\ror_i}_{i,h}(s)\right]-\mathbb{E}_{\pi \sim \zeta}\left[\overline{V}^{\pi,\ror_i}_{i,h}(s)\right]\notag\\
   &\leq \frac{H\log\left(\frac{18S\sum_{i=1}^nA_iKNnH}{\delta}\right)}{N}+c_r\sqrt{\frac{H^2\log\left(\frac{KSnH}{\delta}\right)}{K}}+\mathcal{B}_3+\mathcal{B}_4+\mathcal{B}_5\notag\\
   &\leq \frac{H\log\left(\frac{18S\sum_{i=1}^nA_iKNnH}{\delta}\right)}{N}+c_r\sqrt{\frac{H^2\log\left(\frac{KSnH}{\delta}\right)}{K}}+6\sqrt{\frac{H^2\min\{1/\sigma_i,H\}\log\left(\frac{18S\sum_{i=1}^nA_iKNnH}{\delta}\right)}{N}}\notag \\
    &\quad +2\sqrt{\frac{H^2\log\left(\frac{18S\sum_{i=1}^nA_iKNnH}{\delta}\right)}{N}}\max_{h\leq j\leq H}\left\lVert\mathbb{E}_{\pi \sim \zeta}\left[\overline{V}_{i,j+1}^{\pi,\ror_i}\right]-\mathbb{E}_{\pi \sim \zeta}\left[V_{i,j+1}^{\pi,\ror_i}\right]\right\rVert_\infty. \label{eq:upper-final2}
\end{align}

\end{itemize}
\paragraph{Step 5: summing up the results to bound term A and C.}
Inserting \eqref{eq:upper-final1} and \eqref{eq:upper-final2} back into \eqref{eq:recursion_decomposition_target}, we observe that
\begin{align*}
    &\left|\mathbb{E}_{\pi \sim \zeta}\left[V^{\pi,\ror_i}_{i,h}\right]-\mathbb{E}_{\pi \sim \zeta}\left[\overline{V}^{\pi,\ror_i}_{i,h}\right]\right|\\
   &\leq\max\left\{\mathbb{E}_{\pi \sim \zeta}\left[V^{\pi,\ror_i}_{i,h}\right]-\mathbb{E}_{\pi \sim \zeta}\left[\overline{V}^{\pi,\ror_i}_{i,h}\right],\mathbb{E}_{\pi \sim \zeta}\left[\overline{V}^{\pi,\ror_i}_{i,h}\right]-\mathbb{E}_{\pi \sim \zeta}\left[V^{\pi,\ror_i}_{i,h}\right]\right\}\\
   &\leq\max\Bigg\{c_r\sqrt{\frac{H^2\log\left(\frac{KSnH}{\delta}\right)}{K}}1+12\sqrt{\frac{H^2\min\left\{\frac{1}{\sigma_i},H\right\}\log\left(\frac{18S\sum_{i=1}^nA_iKNnH}{\delta}\right)}{N}}1,\frac{H\log\left(\frac{18S\sum_{i=1}^nA_iKNnH}{\delta}\right)}{N}1\\
    &\quad +2\sqrt{\frac{H^2\log\left(\frac{18S\sum_{i=1}^nA_iKNnH}{\delta}\right)}{N}}\max_{h\leq j\leq H}\left\lVert\mathbb{E}_{\pi \sim \zeta}\left[\overline{V}_{i,j+1}^{\pi,\ror_i}\right]-\mathbb{E}_{\pi \sim \zeta}\left[V_{i,j+1}^{\pi,\ror_i}\right]\right\rVert_\infty 1+c_r\sqrt{\frac{H^2\log\left(\frac{KSnH}{\delta}\right)}{K}}1\\
    &\quad +6\sqrt{\frac{H^2\min\{\frac{1}{\sigma_i},H\}\log\left(\frac{18S\sum_{i=1}^nA_iKNnH}{\delta}\right)}{N}}1\Bigg\},
\end{align*}
which indicates that
\begin{align}
    &\max_{h\in[H]}\left\lVert\mathbb{E}_{\pi \sim \zeta}\left[ V^{\pi,\ror_i}_{i,h}\right]-\mathbb{E}_{\pi \sim \zeta}\left[\overline{V}^{\pi,\ror_i}_{i,h}\right]\right\rVert_\infty\\
   &\leq c_r\sqrt{\frac{H^2\log(\frac{KSnH}{\delta})}{K}}+12\sqrt{\frac{H^2\min\left\{\frac{1}{\sigma_i},H\right\}\log\left(\frac{18S\sum_{i=1}^nA_iKNnH}{\delta}\right)}{N}}+\frac{H\log\left(\frac{18S\sum_{i=1}^nA_iKNnH}{\delta}\right)}{N}\notag\\
    &\quad +2\sqrt{\frac{H^2\log\left(\frac{18S\sum_{i=1}^nA_iKNnH}{\delta}\right)}{N}}\max_{h\in [H]}\left\lVert\mathbb{E}_{\pi \sim \zeta}\left[\overline{V}_{i,h+1}^\pi\right]-\mathbb{E}_{\pi \sim \zeta}\left[V_{i,H+1}^{\pi,\ror_i}\right]\right\rVert_\infty\notag\\
  &  \overset{\mathrm{(i)}}{\leq} c_r\sqrt{\frac{H^2\log(\frac{KSnH}{\delta})}{K}}+12\sqrt{\frac{H^2\min\left\{\frac{1}{\sigma_i},H\right\}\log\left(\frac{18S\sum_{i=1}^nA_iKNnH}{\delta}\right)}{N}}\notag\\
    &\quad +\frac{H\log\left(\frac{18S\sum_{i=1}^nA_iKNnH}{\delta}\right)}{N}+\frac{1}{2}\max_{h\in [H]}\left\lVert\mathbb{E}_{\pi \sim \zeta}\left[\overline{V}_{i,h}^{\pi,\ror_i}\right]-\mathbb{E}_{\pi \sim \zeta}\left[V_{i,h}^{\pi,\ror_i}\right]\right\rVert_\infty\notag\\
   &\leq2c_r\sqrt{\frac{H^2\log(\frac{KSnH}{\delta})}{K}}+24\sqrt{\frac{H^2\min\left\{\frac{1}{\sigma_i},H\right\}\log\left(\frac{18S\sum_{i=1}^nA_iKNnH}{\delta}\right)}{N}}+\frac{2H\log\left(\frac{18S\sum_{i=1}^nA_iKNnH}{\delta}\right)}{N} \label{eq:upper_bound_A_and_C_vanilla}
\end{align}
where (i) holds when $N\geq 4H^2\log\left(\frac{18S\sum_{i=1}^nA_iNnH}{\delta}\right)$, and invoking the basic fact that $\mathbb{E}_{\pi \sim \zeta}\left[\overline{V}_{i,H+1}^\pi\right]=\mathbb{E}_{\pi \sim \zeta}\left[V_{i,H+1}^{\pi,\ror_i}\right]=0.$ Finally, applying \eqref{eq:upper_bound_A_and_C_vanilla} and setting either \( \widehat{\pi}_h^k = \tilde{\pi}_i^\star \times \pi_{-i,h}^k \) or \( \widehat{\pi}_h^k = \pi_h^k \), we arrive at the following upper bounds on term $A$ and term $C$:
\begin{align}
    &A = \mathbb{E}_{\pi \sim \widehat{\xi}}\left[V^{\star,\pi_{-i},\ror_i}_{i,h}\right]-\mathbb{E}_{\pi \sim \widehat{\xi}}\left[\overline{V}_{i,h}^{\widetilde{\pi}_i^\star,\pi_{-i}}\right]\notag\\
    &\leq 2c_r\sqrt{\frac{H^2\log(\frac{KSnH}{\delta})}{K}}1+24\sqrt{\frac{H^2\min\left\{\frac{1}{\sigma_i},H\right\}\log\left(\frac{18S\sum_{i=1}^nA_iKNnH}{\delta}\right)}{N}}1+\frac{2H\log\left(\frac{18S\sum_{i=1}^nA_iKNnH}{\delta}\right)}{N}1, \label{eq:final_upper_bound_term_A}\\
    &C = \mathbb{E}_{\pi \sim \widehat{\xi}}\left[\overline{V}^{\pi,\ror_i}_{i,h}\right]-\mathbb{E}_{\pi \sim \widehat{\xi}}\left[V_{i,h}^{\pi,\ror_i}\right]\notag\\
    &\leq 2c_r\sqrt{\frac{H^2\log(\frac{KSnH}{\delta})}{K}}1+24\sqrt{\frac{H^2\min\left\{\frac{1}{\sigma_i},H\right\}\log\left(\frac{18S\sum_{i=1}^nA_iKNnH}{\delta}\right)}{N}}1+\frac{2H\log\left(\frac{18S\sum_{i=1}^nA_iKNnH}{\delta}\right)}{N}1. \label{eq:final_upper_bound_term_C}
\end{align}

\paragraph{Step 6: summing up the results to complete the proof.}
Combining the bounds for A in \eqref{eq:final_upper_bound_term_A}, B in \eqref{eq:final_upper_bound_term_B}, and C in \eqref{eq:final_upper_bound_term_C}, the term of interest in \eqref{eq:error_decomposition} can be controlled by
\begin{align*}
    \mathbb{E}_{\pi \sim \widehat{\xi}}\left[V_{i,1}^{\star,\pi_{-i},\ror_i}\right]-\mathbb{E}_{\pi \sim \widehat{\xi}}\left[V_{i,1}^{\pi}\right]
   &\leq 36c_{\mathsf{b}}\sqrt{\frac{H^3\log^3(\frac{KS\sum_{i=1}^nA_i}{\delta})}{K}}1+4c_r\sqrt{\frac{H^2\log(\frac{KSnH}{\delta})}{K}}1 \notag \\
   &\quad +\frac{4H\log\left(\frac{18S\sum_{i=1}^nA_iKNnH}{\delta}\right)}{N}1  +48\sqrt{\frac{H^2\min\left\{\frac{1}{\sigma_i},H\right\}\log\left(\frac{18S\sum_{i=1}^nA_iKNnH}{\delta}\right)}{N}}1.
\end{align*}
Therefore, there exists a constant $C$, such that when $N$ and $K$ satisfies:
\begin{align*}
    N\geq C H^2\min\left\{\frac{1}{\min_{1\leq i\leq n}\sigma_i},H\right\}\log\left(\frac{18S\sum_{i=1}^nA_iKNnH}{\delta}\right)\frac{1}{\epsilon^2},\quad K\geq C H^3\log^3\left(\frac{KS\sum_{i=1}^nA_iH}{\delta}\right)\frac{1}{\epsilon^2}
\end{align*}
we can achieve $\max_{i\in[n]}\mathbb{E}_{\pi \sim \widehat{\xi}}\left[V_{i,1}^{\star,\pi_{-i},\ror_i}\right]-\mathbb{E}_{\pi \sim \widehat{\xi}}\left[V_{i,1}^{\pi}\right]\leq \epsilon\cdot 1$ with probability at least $1-\delta$.
Consequently, it holds as long as the total number of samples satisfies
\begin{align*}
    N_{\mathsf{all}}=HS\sum_{i=1}^n A_i KN=\tilde{\mathcal{O}}\left(\frac{S\sum_{i=1}^n A_i H^6}{\epsilon^4}\min\left\{\frac{1}{\min_{1\leq i\leq n}\sigma_i},H\right\}\right).
\end{align*}

 \subsection{Proof of auxiliary lemmas}
\subsubsection{Proof of Lemma~\ref{lem:UCB}}
\label{sec:proof-lemma:UCB}
We will prove Lemma~\ref{lem:UCB} by the following induction argument. To begin with, for the base case, we consider the final time step $H+1$, where we have
\begin{align*}
   \forall i\in [n]: \quad \widehat{V}_{i,H+1}=\mathbb{E}_{\pi\sim \widehat{\xi}}\left[\overline{V}_{i,H+1}^{\star,\pi_{-i},\ror_i}\right]=0
\end{align*}
by definition. To continue, we assume that for subsequent steps, the following holds
\begin{align*}
    \forall (j,i)\in [h+1,\cdots,H] \times [n]: \quad \widehat{V}_{i,j}\geq\mathbb{E}_{\pi\sim \widehat{\xi}}\left[\overline{V}_{i,j}^{\star,\pi_{-i},\ror_i}\right].
\end{align*}
So the rest of the proof is to verify for the time step $h$ to complete the induction argument.

Towards this, for any $s\in\cS$ at time step $h$, let $l_k = -q_{i,h}^k(s,\cdot) \in \mathbb{R}^{A_i}$ for all $k\geq 1$, then the update rule of Algorithm~\ref{alg:summary} can be viewed as the FTRL algorithm applied to the loss vectors $\{l_k\}_{k\in[K]}$. 
Specifically, we invoke Theorem~\ref{thm:FTRL-refined} with $l_k = -q_{i,h}^k(s,\cdot)$ and arrive at
\begin{align}
    &\max_{a_i\in\mathcal{A}_i}\sum_{k=1}^K\alpha_k^Kq_{i,h}^k(s,a_i)-\sum_{k=1}^K\alpha_k^K\left<\pi_{i,h}^k(s),q_{i,h}^k(s,\cdot)\right>\notag\\
     &= \sum_{k=1}^K \alpha_k^K \left< \pi_{i,h}^k(s), l_k \right> - \min_{a \in \cA} \left[ \sum_{k=1}^K \alpha_k^K l_k(a) \right] \notag\\
    &\overset{(\mathrm{i})}{\leq}\frac{5}{3}\sum_{k=2}^{K}\alpha_{k}^{K}\frac{\eta_{k}\alpha_{k}}{1-\alpha_{k}}\mathsf{Var}_{\pi_{i,h}^{k}(s)}\Big(q_{i,h}^{k}(s,\cdot)\Big)+\frac{\log A_i}{\eta_{K+1}}+\tau_{i,h} \notag \\
    & \leq \underbrace{\frac{5}{3}\sum_{k=2}^{K/2}\frac{\big(2c_{\alpha}\big)^{1.5}\log^{2}K}{\sqrt{kH}}\alpha_{k}^{K}\mathsf{Var}_{\pi_{i,h}^{k}(s)}\Big(q_{i,h}^{k}(s,\cdot)\Big)}_{\mathcal{C}_1} \notag \\
 & \quad +\frac{20}{3}\sum_{k=K/2+1}^{K}\alpha_{k}^{K}\sqrt{\frac{c_{\alpha}\log^{2}K}{KH}}\,\mathsf{Var}_{\pi_{i,h}^{k}(s)}\Big(q_{i,h}^{k}(s,\cdot)\Big)+\underbrace{\frac{\log A_i}{\eta_{K+1}}}_{\mathcal{C}_2}+\tau_{i,h,s},\label{eq:Q-upper-135}
\end{align}
where (i) holds by applying Lemma~\ref{lem:weight-2} and Theorem~\ref{thm:FTRL-refined} with $\tau_{i,h,s}$ defined as 
% % 
\begin{align}
	\tau_{i,h,s}&:= \frac{5}{3}\alpha_{1}^{K}\eta_{2}\big\|q_{i,h}^{1}(s,\cdot)\big\|_{\infty}^{2} + \left\{ 3\sum_{k=2}^{K}\alpha_{k}^{K}\frac{\eta_{k}^{2}\alpha_{k}^{2}}{(1-\alpha_{k})^{2}}\big\|q_{i,h}^{k}(s,\cdot)\big\|_{\infty}^{3}\mathbf{1}\bigg(\frac{\eta_{k}\alpha_{k}}{1-\alpha_{k}}\big\|q_{i,h}^{k}(s,\cdot)\big\|_{\infty}>\frac{1}{3}\bigg) \right\} \notag \\
	& \quad +3\alpha_{1}^{K}\eta_{2}^{2}\big\|q_{i,h}^{1}(s,\cdot)\big\|_{\infty}^{3}. 
\end{align}
Here, the last inequality arises from applying $\eta_{k}\alpha_{k} \leq\sqrt{\frac{2c_{\alpha}\log^{2}K}{kH}}$ from Lemma~\ref{lem:weight-2} and \eqref{eq:alpha-properties-2} from Lemma~\ref{lem:weight} to the first term.
To further control the above expression, we first derive upper bound for $\mathcal{C}_1,\mathcal{C}_2$, and $\tau_{i,h,s}$ in \eqref{eq:Q-upper-135} separately.
    \begin{itemize}[topsep=0pt,leftmargin=17pt]
    \setlength{\itemsep}{0pt}
        \item For the term $\mathcal{C}_1$, we have    
        \begin{align}
            &\frac{5}{3}\sum_{k=2}^{K/2}\frac{\alpha_{k}^{K}\big(2c_{\alpha}\big)^{1.5} \log^{2}K}{\sqrt{kH}}\mathsf{Var}_{\pi_{i,h}^{k}(s)}\Big(q_{i,h}^{k}(s,\cdot)\Big)  \leq \frac{5}{3}\cdot (2c_{\alpha}\big)^{1.5} \log^{2}K\cdot  \sum_{k=2}^{K/2}\frac{1}{K^{6}\sqrt{kH}}\mathsf{Var}_{\pi_{i,h}^{k}(s)}\Big(q_{i,h}^{k}(s,\cdot)\Big) \notag\\
             & \leq \frac{5}{3}\cdot (2c_{\alpha}\big)^{1.5} \log^{2}K\sum_{k=2}^{K/2}\frac{1}{K^{6}\sqrt{kH}}\big\|q_{i,h}^{k}(s,\cdot)\big\|_{\infty}^{2}
             \leq \frac{5}{3}\cdot (2c_{\alpha}\big)^{1.5} \log^{2}K \frac{H^{3/2}} {K^{6}}\sum_{k=2}^{K/2}\frac{1}{\sqrt{k}}\notag\\
             & \leq\frac{2 (2c_{\alpha}\big)^{1.5} H^{3/2}\log^{2}K}{K^{6}}\cdot\sqrt{K/2}\le \frac{2 (2c_{\alpha}\big)^{1.5} H^{3/2}\log^{2}K}{K^{5}},
                \label{eq:sum-var-q-plus-beta-135}
            \end{align}	
            where the first inequality follows from the result of Lemma~\ref{lem:weight} that $\alpha_k^K\leq1/K^6$ for all $k\leq K/2$, and the third inequality holds due to the fact $\big\|q_{i,h}^{k}(s,\cdot)\big\|_{\infty}\leq H$.
        \item For the term $\mathcal{C}_2$, we have 
        \begin{align}
            \frac{\log A_i}{\eta_{K+1}} 
             & =  \log A_i \sqrt{\frac{\alpha_{K}H}{\log K}} \le \sqrt{\frac{2c_{\alpha}H \log^{2}A_i}{K}},\label{eq:logA-eta-K-UB-135}
            \end{align}
            where the first equality holds by the definition of $\eta_{K+1}$ in \eqref{eq:policy-update-exponential-alg}, and the last inequality is obtained by \eqref{eq:alpha-properties} in Lemma~\ref{lem:weight}.

        \item For the term $\tau_{i,h,s}$, we first observe that
        \begin{align}
            \frac{\eta_{k}\alpha_{k}}{1-\alpha_{k}}\big\|q_{i,h}^{k}(s,\cdot)\big\|_{\infty} 
             & \leq\frac{\sqrt{\frac{2c_{\alpha}\log^{2}K}{kH}}}{\frac{1}{2c_{\alpha}\log K}}\cdot H  =\sqrt{\frac{8c_{\alpha}^{3}H\log^{4}K}{k}} \leq \frac{1}{3}
                \label{eq:norm-eta-q-UB-123}
            \end{align}
            when $k \geq c_9 H \log^4 K$ for some sufficiently large constant $c_9 > 0$, 
            where the first inequality holds by applying $1 - \alpha_k \geq \frac{1}{2c_\alpha \log K}$, $\eta_{k}\alpha_{k} = \sqrt{\frac{2c_{\alpha}\log^{2}K}{kH}}$ of Lemma~\ref{lem:weight-2} and $\big\|q_{i,h}^{k}(s,\cdot)\big\|_{\infty}\leq H$.
    
            Consequently, recalling the fact that $\eta_2 = \frac{\log K}{\alpha_1 H} = \frac{\log K}{H}$, one has
            \begin{align}
               &\frac{5}{3}\alpha_{1}^{K}\eta_{2}\big\|q_{i,h}^{1}(s,\cdot)\big\|_{\infty}^{2} + \left\{ 3\sum_{k=2}^{K}\alpha_{k}^{K}\frac{\eta_{k}^{2}\alpha_{k}^{2}}{(1-\alpha_{k})^{2}}\big\|q_{i,h}^{k}(s,\cdot)\big\|_{\infty}^{3}\mathbf{1}\bigg(\frac{\eta_{k}\alpha_{k}}{1-\alpha_{k}}\big\|q_{i,h}^{k}(s,\cdot)\big\|_{\infty}>\frac{1}{3}\bigg) \right\} \notag \\
            & \quad +3\alpha_{1}^{K}\eta_{2}^{2}\big\|q_{i,h}^{1}(s,\cdot)\big\|_{\infty}^{3} \notag \\
                 & \leq \frac{5}{3K^{6}}\sqrt{\frac{\log K}{H}}\big\|q_{i,h}^{1}(s,\cdot)\big\|_{\infty}^{2} + \frac{\big(2c_\alpha\log K\big)^{2}}{K^{6}}\left\{ 3\sum_{k=2}^{c_{9}H^2\log^{4}\frac{K}{\delta}}\eta_{k}^{2}\alpha_{k}^{2}\big\|q_{i,h}^{k}(s,\cdot)\big\|_{\infty}^{3}\right\} +\frac{3}{K^{6}}\frac{\log K}{H}\big\|q_{i,h}^{1}(s,\cdot)\big\|_{\infty}^{3} \notag \\
                 & \leq \frac{6 \log K}{\sqrt{H} K^6} \big\|q_{i,h}^{k}(s,\cdot)\big\|_{\infty}^{3} + 3 \frac{\big(2c_\alpha\log K\big)^{2}}{K^{6}}\left\{ \sum_{k=2}^{c_{9}H^2\log^{4}\frac{K}{\delta}}\eta_{k}^{2}\alpha_{k}^{2}\big\|q_{i,h}^{k}(s,\cdot)\big\|_{\infty}^{3}\right\}. \label{eq:xihk-UB-145-initial}
                \end{align}
Here, the penultimate inequality follows from applying the fact $\alpha_k^K \leq \frac{1}{K^6}$ for all $k \leq \frac{K}{2}$ in Lemma~\ref{lem:weight} yields
\begin{align*}
&\frac{5}{3} \alpha_1^K \eta_2 \|q_{i,h}^1(s, \cdot)\|_\infty^2 \leq \frac{5}{3 K^6} \sqrt{\frac{\log K}{H}} \|q_{i,h}^1(s, \cdot)\|_\infty^2,\\
&3 \alpha_1^K \eta_2^2 \|q_{i,h}^1(s, \cdot)\|_\infty^3 \leq \frac{3}{K^6} \frac{\log K}{H} \|q_{i,h}^1(s, \cdot)\|_\infty^3
\end{align*}
and
\begin{align*}
&\left\{ 3 \sum_{k=2}^{K} \alpha_k^K \frac{\eta_k^2 \alpha_k^2}{(1-\alpha_k)^2} \|q_{i,h}^k(s, \cdot)\|_\infty^3 \mathbf{1}\left(\frac{\eta_k \alpha_k}{1-\alpha_k} \|q_{i,h}^k(s, \cdot)\|_\infty > \frac{1}{3}\right) \right\}\\
&\leq 3 \cdot (2c_\alpha \log K)^2 \left\{  \sum_{k=2}^{K} \alpha_k^K \eta_k^2 \alpha_k^2 \|q_{i,h}^k(s, \cdot)\|_\infty^3   \mathbf{1}\left(\frac{\eta_k \alpha_k}{1-\alpha_k} \|q_{i,h}^k(s, \cdot)\|_\infty > \frac{1}{3}\right) \right\}  \\
&\leq 3 \cdot (2c_\alpha \log K)^2\left\{  \sum_{k=2}^{c_9 H \log^4 \frac{K}{\delta}} \alpha_k^K  \eta_k^2 \alpha_k^2 \|q_{i,h}^k(s, \cdot)\|_\infty^3 \right\} \notag \\
&\leq 3 \frac{(2c_\alpha \log K)^2}{K^6} \left\{  \sum_{k=2}^{c_9 H \log^4 \frac{K}{\delta}}   \eta_k^2 \alpha_k^2 \|q_{i,h}^k(s, \cdot)\|_\infty^3 \right\},
\end{align*}
where the first inequality arises from applying $1 - \alpha_k \geq \frac{1}{2c_\alpha \log K}$, the penultimate inequality holds by \eqref{eq:norm-eta-q-UB-123}, and  the last inequality holds by the facts $\alpha_k^K \leq \frac{1}{K^6}$ when $k  \leq c_9 H \log^4 K \leq K$ (see Lemma~\ref{lem:weight}). 

To continue, \eqref{eq:xihk-UB-145-initial} can be further controlled as
 \begin{align}
              &\frac{6 \log K}{\sqrt{H} K^6} \big\|q_{i,h}^{k}(s,\cdot)\big\|_{\infty}^{3} +  3\frac{\big(2c_\alpha\log K\big)^{2}}{K^{6}}\left\{ \sum_{k=2}^{c_{9}H \log^{4}\frac{K}{\delta}}\eta_{k}^{2}\alpha_{k}^{2}\big\|q_{i,h}^{k}(s,\cdot)\big\|_{\infty}^{3}\right\}  \notag \\
                 & \leq\frac{24c_{\alpha}^{3}\log^{4}K}{\sqrt{H} K^{6} }\left\{ \sum_{k=1}^{K}\frac{1}{k}H^{3}\right\} \nonumber\\
                 & \leq\frac{24c_{\alpha}^{3}H^{3}\log^{5}K}{K^{6}}\leq\frac{1}{K^{4}}, \label{eq:xihk-UB-145}
                \end{align}
where the first  inequality follows from the fact $\eta_k \alpha_k = \sqrt{\frac{2c_\alpha \log^2 K}{kH}}$ (see Lemma~\ref{lem:weight-2}) and $\|q_{i,h}^k(s, \cdot)\|_\infty \leq H$, and the last inequality holds by letting $K \geq \sqrt{24c_{\alpha}^{3}H^{3}\log^{5}K}$.
\end{itemize}

Inserting the results of the three terms back to \eqref{eq:Q-upper-135}, we arrive at 
            \begin{align}
                &\max_{a_i\in\mathcal{A}_i}\sum_{k=1}^K\alpha_k^Kq_{i,h}^k(s,a_i)-\sum_{k=1}^K\alpha_k^K\left<\pi_{i,h}^k,q_{i,h}^k(s,\cdot)\right>\notag\\
                & \leq \mathcal{C}_1 + \frac{20}{3} \sum_{k=K/2+1}^{K} \alpha_k^K \sqrt{\frac{c_\alpha \log^2 K}{KH}} \, \mathsf{Var}_{\pi_{i,h}^k(s)}\left( q_{i,h}^k(s, \cdot) \right) + \mathcal{C}_2 + \tau_{i,h,s} \notag \\
               &\leq \frac{2 (2c_{\alpha}\big)^{1.5} H^{3/2}\log^{2}K}{K^{5}}  +\frac{20}{3}\sqrt{\frac{c_{\alpha}\log^{2}K}{KH}}\sum_{k=K/2+1}^{K}\alpha_{k}^{K}\mathsf{Var}_{\pi_{i,h}^{k}(s)}\Big(q_{i,h}^{k}(s,\cdot)\Big)+\sqrt{\frac{2c_{\alpha}H \log^{2}A_i }{K}}+\frac{1}{K^{4}}\notag\nonumber \\
               &\leq10\sqrt{\frac{c_{\alpha}\log^{3}(KA_i)}{KH}}\sum_{k=1}^{K}\alpha_{k}^{K}\mathsf{Var}_{\pi_{i,h}^{k}(s)}\Big(q_{i,h}^{k}(s,\cdot)\Big)+2\sqrt{\frac{c_{\alpha}H\log^{3}(KA_i)}{K}} \label{eq:Q-sa-sum-q-UB-135} \\
                &= \beta_{i,h}(s),
            \end{align}
            where the second inequality follows from inserting the results in \eqref{eq:sum-var-q-plus-beta-135}, \eqref{eq:logA-eta-K-UB-135}, and \eqref{eq:norm-eta-q-UB-123} subsequently, the penultimate inequality holds by letting $K \geq c_9 H \log^4K $, and the last equality is induced by  the definition of $\beta_{i,h}(s)$ in \eqref{eq:bonus-beta} by setting the constant $c_{\mathsf{b}}$ properly.

           Finally, recalling the definition of $\mathbb{E}_{\pi\sim \widehat{\xi}}\left[\overline{V}^{\star,\pi_{-i},\ror_i}_{i,h}(s)\right]$ in \eqref{eq:auxiliary_value_function_3}, we have for all $s\in\mathcal{S}$,
            \begin{align*}
                \mathbb{E}_{\pi\sim \widehat{\xi}}\left[\overline{V}^{\star,\pi_{-i},\ror_i}_{i,h}(s)\right]&=\max_{a_i\in\mathcal{A}_i}\sum_{k=1}^K\alpha_k^K\left[r_{i,h}^k(s,a_i)+\inf_{\mathcal{P}\in\mathcal{U}^{\sigma_i}(P_{i,h,s,a_i}^k)}\mathcal{P\mathbb{E}_{\pi\sim \widehat{\xi}}}\left[\overline{V}_{i,h+1}^{\star,\pi_{-i},\ror_i}\right]\right]\\
               &\leq\max_{a_i\in\mathcal{A}_i}\sum_{k=1}^K\alpha_k^K\left[r_{i,h}^k(s,a_i)+\inf_{\mathcal{P}\in\mathcal{U}^{\sigma_i}(P_{i,h,s,a_i}^k)}\mathcal{P}\widehat{V}_{i,h+1}\right]\\
               &\leq\sum_{k=1}^K\alpha_k^K\mathbb{E}_{a_i\sim\pi_{i,h}^k(s)}\left[r_{i,h}^k(s,a_i)+\inf_{\mathcal{P}\in\mathcal{U}^{\sigma_i}(P_{i,h,s,a_i}^k)}\mathcal{P}\widehat{V}_{i,h+1}\right]+\beta_{i,h}(s)\\
               &\leq\widehat{V}_{i,h}(s),
            \end{align*}
            where the first inequality holds by the induction hypothesis of the step $h+1$, and the last inequality follows from the update rule in line~\ref{eq:line-number-policy-update} in Algorithm~\ref{alg:summary}.
\subsubsection{Proof of Lemma~\ref{lm:term_B_value_vector_comparison}}
\label{proof:lm:term_B_value_vector_comparison}
We will prove the lemma by induction argument. To proceed, for the base case $H+1$, the following fact holds immediately
\begin{align*}
    \widehat{V}_{i,H+1}=\mathbb{E}_{\pi\sim \widehat{\xi}}\left[\overline{V}^{\pi,\ror_i}_{i,H+1}\right]=0
\end{align*}
due to the definition of $\widehat{V}_{i,H+1}$ (see Algorithm~\ref{alg:summary}) and $\mathbb{E}_{\pi\sim \widehat{\xi}}\left[\overline{V}^{\pi,\ror_i}_{i,H+1}\right]$ (see \eqref{eq:auxiliary_value_function_4}).
Then assuming the following induction assumption holds
\begin{align*}
    \widehat{V}_{i,j}\geq \mathbb{E}_{\pi\sim \widehat{\xi}}\left[\overline{V}^{\pi,\ror_i}_{i,j}\right], \quad \forall j=h+1, \cdots, H,
\end{align*}
we will focus on showing that it also holds for time step $h$.
According to the definition of $\mathbb{E}_{\pi\sim \widehat{\xi}}\left[\overline{V}^{\pi,\ror_i}_{i,h}\right]$ in \eqref{eq:auxiliary_value_function_3}, we have for any $s\in\cS$,
\begin{align*}
    \mathbb{E}_{\pi\sim \widehat{\xi}}\left[\overline{V}^{\pi,\ror_i}_{i,h}(s)\right]&=\sum_{k=1}^K\alpha_k^K\mathbb{E}_{a_i\sim\pi_{i,h}^k(s)}\left[r_{i,h}^k(s,a_i)+\inf_{\mathcal{P}\in\mathcal{U}^{\sigma_i}(P_{i,h,s,a_i}^k)}\mathcal{P}\mathbb{E}_{\pi\sim \widehat{\xi}}\left[\overline{V}_{i,h+1}^{\pi,\ror_i}\right]\right]\\
&\leq\sum_{k=1}^K\alpha_k^K\mathbb{E}_{a_i\sim\pi_{i,h}^k(s)}\left[r_{i,h}^k(s,a_i)+\inf_{\mathcal{P}\in\mathcal{U}^{\sigma_i}(P_{i,h,s,a_i}^k)}\mathcal{P}\widehat{V}_{i,h+1}\right]\\
&\leq\sum_{k=1}^K\alpha_k^K\mathbb{E}_{a_i\sim\pi_{i,h}^k(s)}\left[r_{i,h}^k(s,a_i)+\inf_{\mathcal{P}\in\mathcal{U}^{\sigma_i}(P_{i,h,s,a_i}^k)}\mathcal{P}\widehat{V}_{i,h+1}\right]+\beta_{i,h}(s) \notag \\
    &\leq\min\left\{\sum_{k=1}^K\alpha_k^K\mathbb{E}_{a_i\sim\pi_{i,h}^k(s)}\left[r_{i,h}^k(s,a_i)+\inf_{\mathcal{P}\in\mathcal{U}^{\sigma_i}(P_{i,h,s,a_i}^k)}\mathcal{P}\widehat{V}_{i,h+1}\right]+\beta_{i,h}(s),H-h+1\right\}=\widehat{V}_{i,h}(s).
\end{align*}
where the fist inequality is achieved by the induction assumption, the penultimate line holds by $\beta_{i,h}(s) \geq 0$, the last line arises from the fundamental fact $ \mathbb{E}_{\pi\sim \widehat{\xi}}\left[\overline{V}^{\pi,\ror_i}_{i,h}(s)\right]\leq H-h+1$, and then the definition of $\widehat{V}_{i,h}$ in line~\ref{eq:line-number-policy-update} of Algorithm~\ref{alg:summary}.

\subsubsection{Proof of Lemma~\ref{lm:upper_bound_bonus}} 
\label{proof:lm:upper_bound_bonus}
Recall that for all $s\in\mathcal{S}$, bonus term $\beta_{i,h}(s)$ is defined as
\begin{align}
    \beta_{i,h}(s)=c_{\mathsf{b}}\sqrt{\frac{\log^3(\frac{KS\sum_{i=1}^nA_i}{\delta})}{KH}}\sum_{k=1}^K\alpha_k^K\left\{\mathsf{Var}_{\pi_{i,h}^k(\cdot\mid s)}\left(q_{i,h}^k(s,\cdot)\right)+H\right\}.\label{eq:bonus_definition}
\end{align}
For any $k\in[K]$, recalling the definition of $q_{i,h}^k(s,\cdot)$ in \eqref{eq:nvi-iteration} gives
\begin{align}
    &\mathsf{Var}_{\pi_{i,h}^k(\cdot\mid s)}\left(q_{i,h}^k(s,\cdot)\right)\notag\\
   &\leq 2\mathsf{Var}_{\pi_{i,h}^k(\cdot\mid s)}\left(r_{i,h}^k(s,\cdot)\right)+2\mathsf{Var}_{\pi_{i,h}^k(\cdot\mid s)}\left(\widehat{P}_{i,h,s,\cdot}^{\pi_{-i}^k,\widehat{V}}\widehat{V}_{i,h+1}\right)\notag\\
    &\overset{\mathrm{(i)}}{\leq} 2+2\left[\sum_{a_i\in\mathcal{A}_i}\pi_{i,h}^k(a_i\mid s)\widehat{P}_{i,h,s,a_i}^{\pi_{-i}^k,\widehat{V}}\left(\widehat{V}_{i,h+1}\circ\widehat{V}_{i,h+1}\right)-\left(\sum_{a_i\in\mathcal{A}_i}\pi_{i,h}^k(a_i\mid s)\widehat{P}_{i,h,s,a_i}^{\pi_{-i}^k,\widehat{V}}\widehat{V}_{i,h+1}\right)^2\right]\notag\\
    &=2+2\left<e_s,\mathsf{Var}_{\underline{P}_{i,h}^{k,\widehat{V}}}\widehat{V}_{i,h+1}\right>.\label{eq:bonus_variance_bound}
\end{align}
where $e_s$ denotes an $S$-dimensional standard basis supported on the $s$-th element, the first inequality holds by the the elementary inequality $\mathsf{Var}_P(V+V^\prime) \leq2 \left(\mathsf{Var}_P(V)+ \mathsf{Var}_P(V^\prime) \right)$ for any transition kernel $P\in\mathbb{R}^S$ and vector $V,V^\prime\in\mathbb{R}^S$, and (i) is satisfied due to $\left|r_{i,h}^k(s,a_i)\right|\leq 1$ and $\left|\widehat{P}_{i,h}^{\pi_{-i}^k,\widehat{V}}\left(s^\prime\mid s,a_i\right)\right|\leq 1$ for all $s,s^\prime\in\mathcal{S},a_i\in\mathcal{A}_i$.
Inserting  \eqref{eq:bonus_variance_bound} back into \eqref{eq:bonus_definition} and rewriting in the vector form lead to
\begin{align*}
    \beta_{i,h}\leq 3c_{\mathsf{b}}\sqrt{\frac{\log^3(\frac{KS\sum_{i=1}^nA_i}{\delta})}{KH}}\left(H \cdot 1+ \sum_{k=1}^K \alpha_k^K \mathsf{Var}_{\underline{P}_{i,h}^{k,\widehat{V}}}\widehat{V}_{i,h+1}\right)
\end{align*}

\subsubsection{Proof for Lemma~\ref{eq:extra-lemma1}} 
\label{proof:eq:extra-lemma1}
Firstly, we introduce a set of robust value function variants:
\begin{align}
\forall (i,h) \in n\times  [H], \quad \overline{V}_{i,h}^{\mathsf{span}} := \mathbb{E}_{\pi \sim \widehat{\xi}}\left[ \overline{V}_{i,h}^{\pi,\ror_i}\right] - \min_{s' \in \mathcal{S}} \mathbb{E}_{\pi \sim \widehat{\xi}}\left[\overline{V}_{i,h}^{\pi,\ror_i}(s')\right],
\end{align}
which normalizes the robust value function $\overline{V}_{i,h}^{\pi,\ror_i}$. Applying Lemma~\ref{lemma:pnorm-key-value-range} leads to 
\begin{align}
\left\| \overline{V}_{i,h}^{\mathsf{span}} \right\|_\infty \leq \min \left\{ \frac{1}{\sigma_i}, H - h + 1 \right\}.
\end{align}

To continue, consider any transition kernel $P' \in \mathbb{R}^S$ and any $\widetilde{P} \in \mathbb{R}^S$ satisfying $\widetilde{P} \in \mathcal{U}^{\sigma_i}(P')$. For any $(i,h) \in [n] \times [H]$, the difference between the variances obeys
\begin{align}
\left| \mathsf{Var}_{P'}\left(\mathbb{E}_{\pi \sim \widehat{\xi}}\left[ \overline{V}_{i,h}^{\pi,\ror_i}\right]\right) - \mathsf{Var}_{\widetilde{P}}\left(\mathbb{E}_{\pi \sim \widehat{\xi}}\left[ \overline{V}_{i,h}^{\pi,\ror_i}\right]\right)\right| &= \big|\mathsf{Var}_{P'}\left(\overline{V}_{i,h}^{\mathsf{span}}\right) - \mathsf{Var}_{\widetilde{P}}\left(\overline{V}_{i,h}^{\mathsf{span}}\right)\big| \notag \\
&\leq \big\|\widetilde{P} - P'\big\|_1 \left\|\overline{V}_{i,h}^{\mathsf{span}}\right\|_\infty \notag \\
&\leq \sigma_i \left(\min \left\{ \frac{1}{\sigma_i}, H - h + 1 \right\} \right)^2 \leq \min \left\{ \frac{1}{\sigma_i}, H - h + 1 \right\}.
\end{align}

\subsubsection{Proof of Lemma~\ref{lem:key-lemma-reduce-H-3}}
\label{proof:lem:key-lemma-reduce-H-3}

To control the term of interest
$$\sum_{j=h}^H \left< b_h^j, \mathsf{Var}_{\sum_{k=1}^K\alpha_k^K\underline{P}_{i,j}^{k, \overline{V}}}\left(\mathbb{E}_{\pi\sim \widehat{\xi}}\left[\overline{V}_{i,j+1}^{\pi,\ror_i}\right]\right)  \right>,$$ this section follows the pipeline in Appendix~\ref{sec:key-lemma-reduce-H}, which we show here for completeness. We first introduce some auxiliary notations for convenience.
\begin{definition}\label{def:lemma-12}
    For any $(h,i)\in [H] \times [n]$, we denote 
    \begin{itemize}
        \item $\overline{V}_h^{\min} := \min_{s\in\mathcal{S}} \mathbb{E}_{\pi\sim \widehat{\xi}}\left[\overline{V}_{i,h}^{\pi,\ror_i} (s)\right]$ as the minimum value of all the entries.
        \item $\overline{V}_h':=   \mathbb{E}_{\pi\sim \widehat{\xi}}\left[\overline{V}_{i,h}^{\pi,\ror_i} \right]- \overline{V}_h^{\min} 1 $: the expected truncated value function following the distribution over all online policies.
        \item $\overline{r}_{i,h}^{\min} = \sum_{k=1}^K\alpha_k^K \mathbb{E}_{a_i\sim\pi_{i,h}^k}[r_{i,h}^k(\cdot,a_i)] + \left( \overline{V}_{h+1}^{\min} - \overline{V}_{h}^{\min}\right) 1 $: the expected truncated reward function.
\end{itemize}
\end{definition}

Applying the definition of $\mathbb{E}_{\pi\sim \widehat{\xi}} \left[\overline{V}_{i,h}^{\pi,\ror_i}\right]$ in \eqref{eq:auxiliary_value_function_3} and $\underline{P}_{i,h}^{k, \widehat{\xi},\overline{V}}$ (see \eqref{eq:extra-matrix-def} and \eqref{eq:extra-matrix-def2}) leads to
\begin{align}
    \overline{V}_h' = \mathbb{E}_{\pi\sim \widehat{\xi}} \left[\overline{V}_{i,h}^{\pi,\ror_i}\right] - \overline{V}_h^{\min} 1 &=\sum_{k=1}^K\alpha_k^K \mathbb{E}_{a_i\sim \pi_{i,h}^k}[r_{i,h}^k(\cdot,a_i)] + \sum_{k=1}^K\alpha_k^K \underline{P}_{i,h}^{k, \widehat{\xi},\overline{V}} \mathbb{E}_{\pi\sim \widehat{\xi}}\left[ \overline{V}_{i,h+1}^{\pi,\ror_i}\right] -  \overline{V}_h^{\min} 1 \nonumber \\
    &= \sum_{k=1}^K\alpha_k^K\mathbb{E}_{a_i\sim \pi_{i,h}^k}[r_{i,h}^k(\cdot,a_i)] + \left( \overline{V}_{h+1}^{\min} -\overline{V}_{h}^{\min}\right) 1 + \sum_{k=1}^K\alpha_k^K\underline{P}_{i,h}^{k, \widehat{\xi},\overline{V}}\overline{ V}_{h+1}'\\
    & = \overline{r}_{i,h}^{\min} + \sum_{k=1}^K\alpha_k^K\underline{P}_{i,h}^{k, \widehat{\xi},\overline{V}} \overline{V}_{h+1}'. \label{eq:bellman-minus-vmin-vstar2_empirical}
    \end{align}

With above fact in mind, one has
    \begin{align}
        &\mathsf{Var}_{\sum_{k=1}^K\alpha_k^K\underline{P}_{i,h}^{k, \widehat{\xi},\overline{V}}}\left(\mathbb{E}_{\pi\sim \widehat{\xi}}\left[\overline{V}_{i,h+1}^{\pi,\ror_i}\right]\right) \notag \\
         &\overset{\mathrm{(i)}}{=} \mathrm{Var}_{\sum_{k=1}^K\alpha_k^K\underline{P}_{i,h}^{k, \widehat{\xi},\overline{V}}}\left(\overline{V}_{h+1}'\right)\notag \\
        & =\sum_{k=1}^K\alpha_k^K\underline{P}_{i,h}^{k, \widehat{\xi},\overline{V}} \left(\overline{V}_{h+1}' \circ \overline{V}_{h+1}'\right) - \big(\sum_{k=1}^K\alpha_k^K\underline{P}_{i,h}^{k, \widehat{\xi},\overline{V}}\overline{ V}_{h+1}'\big) \circ  \big(\sum_{k=1}^K\alpha_k^K\underline{P}_{i,h}^{k, \widehat{\xi},\overline{V}} \overline{V}_{h+1}' \big) \notag \\
        & \overset{\mathrm{(ii)}}{=}  \sum_{k=1}^K\alpha_k^K\underline{P}_{i,h}^{k, \widehat{\xi},\overline{V}} \left(\overline{V}_{h+1}' \circ \overline{V}_{h+1}'\right)  - \Big(\overline{V}_h' - \overline{r}_{i,h}^{\min} \Big)^{\circ 2} \nonumber \\
        & = \sum_{k=1}^K\alpha_k^K\underline{P}_{i,h}^{k, \widehat{\xi},\overline{V}} \left(\overline{V}_{h+1}' \circ \overline{V}_{h+1}'\right) -  \overline{V}_h' \circ \overline{V}_h' + 2 \overline{V}_h' \circ \overline{r}_{i,h}^{\min} -  \overline{r}_{i,h}^{\min} \circ\overline{ r}_{i,h}^{\min} \nonumber \\
       & \leq \sum_{k=1}^K\alpha_k^K\underline{P}_{i,h}^{k, \widehat{\xi},\overline{V}} \left(\overline{V}_{h+1}' \circ \overline{V}_{h+1}'\right) -  \overline{V}_h' \circ \overline{V}_h' + 2 \|\overline{V}_h'\|_\infty 1,   \label{eq:variance-tight-bound-vstar2_empirical}
    \end{align}
    where (i) follows from the fact that $\mathrm{Var}_{\sum_{k=1}^K\alpha_k^K\underline{\widehat{P}}_{i,h}^{\pi^k, V}}(V - b 1) = \mathrm{Var}_{\sum_{k=1}^K\alpha_k^K\underline{\widehat{P}}_{i,h}^{\pi^k, V}}(V)$ for any value vector $V\in\mathbb{R}^S$ and scalar $b$, (ii) holds by \eqref{eq:bellman-minus-vmin-vstar2_empirical}, and the last inequality arises from $\overline{r}_{i,h}^{\min}  \leq\sum_{k=1}^K\alpha_k^K\mathbb{E}_{a_i\sim \pi^k}r_{i,h}^{k}(\cdot,a_i) \leq 1 $ since $ \overline{V}_{h+1}^{\min} - \overline{V}_{h}^{\min} \leq 0$ (see Definition~\ref{def:lemma-12}).

Consequently, invoking the definition of $b_h^j$ in \eqref{eq:defn-of-d}, combined with \eqref{eq:variance-tight-bound-vstar2_empirical}, we arrive at
\begin{align}
&\sum_{j=h}^H \Big< b_h^j, \mathsf{Var}_{\sum_{k=1}^K\alpha_k^K\underline{\widehat{P}}_{i,h}^{\pi^k, V}} \left(\mathbb{E}_{\pi\sim \widehat{\xi}}\left[\overline{V}_{i,j+1}^{\pi,\ror_i}\right] \right)  \Big> \notag \\
 &= \sum_{j=h}^H \left(b_h^j\right)^\top \left( \sum_{k=1}^K\alpha_k^K\underline{P}_{i,h}^{k, \widehat{\xi},\overline{V}}\left(\overline{V}_{j+1}' \circ \overline{V}_{j+1}'\right) -  \overline{V}_j' \circ \overline{V}_j' + 2 \|\overline{V}_h'\|_\infty 1 \right) \notag \\
& \overset{\mathrm{(i)}}{\leq} \sum_{j=h}^H \left[\left(b_h^j\right)^\top \left(\sum_{k=1}^K\alpha_k^K\underline{P}_{i,h}^{k, \widehat{\xi},\overline{V}} \left(\overline{V}_{j+1}' \circ \overline{V}_{j+1}'\right) -  \overline{V}_j' \circ \overline{V}_j'\right) \right]+ 2H\left\|\overline{V}_h'\right\|_\infty  \notag
\end{align}
which follows by the fact $\left\|\overline{V}_h'\right\|_\infty \geq \left\|\overline{V}_{h+1}'\right\|_\infty \geq \cdots \geq \left\|\overline{V}_{H}'\right\|_\infty $ and the property of $b_h^j$ in \eqref{eq:property-bh}. Finally, applying the definition of $b_h^j$ gives
\begin{align}
    &\sum_{j=h}^H \Big< b_h^j, \mathsf{Var}_{\sum_{k=1}^K\alpha_k^K\underline{\widehat{P}}_{i,h}^{\pi^k, V}} \left(\mathbb{E}_{\pi\sim \widehat{\xi}}\left[\overline{V}_{i,j+1}^{\pi,\ror_i}\right] \right)  \Big> \notag \\
    & = \sum_{j=h}^H \left[ \left(b_h^{j+1}\right)^\top  \left(\overline{V}_{j+1}' \circ \overline{V}_{j+1}'\right) -  (b_h^{j})^\top \left(\overline{V}_j' \circ \overline{V}_j' \right) \right]+ 2H\left\|\overline{V}_h'\right\|_\infty \notag \\
    & \leq \left\|b_h^{H+1}\right\|_1  \left\|\overline{V}_{H+1}' \circ \overline{V}_{H+1}'\right\|_\infty + 2H\left\|\overline{V}_h'\right\|_\infty  \notag \\
  &\leq 3H\left\|\overline{V}_h'\right\|_\infty,
\end{align}
where the last line also holds by the fact $\left\|\overline{V}_h'\right\|_\infty \geq \left\|\overline{V}_{h+1}'\right\|_\infty \geq \cdots \geq \left\|\overline{V}_{H}'\right\|_\infty$.

\subsubsection{Proof of Lemma~\ref{lemma:tv-dro-b-bound-star-marl}}
\label{proof:lemma:tv-dro-b-bound-star-marl}

To begin with, we start by analyzing the term of interest for any $(s,a_i)\in \cS\times \cA_i$,
\begin{align}
    \left|P_{i,h,s,a_i}^{\widehat{\pi}_{-i}^k,V}V -   P_{i,h,s,a_i}^{k, V}  V\right| 
    &= \left|\inf_{\mathcal{P} \in \mathcal{U}^{\sigma_i}\left(P_{h,s,a_i}^{\widehat{\pi}_{-i}^k}\right)} \mathcal{P} V - \inf_{\mathcal{P} \in \mathcal{U}^{\sigma_i}\left(P_{i,h,s,a_i}^{k}\right)} \mathcal{P} V\right| \notag \\
    &\overset{(\mathrm{i})}{=} \left|\max_{\alpha \in [\min_s V(s), \max_s V(s)]} \left[P_{h,s,a_i}^{\widehat{\pi}_{-i}^k}[V]_\alpha - \sigma_i \left(\alpha - \min_{s'} [V]_\alpha(s')\right)\right] \right. \notag \\
    & \quad  - \left.  \max_{\alpha \in [\min_s V(s), \max_s V(s)]} \left[P_{i,h,s,a_i}^{k}[V]_\alpha - \sigma_i \left(\alpha - \min_{s'} [V]_\alpha(s')\right)\right]\right| \notag \\
    &\leq \max_{\alpha \in [\min_s V(s), \max_s V(s)]} \left|P_{h,s,a_i}^{\widehat{\pi}_{-i}^k}[V]_\alpha - P_{i,h,s,a_i}^{k}[V]_\alpha\right|, \label{eq:middle-key-lemma}
\end{align}
where the last inequality uses the fact that the maximum operator is 1-Lipschitz.

For any fixed \( (\alpha, k,s,a_i) \in [0, H] \times [K] \times \cS \times \cA_i \), applying Bernstein's inequality yields: with probability at least \( 1 - \delta \),
\begin{align}
    \left|P_{h,s,a_i}^{\widehat{\pi}_{-i}^k}[V]_\alpha - P_{i,h,s,a_i}^{k}[V]_\alpha\right|
    &\leq \sqrt{\frac{2 \log \left(\frac{2}{\delta}\right)}{N}} \sqrt{\mathsf{Var}_{P_{h,s,a_i}^{\widehat{\pi}_{-i}^k}} \left([V]_\alpha\right)} + \frac{2H \log \left(\frac{2}{\delta}\right)}{3N}, \label{eq:V-p-phat-gap-one-alpha-bernstein}
\end{align}
since $P_{i,h,s,a_i}^{k}$ is the average of $N$ i.i.d sample following the distribution of $P_{h,s,a_i}^{\widehat{\pi}_{-i}^k}$.
To extend the pointwise bound to a union bound, we first construct an $\epsilon_1$-net $N_{\epsilon_1}$ over $[0, H]$ for $\alpha$, whose size satisfies $|N_{\epsilon_1}| \leq \frac{3H}{\epsilon_1}$ \citep{vershynin2018high}. Next, for any $(k,s,a_i) \in [K] \times \cS\times \cA_i$, we have
\begin{align}
    \left|P_{h,s,a_i}^{\widehat{\pi}_{-i}^k}[V]_\alpha - P_{i,h,s,a_i}^{k}[V]_\alpha\right|
    &\overset{\mathrm{(i)}}{\leq} \max_{\alpha \in N_{\varepsilon_1}} \left|P_{h,s,a_i}^{\widehat{\pi}_{-i}^k}[V]_\alpha - P_{i,h,s,a_i}^{k}[V]_\alpha\right| + \varepsilon_1 \notag \\
    &\overset{\mathrm{(ii)}}{\leq} \sqrt{\frac{2 \log \left(\frac{2S \sum_{i=1}^n A_i |N_{\varepsilon_1}| K n}{\delta}\right)}{N}} \sqrt{\mathsf{Var}_{P_{h,s,a_i}^{\widehat{\pi}_{-i}^k}} ([V]_\alpha)} + \frac{2H \log \left(\frac{2S \sum_{i=1}^n A_i |N_{\varepsilon_1}| K n}{\delta}\right)}{3N} + \varepsilon_1 \notag \\
    & \leq \sqrt{\frac{2 \log \left(\frac{2S \sum_{i=1}^n A_i |N_{\varepsilon_1}| K n}{\delta}\right)}{N}} \sqrt{\mathsf{Var}_{P_{h,s,a_i}^{\widehat{\pi}_{-i}^k}} ([V]_\alpha)} + \frac{2H \log \left(\frac{2S \sum_{i=1}^n A_i |N_{\varepsilon_1}| K n}{\delta}\right)}{3N} + \varepsilon_1 \notag \\
    &\overset{\mathrm{(iii)}}{\leq} \sqrt{\frac{2 \log \left(\frac{2S \sum_{i=1}^n A_i N K n}{\delta}\right)}{N}} \sqrt{\mathsf{Var}_{P_{h,s,a_i}^{\widehat{\pi}_{-i}^k}} (V)} + \frac{H \log \left(\frac{2S \sum_{i=1}^n A_i N K n}{\delta}\right)}{N}, \label{eq:donkey}
\end{align}
Here, (i) follows from the fact that the function $\left|P_{h,s,a_i}^{\widehat{\pi}_{-i}^k}[V]_\alpha - P_{i,h,s,a_i}^{k}[V]_\alpha\right|$ is 1-Lipschitz with respect to $\alpha$, and that any $\alpha \in [0,H]$ lies within an $\epsilon_1$-net centered around some point inside $N_{\epsilon_1}$; (ii) holds by the virtue of \eqref{eq:V-p-phat-gap-one-alpha-bernstein}, and (iii) holds when we set $ \varepsilon_1 = \frac{H \log \left(\frac{2S \sum_{i=1}^n A_i N K n}{\delta}\right)}{3N} $, based on the fact that $ |N_{\varepsilon_1}| \leq 9N $.

Inserting this back into \eqref{eq:middle-key-lemma} gives
\begin{align}
    \left|P_{i,h}^{\widehat{\pi}_{-i}^k,V}V -   P_{i,h}^{k, V}  V\right| 
    &\leq \sqrt{\frac{2 \log \left(\frac{2S \sum_{i=1}^n A_i N K n}{\delta}\right)}{N}} \sqrt{\mathsf{Var}_{P_{h}^{\widehat{\pi}_{-i}^k}} (V)} + \frac{H \log \left(\frac{2S \sum_{i=1}^n A_i N K n}{\delta}\right)}{N} 1\notag \\
    &\leq 3 \sqrt{\frac{H^2 \log \left(\frac{2S \sum_{i=1}^n A_i N K n}{\delta}\right)}{N}}1, \label{eq:lemma-13-scalar-result}
\end{align}
where the last inequality holds when $N\geq  \log \left(\frac{2S \sum_{i=1}^n A_i N K n}{\delta}\right)$, and base on the elementary fact that $\mathsf{Var}_{P_{h}^{\widehat{\pi}_{-i}^k}} (V)\leq H^2$.

Finally, we transfer the above results to vector form as below:
\begin{align}
    &\left|\underline{P}_{i,h}^{\widehat{\pi}^k,\overline{V}}\mathbb{E}_{\pi\sim\zeta}\left[\overline{V}_{i,h+1}^{\pi,\ror_i}\right]-\underline{P}_{i,h}^{k, \zeta, \overline{V}}\mathbb{E}_{\pi\sim\zeta}\left[\overline{V}_{i,h+1}^{\pi,\ror_i}\right]\right| \notag \\
   & =\left|\Pi_h^{\widehat{\pi}_i^k}\left(P_{i,h}^{\widehat{\pi}^k,\overline{V}}\mathbb{E}_{\pi\sim\zeta}\left[\overline{V}_{i,h+1}^{\pi,\ror_i}\right]-P_{i,h}^{k, \zeta, \overline{V}}\mathbb{E}_{\pi\sim\zeta}\left[\overline{V}_{i,h+1}^{\pi,\ror_i}\right]\right)\right| \notag  \\
   & \leq\Pi_h^{\widehat{\pi}_i^k}\left|P_{i,h}^{\widehat{\pi}^k,\overline{V}}\mathbb{E}_{\pi\sim\zeta}\left[\overline{V}_{i,h+1}^{\pi,\ror_i}\right]-P_{i,h}^{k, \zeta, \overline{V}}\mathbb{E}_{\pi\sim\zeta}\left[\overline{V}_{i,h+1}^{\pi,\ror_i}\right]\right| \notag \\
  &  \overset{(\mathrm{i})}{\leq} 2  \sqrt{\frac{\log\left(\frac{18S\sum_{i=1}^nA_iNHK}{\delta}\right)}{N}} \Pi_h^{\widehat{\pi}_i^k}\sqrt{\mathsf{Var}_{P_{h}^{\widehat{\pi}_{-i}^k}}\left(\mathbb{E}_{\pi\sim\zeta}\left[\overline{V}_{i,h+1}^{\pi,\ror_i}\right]\right)}  + \frac{\log\left(\frac{18S\sum_{i=1}^nA_iNHK}{\delta}\right)}{N} 1 \notag \\
  &  \overset{(\mathrm{ii})}{\leq} 2  \sqrt{\frac{\log\left(\frac{18S\sum_{i=1}^nA_iNHK}{\delta}\right)}{N}} \sqrt{\mathsf{Var}_{\underline{P}_{h}^{\widehat{\pi}^k}}\left(\mathbb{E}_{\pi\sim\zeta}\left[\overline{V}_{i,h+1}^{\pi,\ror_i}\right]\right)}  + \frac{\log\left(\frac{18S\sum_{i=1}^nA_iNHK}{\delta}\right)}{N} 1 \notag \\
    & \leq 3 \sqrt{\frac{H^2 \log \left(\frac{2S \sum_{i=1}^n A_i N K n}{\delta}\right)}{N}}1,
\end{align}
where (i) follows from applying \eqref{eq:lemma-13-scalar-result} with $V = \mathbb{E}_{\pi\sim \zeta}\left[\overline{V}_{i,h+1}^{\pi,\ror_i}\right]$, (ii) follows from Lemma~\ref{lm:weight_variance} and the definition of $\underline{P}_{h}^{\widehat{\pi}^k} = \prod_h^{\widehat{\pi}^k}P_h^0 = \Pi_h^{\widehat{\pi}_i^k} P_{h}^{\widehat{\pi}_{-i}^k}$ in Appendix~\ref{sec:matrix_notation}, and the last inequality holds by $\|V\|_\infty \leq H$ and letting $N \geq \log\left(\frac{18S\sum_{i=1}^nA_iNHK}{\delta}\right)$.

\subsubsection{Proof of Lemma~\ref{lm:reward_concentration}}
\label{sec:lm_reward_concentration}
We first introduce a modified version of the Freedman inequality for martingales \citep{freedman1975tail}, which is introduced in \citet[Theorem~5]{li2024q} and crucial for our analysis. 
\begin{theorem}[Theorem~5, \citet{li2024q}]\label{thm:Freedman}
    Suppose $Y_{n} = \sum_{k=1}^{n} X_{k} \in \mathbb{R}$, where $\{X_{k}\}$ is a real-valued scalar sequence such that 
    \[
    \left|X_{k}\right| \leq R \qquad \text{and} \qquad \mathbb{E}\left[X_{k} \mid \{X_{j}\}_{j < k}\right] = 0 \quad \text{for all } k \geq 1 
    \]
    for some constant $R > 0$. Define
    \[
    W_{n} := \sum_{k=1}^{n} \mathbb{E}_{k-1}[X_{k}^{2}],
    \]
    where $\mathbb{E}_{k-1}$ denotes the conditional expectation given $\{X_{j}\}_{j < k}$. For any $\kappa > 0$, with probability at least $1 - \delta$, the following holds:
    \begin{align}
    \left|Y_{n}\right| &\leq \sqrt{8W_{n} \log\frac{3n}{\delta}} + 5R \log\frac{3n}{\delta} \leq \kappa W_{n} + \left(\frac{2}{\kappa} + 5R\right) \log\frac{3n}{\delta}. \label{eq:Freedman-random}
    \end{align}
\end{theorem}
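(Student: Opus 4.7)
}
The plan is to derive this random-variance Freedman-type inequality from the classical (deterministic-variance) Freedman inequality by a standard stratification/peeling argument over the predictable quadratic variation $W_n$, and then convert the square-root form into the linear form via AM-GM. I would not attempt a fresh Bernstein-type proof from the MGF up; instead I would treat the classical version as a black box and do only the two extra steps that are specific to this statement.

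First, I would recall the classical Freedman inequality: under the hypotheses of the theorem, for every fixed $t>0$ and fixed $\sigma^{2}>0$,
\begin{equation*}
    \mathbb{P}\bigl(|Y_{n}|\ge t \text{ and } W_{n}\le \sigma^{2}\bigr)\;\le\; 2\exp\!\left(-\frac{t^{2}/2}{\sigma^{2}+Rt/3}\right).
\end{equation*}
The issue is that $W_{n}$ in the theorem is random rather than bounded by a deterministic $\sigma^{2}$. The remedy is to peel: since each $X_{k}^{2}\le R^{2}$, we have $W_{n}\in[0,nR^{2}]$ almost surely. I would partition this range geometrically, for instance by the events
\begin{equation*}
   \mathcal{E}_{i}:=\{2^{i-1}R^{2}\le W_{n}\le 2^{i}R^{2}\},\qquad i=0,1,\ldots,I,\;\;I=\lceil\log_{2}n\rceil,
\end{equation*}
together with a trivial bucket $\{W_{n}\le R^{2}\}$, giving at most $\lceil\log_{2}n\rceil+1$ strata. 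On each stratum I would apply the classical bound with $\sigma^{2}=2^{i}R^{2}$, choose the threshold $t=t_{i}$ so that the right-hand side equals $\delta/(3n)$ (say), and union-bound over $i$. Solving the resulting quadratic in $t_{i}$ and expressing the answer in terms of the actual (random) value of $W_{n}$ on that stratum produces, after routine simplification, the first displayed bound
\begin{equation*}
   |Y_{n}|\;\le\;\sqrt{8W_{n}\log\tfrac{3n}{\delta}}+5R\log\tfrac{3n}{\delta}
\end{equation*}
with probability at least $1-\delta$; the factor $3n$ inside the log absorbs the $O(\log n)$ union-bound cost.

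For the second inequality I would simply apply the weighted AM-GM bound $\sqrt{ab}\le (a+b)/2$ with the split $a=2\kappa W_{n}$ and $b=(4/\kappa)\log(3n/\delta)$, which yields
\begin{equation*}
   \sqrt{8W_{n}\log\tfrac{3n}{\delta}}\;=\;\sqrt{(2\kappa W_{n})\cdot\tfrac{4}{\kappa}\log\tfrac{3n}{\delta}}\;\le\;\kappa W_{n}+\tfrac{2}{\kappa}\log\tfrac{3n}{\delta},
\end{equation*}
and combining with the additive $5R\log(3n/\delta)$ term gives exactly $\kappa W_{n}+(\tfrac{2}{\kappa}+5R)\log\tfrac{3n}{\delta}$, as claimed.

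The main obstacle I would anticipate is the peeling bookkeeping: one must make sure that the logarithmic factor coming from the union bound over $O(\log n)$ strata is absorbed into $\log(3n/\delta)$ rather than $\log(\log n/\delta)$, and that the small-variance bucket $\{W_n\le R^2\}$ is handled without producing an extra $R^2$ term. The cleanest way to do this, which I would follow, is to take $t_i$ so that the failure probability on stratum $i$ is $\delta/((I+2)\cdot 3)$ or similar, and then use $\log((I+2)/\delta)\le\log(3n/\delta)$ for $n\ge 2$; no calculation beyond solving a quadratic and bounding $I\le\log_2 n$ is required. Everything else is algebraic simplification.
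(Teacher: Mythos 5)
Your proposal is sound, but note that the paper itself does not prove this statement at all: it is imported verbatim as \citet[Theorem~5]{li2024q} and used as a black box in the proof of Lemma~\ref{lm:reward_concentration}. So there is no in-paper argument to compare against; what you have written is a genuine (sketch of a) proof where the paper has only a citation. Your route --- classical Freedman with a deterministic variance proxy, geometric peeling over $W_n\in[0,nR^2]$ (valid since $\mathbb{E}_{k-1}[X_k^2]\le R^2$), a union bound over $O(\log n)$ strata absorbed into $\log(3n/\delta)$, and then AM--GM to pass from the square-root form to the linear form --- is the standard way to obtain exactly this kind of self-normalized bound, and the constants check out with room to spare: solving the quadratic on stratum $i$ gives $t_i\le \tfrac{2}{3}RL+\sqrt{2\cdot 2^iR^2L}$ with $L=\log(3n/\delta)$, and $2^iR^2\le 2W_n$ on that stratum yields $\sqrt{4W_nL}\le\sqrt{8W_nL}$ while $\tfrac{2}{3}RL\le 5RL$; the small-variance bucket $\{W_n\le R^2\}$ contributes at most $3RL$, also within budget; and $2(\lceil\log_2 n\rceil+2)\le 3n$ for $n\ge 2$ justifies the $3n$ inside the logarithm. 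The final AM--GM step $\sqrt{(2\kappa W_n)\cdot\tfrac{4}{\kappa}L}\le\kappa W_n+\tfrac{2}{\kappa}L$ is exactly right. The only caveat is that your argument, like any peeling argument, relies on the two-sided classical Freedman inequality (a factor of $2$ from applying the one-sided bound to $\pm Y_n$), which you have correctly included; everything else is bookkeeping as you describe.
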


In order to apply Theorem~\ref{thm:Freedman}, we let
\begin{align*}
    R = \max_{k \in [K]} \left|\alpha_k^K \left<\widehat{\pi}_{i,h}^k(\cdot \mymid s), r_{i,h}^k(s, \cdot)\right>\right|\leq \Big\{\max_{k \in [K]} \alpha_k^K\Big\} \Big\{\max_{k \in [K]} \|\widehat{\pi}_{i,h}^k(s)\|_1 \|r_{i,h}^k\|_\infty\Big\} \leq \frac{2c_\alpha \log K}{K},
\end{align*}
which holds by $\alpha_k^K \leq \frac{2c_\alpha \log K}{K}$ in Lemma~\ref{lem:weight}, $\|\widehat{\pi}_{i,h}^k(s)\|_1 =1$ and $\|r_{i,h}^k\|_\infty \leq 1$. In addition, we define and observe that 
\begin{align*}
    W_K &= \sum_{k=1}^K (\alpha_k^K)^2 \mathsf{Var}_{h,k-1} \left(\left<\widehat{\pi}_{i,h}^k(s), r_{i,h}^k(s, \cdot)\right>\right) \\
    &\leq \left\{\max_{k \in [K]} \alpha_k^K\right\} \left\{\sum_{k=1}^K \alpha_k^K \mathsf{Var}_{h,k-1} \left(\left<\widehat{\pi}_{i,h}^k(s), r_{i,h}^k(s, \cdot)\right>\right)\right\} \\
    &= \left\{\max_{k \in [K]} \alpha_k^K\right\} \left\{\sum_{k=1}^K \alpha_k^K \mathsf{Var}_{h,k-1} \left(\mathbb{E}_{a\sim \widehat{\pi}_{i,h}^k(s)} \left[\left<e_a, r_{i,h}^k(s, \cdot) \right> \mymid \mymid r_{i,h}^k(s, \cdot) \right]\right)\right\} \\
    &\leq \frac{2c_\alpha \log K}{K} \sum_{k=1}^K \alpha_k^K \mathsf{Var}_{h,k-1} (r_{i,h}^k(s, a_{k,s})),
\end{align*}
where we use $\mathsf{Var}_{h,k-1}[\cdot]$ to denote the variance operator conditioned on the information available before the start of the $k$-th round of data collection for time step $h$. Here, the last inequality arises from defining a set of auxiliary random actions
\begin{align}
a_{k,s} \sim \widehat{\pi}_{i,h}^k(s),
\end{align}
which are independent from $r_{i,h}^k$ conditioned on the previous history before the start of the $k$-th round of data collection for time step $h$, and then applying Lemma~\ref{lem:weight} and the law of total variation.
With above facts in mind, applying Freedman's inequality (Lemma~\ref{thm:Freedman}) with $\kappa_1 = \sqrt{K \log(K/\delta)}$ gives
\begin{align*}
    &\left|\sum_{k=1}^K \alpha_k^K \mathbb{E}_{a_i \sim \widehat{\pi}_{i,h}^k} [r_{i,h}^k(s, a_i)] - \sum_{k=1}^K \alpha_k^K \mathbb{E}_{a_i \sim \widehat{\pi}_{i,h}^k} [r_{i,h}^{\widehat{\pi}_{-i}^k}(s, a_i)]\right| \\
   &\leq \kappa_1 W_K + \left(\frac{2}{\kappa_1} + 5R_1\right) \log\left(\frac{3K}{\delta}\right) \\
   &\leq 2c_\alpha \sqrt{\frac{\log^3\left(\frac{K}{\delta}\right)}{K}} \sum_{k=1}^K \alpha_k^K \mathsf{Var}_{h,k-1}\left(r_{i,h}^k(s,a_{k,s})\right)  + \left(2 \sqrt{\frac{1}{K \log\left(\frac{K}{\delta}\right)}} + \frac{10c_\alpha \log K}{K}\right) \log\left(\frac{3K}{\delta}\right) \\
   &\leq 2c_\alpha \sqrt{\frac{\log^3\left(\frac{K}{\delta}\right)}{K}} \sum_{k=1}^K \alpha_k^K \mathsf{Var}_{h,k-1}\left(r_{i,h}^k(s, a_{k,s})\right)  + 4 \sqrt{\frac{\log\left(\frac{3K}{\delta}\right)}{K}},
\end{align*}
with probability at least $1 - \delta$. Taking a union bound over $s \in \mathcal{S}$ gives
\begin{align*}
    \sum_{k=1}^K \alpha_k^K \left|r_{i,h}^{\widehat{\pi}^k} - \overline{r}_{i,h}^{\widehat{\pi}^k}\right| \leq c_r \sqrt{\frac{\log(KS/\delta)}{K}}1,
\end{align*}
where $c_r$ is some absolute constant. 
\subsubsection{Proof of Lemma~\ref{lem:key-lemma-reduce-H}} 
The proof follows the framework of \citep[Lemma 5]{shi2024sample}, with differences in the details. We provide the full proof here for completeness.

\label{sec:key-lemma-reduce-H}
One key aspect of the proof is truncating the terms of interest to consider their true ranges during controlling $\sum_{j=h}^H \left< d_h^j, \mathsf{Var}_{\sum_{k=1}^K\alpha_k^K\underline{P}_{i,j}^{\widehat{\pi}^k, \overline{V}}}\left(\mathbb{E}_{\pi\sim\zeta}\left[\overline{V}_{i,j+1}^{\pi,\ror_i}\right]\right)  \right>$. Towards this, we first introduce some auxiliary notations as below. 
\begin{definition}\label{defn:lemma-15}
    For any $(h,i)\in [H] \times [n]$, we denote
    \begin{itemize}
     \item $\overline{V}_h^{\min} := \min_{s\in\mathcal{S}} \mathbb{E}_{\pi\sim\zeta}\left[\overline{V}_{i,h}^{\pi,\ror_i} (s)\right]$: the minimum value of all the entries in vector $\mathbb{E}_{\pi\sim\zeta}\left[\overline{V}_{i,h}^{\pi,\ror_i}\right]$.
     \item $\overline{V}_h':=   \mathbb{E}_{\pi\sim\zeta}\left[\overline{V}_{i,h}^{\pi,\ror_i}\right] - \overline{V}_h^{\min} 1 $: the truncated expected value function associated with the distribution $\zeta$.
    \item $\overline{r}_{i,h}^{\min} = \sum_{k=1}^K\alpha_k^K\mathbb{E}_{a_i\sim\widehat{\pi}_{i,h}^k}[r_{i,h}^k(\cdot,a_i)] + \left( \overline{V}_{h+1}^{\min} - \overline{V}_{h}^{\min}\right) 1 $: the truncated expected reward function.
\end{itemize}
\end{definition}
With them  in mind, we introduce the following fact of $\overline{V}_h'$:
\begin{align}
    \overline{V}_h' & =  \mathbb{E}_{\pi\sim\zeta}\left[\overline{V}_{i,h}^{\pi,\ror_i}\right] - \overline{V}_h^{\min} 1\notag \\
    &= \sum_{k=1}^K\alpha_k^K\mathbb{E}_{a_i\sim\widehat{\pi}_{i,h}^k}[r_{i,h}^k(\cdot,a_i)]+\sum_{k=1}^K\alpha_k^K\underline{P}_{i,h}^{k, \zeta,\overline{V}} \mathbb{E}_{\pi\sim\zeta}\left[\overline{V}_{i,h+1}^{\pi,\ror_i}\right] -  \overline{V}_h^{\min} 1,
    \label{eq:bellman-minus-vmin-vstar_1}
\end{align}
which  holds by the definition of $\mathbb{E}_{\pi\sim\zeta}\left[\overline{V}_{i,h+1}^{\pi,\ror_i}\right]$ in \eqref{eq:auxiliary-V-hat-2} and transferring it to a vector form with the definition of $\underline{P}_{i,h}^{k, \zeta,\overline{V}}$ in \eqref{eq:matrix-notations-A-C}. To continue, one has
\begin{align}
    \overline{V}_h' & = \sum_{k=1}^K\alpha_k^K\mathbb{E}_{a_i\sim\widehat{\pi}_{i,h}^k}[r_{i,h}^k(\cdot,a_i)] + \sum_{k=1}^K\alpha_k^K \underline{P}_{i,h}^{\widehat{\pi}^k, \overline{V}} \mathbb{E}_{\pi\sim\zeta}\left[\overline{V}_{i,h+1}^{\pi,\ror_i}\right] \notag \\
&\quad  +  \Big( \sum_{k=1}^K\alpha_k^K\underline{P}_{i,h}^{k, \zeta,\overline{V}} - \sum_{k=1}^K\alpha_k^K\underline{P}_{i,h}^{\widehat{\pi}^k, \overline{V}}\Big) \mathbb{E}_{\pi\sim\zeta}\left[\overline{V}_{i,h+1}^{\pi,\ror_i}\right] - \overline{V}_h^{\min} 1\nonumber \\
&= \sum_{k=1}^K\alpha_k^K\mathbb{E}_{a_i\sim\widehat{\pi}_{i,h}^k}[r_{i,h}^k(\cdot,a_i)]  + \left( \overline{V}_{h+1}^{\min} - \overline{V}_{h}^{\min}\right) 1 \notag \\
&\quad  + \sum_{k=1}^K\alpha_k^K\underline{P}_{i,h}^{\widehat{\pi}^k, \overline{V}}  \overline{V}_{h+1}' + \Big( \sum_{k=1}^K\alpha_k^K\underline{P}_{i,h}^{k, \zeta,\overline{V}}  - \sum_{k=1}^K\alpha_k^K\underline{P}_{i,h}^{\widehat{\pi}^k, \overline{V}}\Big) \mathbb{E}_{\pi\sim\zeta}\left[\overline{V}_{i,h+1}^{\pi,\ror_i} \right] \nonumber \\
 &= \overline{r}_{i,h}^{\min} + \sum_{k=1}^K\alpha_k^K\underline{P}_{i,h}^{\widehat{\pi}^k, \overline{V}}  \overline{V}_{h+1}' + \Big( \sum_{k=1}^K\alpha_k^K\underline{P}_{i,h}^{k, \zeta,\overline{V}}  - \sum_{k=1}^K\alpha_k^K\underline{P}_{i,h}^{\widehat{\pi}^k, \overline{V}}\Big) \mathbb{E}_{\pi\sim\zeta}\left[\overline{V}_{i,h+1}^{\pi,\ror_i}\right],
 \label{eq:bellman-minus-vmin-vstar}
\end{align}
which follows from Definition~\ref{defn:lemma-15}. With above facts in mind, the main term in Lemma~\ref{lem:key-lemma-reduce-H} can be controlled as

\begin{align*}
    &\mathsf{Var}_{\sum_{k=1}^K\alpha_k^K\underline{P}_{i,h}^{\widehat{\pi}^k, \overline{V}}}\left(\mathbb{E}_{\pi\sim\zeta}\left[\overline{V}_{i,h+1}^{\pi,\ror_i}\right]\right) \notag\\ 
&    \overset{\mathrm{(i)}}{=} \mathsf{Var}_{\sum_{k=1}^K\alpha_k^K\underline{P}_{i,h}^{\widehat{\pi}^k, \overline{V}}}\left(\overline{V}_{h+1}'\right)\\
     & =  \sum_{k=1}^K\alpha_k^K\underline{P}_{i,h}^{\widehat{\pi}^k, \overline{V}} \left(\overline{V}_{h+1}' \circ \overline{V}_{h+1}'\right) - \big(\sum_{k=1}^K\alpha_k^K\underline{P}_{i,h}^{\widehat{\pi}^k, \overline{V}} \overline{V}_{h+1}'\big) \circ  \big(\sum_{k=1}^K\alpha_k^K\underline{P}_{i,h}^{\widehat{\pi}^k, \overline{V}} \overline{V}_{h+1}' \big), \nonumber 
\end{align*}
where (i) follows from the fact that $\mathrm{Var}_{\sum_{k=1}^K\alpha_k^K\underline{P}_{i,h}^{\widehat{\pi}^k, \overline{V}}}(V - b 1) = \mathrm{Var}_{\sum_{k=1}^K\alpha_k^K\underline{P}_{i,h}^{\widehat{\pi}^k, \overline{V}}}(V)$ for any value vector $V\in\mathbb{R}^S$ and scalar $b$. Then applying ~\eqref{eq:bellman-minus-vmin-vstar} leads to 
\begin{align}
     &\mathsf{Var}_{\sum_{k=1}^K\alpha_k^K\underline{P}_{i,h}^{\widehat{\pi}^k, \overline{V}}}\left(\mathbb{E}_{\pi\sim\zeta}\left[\overline{V}_{i,h+1}^{\pi,\ror_i}\right]\right) \notag\\ 
     & =\sum_{k=1}^K\alpha_k^K\underline{P}_{i,h}^{\widehat{\pi}^k, \overline{V}} \left( \overline{V}_{h+1}' \circ  \overline{V}_{h+1}'\right)  - \Big( \overline{V}_h' - \overline{r}_{i,h}^{\min} - \Big(\sum_{k=1}^K\alpha_k^K\underline{P}_{i,h}^{k, \zeta,\overline{V}}  - \sum_{k=1}^K\alpha_k^K\underline{P}_{i,h}^{\widehat{\pi}^k, \overline{V}} \Big) \mathbb{E}_{\pi\sim\zeta}\left[\overline{V}_{i,h+1}^{\pi,\ror_i}\right] \Big)^{\circ 2} \nonumber \\
    &= \sum_{k=1}^K\alpha_k^K\underline{P}_{i,h}^{\widehat{\pi}^k, \overline{V}} \left(\overline{V}_{h+1}' \circ \overline{V}_{h+1}'\right) -  \overline{V}_h' \circ \overline{V}_h' + 2 \overline{V}_h' \circ \Big(\overline{r}_{i,h}^{\min} + \Big(\sum_{k=1}^K\alpha_k^K\underline{P}_{i,h}^{k, \zeta,\overline{V}}  - \sum_{k=1}^K\alpha_k^K\underline{P}_{i,h}^{\widehat{\pi}^k, \overline{V}}\Big) \mathbb{E}_{\pi\sim\zeta}\left[\overline{V}_{i,h+1}^{\pi,\ror_i}\right]\Big) \nonumber \\
   & -   \Big(\overline{r}_{i,h}^{\min} + \Big( \sum_{k=1}^K\alpha_k^K\underline{P}_{i,h}^{k, \zeta,\overline{V}}  - \sum_{k=1}^K\alpha_k^K\underline{P}_{i,h}^{\widehat{\pi}^k, \overline{V}}\Big) \mathbb{E}_{\pi\sim\zeta}\left[\overline{V}_{i,h+1}^{\pi,\ror_i}\right] \Big)^{\circ 2}. \notag \\
     & \leq \sum_{k=1}^K\alpha_k^K\underline{P}_{i,h}^{\widehat{\pi}^k, \overline{V}}\left(\overline{V}_{h+1}' \circ \overline{V}_{h+1}'\right) -  \overline{V}_h' \circ \overline{V}_h' + 2 \big\|\overline{V}_h'\big\|_\infty \Big( 1 + \Big|\Big( \sum_{k=1}^K\alpha_k^K\underline{P}_{i,h}^{k, \zeta,\overline{V}}  - \sum_{k=1}^K\alpha_k^K\underline{P}_{i,h}^{\widehat{\pi}^k, \overline{V}}\Big) \mathbb{E}_{\pi\sim\zeta}\left[\overline{V}_{i,h+1}^{\pi,\ror_i}\right] \Big| \Big) \notag \\
     &\leq  \sum_{k=1}^K\alpha_k^K\underline{P}_{i,h}^{\widehat{\pi}^k, \overline{V}} \left(\overline{V}_{h+1}' \circ \overline{V}_{h+1}'\right) -  \overline{V}_h' \circ \overline{V}_h' + 2 \big\|\overline{V}_h' \big\|_\infty 1 + 6 \|V_h'\|_\infty   \sqrt{\frac{H^2\log \left(\frac{18S\sum_{i=1}^nA_i nHNK}{\delta} \right)}{N}}1, \label{eq:variance-tight-bound-vstar}
\end{align}
where the penultimate inequality follows from the fact that $\overline{r}_{i,h}^{\min}  \leq \sum_{k=1}^K\alpha_k^K\mathbb{E}_{a_i\sim\widehat{\pi}_{i,h}^k}[r_{i,h}^k(\cdot,a_i)] \leq 1 $ due to $ V_{h+1}^{\min} - V_{h}^{\min} \leq 0$ (see Definition~\ref{defn:lemma-15}), and the last inequality arises from Lemma~\ref{lemma:tv-dro-b-bound-star-marl}.

Consequently, along with the definition of $d_h^j$ in \eqref{eq:defn-of-d}, applying \eqref{eq:variance-tight-bound-vstar} yields
\begin{align}
&\sum_{j=h}^H \left< d_h^j, \mathsf{Var}_{\sum_{k=1}^K\alpha_k^K\underline{P}_{i,j}^{\widehat{\pi}^k, \overline{V}}}\left(\mathbb{E}_{\pi\sim\zeta}\left[\overline{V}_{i,j+1}^{\pi,\ror_i}\right]\right)  \right> \notag \\
& =\sum_{j=h}^H (d_h^j)^\top \Bigg( \sum_{k=1}^K\alpha_k^K\underline{P}_{i,j}^{\widehat{\pi}^k, \overline{V}}\left(\overline{V}_{j+1}' \circ \overline{V}_{j+1}'\right) -  \overline{V}_j' \circ \overline{V}_j' + 2 \|\overline{V}_j'\|_\infty 1 + 6 \|\overline{V}_j'\|_\infty   \sqrt{\frac{H^2\log \left(\frac{18S\sum_{i=1}^nA_i nHNK}{\delta} \right)}{N}}1\Bigg) \notag \\
& \overset{\mathrm{(i)}}{\leq} \sum_{j=h}^H \left[(d_h^j)^\top \left( \sum_{k=1}^K\alpha_k^K\underline{P}_{i,j}^{\widehat{\pi}^k, \overline{V}}\left(\overline{V}_{j+1}' \circ \overline{V}_{j+1}'\right) -  \overline{V}_j' \circ \overline{V}_j'\right) \right]+ 2H\|\overline{V}_h'\|_\infty  \notag \\
&\quad + 6H^2\|\overline{V}_h'\|_\infty\sqrt{\frac{\log \left(\frac{18S\sum_{i=1}^nA_i nHNK}{\delta} \right)}{N}} \notag 
\end{align}
where (i) holds by the basic fact $\|\overline{V}_h'\|_\infty \geq \|\overline{V}_{h+1}'\|_\infty \geq \cdots \geq \|\overline{V}_{H}'\|_\infty $, and the property of $d_j^h$ in \eqref{eq:property-dh}. Finally, applying definition of $d_h^j$ in \eqref{eq:defn-of-d} further leads to
\begin{align*}
    &\sum_{j=h}^H \left< d_h^j, \mathsf{Var}_{\sum_{k=1}^K\alpha_k^K\underline{P}_{i,j}^{\widehat{\pi}^k, \overline{V}}}\left(\mathbb{E}_{\pi\sim\zeta}\left[\overline{V}_{i,j+1}^{\pi,\ror_i}\right]\right)  \right> \notag \\
    &= \sum_{j=h}^H \left[ (d_h^{j+1})^\top  \left(\overline{V}_{j+1}' \circ \overline{V}_{j+1}'\right) -  (d_h^{j})^\top \left(\overline{V}_j' \circ \overline{V}_j' \right) \right]+2H\|\overline{V}_h'\|_\infty + 6H^2\|\overline{V}_h'\|_\infty\sqrt{\frac{\log \left(\frac{18S\sum_{i=1}^nA_i nHNK}{\delta} \right)}{N}} \notag \\
 &\leq \left\|d_h^{H+1} \right\|_1  \left\|\overline{V}_{H+1}' \circ \overline{V}_{H+1}'\right\|_\infty + 2H\|\overline{V}_h'\|_\infty + 6H^2\|\overline{V}_h'\|_\infty\sqrt{\frac{\log \left(\frac{18S\sum_{i=1}^nA_i nHNK}{\delta} \right)}{N}} \notag \\
&\leq 3H\|\overline{V}_h'\|_\infty + 6H^2\|\overline{V}_h'\|_\infty\sqrt{\frac{\log \left(\frac{18S\sum_{i=1}^nA_i nHNK}{\delta} \right)}{N}} \notag \\
  &= 3H\|\overline{V}_h'\|_\infty \left(1 + 2H\sqrt{\frac{\log \left(\frac{18S\sum_{i=1}^nA_i nHNK}{\delta} \right)}{N}} \right),
\end{align*}
where the last inequality holds by the basic fact $\|\overline{V}_h'\|_\infty \geq \|\overline{V}_{h+1}'\|_\infty \geq \cdots \geq \|\overline{V}_{H}'\|_\infty $.

\subsubsection{Proof of Lemma~\ref{lem:key-lemma-reduce-H-2}}
\label{proof:lem:key-lemma-reduce-H-2}

To control
$$\sum_{j=h}^H \left< w_h^j, \mathsf{Var}_{\sum_{k=1}^K\alpha_k^K\underline{P}_{i,j}^{\widehat{\pi}^k, V}}\left(\mathbb{E}_{\pi \sim \zeta}\left[V_{i,j+1}^{\pi,\ror_i}\right]\right)  \right>,$$ this section also follows the pipeline in Appendix~\ref{sec:key-lemma-reduce-H}, which we show here for completeness. We first introduce some auxiliary notations for convenience.
\begin{definition}\label{def:lemma-18}
    For any $(h,i)\in [H] \times [n]$, we denote, we denote 
    \begin{itemize}
        \item $V_h^{\min} := \min_{s\in\mathcal{S}} \mathbb{E}_{\pi \sim \zeta}\left[V_{i,h}^{\pi,\ror_i} (s)\right]$: the minimum value of all the entries in the vector $\mathbb{E}_{\pi \sim \zeta}\left[V_{i,h}^{\pi,\ror_i}\right]$.
        \item  $V_h':=   \mathbb{E}_{\pi \sim \zeta}\left[V_{i,h}^{\pi,\ror_i}\right]- V_h^{\min} 1 $: the truncated expected robust value function associated with $\zeta$.
        \item $r_{i,h}^{\min} = \sum_{k=1}^K\alpha_k^K \mathbb{E}_{a_i\sim\pi_{i,h}^k} [r_{i,h}^{\widehat{\pi}^k_{-i}}(\cdot,a_i)] + \left( V_{h+1}^{\min} - V_{h}^{\min}\right) 1 $: the truncated expected reward function associated with distribution $\zeta$.
    \end{itemize}
\end{definition}

Then applying the robust Bellman's consistency equation in \eqref{eq:robust-bellman-equation} with the definition of $\underline{P}_{i,h}^{\widehat{\pi}^k, V}$ in \eqref{eq:preliminary-V-defn} and its matrix version gives
\begin{align}
V_h' = \mathbb{E}_{\pi \sim \zeta}\left[V_{i,h}^{\pi,\sigma_i}\right] - V_h^{\min} 1 &=\sum_{k=1}^K\alpha_k^Kr_{i,h}^{\widehat{\pi}^k} +\sum_{k=1}^K\alpha_k^K \underline{P}_{i,h}^{\widehat{\pi}^k, V} \mathbb{E}_{\pi \sim \zeta}\left[V_{i,h+1}^{\pi,\sigma_i}\right] -  V_h^{\min} 1 \nonumber \\
&= \sum_{k=1}^K\alpha_k^Kr_{i,h}^{\widehat{\pi}^k} + \left( V_{h+1}^{\min} - V_{h}^{\min}\right) 1 + \sum_{k=1}^K\alpha_k^K\underline{P}_{i,h}^{\widehat{\pi}^k, V} V_{h+1}'\\
& = r_{i,h}^{\min} + \sum_{k=1}^K\alpha_k^K\underline{P}_{i,h}^{\widehat{\pi}^k, V} V_{h+1}'. \label{eq:bellman-minus-vmin-vstar2}
\end{align}
With the above fact in hand, the term of interest can be controlled by
\begin{align}
	&\mathsf{Var}_{\sum_{k=1}^K\alpha_k^K\underline{P}_{i,h}^{\widehat{\pi}^k, V}}\left(\mathbb{E}_{\pi \sim \zeta}\left[V_{i,h+1}^{\pi,\sigma_i}\right]\right)  \notag \\
    &\overset{\mathrm{(i)}}{=} \mathrm{Var}_{\sum_{k=1}^K\alpha_k^K\underline{P}_{i,h}^{\widehat{\pi}^k, V}}(V_{h+1}')\notag \\
    & =\sum_{k=1}^K\alpha_k^K\underline{P}_{i,h}^{\widehat{\pi}^k, V} \left(V_{h+1}' \circ V_{h+1}'\right) - \big(\sum_{k=1}^K\alpha_k^K\underline{P}_{i,h}^{\widehat{\pi}^k, V} V_{h+1}'\big) \circ  \big(\sum_{k=1}^K\alpha_k^K\underline{P}_{i,h}^{\widehat{\pi}^k, V} V_{h+1}' \big) \nonumber \\
	 & \overset{\mathrm{(ii)}}{=} \sum_{k=1}^K\alpha_k^K\underline{P}_{i,h}^{\widehat{\pi}^k, V} \left(V_{h+1}' \circ V_{h+1}'\right)  - \Big(V_h' - r_{i,h}^{\min} \Big)^{\circ 2} \nonumber \\
	 & = \sum_{k=1}^K\alpha_k^K\underline{P}_{i,h}^{\widehat{\pi}^k, V} \left(V_{h+1}' \circ V_{h+1}'\right) -  V_h' \circ V_h' + 2 V_h' \circ r_{i,h}^{\min} -  r_{i,h}^{\min} \circ r_{i,h}^{\min} \nonumber \\
	& \leq \sum_{k=1}^K\alpha_k^K\underline{P}_{i,h}^{\widehat{\pi}^k, V} \left(V_{h+1}' \circ V_{h+1}'\right) -  V_h' \circ V_h' + 2 \|V_h'\|_\infty 1,   \label{eq:variance-tight-bound-vstar2}
\end{align}
where (i) follows from the fact that $\mathrm{Var}_{\sum_{k=1}^K\alpha_k^K\underline{P}_{i,h}^{\widehat{\pi}^k, V}}(V - b 1) = \mathrm{Var}_{\sum_{k=1}^K\alpha_k^K\underline{P}_{i,h}^{\widehat{\pi}^k, V}}(V)$ for any value vector $V\in\mathbb{R}^S$ and scalar $b$, (ii) holds by \eqref{eq:bellman-minus-vmin-vstar2}, and the last inequality arises from $r_{i,h}^{\min}  \leq r_{i,h}^{\pi} \leq 1 $ since $ V_{h+1}^{\min} - V_{h}^{\min} \leq 0$ by Definition~\ref{def:lemma-18}.

Consequently, applying \eqref{eq:variance-tight-bound-vstar2} with the definition of $w_h^j$ in \eqref{eq:defn-of-d-2}, we arrive at
\begin{align}
&\sum_{j=h}^H \Big< w_h^j, \mathsf{Var}_{\sum_{k=1}^K\alpha_k^K\underline{P}_{i,h}^{\widehat{\pi}^k, V}} \left(\mathbb{E}_{\pi \sim \zeta}\left[V_{i,j+1}^{\pi, \sigma_i} \right]\right)  \Big> \notag \\
& \leq \sum_{j=h}^H (w_h^j)^\top \left( \sum_{k=1}^K\alpha_k^K\underline{P}_{i,j}^{\widehat{\pi}^k, V}\left(V_{j+1}' \circ V_{j+1}'\right) -  V_j' \circ V_j' + 2 \|V_h'\|_\infty 1 \right) \notag \\
& \overset{\mathrm{(i)}}{\leq}\sum_{j=h}^H \left[(w_h^j)^\top \left(\sum_{k=1}^K\alpha_k^K\underline{P}_{i,j}^{\widehat{\pi}^k, V} \left(V_{j+1}' \circ V_{j+1}'\right) -  V_j' \circ V_j'\right) \right]+ 2H\|V_h'\|_\infty  \notag \\
& = \sum_{j=h}^H \left[ (w_h^{j+1})^\top  \left(V_{j+1}' \circ V_{j+1}'\right) -  (w_h^{j})^\top \left(V_j' \circ V_j' \right) \right]+ 2H\|V_h'\|_\infty \notag \\
& \leq \|w_h^{H+1}\|_1  \left\|V_{H+1}' \circ V_{H+1}'\right\|_\infty + 2H\|V_h'\|_\infty  \notag \\
&\leq 3H\|V_h'\|_\infty,
\end{align}
where (i) and the last inequality hold by the fact $\|V_h'\|_\infty \geq \|V_{h+1}'\|_\infty \geq \cdots \geq \|V_{H}'\|_\infty $ and the property of $w_h^j$ in \eqref{eq:property-wh}, and the last equality follows from the definition of $w_h^j$ in \eqref{eq:defn-of-d-2}.

\end{document}